\def\eqref#1{equation~\ref{#1}}
\def\1{\bm{1}}
\def\vzero{{\bm{0}}}
\def\va{{\bm{a}}}
\def\vb{{\bm{b}}}
\def\vc{{\bm{c}}}
\def\vf{{\bm{f}}}
\def\vg{{\bm{g}}}
\def\vm{{\bm{m}}}
\def\vq{{\bm{q}}}
\def\vu{{\bm{u}}}
\def\vv{{\bm{v}}}
\def\vx{{\bm{x}}}
\def\vz{{\bm{z}}}
\DeclareMathAlphabet{\mathsfit}{\encodingdefault}{\sfdefault}{m}{sl}
\SetMathAlphabet{\mathsfit}{bold}{\encodingdefault}{\sfdefault}{bx}{n}
\newcommand{\vA}{{\mathbf{A}}}
\newcommand{\vF}{{\mathbf{F}}}
\newcommand{\vG}{{\mathbf{G}}}
\newcommand{\vI}{{\mathbf{I}}}
\newcommand{\vJ}{{\mathbf{J}}}
\newcommand{\vM}{{\mathbf{M}}}
\newcommand{\vU}{{\mathbf{U}}}
\newcommand{\vW}{{\mathbf{W}}}
\newcommand{\vX}{{\mathbf{X}}}
\newcommand{\vY}{{\mathbf{Y}}}
\newcommand{\vlam}{{\bm{\lambda}}}
\newcommand{\cG}{{\mathcal{G}}}
\newcommand{\cQ}{{\mathcal{Q}}}
\newcommand{\vareps}{\varepsilon}
\newcommand{\EE}{\mathbb{E}} 
\newcommand{\RR}{\mathbb{R}} 
\newcommand{\zz}{^{\top}} 
\newcommand{\cmark}{\ding{51}}
\newcommand{\xmark}{\ding{55}}
\newcommand{\bc}{\begin{center}}
	\newcommand{\ec}{\end{center}}
\newcommand{\bdm}{\begin{displaymath}}
	\newcommand{\edm}{\end{displaymath}}
\newcommand{\beq}{\begin{equation}}
	\newcommand{\eeq}{\end{equation}}
\newcommand{\bfl}{\begin{flushleft}}
	\newcommand{\efl}{\end{flushleft}}
\newcommand{\bt}{\begin{tabbing}}
	\newcommand{\et}{\end{tabbing}}
\newcommand{\beqn}{\begin{eqnarray}}
	\newcommand{\eeqn}{\end{eqnarray}}
\newcommand{\beqs}{\begin{align*}} 
	\newcommand{\eeqs}{\end{align*}}  
\newtheorem{theorem}{Theorem}[section]
\newtheorem{lemma}{Lemma}[section]
\newtheorem{example}{Example}[section]
\newtheorem{assumption}{Assumption}
\newtheorem{remark}{Remark}
\renewcommand{\eqref}[1]{(\ref{#1})}
\title{Compressed Decentralized Momentum Stochastic Gradient Methods for Nonconvex Optimization}
\author{\name Wei Liu \email lwdsdqqb@gmail.com \\
      \addr Department of Mathematical Sciences\\
      Rensselaer Polytechnic Institute
      \AND
      \name Anweshit Panda \email pandaa2@rpi.edu \\
      \addr Department of Computer Science\\
      Rensselaer Polytechnic Institute
      \AND
      \name Ujwal Pandey \email pandeu@rpi.edu\\
      \addr Department of Computer Science\\
      Rensselaer Polytechnic Institute
      \AND
      \name {Christopher Brissette} \email brissc@rpi.edu\\
      \addr Department of Computer Science\\
      Rensselaer Polytechnic Institute
      \AND
      \name {Yikang Shen} \email yikang.shn@gmail.com\\
      \addr MIT-IBM Watson AI Lab, IBM Research
      \AND
      \name {George M. Slota} \email slotag@rpi.edu\\
      \addr Department of Computer Science\\
      Rensselaer Polytechnic Institute
      \AND
      \name {Naigang Wang} \email nwang@us.ibm.com \\
      \addr IBM T. J. Watson Research Center
      \AND
      \name Jie Chen \email chenjie@us.ibm.com \\
      \addr MIT-IBM Watson AI Lab, IBM Research
      \AND
      \name Yangyang Xu\thanks{Corresponding author} \email xuy21@rpi.edu\\
      \addr Department of Mathematical Sciences\\
      Rensselaer Polytechnic Institute
      }
\begin{document}

\maketitle

\begin{abstract}
	In this paper, we design two  
compressed decentralized algorithms for solving nonconvex stochastic optimization under two different scenarios. 
Both algorithms adopt a momentum technique to achieve fast convergence and a message-compression technique to save communication costs. Though momentum acceleration and compressed communication have been used in literature, it is highly nontrivial to theoretically prove the effectiveness of their composition in a decentralized algorithm that can maintain the benefits of both sides, because of the need to simultaneously control the consensus error, the compression error, and the bias from the momentum gradient. 

For the scenario where gradients are bounded, 
our proposal is a compressed decentralized adaptive method. 
To the best of our knowledge, this is the first decentralized adaptive stochastic gradient method with compressed communication. 
For the scenario of data heterogeneity without bounded gradients, our proposal is a compressed decentralized heavy-ball method, which applies a gradient tracking technique to address the challenge of data heterogeneity. 
Notably, both methods 
achieve an optimal convergence rate, and they can achieve linear speed up and adopt topology-independent algorithmic parameters within a certain regime of the user-specified error tolerance.  
Superior empirical performance is observed over state-of-the-art methods on training deep neural networks (DNNs) and Transformers. 
\end{abstract}

 \section{Introduction}
 
 In this paper, we consider multi-agent nonconvex stochastic optimization problems in the form of
 \begin{equation}
 	\label{eq:model1}
 	\begin{aligned}
 		&\min _{\mathbf{x} \in \mathbb{R}^d}  f(\mathbf{x}):= \frac{1}{n} \sum_{i=1}^n f_i(\mathbf{x}) ,\ \text { with } f_i(\mathbf{x})=\mathbb{E}_{\xi_i \sim \mathcal{D}_i}\left[F_i\left(\mathbf{x}, \xi_i\right)\right] .
 	\end{aligned}
 \end{equation}
 Here, we suppose that there are $n$ agents on a connected graph $\cG$; for each $i\in[n]:=\{1,\ldots, n\}$, the functions $f_i$ and $F_i$ are owned and can be accessed directly only by the $i$-th agent; $\left\{\mathcal{D}_i\right\}_{i=1}^n$ are possibly non-i.i.d data distributions or represent heterogeneous datasets; the $n$ agents collaboratively solve \eqref{eq:model1} by communicating messages to their 1-hop neighbors on $\cG$. 
 Applications of training DNNs with pre-collected data can be formulated into \eqref{eq:model1}, where the data $\left\{\mathcal{D}_i\right\}_{i=1}^n$ are collected by (or distributed onto) $n$ agents. Similarly, the data for training large language models consist of diverse corpora that are non-i.i.d. We are interested in problems that are in the form of 
 \eqref{eq:model1} and satisfy the following structural assumption.
 \begin{assumption}
 	\label{ass:problemsetup}
 	In \eqref{eq:model1}, for each $i\in[n]$,
 	the function $f_i$ is $L$-smooth, i.e., $\| \nabla f_i(\mathbf{x})-$ $\nabla f_i(\mathbf{y})\|\leq L\| \mathbf{x}-\mathbf{y} \|$, for any $\mathbf{x}, \mathbf{y} \in \RR^d$, and
 	$f$ is lower bounded, i.e., $f^* \triangleq \min _{\mathbf{x}} f(\mathbf{x})>-\infty$.
 \end{assumption}
 
 \subsection{Motivation and challenges}
 The stochastic gradient method (SGM) \citep{robbins1951stochastic, nemirovski2009robust, ghadimi2013stochastic, bottou2018optimization} and its various 
 momentum-based or adaptive variants \citep{kingma2014adam, Reddiadam2018, duchi2011adaptive, xu2023momentum, cutkosky2019momentum} are now the workhorse for training DNNs and Transformers. Compared to the classic SGM, the momentum or adaptive SGMs are usually significantly faster to reach the same level of accuracy. 
 On the other hand, for solving multi-agent stochastic optimization problems in the form of \eqref{eq:model1}, decentralized SGMs have been designed and analyzed in many papers. However, most existing decentralized methods 
 are about classic SGMs, e.g., \citet{lian2017can, tang2018d, lian2018asynchronous, zhao2022beer, yan2023compressed} and few about momentum or adaptive SGMs \citep{nazari2022dadam, chen2023convergence, xiao2023one}. To save communication cost, a certain message-compression technique has been adopted in many works \citep{yan2023compressed, zhao2022beer, koloskova2019decentralized2, song2022compressed, taheri2020quantized, seide20141, tang2019doublesqueeze} for distributed algorithms. It remains unknown whether a 
 compression technique can be incorporated into a \emph{momentum or adaptive decentralized} SGM, to simultaneously achieve fast convergence, low communication cost, and linear speedup for solving nonconvex multi-agent optimization problems.      
 
 Unlike in a centralized environment, it is difficult to keep at the same status the information held by the multiple agents in a decentralized system, due to the communication restriction. 
 This difficulty makes it challenging to ensure the convergence or show an optimal complexity result of a momentum-based or adaptive decentralized SGM, especially when data are heterogenious across the agents. 
 
 \subsection{Our contributions}
 Our contributions are three-fold. First, on solving multi-agent nonconvex stochastic optimization, we design two decentralized momentum-based SGMs 
 with the option of message compression to save communication costs. Our first method, called DAMSCo and given in Alg.~\ref{alg:dad2-2}, is a compressed decentralized version of AMSGrad \citep{Reddiadam2018} that adopts adaptive updates for fast convergence; our second method, called DaSHCo and given in Alg.~\ref{alg:dad2-4}, is designed by using the heavy-ball acceleration technique \citep{polyak1964some, ochs2014ipiano, xu2022distributed}. To the best of our knowledge, our methods are the first \emph{decentralized} SGMs that adopt a \emph{compression} technique together with a \emph{momentum-based} or \emph{adaptive} learning rate to accelerate empirical convergence. Compared to the decentralized adaptive methods in \citet{chen2023convergence}, our method DAMSCo needs only one round of communication per update and enables message compression, leading to significant higher communication efficiency while maintaining fast convergence. Though the method in \citet{nazari2022dadam} also needs only a single round of communication per update, it relies on full-message communication and more problematically, it does not have guaranteed convergence to stationarity for nonconvex optimization, as pointed out in \citet{chen2023convergence}. Compared to the gradient tracking based decentralized method in \citet{xiao2023one}, our method DaSHCo is more communication efficient by using two rounds of communication (compared to three rounds in \citet{xiao2023one}) per update and enabling compressed communication.

 Second, we analyze the convergence rate of DAMSCo under the assumption of  bounded sample gradients --- standard for analyzing adaptive SGMs \citep{kingma2014adam, Reddiadam2018, chen2023convergence, chen2019convergence, xu2023parallel}, and the assumption of nonexpansiveness of the compression error. The analysis is highly non-trivial as it needs to handle the challenges raised by the coexistence of nonconvexity, decentralization, compression, stochasticity variance, and the adaptive update. By meticulously controlling the errors caused by stochasticity, decentralization, compression, and adaptive learning rate, we show that DAMSCo can achieve a convergence rate of $O\big(T^{-\frac{1}{2}}\big)$ in terms of expected norm-square of the objective's gradient and consensus deviation at all agents' local iterates, where $T$ is the number of updates. The rate is optimal for smooth nonconvex stochastic optimization and matches with the lower bound established in \citet{arjevani2023lower}. In addition, DAMSCo enjoys linear speed up with respect to the number $n$ of agents in a certain region of $T$, and its learning rate can be independent of the topology of the communication graph. 
 Empirically, we observe significantly faster convergence of DAMSCo over existing decentralized adaptive SGMs on homogeneous data, in terms of the communication rounds.

 Third, we analyze the convergence rate of DaSHCo in the scenario of heterogeneous data. By using the gradient tracking technique, we relax the assumption of bounded sample gradients, which is required by DAMSCo and 
 most existing adaptive SGMs. With the nonexpansiveness condition of the compression error, 
 we show that  
 DaSHCo also converges to a stationary and consensual point, in expectation, at the optimal rate of $O\big(T^{-\frac{1}{2}}\big)$. Similar to DAMSCo, DaSHCo achieves linear speed up with respect to the number of agents and can use topology-independent learning rate in a 
 regime of $T$. In addition to the momentum-based acceleration, the linear speed up is another significant advantage of DaSHCo over the compressed decentralized method CDProxSGT in \citet{yan2023compressed}. Empirically DaSHCo exhibits significantly faster convergence than CDProxSGT on both homogeneous and heterogeneous data, in particular for complicated neural architectures.  

 To highlight the advantage and novelty of our methods, we compare them to a few closely relevant methods in Table~\ref{tab:1}. 
{More detailed comparisons are given in 
Appendix~\ref{appen:table}.}
 In addition to these theoretical highlights, the proposed methods significantly improve the training efficiency of large-scale neural networks. The theoretical and empirical findings contribute to a broader understanding of decentralized adaptive gradient techniques and may inspire further research.
  \begin{table*}[t]
 	\begin{tabular*}{\linewidth}{@{}cccccccccc@{}}
 		\hline Methods & CMP  & 
 		(AG, MMT) &
 		DH &  LS & 
 		TI & CMR & \begin{tabular}{c}strong  \\ condition \end{tabular}\\
 		\hline \hline
 		DADAM \citep{nazari2022dadam} & \xmark & (\cmark, \cmark) &  \xmark &   \cmark & \cmark & 1 & bounded gradient\\
 		DAGM \citep{chen2023convergence} & \xmark & (\cmark, \cmark) &  \xmark &  \cmark &  \cmark & 2 & bounded gradient \\ 
 		\hline 
 		Choco-SGD \citep{koloskova2019decentralized2}&  \cmark &  (\xmark, \xmark)  &  \xmark & \cmark  & \cmark & 1 & \begin{tabular}{c} bounded gradient  \\ strong convexity \end{tabular} \\ \hline
 		BEER \citep{zhao2022beer}&  \cmark & (\xmark, \xmark)  & \cmark &  \cmark &  \cmark & 2 & big batch 
 		\\ 
 		CDProxSGT \citep{yan2023compressed} & \cmark & (\xmark, \xmark) & \cmark & \cmark & \cmark & 2 & none \\
 		\rowcolor{red!30}   DAMSCo (this paper)  & {\cmark}   & ({\cmark}, { \cmark}) &  {\xmark} &  {\cmark} & { \cmark}  & 1 & bounded gradient\\
 		\rowcolor{red!30}  DaSHCo (this paper)  & {\cmark} & ({ \xmark}, { \cmark}) &  {\cmark}  & {\cmark} & {\cmark} & 2 & none\\
 		\hline
 	\end{tabular*} 
 	\caption{Comparison between proposed methods and a few existing ones.  “CMP” designates the use of compressed communication. In our convergence bounds, the parameter 
$\eta$ quantifies the compression error: smaller 
$\eta$ values correspond to higher compression ratios. ``AG'' is for whether an adaptive gradient update is used. ``MMT'' represents whether  momentum is used. ``DH'' is for data heterogeneity.
    “LS” refers to linear speedup, which indicates that the convergence rate scales proportionally with the number of workers. Specifically, the $\frac{1}{n}$ factor in the complexity bound explicitly demonstrates this linear speedup, showing that using 
$n$ workers yields an 
$n$-fold acceleration compared to the single-worker case.
    “TI” stands for topology independence, which refers to the algorithm’s robustness with respect to the underlying communication network. 
    ``CMR'' denotes the communication round per iteration. 
    }
 	\label{tab:1} 
 \end{table*}

 \subsection{Notations}\label{sec:notations}
 We denote $[l]=\{1,2,\ldots,l\}$. $\|\cdot\|$ is used for the Euclidean norm of a vector and the Frobenius norm of a matrix and 
 $\|\mathbf{A}\|_2$ is for the spectral norm of a matrix $\mathbf{A}$. For two vectors $\va$ and $\vb$ of the same size, $\frac{\va}{\vb}$ and $\va\circ \vb$ respectively denote the componentwise division and multiplication, and $\sqrt{\vc}$ takes a componentwise square root of a nonnegative vector $\vc$.
 
 To solve \eqref{eq:model1} in a decentralized manner, we introduce a local copy of variable, denoted as $\vx_i$, for the $i$-th agent for each $i\in [n]$. We let
 $\mathbf{X}=\left[\mathbf{x}_1, \mathbf{x}_2, \ldots, \mathbf{x}_n\right] \in \mathbb{R}^{d \times n}$ concatinate all local variables. 
 $\vI$ is reserved for the identity matrix and $\mathbf{1}$ for an all-one vector of appropriate size. Also, we denote
 $$
 \nabla \mathbf{f}^t=\left[\nabla f_1\left(\mathbf{x}_1^t\right), \ldots, \nabla f_n\left(\mathbf{x}_n^t\right)\right], 
 \quad \nabla \mathbf{F}^t=\left[\nabla F_1\left(\mathbf{x}_1^t, \xi_1^t\right), \ldots, \nabla F_n\left(\mathbf{x}_n^t, \xi_n^t\right)\right], 
 $$
 and we define $\mathbf{J}=\frac{\mathbf{1 1}^{\top}}{n}$ as the averaging matrix, and  
 $$
 	\overline{\mathbf{x}}=\frac{1}{n} \mathbf{X} \mathbf{1}, \quad \overline{\mathbf{X}}=\mathbf{X J}=\overline{\mathbf{x}} \mathbf{1}^{\top}, \quad \mathbf{X}_{\perp}=\mathbf{X}(\mathbf{I}-\mathbf{J}), 
 \quad \overline{\nabla \mathbf{F}} ^t=\frac{1}{n} \mathbf{F}^t \mathbf{1},\quad \overline{\nabla \mathbf{f}} ^t=\frac{1}{n} \mathbf{f}^t \mathbf{1} .
 $$
 
 We use $\mathbb{E}_t$ for the expectation about 
 $\{\xi_i^t\}_{i\in [n]}$ conditional on the $t$-th iterate and $\mathbb{E}$ for the full expectation. $\cQ$ is used for a (random) compression operator and $\mathbb{E}_\cQ$ for its expectation. 
 
 
 	\section{Related Work}
 In this section, we review existing works on distributed SGMs in either a centralized or decentralized setting for solving nonconvex problems. 

 \subsection{Decentralized Nonadaptive Stochastic Gradient Methods}\label{sec:rela1}
 Many decentralized stochastic methods have been designed for solving problem~\eqref{eq:model1}, and most of them adopt nonadaptive updates, such as decentralized SGMs in \citet{lian2017can, assran2019stochastic,tang2018d}, gradient tracking-based methods \citep{koloskova2019decentralized2,lu2019gnsd}, and momentum-based variance reduction methods \citep{xin2021stochastic}. 
 
 A huge number of parameters are usually involved in training very large-scale neural networks, especially Transformers, and 
 a high communication cost will incur to communicate the models or gradients among multiple agents. 
 Consequently, reducing the communication cost is critical in designing efficient distributed methods. 
 Techniques such as 1-bit SGD \citep{seide20141}, SignSGD \citep{bernstein2018signsgd}, QSGD \citep{alistarh2017qsgd}, TernGrad \citep{wen2017terngrad}, Random-$k$ \citep{stich2018sparsified}, Top-$k$ \citep{aji2017sparse}, Threshold-$v$ \citep{dutta2020discrepancy} and ScaleCom \citep{chen2020scalecom}, have emerged as pivotal tools for compression, acting on either the gradients or model parameters, to achieve low communication cost.
 \citet{tang2019doublesqueeze, karimireddy2019error} apply such a tool in a centralized setting. 
 In decentralized scenarios, \citet{tang2019deepsqueeze} applies the compression with error compensation to the communication of model parameters. \citet{koloskova2019decentralized2} proposes Choco-Gossip, which
 compresses a residue between the model parameter and its estimation, and it designs Choco-SGD 
 by Choco-Gossip. 
 BEER \citep{zhao2022beer} and CDProxSGT \citep{yan2023compressed} include gradient tracking and compress both tracked stochastic gradients and model parameters at each iteration by Choco-Gossip.  

 
 \subsection{Centralized or Decentralized Adaptive Stochastic Gradient Methods}\label{sec:rela2}
 Three distinct categories of stochastic algorithms have emerged in literature: momentum-based, adaptive learning rate, and adaptive gradient methods that combine the ideas of the first two categories. In practice, adaptive SGMs like AdaGrad \citep{duchi2011adaptive}, Adam \citep{kingma2014adam}, and AMSGrad \citep{reddi2016stochastic} are more effective compared with a standard SGM. 
 %
 
 Efforts have been made to incorporate the technique of adaptive gradient updates into distributed methods to achieve fast convergence. 
\citet{hou2018analysis} propose a distributed Adam for convex problems. 
 \citet{chen2020toward,zhao2022paddlebox} introduce locally adaptive algorithms for centralized distributed training. 
 \citet{chen2021quantized} propose a quantized Adam, which considers error feedback in only 
 one direction.
 \citet{chen2022efficient} then presents a compressed distributed Adam that communicates the gradient error at each iteration. A centralized distributed AMSGrad is explored in \citet{li2022distributed}, and its compressed version is presented in \citet{wang2022communication}. 
 
 Few explorations have been made to integrate momentum-based or adaptive gradient updates into a decentralized method, possibly because of the challenge 
 of understanding their convergence behaviors.  
 A decentralized ADAM (called DADAM) is developed in \citep{nazari2022dadam} for 
 online optimization. 
 Though \citet{nazari2022dadam} gives regret bounds for DADAM on both convex and nonconvex problems, \citet{chen2023convergence} points out that for offline nonconvex problems, DADAM may not converge to a stationary point. 
 Then, \citet{chen2023convergence} presents a framework of decentralized adaptive gradient method (DAGM), which communicates both model parameter and the second-momentum vector at each iteration. DAGM successfully integrates 
 the use of adaptive gradient updates and decentralized communication protocols.
 While \citet{nazari2022dadam} and \citet{chen2023convergence} are most closely related to our work, neither of them explore compression to reduce communication cost. Moreover, 
 they both require boundedness of gradients and thus cannot handle data heterogeneity. In contrast, both algorithms that we propose can have the option of compressed communication, and our momentum-based method (in Alg.~\ref{alg:dad2-4}) can successfully address the challenge of data heterogeneity without requiring boundedness of gradients.

 \section{Decentralized Stochastic Methods with Compressed Communication}\label{sec:alg}
 
 In this section, we present two decentralized SGMs 
 with compressed communication for solving~\eqref{eq:model1}. The first method in Alg.~\ref{alg:dad2-2} is designed for the scenario where gradients are bounded, while the second one in Alg.~\ref{alg:dad2-4} can additionally handle data heterogeneity where gradients are unbounded. 
 
 In order for the $n$ agents to collaboratively solve \eqref{eq:model1}, we let each agent maintain a local copy of the variable. For each $i\in [n]$, let $\vx_i$ be the local copy by the $i$-th agent. 
 %
 To perform neighbor communication, we use a mixing (or called ``gossip'') matrix $\mathbf{W}$ that satisfies the following standard assumption. 
 
 \begin{assumption}
 	\label{ass:W}
 	For the mixing matrix $\mathbf{W}$, it holds that
 	(i) $\mathbf{W}$ is doubly stochastic, i.e., $\vW\ge \vzero$, $\mathbf{W} \mathbf{1}=\mathbf{1}$ and $\mathbf{1}^{\top} \mathbf{W}=\mathbf{1}^{\top}$;
 	(ii) ${W}_{i j}=0$ if $i$ and $j$ are not neighbors to each other;
 	(iii) $\operatorname{Null}(\mathbf{W}-\mathbf{I})=\operatorname{span}\{\mathbf{1}\}$ and $\rho \triangleq\|\mathbf{W}-\mathbf{J}\|_2<1$.
 \end{assumption}
 
 Under Assumption~\ref{ass:W}(i), weighted averaging of local variables or sample gradients will be performed at each agent with its neighbors. Condition (ii) 
 is enforced because message can be communicated directly only between immediate (a.k.a. 1-hop) neighbors. If the underlying communication graph $\mathcal{G}$ is connected, then a 
 $\vW$ can be chosen such that 
 Assumption~\ref{ass:W} is satisfied. The condition $\rho < 1$ is crucial to ensure consensus among the multiple agents through neighbor communication. Particularly, the value of $\rho$ depends on $\cG$ and the choice of $\vW$. Examples of $\vW$ for different graphs are given in \citet{koloskova2019decentralized2, nedic2018network, mancino2023decentralized}. It is also shown in \citet{xiao2004fast} that an optimal $\vW$ can be designed subject to the constraints in Assumption~\ref{ass:W}(i), (ii) and $\operatorname{Null}(\mathbf{W}-\mathbf{I})=\operatorname{span}\{\mathbf{1}\}$, such that $\rho$ is minimized.
 
 
 \subsection{Compressed Decentralized AMSGrad}
 With a mixing matrix $\vW$ satisfying Assumption~\ref{ass:W}, we propose a compressed decentralized adaptive SGM for solving~\eqref{eq:model1}, where we adopt a similar adaptive update as AMSGrad \citep{Reddiadam2018}. The pseudocode is shown in Alg.~\ref{alg:dad2-2}. 
 {Lines 5-6 follow AMSGrad and perform local update to the first and second momentum; Line 7 performs a local update to the model; $\underline{\mathbf{x}}_i$ is used to estimate the local model, while we compress the estimate error; Line 8 performs a neighbor communication, which can be realized through communicating the compressed vectors; see the discussions above Assumption~\ref{assump:com}.} Notice that for simplicity, we take	only one random sample $\xi_i^t$ at each iteration. In general, one can take a mini-batch of random samples, and all our theoretical results established in Section~\ref{sec:thms}  remain valid.

 \begin{algorithm}[h]
 	\caption{\textbf{D}ecentralized \textbf{AMS}Grad with \textbf{Co}mpressed Communication (DAMSCo)}\label{alg:dad2-2}
 	\DontPrintSemicolon
 	\textbf{Input:} choose $\alpha>0$, $0<\beta_1<1$, $0<\beta_2<1$, $\delta>0$,  $0<\gamma \le 1$, and a maximum number $T$ of updates, 
 	set $\vx^0_{i}$, $\underline\vx^0_i,$
 	$\vm_{i}^{-1}$, $\vg^{-1}_i$, $\widehat{\vu}_{i}^{-1}$, and ${\vu}_{i}^{-1}$ to $\vzero$, and choose a mixing matrix $\vW$.
 	
 	\For{$t=0,1, \cdots, T-1$}{
 		\For{all nodes $i \in[n]$ in parallel}{	
 			obtain one random sample $\xi_i^t$ and compute a stochastic gradient 
 			$\vg_{i}^t \leftarrow \nabla \vF_i\left(\mathbf{x}_i^t, \xi_i^t\right)$;

 			let  $\vm_{i}^t=\beta_1 \vm^{t-1}_{ i}+\left(1-\beta_1\right) {\vg}^{t}_{i}$;
 			
 			let $\widehat{\vu}_{i}^t =\beta_2  \widehat{\vu}_{i}^{t-1}+ (1-\beta_2) \vg_i^t \circ \vg_i^t$ and 
 			$\vu_i^t = \max\{\vu_i^{t-1}, \widehat{\vu}_{i}^t\}$;
 			
 			update  $\vx^{t+\frac{1}{2}}_{i}=\vx^{t}_{i}-\alpha \frac{\vm_{i}^t}{\sqrt{\vu_{i}^t+\delta}}$ and set 
 			$\underline\vx^{t+1}_{i} = \underline\vx^{t}_{i} + \mathcal{Q} [ \vx^{t+\frac{1}{2}}_{i}  - \underline\vx^{t}_{i}]$;
 			
 			let {\small$\vx^{t+1}_{i} = \vx^{t+\frac{1}{2}}_{i}+\gamma(\sum_{j=1}^n \vW_{ji} \underline\vx^{t+1}_{j} -\underline\vx^{t+1}_{i} )$.	}				
 		}
 	}
 \end{algorithm}

Alg. \ref{alg:dad2-2} is the first decentralized adaptive SGM that incorporates compressed communication. Previous methods have either focused on adaptive updates with full-message communication \citep{chen2023convergence, nazari2022dadam}  or have applied compression strategies to non-adaptive algorithms \citep{yan2023compressed, koloskova2019decentralized2}. 

 When 
 $\cQ=\vI$, Alg.~\ref{alg:dad2-2} reduces to a non-compressed decentralized version of AMSGrad. Note that even in this special non-compressed case, our algorithm differs from that in \citet{chen2023convergence}. First, we always use the most recent stochastic gradient $\vg_i^t$ in the setting of $\vu_i^t$ to update $\vx_i$, while the decentralized AMSGrad in \citet{chen2023convergence} uses $\vg_i^t$ only to update its first moment vector but not the second momentum vector at the $t$-th step. Second, our algorithm only performs one round of communication for $\vx_i$-variables, while the method in \citet{chen2023convergence} needs additional communication for $\vu_i$-iterates. 
 Thus, our method can save half of the communication costs, even without using compression. 
 
 
 When 
 $\cQ\neq \vI$, our algorithm can achieve higher communication efficiency. Note that by maintaining the weighted average $\vv_i:=\sum_{j=1}^n W_{ji}\underline\vx_{j}$ for each $i$, we only need to communicate the compressed vectors $\left\{\mathcal{Q} \big[ \vx^{t+\frac{1}{2}}_{j}  - \underline\vx^{t}_{j}\big]\right\}$ to update $\vv_i$ and thus obtain $\vx^{t+1}_{i}$ by $\vv_i^{t+1} = \vv_i^{t}+\sum_{j=1}^n W_{ji} \mathcal{Q} \big[ \vx^{t+\frac{1}{2}}_{j}  - \underline\vx^{t}_{j}\big]$. To ensure reliability of the algorithm, $\cQ$ must satisfy a certain quality condition. Throughout this paper, we assume the following condition. 
 
 \begin{assumption}\label{assump:com}
 	There exists $\eta \in[0,1)$ such that
 	$
 	\mathbb{E}_\cQ\left[\|\mathbf{x}-\mathcal{Q}[\mathbf{x}]\|^2\right] \leq \eta^2\|\mathbf{x}\|^2, \forall \mathbf{x} \in \mathbb{R}^d.
 	$
 \end{assumption}


 The condition in Assumption~\ref{assump:com} is satisfied by various compression operators in the literature, such as Random-$k$ \citep{stich2018sparsified}, Top-$k$ \citep{aji2017sparse}, and the rescaled quantizations \citep{chen2022efficient}. In the appendix, we give
 a few concrete examples of $\cQ$ that satisfy the above assumption. More examples can be found in \citet{koloskova2019decentralized2, chen2022efficient}.

  	\subsection{Compressed Decentralized Heavy-ball Method}
  Similar to existing distributed adaptive SGMs, 
  the boundedness condition on (sample) gradients is needed to ensure convergence of  Alg.~\ref{alg:dad2-2}. Such a condition implies data similarity among the multiple agents, which is often measured by the quantity $\sup_\vx\frac{1}{n}\sum_{i=1}^n\|\nabla f(\vx) - \nabla f_i(\vx)\|^2$.	To address the challenge of data heterogeneity and to achieve fast convergence and efficient communication, we propose to incorporate the techniques of gradient tracking \citep{di2016next,nedic2017achieving}, 
  momentum-based updates, and compressed communication, resulting in a compressed decentralized stochastic heavy-ball method in Alg.~\ref{alg:dad2-4}. This method does not require bounded gradients and can successfully handle data heterogeneity. Note that we can also apply gradient tracking in Alg.~\ref{alg:dad2-2} but a gradient boundedness assumption will still be required to ensure convergence.  

{In Alg. \ref{alg:dad2-4}, we communicate not only model but also stochastic gradient. In addition, Line 5 uses the gradient tracking technique to handle data heterogeneity. Similar to the model compression, we use $\underline{\mathbf{g}}_i^t$ as an estimate of $\mathbf{g}_i^{t-\frac{1}{2}}$ and compress the estimate error. This technique helps control the impact of possibly unbounded gradients by allowing agents to track the global gradient. If the gradients are bounded, DaSHCo will not need to communicate gradients for convergence. Line 6 mixes local gradients and updates the first momentum. Line 7 performs the local update to model by using the first momentum term and then compresses the model estimate error. Line 8 performs a neighbor communication of local models. 
  }

  
  \begin{algorithm}[h] 
  	\caption{\textbf{D}ecentr\textbf{a}lized \textbf{S}tochastic \textbf{H}eavy-ball Method with \textbf{Co}mpressed Communication (DaSHCo)}\label{alg:dad2-4}
  	\DontPrintSemicolon
  	\textbf{Input:} choose $\alpha>0$, $0<\beta_1<1$, $\gamma_x, \gamma_g\in (0,1]$ and a maximum number $T$ of updates,
  	set $\vx^0_{i}$, $\underline\vx^0_i,$
  	$\vm_{i}^{-1}$, $\vg^{-1}_i$,  $\widetilde\vg^{-1}_i$, $\vg^{-\frac{1}{2}}_i$  and $ \underline\vg_{i}^{-1}$ to $\vzero$, and choose a mixing matrix $\vW$.
  	
  	\For{$t=0,1, \cdots, T-1$}{
  		
  		\For{all nodes $i \in[n]$ in parallel}{
  			
  			obtain one random sample $\xi_i^t$ and compute a stochastic gradient 
  			$\widetilde{\vg}_{i}^t \leftarrow \nabla \vF_i\left(\mathbf{x}_i^t, \xi_i^t\right)$;
  			
  			let $\vg_i^{t-\frac{1}{2}} = \vg_i^{t-1} - \widetilde{\vg}_{i}^{t-1} + \widetilde{\vg}_{i}^t$ and set 
  			$\underline\vg^{t}_{i} = \underline\vg^{t-1}_{i} + \mathcal{Q} [ {\vg}^{t-\frac{1}{2}}_{i} - \underline\vg^{t-1}_{i}]$;
  			
  			let ${\vg}_{i}^t= {\vg}^{t-\frac{1}{2}}_{i}+\gamma_g[\sum_{j=1}^n W_{ji} \underline\vg^{t}_{j} -\underline\vg^{t}_{i} ]$ and 
  			$\vm_{i}^t=\beta_1 \vm^{t-1}_{ i}+\left(1-\beta_1\right) {\vg}^{t}_{i}$;
  			
  			update $\vx^{t+\frac{1}{2}}_{i}=\vx^{t}_{i}-\alpha  {\vm_{i}^t} $
  			and set 
  			$\underline\vx^{t+1}_{i} = \underline\vx^{t}_{i} + \mathcal{Q} [ \vx^{t+\frac{1}{2}}_{i}  - \underline\vx^{t}_{i}]$; 
  			
  			let {\small$\vx^{t+1}_{i} = \vx^{t+\frac{1}{2}}_{i}+\gamma_x[\sum_{j=1}^n \vW_{ji} \underline\vx^{t+1}_{j} -\underline\vx^{t+1}_{i} ]$.}
  			
  		}
  	}
  \end{algorithm}
  
  The heavy-ball (a.k.a. momentum) technique has been used in several distributed methods, e.g., \citet{xu2022distributed, xiao2023one}. However, Alg. \ref{alg:dad2-4} is the first decentralized method that uses such a technique together with a compression technique and gradient tracking to simultaneously achieve fast convergence and efficient communication for handling data heterogeneity.
  
  
   \section{Convergence Analysis}
   \label{sec:thms}
     In this section, we analyze the convergence of Alg.~\ref{alg:dad2-2} and Alg.~\ref{alg:dad2-4}. We aim at establishing their convergence rate in terms of the violation of first-order optimality condition and consensus error. The analysis is challenging due to the nonconvexity of the problem, combined with the use of stochastic gradients, compressed decentralized communication, and momentum acceleration. Specifically, the coupling of compressed communication with momentum acceleration or adaptive gradient updates significantly complicates bounding both the consensus error and stationarity violation. 
    To address this challenge, we carefully construct Lyapunov functions that explicitly separate compression error, consensus error, and stochastic noise. By delicately balancing these error terms through tailored choices of stepsizes and mixing parameters, we establish the first convergence guarantees for decentralized adaptive methods that simultaneously incorporate adaptive gradient updates and compressed communication (see Theorems~\ref{thm:gradientbound} and~\ref{thm:convg-rate-amsgrad}). Main results are given below and detailed proofs in the appendix. 
   
   \subsection{Convergence Results of Compressed Decentralized AMSGrad}
   Besides Assumptions~\ref{ass:problemsetup}--\ref{assump:com}, we make the next assumption that is standard for analyzing (distributed or non-distributed) adaptive SGMs \citep{Reddiadam2018, kingma2014adam, xu2023parallel, chen2023convergence, chen2019convergence}.
   \begin{assumption}
   	\label{ass:gradient}
   	The random samples $\{\xi_i^t\}_{i\in [n], t\ge 0}$ are independent. For each $t$ and $i$,  it holds $\mathbb{E}_t[\vg_i^t]=\nabla f_i(\vx^t_{i})$. In addition, there are constants $B$ and $B_\infty$ such that $\left\|\vg_i^t\right\| \leq B$, $\left\|\vg_i^t\right\|_{\infty} \leq B_{\infty}$ for any $i\in [n]$ and any $t$,  and $\left\|\nabla f_i(\vx)\right\| \leq B$, $\left\|\nabla f_i(\vx)\right\|_{\infty} \leq B_{\infty}$ for all $\vx$.
   \end{assumption}
   

   The next two theorems give the convergence rate results. Their complete descriptions are given in Theorems~\ref{thm:gradientbound} and \ref{thm:convg-rate-amsgrad} in the appendix, with proofs provided there. 
   \begin{theorem}
   	\label{thm:gradientbound-main}
   	Under Assumptions~\ref{ass:problemsetup}--\ref{ass:gradient}, let  $\{\vx^t_i\}$ be generated by Alg. \ref{alg:dad2-2} with 
   	\begin{equation}\label{eq:cond-alpha-gamma-amsgrad-main}
   		\begin{aligned}	
   			&  \alpha \le \min\left\{\frac{\delta}{24 L \sqrt{B_\infty^2 + \delta}}, \frac{(1-\eta^2)^2}{32}\right\},\quad 
   			 \gamma\leq \min\left\{\frac{1-\widehat{\rho}^2}{60\eta},\frac{1-\eta^2}{25},\frac{\alpha}{\eta}\right\},
   		\end{aligned}
   	\end{equation}
   	where $\widehat{\rho} = \|\gamma \mathbf{W}+\left(1-\gamma\right) \mathbf{I}\|_2 < 1$.	Then with $C:=\frac{12}{1-\widehat\rho^2}\left(\frac{4 nB^2 \widehat{\rho}^2 }{\delta(1-\widehat{\rho}^2)} +\frac{n B^2}{45\delta}\right) + \frac{10   n B^2}{\delta( 1-\eta^2)^2}$, it holds that $\frac{1}{ T} \sum_{t=0}^{T-1} \mathbb{E}\left[\left\|\mathbf{X}_{\perp}^t\right\|^2\right] \leq \alpha^2C$ and
   	\begin{align*}
   		& \frac{\alpha}{4\sqrt{B_{\infty} ^2+\delta}} \sum_{t=0}^{T-1} \mathbb{E}\left[\|\nabla  f\left(\overline{\vx}^{t}\right)\|^2\right]  \\ \leq &  \left[f\left(\overline{\vx}^{0}\right)-f^*\right] + \frac{\alpha\beta_1}{1-\beta_1} \frac{d B_\infty^2}{\sqrt\delta} + \frac{L \beta_1^2 \alpha^2 d B_\infty^2}{\delta (1-\beta_1)^2} + \frac{24 L \alpha^2 }{n\delta} T B^2\notag\\
   		&  \quad  + \left( \frac{6 L^3 \alpha^2}{n\delta} + \frac{L^2 \alpha }{n }\left(\frac{\sqrt{B_{\infty}^2+\delta}}{ \delta} + \frac{1}{2\sqrt{\delta}}\right)\right) \alpha^2 C T 
   		\notag   + L\alpha^2 {B_{\infty}^4} \frac{2d}{\delta^2} \sum_{t=0}^{T-1} \left(1-\beta_2^{t+1}\right)  \\  &\quad   +  \alpha \sum_{t=0}^{T-1} \Bigg(\frac{ dB^4_{\infty}}{{ }\delta^{\frac{3}{2}}} \left(1-\beta_2^{t+1}\right)   + \frac{\alpha^2 \beta_1^2  L^2 B^2}{\delta(1-\beta_1)^2} \left(\frac{ \sqrt{B_{\infty}^2+\delta} }{ \delta } + \frac{  1}{2\sqrt{\delta}} \right)\Bigg). 
   	\end{align*}	
   \end{theorem}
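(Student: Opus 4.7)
The plan is to proceed in two parallel tracks: establish a recursive bound on the consensus error $\|\vX_\perp^t\|^2$, and derive a descent-type inequality along the averaged iterate $\bar\vx^t$. These tracks are coupled, since the descent inequality will require the consensus bound, while the consensus bound must absorb the stochasticity and adaptive-scaling fluctuations.

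First I would set up the consensus recursion. Observe that since $\vW$ is doubly stochastic, $\bar\vx^{t+1} = \bar\vx^t - \frac{\alpha}{n}\sum_{i=1}^n \frac{\vm_i^t}{\sqrt{\vu_i^t+\delta}}$, and the communicated offset lies in the orthogonal complement of $\mathrm{span}\{\mathbf{1}\}$. Writing the full iterate update as $\vX^{t+1} = \vX^{t+1/2} + \gamma(\underline\vX^{t+1}\vW - \underline\vX^{t+1})$, projecting onto $\vI-\vJ$ yields $\vX_\perp^{t+1} = \vX_\perp^{t+1/2}(\gamma\vW + (1-\gamma)\vI - \vJ \cdot 0) + \gamma(\underline\vX^{t+1} - \vX^{t+1/2})(\vW-\vI)$, so that $\|\vX_\perp^{t+1}\|$ is controlled by $\widehat\rho\|\vX_\perp^{t+1/2}\|$ plus a compression-error term. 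Hence I would track the coupled pair $(\|\vX_\perp^t\|, \|\vX^t - \underline\vX^t\|)$ via a Lyapunov combination $\Phi^t = \|\vX_\perp^t\|^2 + \kappa\,\mathbb{E}_\cQ\|\vX^{t+1/2} - \underline\vX^t\|^2$. Using Assumption~\ref{assump:com} and Young's inequality, one chooses $\kappa$ and uses the stepsize condition $\gamma \le \frac{1-\widehat\rho^2}{60\eta}$ from~\eqref{eq:cond-alpha-gamma-amsgrad-main} to obtain a strict contraction $\Phi^{t+1} \le q\,\Phi^t + \text{noise}$ with $q<1$. The noise term scales as $\alpha^2$ times $\|\vm_i^t/\sqrt{\vu_i^t+\delta}\|^2$, which is bounded by $\frac{B^2}{\delta}$ via Assumption~\ref{ass:gradient}. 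Unrolling the recursion and averaging yields the claimed bound $\alpha^2 C$.

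Second, for the descent inequality, I would apply $L$-smoothness of $f$ to $\bar\vx^{t+1} = \bar\vx^t - \frac{\alpha}{n}\sum_i \frac{\vm_i^t}{\sqrt{\vu_i^t+\delta}}$, then decompose
\begin{align*}
-\langle \nabla f(\bar\vx^t), \tfrac{1}{n}\sum_i \tfrac{\vm_i^t}{\sqrt{\vu_i^t+\delta}}\rangle
&= -\langle \nabla f(\bar\vx^t), \tfrac{1}{n}\sum_i \tfrac{\nabla f_i(\vx_i^t)}{\sqrt{\vu_i^t+\delta}}\rangle
+ \text{(momentum bias)} + \text{(stochastic noise)}.
\end{align*}
The momentum-bias term is handled using the standard AMSGrad trick of shifting indices $\vm_i^t - \vg_i^t$ and peeling off a $\beta_1$-geometric factor; this is where the $\beta_1/(1-\beta_1)$ factors appear. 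For the leading inner product, I would further write $\frac{1}{\sqrt{\vu_i^t+\delta}} = \frac{1}{\sqrt{B_\infty^2+\delta}} + \big(\frac{1}{\sqrt{\vu_i^t+\delta}} - \frac{1}{\sqrt{B_\infty^2+\delta}}\big)$, so that the first piece produces the ``clean'' descent $-\frac{\alpha}{\sqrt{B_\infty^2+\delta}}\|\nabla f(\bar\vx^t)\|^2$ after adding and subtracting $\nabla f_i(\bar\vx^t)$ and absorbing the $L$-smoothness consensus correction $\|\vX_\perp^t\|^2$. The second piece is controlled via the monotonicity of $\vu_i^t$ and the telescoping trick $\sum_t(1-\beta_2^{t+1})(\cdot)$ familiar from analyses of AMSGrad, producing the adaptive-drift terms with $(1-\beta_2^{t+1})$ factors.

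The main obstacle is the interlocking of three error sources: (i) the compression introduces a slowly-contracting consensus error that depends on $\alpha^2$ through the update magnitude; (ii) the adaptive scaling $1/\sqrt{\vu_i^t+\delta}$ makes the effective stepsize both coordinate-wise and time-varying, so it cannot be pulled out of inner products cleanly; and (iii) the momentum smears gradient information across iterations. Controlling these simultaneously is what forces the joint stepsize conditions in~\eqref{eq:cond-alpha-gamma-amsgrad-main} --- in particular, $\alpha \le (1-\eta^2)^2/32$ is what lets the $\alpha^2 C$ consensus contribution be absorbed into the descent without blowing up the adaptive-rate denominator, and $\gamma \le \alpha/\eta$ synchronizes the compression-tracking rate with the model update size. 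Once these are chosen to close the Lyapunov argument, summing over $t=0,\dots,T-1$ and rearranging yields exactly the stated bound on $\sum_t\mathbb{E}\|\nabla f(\bar\vx^t)\|^2$.
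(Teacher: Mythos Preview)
Your consensus-error track is essentially the paper's approach: the paper also couples $\mathbb{E}[\|\vX_\perp^t\|^2]$ with the compression error $\mathbb{E}[\|\vX^t-\underline\vX^t\|^2]$ (rather than $\|\vX^{t+1/2}-\underline\vX^t\|^2$, but this is cosmetic), derives a $2\times2$ linear recursion, and chooses a weighting vector so that the combination contracts; the stepsize constraints in \eqref{eq:cond-alpha-gamma-amsgrad-main} are exactly what make this contraction go through. That part is fine.

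The descent track, however, has two genuine gaps relative to what is needed to land the stated bound.

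\textbf{Missing auxiliary sequence.} The paper does \emph{not} apply $L$-smoothness to $\bar\vx^t$ directly. It introduces $\vz^t=\bar\vx^t+\tfrac{\beta_1}{1-\beta_1}(\bar\vx^t-\bar\vx^{t-1})$ and shows
\[
\vz^{t+1}-\vz^t=\frac{\beta_1}{1-\beta_1}\cdot\frac{\alpha}{n}\sum_i \vm_i^{t-1}\circ\Big(\tfrac{1}{\sqrt{\vu_i^{t-1}+\delta}}-\tfrac{1}{\sqrt{\vu_i^t+\delta}}\Big)-\frac{\alpha}{n}\sum_i\frac{\vg_i^t}{\sqrt{\vu_i^t+\delta}}.
\]
This is what cleanly converts the momentum into (i) a term whose $t$-sum \emph{telescopes} via the monotonicity of $\vu_i^t$, yielding the $\tfrac{\alpha\beta_1}{1-\beta_1}\tfrac{dB_\infty^2}{\sqrt\delta}$ and $\tfrac{L\beta_1^2\alpha^2 dB_\infty^2}{\delta(1-\beta_1)^2}$ terms in the statement, and (ii) a raw stochastic-gradient term with \emph{no} momentum averaging left. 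Your ``shift indices and peel off a $\beta_1$-geometric factor'' is too vague to reproduce these specific telescoping constants; without the $\vz^t$ device you will carry $\vm_i^t-\vg_i^t$ terms that do not collapse, and the $\|\nabla f(\vz^t)-\nabla f(\bar\vx^t)\|$ corrections (which are the source of the $\tfrac{\alpha^2\beta_1^2 L^2 B^2}{\delta(1-\beta_1)^2}$ terms) will not appear in the right place.

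\textbf{Wrong surrogate in the adaptive-scaling split.} You propose $\tfrac{1}{\sqrt{\vu_i^t+\delta}}=\tfrac{1}{\sqrt{B_\infty^2+\delta}}+\big(\tfrac{1}{\sqrt{\vu_i^t+\delta}}-\tfrac{1}{\sqrt{B_\infty^2+\delta}}\big)$. This correction is $O(1)$ at every $t$ (since $\vu_i^t$ can be anywhere in $[0,(1-\beta_2^{t+1})B_\infty^2]$, the gap to $B_\infty^2$ is \emph{at least} $\beta_2^{t+1}B_\infty^2$, not at most $(1-\beta_2^{t+1})B_\infty^2$), so summing over $t$ gives an $O(T)$ error that swamps the descent. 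The paper instead replaces $\vu_i^t$ by $\bar\vu^{t-1}$, which is (a) measurable w.r.t.\ the past, so the conditional expectation of $\vg_i^t/\sqrt{\bar\vu^{t-1}+\delta}$ is honestly $\nabla f_i(\vx_i^t)/\sqrt{\bar\vu^{t-1}+\delta}$, and (b) within $(1-\beta_2^{t+1})B_\infty^2$ of $\vu_i^t$ componentwise, which is exactly how the $(1-\beta_2^{t+1})$ factors enter. Only \emph{after} this replacement does the paper use the one-sided bound $\bar\vu^{t-1}\le B_\infty^2$ to extract the clean $\tfrac{1}{\sqrt{B_\infty^2+\delta}}\|\nabla f(\bar\vx^t)\|^2$ term; that step costs nothing because it is a lower bound in the favorable direction.
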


   By Theorem \ref{thm:gradientbound-main}, we specify the choice of $\alpha$ and $\delta$ and obtain the convergence rate as follows.
   \begin{theorem}\label{thm:convg-rate-amsgrad-main}
   	Let $\alpha 
   	= \frac{4\theta\sqrt{n(B_{\infty} ^2+\delta)}}{\sqrt{T}}$ satisfy \eqref{eq:cond-alpha-gamma-amsgrad-main} for some constant $\theta\in (\frac{n}{L e}, \frac{n}{L})$ and a large enough $T$. Suppose that all other conditions assumed in Theorem~\ref{thm:gradientbound-main} also hold and suppose $n \le T$. 
   	Then we have the following results. 
   	\begin{enumerate}
   		\item[(i)]  Let $\delta = \omega^2 B_\infty^2 \frac{\sqrt T}{\sqrt n}$ for a universal constant $\omega > 0$ and $C$ defined in Theorem~\ref{thm:gradientbound-main}. Set $\beta_2\in \left[\frac{T}{T+1}, \big(\frac{\theta L}{n}\big)^{\frac{1}{T}}\right]$. Then 
   		\begin{equation}\label{rate-DaSHCo-1}
   			\begin{aligned}
   				& ~\frac{1}{T} \sum_{t=0}^{T-1} \mathbb{E}\left[\|\nabla  f\left(\overline{\vx}^{t}\right)\|^2 + \frac{1}{n}\|\vX_\perp^t\|^2\right] = O\left(\frac{1}{\sqrt{n T}} + \frac{n+C}{T}\right).
   			\end{aligned}      
   		\end{equation}
   		
   		\item[(ii)] Let $\delta = O(1)$ be an universal positive constant. 
   		Set $\beta_2\in [\frac{pT }{pT+1},1) $ with $p=\sqrt{nT}$. Then \eqref{rate-DaSHCo-1} also holds. 
   	\end{enumerate}
   \end{theorem}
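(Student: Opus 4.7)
The plan is to invoke Theorem~\ref{thm:gradientbound-main} and unpack its bound under the prescribed choices of $\alpha$, $\delta$, and $\beta_2$. Rearranging the bound in that theorem (i.e., dividing by $T$ and multiplying by $\frac{4\sqrt{B_\infty^2+\delta}}{\alpha}$) yields an upper bound on $\frac{1}{T}\sum_{t=0}^{T-1}\mathbb{E}\|\nabla f(\overline{\vx}^t)\|^2$, while the consensus estimate $\mathbb{E}\|\vX_\perp^t\|^2 \le \alpha^2 C$ gives $\frac{1}{nT}\sum_{t=0}^{T-1}\mathbb{E}\|\vX_\perp^t\|^2 \le \alpha^2 C/n$; summing these two recovers the left-hand side of~\eqref{rate-DaSHCo-1}. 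With $\alpha=\Theta(\sqrt{n(B_\infty^2+\delta)/T})$, the key normalizing prefactor satisfies $\frac{\sqrt{B_\infty^2+\delta}}{\alpha T}=\Theta(\frac{1}{\sqrt{nT}})$, which drives the $O(1/\sqrt{nT})$ term, and $\alpha^2 C/n = O(C(B_\infty^2+\delta)/T)$ eventually contributes through the $C/T$ piece of the claimed rate.

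I would first verify that \eqref{eq:cond-alpha-gamma-amsgrad-main} holds under these choices once $T$ is large enough: in case~(i), $\alpha=\Theta(T^{-1/4})$ after substituting $\delta=\omega^2 B_\infty^2\sqrt{T/n}$, and the ratio $\alpha/(\delta/\sqrt{B_\infty^2+\delta})=\Theta(T^{-1/2})\to 0$; in case~(ii), $\alpha=\Theta(\sqrt{n/T})\to 0$ since $\delta=O(1)$. Both clauses of~\eqref{eq:cond-alpha-gamma-amsgrad-main} can then be ensured by choosing $T$ large, and a feasible $\gamma$ exists via $\gamma\le \alpha/\eta$.

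Next I would control the two problematic terms involving $\sum_{t=0}^{T-1}(1-\beta_2^{t+1})$ using the constraints on $\beta_2$. In case~(i), the identity $\sum_{t=0}^{T-1}(1-\beta_2^{t+1})=T-\beta_2\frac{1-\beta_2^T}{1-\beta_2}$ combined with $\beta_2\ge T/(T+1)$ (hence $1/(1-\beta_2)\ge T+1$) and $\beta_2^T\le \theta L/n$ yields $\sum(1-\beta_2^{t+1})\le T\theta L/n=O(T/n)$. In case~(ii), the pointwise bound $1-\beta_2^{t+1}\le(t+1)(1-\beta_2)$ with $1-\beta_2\le 1/(pT)=1/(T\sqrt{nT})$ gives $\sum(1-\beta_2^{t+1})=O(T^2/(pT))=O(\sqrt{T/n})$. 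Multiplying by the respective coefficients $L\alpha^2 \cdot 2dB_\infty^4/\delta^2$ and $\alpha\cdot dB_\infty^4/\delta^{3/2}$ and then by the $\Theta(1/\sqrt{nT})$ prefactor, each of these terms is shown to be $O(1/\sqrt{nT})$ in both cases.

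The remaining task is routine term-by-term substitution: the $[f(\overline{\vx}^0)-f^*]$ term yields $O(1/\sqrt{nT})$ directly from the prefactor; terms carrying extra factors of $\alpha$ or $\alpha^2$ vanish even faster; the $\frac{24L\alpha^2}{n\delta}TB^2$ term, after multiplication by the prefactor, contributes $O((B_\infty^2+\delta)/(\delta\sqrt{nT}))=O(1/\sqrt{nT})$ in both regimes; and the $\alpha^2 C T$ group, once divided by $T$ and multiplied by the prefactor, produces at worst $O((n+C)/T)$. The main obstacle is the tight balancing of $\delta$: taking $\delta$ large (case~(i)) shrinks all $1/\delta$ factors (notably inside $C$ and in the $\sum(1-\beta_2^{t+1})$ terms) by exactly the amount needed to offset the growth of $\sqrt{B_\infty^2+\delta}$ in the prefactor, while keeping $\delta=O(1)$ (case~(ii)) forces $\beta_2$ to be closer to one through the larger $p=\sqrt{nT}$ so that the $\sum(1-\beta_2^{t+1})$ terms remain $O(\sqrt{T/n})$. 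Carefully tracking each summand's final order through this balancing is the technically heaviest part, but no single step requires a new inequality beyond those already used to prove Theorem~\ref{thm:gradientbound-main}.
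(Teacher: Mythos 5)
Your proposal follows the paper's proof structure closely and is essentially correct: normalize the bound of Theorem~\ref{thm:gradientbound-main} by $\frac{\alpha T}{4\sqrt{B_\infty^2+\delta}} = \theta\sqrt{nT}$, add the consensus bound $\frac{1}{T}\sum_t\mathbb{E}\|\vX_\perp^t\|^2 \le \alpha^2 C$ divided by $n$, verify the learning-rate constraint \eqref{eq:cond-alpha-gamma-amsgrad-main} for large $T$, and finish by bounding $\sum_{t=0}^{T-1}(1-\beta_2^{t+1})$ under each regime. Your treatment of Case~(i) is identical to the paper's (both reduce to $\frac{1}{T}\sum_{t=0}^{T-1}(1-\beta_2^{t+1}) \le \beta_2^T \le \theta L/n$ via $\beta_2 \ge T/(T+1)$ and $\beta_2 \le (\theta L/n)^{1/T}$), and the remaining term-by-term substitutions mirror the paper's \eqref{eq:gradient-bound-use1}--\eqref{eq:gradient-bound-use2}.

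The one genuine difference is Case~(ii). The paper defines $h(\beta_2) := \frac{1}{T}\sum_{t=0}^{T-1}(1-\beta_2^{t+1})$, observes $h$ is decreasing, evaluates $h(\frac{pT}{pT+1})$ in closed form via the geometric sum, and then uses the estimate $(\frac{pT}{pT+1})^T \le e^{-T/(pT+1)}$ together with a second-order Taylor expansion of the exponential to arrive at $h(\beta_2) \le \frac{1}{pT} + \frac{1}{2p}$. You instead apply the Bernoulli-type pointwise bound $1-\beta_2^{t+1} \le (t+1)(1-\beta_2)$, sum the arithmetic series to get $\sum(1-\beta_2^{t+1}) \le \frac{T(T+1)}{2}(1-\beta_2) \le \frac{T+1}{2p}$, and hence $h(\beta_2) \le \frac{1}{2p}(1+\frac{1}{T}) = O(1/\sqrt{nT})$ for $p = \sqrt{nT}$. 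Both routes deliver the same order bound, but yours is shorter and sidesteps the Taylor-expansion estimate of $e^{-T/(pT+1)}$ entirely, which makes the Case~(ii) portion cleaner than the paper's. One small cosmetic inaccuracy to flag: you write the normalizing prefactor as $\frac{\sqrt{B_\infty^2+\delta}}{\alpha T}=\Theta(\frac{1}{\sqrt{nT}})$, whereas it equals $\frac{1}{4\theta\sqrt{nT}}$ exactly and $\theta$ is $n$- and $L$-dependent; however, since the paper's own simplification in \eqref{eq:gradient-bound-use2} makes the same identification (absorbing $\theta$, $L$ into the $O$), this matches the level of rigor the paper itself adopts and does not constitute a gap in your argument relative to theirs.
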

   

   \begin{remark}[Linear speed up and topology independence]
   	In Case (i) of Theorem~\ref{thm:convg-rate-amsgrad-main}, 
   	we have $C=O(1)$ and $\frac{n}{T} = O(\frac{1}{\sqrt{n T}})$ if $T$ is large enough such that $T=\Omega\left(\max\Big\{\frac{n^3}{(1-\widehat{\rho}^2)^4},\ \frac{n^{\frac{7}{3}}}{(1-\eta^2)^{\frac{8}{3}}}\Big\}\right).$ 
   	Thus \eqref{eq:cond-alpha-gamma-amsgrad-main} holds, and
   	\begin{equation}\label{eq:rate-simple-main}	    
   				\frac{1}{T} \sum_{t=0}^{T-1} \mathbb{E}\left[\|\nabla  f\left(\overline{\vx}^{t}\right)\|^2 + \frac{1}{n}\|\vX_\perp^t\|^2\right] = O\left(\frac{1}{\sqrt{n T}}\right).
   	\end{equation}
   	For Case (ii) of Theorem~\ref{thm:convg-rate-amsgrad-main}, 
   	if $T$ is large enough such that 
   	$	T=\Omega\left(\max\Big\{\frac{n^3}{(1-\widehat{\rho}^2)^4},\ \frac{n^2}{(1-\eta^2)^2}\Big\}\right), $
   	it then holds $\frac{1+C}{T} =O(\frac{1}{\sqrt{n T}}) $. Thus again we have \eqref{eq:rate-simple-main}.  
   	Therefore, if $T$ is large enough,  
   	we obtain linear speed up, and the learning rate is independent of $\rho$ defined in Assumption~\ref{ass:W}, i.e., topology-independent. 
   \end{remark}

   \subsection{Convergence Results of Compressed Decentralized Heavy-ball Method}
   Instead of Assumption~\ref{ass:gradient}, we assume a weaker condition, which is standard for analyzing SGMs.  
   \begin{assumption}
   	\label{ass:gradient2}
   	The random samples $\{\xi_i^t\}_{i\in [n], t\ge 0}$ are independent.  In addition, there is $\sigma\ge0$ such that $\mathbb{E}_t[\vg_i^t]=\nabla f_i(\vx^t_{i})$ and $\mathbb{E}\left[\|\vg_i^t -\nabla f_i(\vx^t_{i})\|^2\right] \leq \sigma^2$ for all $t$ and $i$.
   \end{assumption}
   
   A complete description of the next theorem about the convergence rate is in Theorems~\ref{thm:hb-conver-com} and \ref{thm:convg-rate-hb-com}.	
   \begin{theorem}
   	\label{thm:hb-conver-com-main}
   	Suppose Assumptions \ref{ass:problemsetup}--\ref{assump:com} and \ref{ass:gradient2} hold. Let 
   	${\alpha} = \frac{\theta_1\sqrt{n}}{\sigma\sqrt{T}}$,  $\gamma_g = \frac{\theta_2\sqrt{n}}{\sigma\sqrt{T}}$ and  $\gamma_x = \frac{\theta_3\sqrt{n}}{\sigma\sqrt{T}}$ for some constants $\theta_1,\theta_2, \theta_3\in (0,1)$ such that
   	\begin{align*}
   		&  \gamma_x\leq \min\left\{\frac{1-\widehat{\rho}_x^2}{60\eta},\frac{1-\eta^2}{25},\frac{\alpha}{\eta},\frac{\sqrt{2}-1}{2\eta}\right\},\\&  \gamma_g \leq \min \left\{ \frac{1-\widehat{\rho}_g^2}{25\eta},\frac{1-\widehat{\rho}_g^2}{25L}, \frac{1-\eta^2}{25}, \frac{1-\eta^2}{25L}\right\}, \\ 
   				&	\alpha\leq \min\Bigg\{  \frac{1}{16b}, \ 
   				\frac{ \gamma_g (1-\eta^2)}{32 }, 
   				\frac{\gamma_g (1-\eta^2)}{12 L \sqrt{n}}, \ 
   				\frac{\gamma_g(1-\widehat{\rho}_x^2)}{12 \sqrt{b}},\ \frac{\gamma_g(1-\widehat{\rho}_g^2)}{12  {L} \sqrt{n}}, 
   				  \sqrt{\frac{(1-\beta_1)\gamma_g^2 / (2L)}{\frac{6 L \beta_1^2}{(1-\beta_1)^2} + \frac{ b}{45(1-\widehat{\rho}_x^2)} + \frac{24L^2+1}{(1-\eta^2)^2} + \frac{20 L^2}{(1-\widehat{\rho}_g^2)^2} 
   				}}\Bigg\},	
   			\end{align*}
   		with $b=4\left(9L + 1+\frac{72L^2+1}{(1-\eta^2)^2} + \frac{60L^2}{(1-\widehat{\rho}_g^2)^2} \right)$ and $\widehat{\rho}_s = \|\gamma_s \mathbf{W}+\left(1-\gamma_s\right) \mathbf{I}\|_2 < 1$ for $s\in \{x, g\}$.
   		Then 
   		\begin{align*} &~  \frac{1}{T} \sum_{t=0}^{T-1} \mathbb{E}\left[\|\nabla  f\left(\overline{\vx}^{t}\right)\|^2 + \frac{1}{n}\|\vX_\perp^t\|^2\right]=O\left(\frac{\sigma}{\sqrt{n T}} + \frac{n}{T}\left(\frac{1}{ 1-\widehat{\rho}_g^2}+
   			\frac{1}{(1-\eta^2)^2}\right)\right).     
   		\end{align*}  
   	\end{theorem}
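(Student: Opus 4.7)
The plan is to construct a Lyapunov potential that couples the running objective at the averaged iterate with all error quantities introduced by decentralization, compression, gradient tracking, and momentum, and then show it contracts under the stated stepsize bounds. Specifically, I would track six quantities: (i) $\mathbb{E}[f(\overline{\mathbf{x}}^t) - f^*]$; (ii) the consensus error $\mathbb{E}\|\mathbf{X}_\perp^t\|^2$; (iii) the model compression-estimate discrepancy $\mathbb{E}\|\mathbf{X}^{t+\frac{1}{2}} - \underline{\mathbf{X}}^{t+1}\|^2$; (iv) the perpendicular component of the tracked gradient $\mathbb{E}\|(\mathbf{G}^{t-\frac{1}{2}})_\perp\|^2$ with $\mathbf{G}^{t-\frac{1}{2}} = [\mathbf{g}_1^{t-\frac{1}{2}},\ldots,\mathbf{g}_n^{t-\frac{1}{2}}]$; (v) its compression-estimate discrepancy $\mathbb{E}\|\mathbf{G}^{t-\frac{1}{2}} - \underline{\mathbf{G}}^t\|^2$; and (vi) a momentum discrepancy $\mathbb{E}\|\overline{\mathbf{m}}^t - \overline{\nabla \mathbf{f}}^t\|^2$.

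First I would derive a descent lemma for $f(\overline{\mathbf{x}}^{t+1})$ using $L$-smoothness together with $\overline{\mathbf{x}}^{t+1} = \overline{\mathbf{x}}^t - \alpha\,\overline{\mathbf{m}}^t$, where the key payoff of gradient tracking is that the mean of $\mathbf{g}_i^{t-\frac{1}{2}}$ is preserved by neighbor mixing, so $\overline{\mathbf{g}}^t = \frac{1}{n}\sum_i \widetilde{\mathbf{g}}_i^t$. This yields $-\frac{\alpha}{2}\|\nabla f(\overline{\mathbf{x}}^t)\|^2$ plus error contributions controlled by (ii), (iv), (vi). Next, I would establish geometric contractions on the compression residuals (iii) and (v) using the standard identity $\mathbb{E}_{\mathcal{Q}}\|\mathbf{z} - \mathcal{Q}[\mathbf{z}]\|^2 \le \eta^2\|\mathbf{z}\|^2$, producing recursions with contraction $\eta^2$ plus residuals proportional to the algorithmic increments $\|\mathbf{X}^{t+\frac{1}{2}} - \mathbf{X}^{t-\frac{1}{2}}\|^2$ and $\|\mathbf{G}^{t-\frac{1}{2}} - \mathbf{G}^{t-\frac{3}{2}}\|^2$. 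Then, writing $\mathbf{X}_\perp^{t+1} = \mathbf{X}^{t+\frac{1}{2}}(\widehat{\mathbf{W}}_x - \mathbf{J}) + \gamma_x\mathbf{E}_x^{t+1}(\mathbf{I}-\mathbf{J})$ with $\widehat{\mathbf{W}}_x = \gamma_x\mathbf{W} + (1-\gamma_x)\mathbf{I}$ and $\mathbf{E}_x^{t+1}$ the stacked compression error, I get contraction factor $\widehat{\rho}_x^2$ on (ii), with a cross term absorbed into (iii) via Young's inequality; an analogous recursion for the tracked-gradient consensus component (iv) uses (v) and $\widehat{\rho}_g^2$. Finally, (vi) obeys a standard averaged-momentum recursion whose noise term is $\sigma^2/n$ by agent-wise independence, plus an $L$-smoothness-driven drift that reduces to (ii) and $\|\overline{\mathbf{m}}^t\|^2$.

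The core step is then to assemble $\Phi_t = \mathbb{E}[f(\overline{\mathbf{x}}^t)] + \sum_{j=1}^5 c_j E_j^{(t)}$, where $E_j$ denotes items (ii)--(vi), and to pick positive weights $c_j$ and stepsize thresholds so that $\Phi_{t+1} - \Phi_t \le -\frac{\alpha}{4}\mathbb{E}\|\nabla f(\overline{\mathbf{x}}^t)\|^2 - \frac{\kappa\alpha}{n}\mathbb{E}\|\mathbf{X}_\perp^t\|^2 + C_0\alpha^2\sigma^2/n$. The weights $c_j$ scale like $\frac{1}{1-\eta^2}$ and $\frac{1}{1-\widehat{\rho}_s^2}$, and the many ratios appearing on the right-hand sides of the stepsize constraints in the theorem are exactly the conditions needed to render each coefficient on the right-hand side nonpositive after absorption. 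Telescoping and dividing by $\alpha T$ gives a bound of the shape $\frac{\Phi_0 - \Phi_T}{\alpha T} + \frac{\alpha\sigma^2}{n}$ together with residuals of order $c_j/(\alpha T)$. Substituting $\alpha, \gamma_x, \gamma_g = \Theta(\sqrt{n/T}/\sigma)$ balances the first two at $O(\sigma/\sqrt{nT})$ and leaves the $O\!\left(\frac{n}{T}\bigl(\frac{1}{1-\widehat{\rho}_g^2} + \frac{1}{(1-\eta^2)^2}\bigr)\right)$ residue that appears in the statement.

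The hardest part is the simultaneous calibration of the five weights and the resulting stepsize polytope: each contraction introduces positive cross-terms that must be absorbed into a different error, and the recursion on (v) ultimately involves $\|\mathbf{G}^{t-\frac{1}{2}} - \mathbf{G}^{t-\frac{3}{2}}\|^2$, which by $L$-smoothness reduces to $\|\mathbf{X}^t - \mathbf{X}^{t-1}\|^2 = \alpha^2\|\mathbf{M}^t\|^2$ plus a noise term, so that the tracking-compression residual feeds back into the descent inequality through $\alpha^2\|\mathbf{M}^t\|^2$. Closing this loop so that the net coefficient of $\|\mathbf{M}^t\|^2$ remains nonpositive is what forces the quadratic-looking upper bound on $\alpha$ (the last, most complex entry in the list of stepsize constraints), while the simpler bounds on $\gamma_x$ and $\gamma_g$ ensure each of the four contractions (ii)--(v) remains individually contractive in the product metric defined by $c_1,\ldots,c_4$.
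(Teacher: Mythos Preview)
Your high-level Lyapunov strategy is right and matches the paper's, but the paper handles the momentum differently and your potential is missing one slot.

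\textbf{Momentum handling.} The paper does \emph{not} work with $f(\overline{\vx}^t)$ and a momentum discrepancy $\|\overline{\vm}^t-\overline{\nabla\mathbf f}^t\|^2$. Instead it introduces the auxiliary sequence $\vz^t=\overline{\vx}^t+\frac{\beta_1}{1-\beta_1}(\overline{\vx}^t-\overline{\vx}^{t-1})$, for which $\vz^{t+1}-\vz^t=-\alpha\,\overline{\vg}^t$ (the momentum disappears), and applies $L$-smoothness at $\vz^t$. Consequently the potential tracks $\|\overline{\vm}^t\|^2$ and $\|\vM_\perp^t\|^2$ rather than any bias term, giving a seven-component vector $\Omega^t$ and a matrix recursion $\Omega^t\le\vA\Omega^{t-1}+\overline{\vA}\Omega^t+\vb^t+\vc$; the weights $\vq$ are chosen so that $\vq^\top(\vA+\overline{\vA}-\vI)\le(0,-1,0,\ldots,0)$, and the explicit $q_i$'s are exactly what generate the stepsize constraints you quoted. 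Your route via item~(vi) can be made to work, but the $\vz$-trick is what the paper uses and is why the last (square-root) bound on $\alpha$ in the statement carries the factor $(1-\beta_1)\gamma_g^2/(2L)$ coming from $q_4$.

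\textbf{Missing quantity.} Your six items do not control $\|\vM_\perp^t\|^2$, yet it appears unavoidably in the consensus recursion for $\|\vX_\perp^{t+1}\|^2$ (since $\vX_\perp^{t+\frac12}=\vX_\perp^t-\alpha\vM_\perp^t$) and in the bound for $\|\vX^{t+1}-\vX^t\|^2$. The paper closes this with the convexity inequality $\|\vM_\perp^t\|^2\le\beta_1\|\vM_\perp^{t-1}\|^2+(1-\beta_1)\|\vG_\perp^t\|^2$ and carries it as a separate coordinate; you would need to add this as a seventh tracked term. Relatedly, your claim that $\|\vX^t-\vX^{t-1}\|^2=\alpha^2\|\vM^t\|^2$ is not correct: because of the gossip and compression step, the paper shows only $\|\vX^{t+1}-\vX^t\|^2\le 4\alpha^2\|\vM^t\|^2+12\|\vX_\perp^t\|^2+4\sqrt{2}\eta\gamma_x\|\vX^t-\underline{\vX}^t\|^2$, which is what feeds (ii) and (iii) back into the gradient-tracking residuals and produces the loop you describe.
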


   	\begin{remark}[Linear speed up and topology independence]
   		From Theorem~\ref{thm:hb-conver-com-main}, 
   		if $T$ is large enough such that 
   		\begin{equation*}
   			\begin{aligned}
   				 T=\Omega\Big(&\max\Big\{ \frac{n }{\sigma^2(1-{\widehat{\rho}_g}^2)^6}, \frac{n }{\sigma^2(1-{\widehat{\rho}_x}^2)^6},  \frac{n }{\sigma^2(1-{\eta}^2)^6},  \\ & 
   				 \frac{n^2 }{\sigma^2(1-{\widehat{\rho}_g}^2)^4}, \frac{n^2 }{\sigma^2(1-{\widehat{\rho}_x}^2)^4}, \frac{n^3 }{\sigma^2(1-{\widehat{\rho}_g}^2)^2}, \frac{n^3 }{\sigma^2(1-{\eta}^2)^4} \Big\}\Big), 
   			\end{aligned}
   		\end{equation*}
   		we obtain that  $\alpha, \gamma_g$ and $\gamma_x$ are all $O(\frac{\sqrt{n}}{\sigma\sqrt{T}})$,  
   		and $\frac{n}{T} \left(\frac{1}{ (1-\widehat{\rho}_g^2)^2}+
   		\frac{1}{(1-\eta^2)^2} \right)= O(\frac{\sigma}{\sqrt{n T}})$. 
   		Thus it holds 
   		\begin{equation*}   
   					\frac{1}{T} \sum_{t=0}^{T-1} \mathbb{E}\left[\|\nabla  f\left(\overline{\vx}^{t}\right)\|^2 + \frac{1}{n}\|\vX_\perp^t\|^2\right] = O\left(\frac{\sigma}{\sqrt{n T}}\right),
   		\end{equation*}
   		and we obtain linear speed up with $\alpha$ independent of $\rho$ defined in Assumption~\ref{ass:W}. 
   	\end{remark}

\section{Numerical Results}
\label{sec:numerical}

We now demonstrate the efficacy of the proposed algorithms over a set of numerical experiments. We consider three standard benchmarks, including training a convolutional neural network LeNet5~\citep{lecun1998gradient} on the FashionMNIST dataset~\citep{xiao2017fashion}, a restnet architecture Fixup-ResNet-20~\citep{zhang2019fixup} on the CIFAR-10 dataset~\citep{krizhevsky2009learning}, and a small-scale GPT model, called NanoGPT~\citep{karpathy2022}, on the tiny-shakespeare dataset. We will show the performance of our proposed methods (DAMSCo and DaSHCo) on homogeneous data and the ability of DaSHCo on handling heterogeneous data. Our test data and neural networks are selected to demonstrate practical performance. 

We primarily compare against three prior works: CDProxSGT~\citep{yan2023compressed}, Distributed Adam~\citep{nazari2022dadam}, and Distributed AdaGrad~\citep{duchi2011adaptive} with adaptive learning rates~\citep{chen2023convergence}. Recently, CDProxSGT was proposed as a proximal SGM for training on heterogeneous data in a compressed and decentralized setting. It demonstrated superior numeric performance to other competing methods~\citep{xin2021stochastic,koloskova2019decentralized2,zhao2022beer}. Chen et al.~\citep{chen2023convergence} presented a general framework for decentralized adaptive gradient methods. They demonstrated that communicating the second-momentum vector information is critical to guaranteeing convergence of decentralized adaptive methods. Therefore, we incorporate their method into our implementation of Distributed Adam and AdaGrad. We consider these methods to be a practical representation of the state-of-the-art.
 
Our methods and the methods for comparison are implemented in Python with \textit{PyTorch} and \textit{MPI for Python} (mpi4py) and they will be open-sourced upon publication. For LeNet5 and Fixup-ResNet-20, we run our experiments on a CPU server. This server has two-way 64-core (256 threads) AMD EPYC 7742 CPUs at 2.25GHz and 2TB DDR4 memory. It runs Ubuntu 20.04 with PyTorch version 2.3.0+cu121, Python 3.8.10, and mpi4py version 3.0.3. For NanoGPT, we run the experiments on 4 NVIDIA A100 GPUs.

We run the experiments for LeNet5 and Fixup-ResNet-20 on $n=5$ MPI ranks and for NanoGPT on $n=4$ MPI ranks with the communication network in a ring topology. 
Each rank only communicates with their two neighbors. We measure and report objective loss on the full training data, accuracy on the full test data, and consensus error 
calculated as $\frac{1}{n}\|\mathbf{X}_\bot\|^2$. We plot these outputs against the number of communication rounds, under the assumption that these methods are targeting systems where communication is extremely costly. In the appendix, we show the same experimental results but as a function of epoch. In addition, we show results for larger ranks, a grid network topology, and QSGD compression \citep{alistarh2017qsgd} to demonstrate that our methods can generalize beyond the earlier computational results.

\subsection{Results for Training LeNet5 and Fixup-ResNet-20 on Homogeneous Data}

Here, we present results on the described optimizers with random homogeneous training data. For compression, we use the Top-$k$ compressor~\citep{aji2017sparse} to compress gradient data and model parameter for CDProxSGT, DaSHCo, Distributed AdaGrad (DAdaGrad), 
and Distributed Adam (DADAM), 
while we only communicate parameter updates with DAMSCo. We set $\gamma_x=\gamma_g=\gamma=1.0$ for all compressors to mirror prior experiments~\citep{yan2023compressed}. For FashionMNIST, we use top-$k$(0.3), communicating the largest 30\% of values. For CIFAR-10, we use top-$k$(0.4).

\begin{figure}[t]
\begin{center}
  \includegraphics[trim=0 0 0 0cm,clip,width=0.3\textwidth]{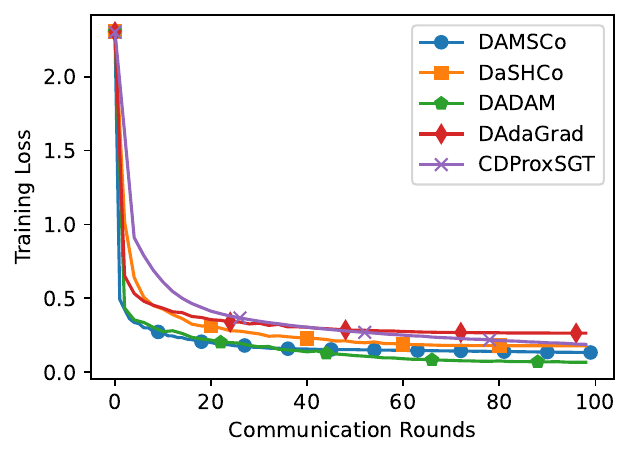}
  \includegraphics[trim=0 0 0 0cm,clip,width=0.3\textwidth]{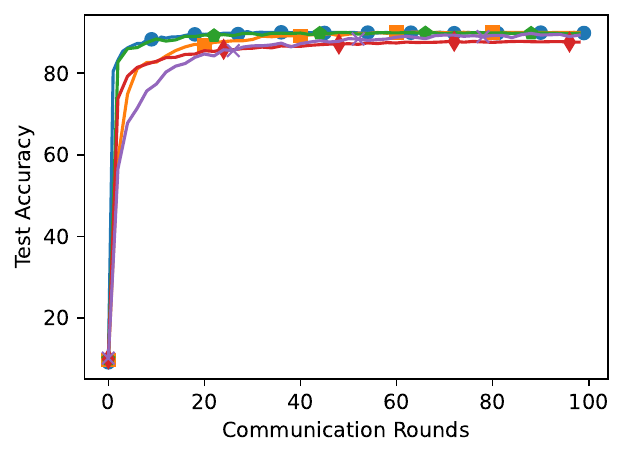}
  \includegraphics[trim=0 0 0 0cm,clip,width=0.31\textwidth]{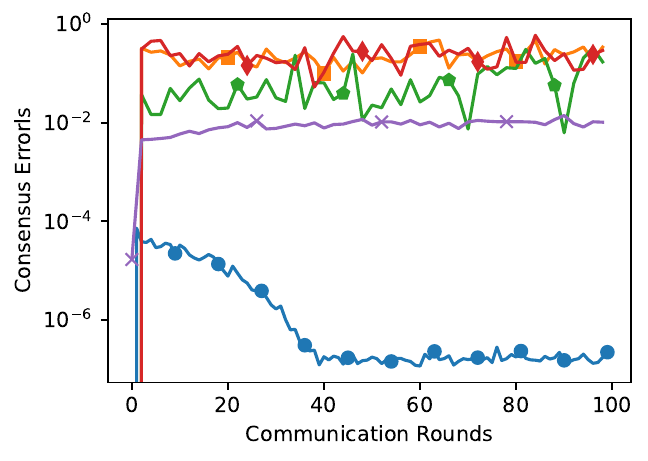}
  \\
  \includegraphics[trim=0 0 0 0cm,clip,width=0.3\textwidth]{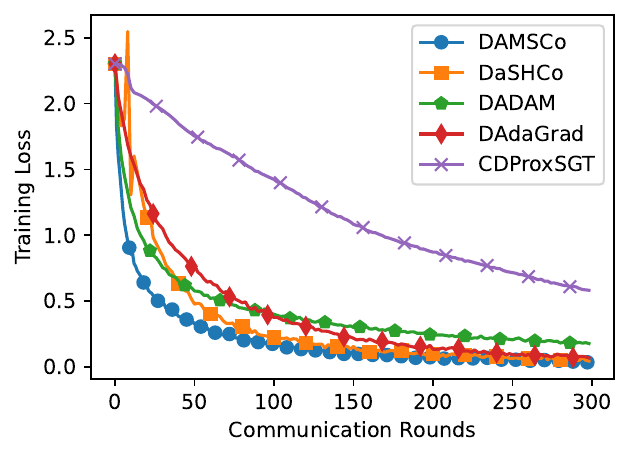}
   \includegraphics[trim=0 0 0 0cm,clip,width=0.3\textwidth]{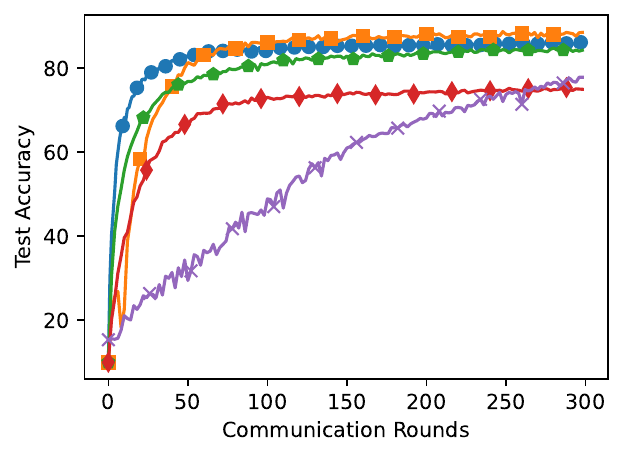}
    \includegraphics[trim=0 0 0 0cm,clip,width=0.31\textwidth]{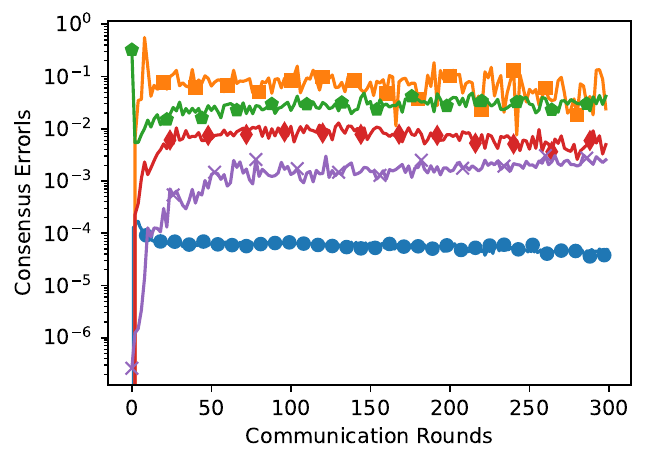}

\end{center} 
\vspace{-0.3cm}
  \caption{\textbf{Results with homogeneous data:} Plotted above are (from left to right) the training loss, test accuracy, and consensus error per communication round for the FashionMNIST (top) and CIFAR-10 (bottom) datasets, comparing DAMSCo and DaSHCo with CDProxSGT, Distributed AdaGrad, and Distributed Adam with Top-$k$ compression.}
  \label{fig:index_comp}\vspace{-0.5pc}
\end{figure}


We train LeNet5 on the FashionMNIST dataset for 100  communication rounds for all optimizers. For CDProxSGT, we use the code published by the authors along with the same learning rate (0.02), batch size (8), and $\mu$ value for the regularizer ($10^{-4}$), as tuned by the authors. We mirror these hyperparameter settings for DaSHCo. For DADAM and DAMSCo, we mirror the batch size of 8, set the learning rate to the PyTorch default for Adam of 0.001, and similarly use the standard $\beta$ defaults of $\beta_1=0.9, \beta_2=0.999$. For DAdaGrad, we use the same batch size and the PyTorch default learning rate of 0.01.

We train Fixup-ResNet-20 on the CIFAR-10 dataset for 300 communication rounds. We again use the authors' tuned values for CDProxSGT, with the batch size increased to 64. We mirror the batch size of 64 and keep the other hyperparameter settings the same as described above for the other optimizers.

Results for homogeneous data are shown in Figure~\ref{fig:index_comp}. We observe that our DAMSCo method is very competitive, converging at about the same rate on the FashionMNIST dataset as DADAM and much quicker on the more challenging CIFAR-10 dataset, with respect to the objective function and test accuracy. We note that both DAMSCo and DaSHCo demonstrate better performance than CDProxSGT on both datasets, which itself had recently given strong numerical performance against several other state-of-the-art methods (note that CDProxSGT was primarily developed for heterogeneous data).

\subsection{Results for Training LeNet5 and Fixup-ResNet-20 on Heterogeneous Data}


Next, we compare all methods on heterogeneous training data, with an equal number (i.e., 2) of label classes from each dataset distributed to each of the 5 MPI ranks. We lower the batch size to 32 to help improve generalization performance on CIFAR-10, but we otherwise reuse the same learning rates and other hyperparameters from the earlier experiments. We again run for 100 communication rounds on FashionMNIST and use 300 rounds on CIFAR-10. We demonstrate our results for these experiments in Figure~\ref{fig:label_comp}.

We note several obvious differences with the results on 
homogeneous data. CDProxSGT is considerably more competitive, as it was primarily developed for the heterogeneous case. Our DaSHCo method converges most quickly with respect to testing accuracy on both FashionFMNIST and CIFAR-10. DAdaGrad, DADAM, and DAMSCo are unable to quickly converge in these experiments. This is expected with these types of methods, as data heterogeneity can break assumptions for convergence guarantees, such as bounded gradients and gradient similarities.\\

\subsection{Results for Training NanoGPT}

Lastly, we compare all methods on homogeneous training data using a customizable GPT model implemented by~\citet{karpathy2022}. It is a standard Transformer architecture but we experiment with fewer parameters: 6 layers, 6 heads in each layer, 384 feature channels, and 256 context length. This totals to 10.67M parameters. We run training on 4 NVIDIA A100 GPUs and thus 4 MPI Ranks. We tune two hyperparameters: learning rate ($\alpha$) and batch-size. For all optimizers, we set the batch-size to 256 and $\alpha =10^{-4}$ except for DaSHCo and CDProxSGT, in which we set $\alpha = 0.02.$ We train the GPT model on the tiny-shakespeare dataset with top-$k$(0.55) and use 650 communication rounds. After training, we achieve minimum validation losses of 2.321, 1.649, 1.641, 1.635, 1.620 for CDProxSGT, DAMSCo, DAdam, DAdaGrad, and DaSHCo respectively. We demonstrate our results for these experiments in Figure~\ref{fig:llm_comp}. We give a qualitative comparison of inference performance in the appendix.

     

\begin{figure}[t]
  \centering
  \includegraphics[trim=0 0 0 0cm,clip,width=0.3\textwidth]{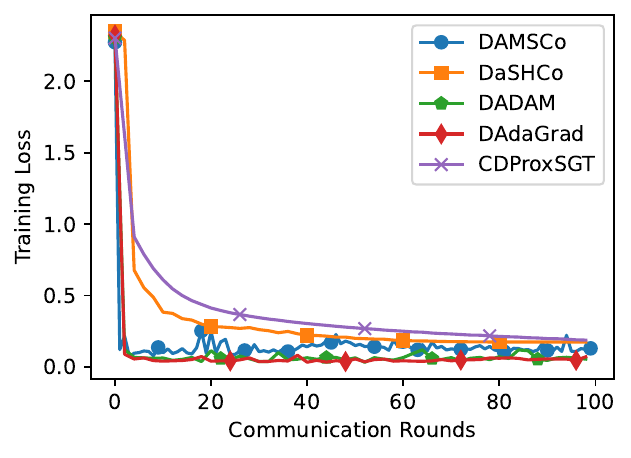}
  \includegraphics[trim=0 0 0 0cm,clip,width=0.3\textwidth]{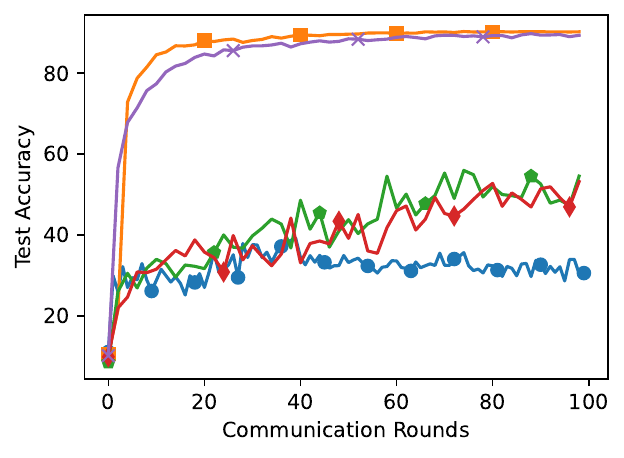}
    \includegraphics[trim=0 0 0 0cm,clip,width=0.31\textwidth]{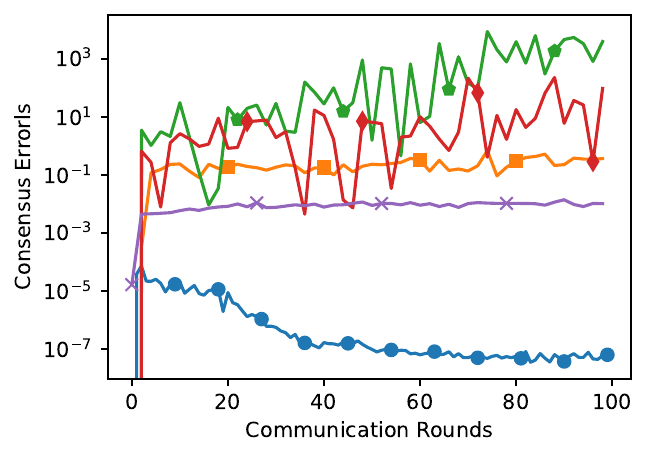} \\

   \includegraphics[trim=0 0 0 0cm,clip,width=0.3\textwidth]{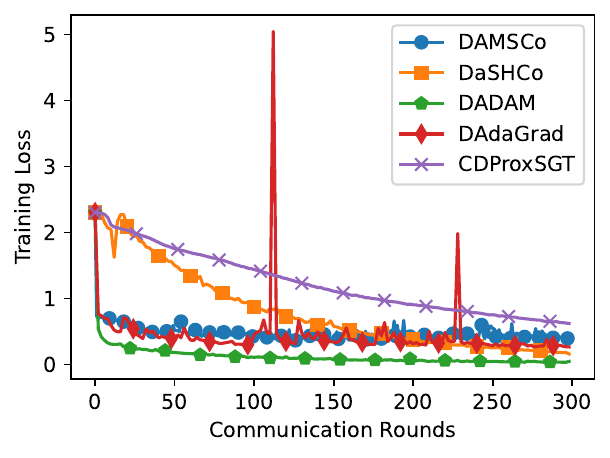}
   \includegraphics[trim=0 0 0 0cm,clip,width=0.3\textwidth]{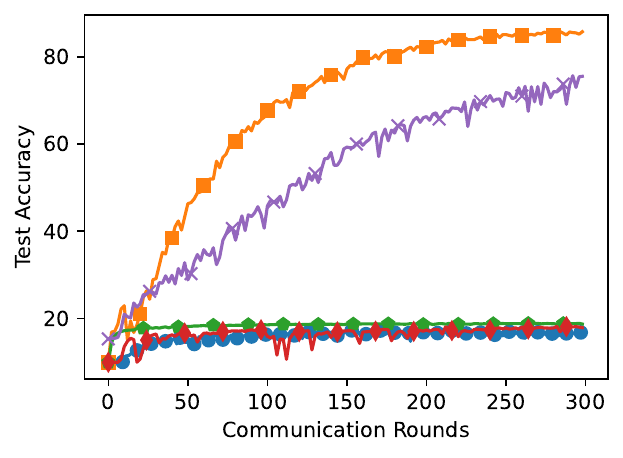}
   \includegraphics[trim=0 0 0 0cm,clip,width=0.31\textwidth]{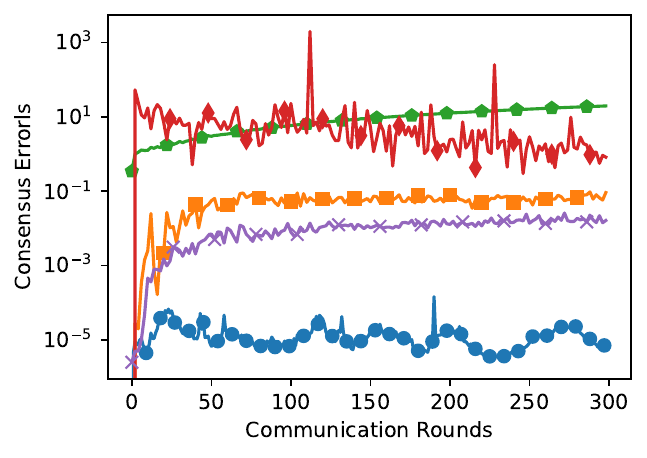}
\vspace{-0.3cm}  
  \caption{\textbf{Results with heterogeneous data:} Plotted above are (from left to right) the training loss, test accuracy, and consensus error per communication round for the FashionMNIST (top) and CIFAR-10 (bottom) datasets, comparing DAMSCo and DaSHCo with CDProxSGT, Distributed AdaGrad, and Distributed Adam with Top-$k$ compression.}
  \label{fig:label_comp}\vspace{-0.5pc}
\end{figure}

\begin{figure}[t]
  \centering

  \includegraphics[trim=0 0 0 0cm,clip,width=0.28\textwidth]{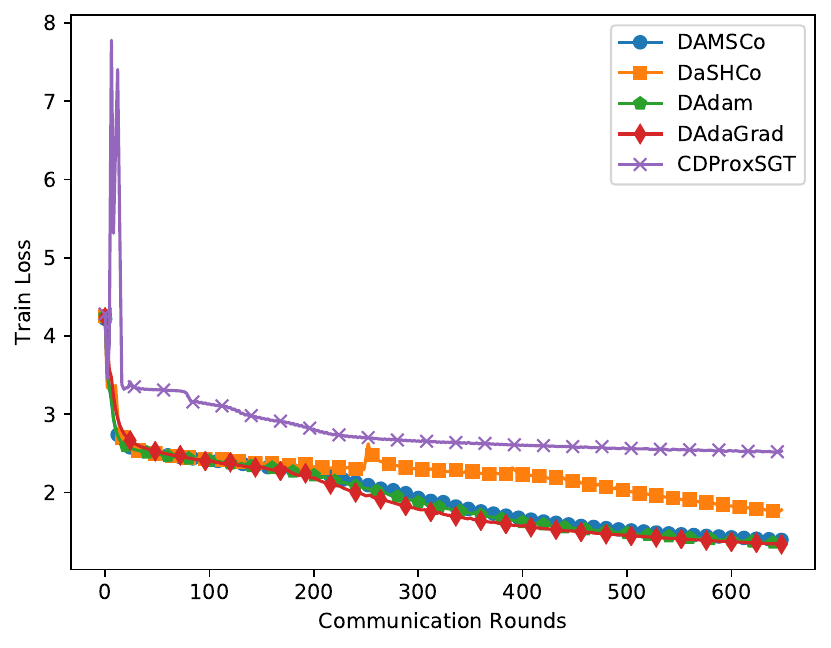}
  \includegraphics[trim=0 0 0 0cm,clip,width=0.3\textwidth]{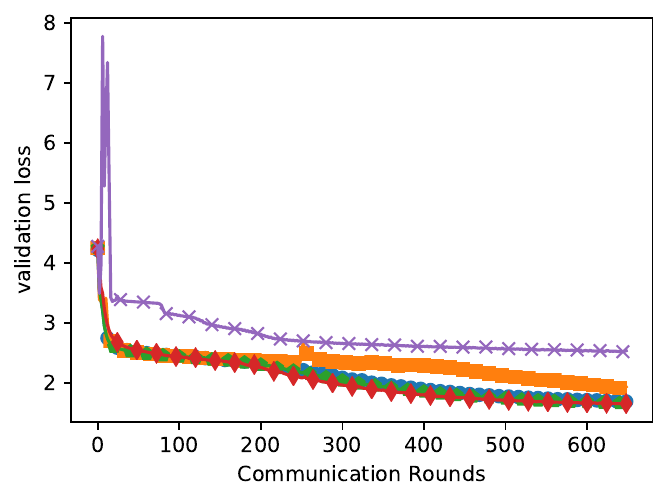}
   \includegraphics[trim=0 0 0 0cm,clip,width=0.31\textwidth]{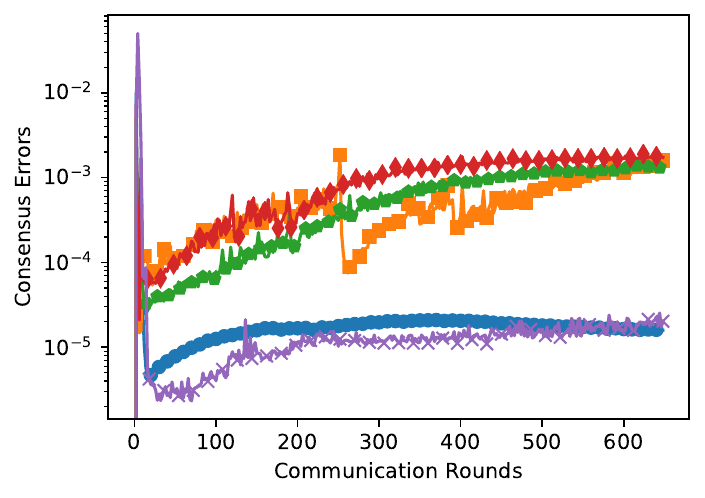}
 
\vspace{-0.3cm}  
  \caption{\textbf{GPT Results with homogeneous data:} Plotted above are (from left to right) the training loss, validation loss, and consensus error per communication round for the Shakespeare dataset, comparing DAMSCo and DaSHCo with CDProxSGT, Distributed AdaGrad, and Distributed Adam with Top-$k$ compression.}
  \label{fig:llm_comp}\vspace{-0.5pc}
\end{figure}

\section{Conclusion}
We have presented two compressed decentralized stochastic gradient methods (SGMs), DAMSCo and DaSHCo, with a technique of either adaptive updates or heavy-ball (a.k.a. momentum) acceleration, for solving multi-agent nonconvex stochastic optimization. Our methods can simultaneously achieve fast convergence (by adaptive or heavy-ball updates) and efficient communication (by message compression). Both methods achieve the optimal convergence rate. Similar to existing distributed adaptive SGMs, the convergence of DAMSCo relies on the condition of bounded gradients, while DaSHCo does not require any boundedness condition and can successfully address the challenge of data heterogeneity. Numerical experiments on training deep neural networks demonstrate the superiority of the proposed methods over state-of-the-art methods.


\section*{Acknowledgements} The authors would like to thank three anonymous reviewers
for their valuable comments. This work is partly supported by NSF grant DMS-2208394, ONR grant N000142212573, and also by IBM through the IBM-Rensselaer Future of Computing Research Collaboration.

\bibliography{main}
\bibliographystyle{tmlr}

\appendix
	\section{Convergence Rate Results of Algorithm~\ref{alg:dad2-2}}

In this section, we analyze the convergence rate of  Algorithm~\ref{alg:dad2-2}. 
By the notation introduced in Section~\ref{sec:notations}, we can write the updates of Algorithm~\ref{alg:dad2-2} in the more compact matrix form:
\begin{align}
	& \textbf{M}^t = \beta_1 \vM^{t-1} + (1-\beta_1) \textbf{G}^t, \label{eq:update-M-mat}\\
	& \widehat{\vU}^t = \beta_2 \widehat{\vU}^{t-1} + (1-\beta_2) \textbf{G}^t \circ \textbf{G}^t,\\
	& \textbf{U}^t =\max\left\{\widehat{\vU}^t,   \textbf{U}^{t-1}\right\}, \\
	& \vY^t:= \frac{\mathbf{M}^t}{\sqrt{\vU^t+\delta}}, \label{eq:def-Y}\\
	& \mathbf{X}^{t+\frac{1}{2}}=\mathbf{X}^{t}-\alpha \vY^t,\label{eq:update-X-half-mat} \\
	&\underline{\mathbf{X}}^{t+1}=\underline{\mathbf{X}}^{t}+\cQ\left[\mathbf{X}^{t+\frac{1}{2}}-\underline{\mathbf{X}}^{t}\right], \label{eq:update-X-under-mat}\\
	&\mathbf{X}^{t+1}=\mathbf{X}^{t+\frac{1}{2}}+\gamma  \underline{\mathbf{X}}^{t+1}(\mathbf{W}-\mathbf{I}), \label{eq:update-X-mat}
\end{align}
where
\begin{align*}
	&\mathbf{G}^t=\left[\vg_{1}^t, \vg_{2}^t, \ldots, \vg_n^t  \right],\    \mathbf{M}^t = \left[\vm_{1}^t, \vm_{2}^t, \ldots, \vm_n^t  \right],\  \vU^t=\left[\vu_{1}^t, \vu_{2}^t, \ldots, \vu_n^t  \right].
\end{align*}
Also we use similar notations as in Section~\ref{sec:notations} by letting
$$\overline{\mathbf{m}}^t=\frac{1}{n} \mathbf{M}^t \mathbf{1},\ \overline{\mathbf{y}}^t=\frac{1}{n} \mathbf{Y}^t \mathbf{1},\ \overline{\vY}^t:= \vY \vJ.$$

Let $ \widehat{\mathbf{W}}=\gamma \mathbf{W}+\left(1-\gamma\right) \mathbf{I}$. Then we can write \eqref{eq:update-X-mat} to
\begin{align}\label{eq:reform-X-t+1}
	\mathbf{X}^{t+1}=\mathbf{X}^{t+\frac{1}{2}} \widehat{\mathbf{W}}+\gamma\left(\underline{\mathbf{X}}^{t+1}-\mathbf{X}^{t+\frac{1}{2}}\right)(\mathbf{W}-\mathbf{I}).
\end{align}
When $\vW$ satisfies the conditions in Assumption~\ref{ass:W}, it can be shown that $\widehat{\mathbf{W}}$ also satisfies all three conditions. Indeed, we have
$$
\widehat{\rho} := \left\|\widehat{\mathbf{W}}-\mathbf{J}\right\|_2<1.
$$

Below we bound the sequence $\{\vM^t\}$, $\{\vU^t\}$ and $\{\vY^t\}$.
\begin{lemma}
	\label{lem:boundy}
	Under Assumption~\ref{ass:gradient}, 
	it holds that for any $t\ge0$,	
	\begin{align}
		&\|\vM^t\| \leq  \left(1-\beta_1^{t+1}\right) \sqrt{n} B  \leq \sqrt{n} B,\quad \|\vY^t\|  \leq \sqrt{n} B \delta^{-\frac{1}{2}}, \label{eq:bd-M-Y-2norm}\\ 
		& \|\vm_i^t\| \leq B, \quad \|\vm_i^t\|_{\infty} \leq B_{\infty}, \quad\|\vu_i^t\|_{\infty} \leq  \left(1-\beta_2^{t+1}\right) {B_{\infty}^2}\leq B_{\infty} ^2, \quad \forall\, i \in [n]. \label{eq:bd-M-u-inf-norm}
	\end{align}
	
\end{lemma}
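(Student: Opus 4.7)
The plan is to derive all five bounds by standard unrolling of the recursions together with componentwise arguments, exploiting that the initialization is zero and that the gradient bounds in Assumption~\ref{ass:gradient} apply to every sample and iteration. I will treat the three quantities $\vm_i^t$, $\vu_i^t$, and $\vY^t$ separately, since the first two are defined by their own independent recursions and the last is a pointwise function of $\vM^t$ and $\vU^t$.

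First I would handle the bounds on $\vm_i^t$ and $\vM^t$. Unrolling the recursion $\vm_i^t=\beta_1\vm_i^{t-1}+(1-\beta_1)\vg_i^t$ from $\vm_i^{-1}=\vzero$ gives the telescoping identity $\vm_i^t=(1-\beta_1)\sum_{s=0}^t \beta_1^{t-s}\vg_i^s$, to which I apply the triangle inequality with $\|\vg_i^s\|\le B$ to obtain $\|\vm_i^t\|\le (1-\beta_1^{t+1})B\le B$, and similarly with $\|\vg_i^s\|_\infty\le B_\infty$ for the $\ell_\infty$ bound. The Frobenius-norm bound on $\vM^t$ follows the same route applied to the matrix recursion \eqref{eq:update-M-mat}: using $\|\vG^s\|^2=\sum_i\|\vg_i^s\|^2\le nB^2$, I get $\|\vM^t\|\le(1-\beta_1)\sum_{s=0}^t \beta_1^{t-s}\sqrt{n}B=(1-\beta_1^{t+1})\sqrt{n}B$.

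Next I would bound $\vu_i^t$. The cleanest route is to first bound $\widehat{\vu}_i^t$ by unrolling its linear recursion exactly as above: since all entries of $\vg_i^s\circ\vg_i^s$ are in $[0,B_\infty^2]$, I obtain componentwise $\widehat{\vu}_i^t\le(1-\beta_2^{t+1})B_\infty^2\mathbf{1}$, and hence $\|\widehat{\vu}_i^t\|_\infty\le(1-\beta_2^{t+1})B_\infty^2$. I would then induct on $t$ for $\vu_i^t=\max\{\vu_i^{t-1},\widehat{\vu}_i^t\}$: the base case $t=0$ gives $\vu_i^0=\widehat{\vu}_i^0$ and the claim holds, and for the inductive step both arguments of the max are nonneg and bounded by $(1-\beta_2^{t+1})B_\infty^2$ (using $\beta_2^{t+1}\le\beta_2^t$ to upgrade the previous-step bound), so the claim propagates.

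Finally, the bound on $\vY^t$ is immediate from its definition \eqref{eq:def-Y}: entrywise $|Y^t_{ji}|=|M^t_{ji}|/\sqrt{U^t_{ji}+\delta}\le|M^t_{ji}|/\sqrt{\delta}$, so $\|\vY^t\|^2\le\|\vM^t\|^2/\delta\le nB^2/\delta$, yielding $\|\vY^t\|\le\sqrt{n}B\delta^{-1/2}$. There is no real obstacle here; the only mildly delicate step is the induction for $\vu_i^t$, where one must be careful to control the max via componentwise (not just norm) monotonicity and to use $\beta_2^{t+1}\le\beta_2^t$ so that the inductive bound stays of the stated form.
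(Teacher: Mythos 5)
Your proof is correct and follows essentially the same approach as the paper's: unroll the exponential-moving-average recursions for $\vm_i^t$ and $\widehat\vu_i^t$ from the zero initialization to get the $(1-\beta^{t+1})$ factors, handle the componentwise max for $\vu_i^t$ via an elementary induction (the paper equivalently takes the max over all steps directly), and bound $\vY^t$ entrywise using $\vU^t\ge\vzero$. The minor variations (explicit geometric-sum unrolling vs. iterated inequality; treating the matrix recursion directly vs. aggregating the per-agent bounds) are cosmetic.
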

\begin{proof}
	From the update of $\vm$, i.e., $\vm_{i}^{t}=\beta_1 \vm^{t-1}_{ i}+\left(1-\beta_1\right) {\vg}^{t}_{i}$, we have that for any $t\ge0$ and each $i\in [n]$, 
	$$\|\vm_i^t\| =  \| \beta_1 \vm_i^{t-1} + (1-\beta_1) \textbf{g}_i^t \|
	\leq \beta_1\|\vm_i^{t-1}\| + (1-\beta_1) \|\textbf{g}_i^t\|
	\leq  \beta_1	\|\vm_i^{t-1}\|_{\infty} + (1-\beta_1)  B,$$
	where the second inequality holds from $\|\vg_i^t\|\leq B$ by Assumption~\ref{ass:gradient}. 		
	Recursively applying the inequality above and noticing $\vm_i^{-1}= \vzero $, we obtain 
	$$\|\vm_i^t\| \le \left(1+\beta_1+\beta_1^2+\ldots+\beta_1^{t}\right) (1-\beta_1)  {B}= \left(1-\beta_1^{t+1}\right) {B} \le B.$$
	Hence, $\|\vM^t\| \le \left(1-\beta_1^{t+1}\right) \sqrt{n} B$.	 
	Now by $\vU^t \ge \vzero$, we immediately have	
	$\|\vY^t\| = \left\| \frac{\mathbf{M}^t}{\sqrt{\vU^t+\delta}} \right\| \leq  \sqrt{n} B \delta^{-\frac{1}{2}}.$
	
	In addition, we have that for any $t\ge0$ and each $i\in [n]$, 
	\begin{align*}
		&\|\vm_i^t\|_{\infty} =  \| \beta_1 \vm_i^{t-1} + (1-\beta_1) \textbf{g}_i^t \|_{\infty}
		\leq \beta_1\|\vm_i^{t-1}\|_{\infty} + (1-\beta_1) \|\textbf{g}_i^t\|_{\infty}
		\leq  \beta_1	\|\vm_i^{t-1}\|_{\infty} + (1-\beta_1)  B_{\infty} ,
	\end{align*}
	where the second inequality follows from $\|\vg_i^t\|_{\infty}\leq B_{\infty}$ by Assumption~\ref{ass:gradient}.
	Recursively applying the inequality above and noticing $\vm_i^{-1}= \vzero $, we obtain 
	$$\|\vm_i^t\|_{\infty} \le \left(1+\beta_1+\beta_1^2+\ldots+\beta_1^{t}\right) (1-\beta_1)  {B_{\infty}}= \left(1-\beta_1^{t+1}\right) {B_{\infty}} \le B_{\infty}.$$
	Similarly, by noticing $\widehat\vu_i^{-1}=\vzero$ and $\|\vg_i^t\circ \vg_i^t\|_\infty \le B_\infty^2$, we have $\|\widehat{\vu}_i^t\|_{\infty} \le \left(1-\beta_2^{t+1}\right) { {} B_{\infty}^2}$ for each $i\in [n]$ and $t\ge0$.
	Now	by ${\vu}_i^t = \max\{\widehat{\vu}_i^{t}, {\vu}_i^{t-1}\}$, it holds 
	\begin{align*}
		\|{\vu}_i^t\|_\infty \le & ~ \max\{\|\widehat{\vu}_i^{t}\|_\infty, \|{\vu}_i^{t-1}\|_\infty\} \le \max\{\|\widehat{\vu}_i^{t}\|_\infty, \|\widehat{\vu}_i^{t-1}\|_\infty, \|{\vu}_i^{t-2}\|_\infty\} \\
		\le & ~ \max\left\{\|\widehat{\vu}_i^{t}\|_\infty, \|\widehat{\vu}_i^{t-1}\|_\infty, \ldots, \|\widehat{\vu}_i^{0}\|_\infty, \|{\vu}_i^{-1}\|_\infty\right\}, \\
		\le & ~ \max\left\{ \left(1-\beta_2^{t+1}\right) {B_{\infty}^2}, \left(1-\beta_2^{t}\right) {B_{\infty}^2}, \ldots, \left(1-\beta_2\right) {B_{\infty}^2}, \|{\vu}_i^{-1}\|_\infty \right\} = \left(1-\beta_2^{t+1}\right) {B_{\infty}^2},
	\end{align*} 
	where the equality holds because $\beta_2 \in (0,1)$ and $\vu_i^{-1}=\vzero$.
	The proof is then completed.
\end{proof}

The next lemma shows the bound of the consensus error of $\vX$.

\begin{lemma}
	\label{lem:xprepmain}
	Under Assumptions~\ref{ass:W} and \ref{assump:com}, let $\alpha\leq \frac{(1-\eta^2)^2}{32}$ and $\gamma\leq \min\{\frac{1-\widehat{\rho}^2}{60\eta},\frac{1-\eta^2}{25},\frac{\alpha}{\eta}\}$. 
	Then
	\begin{align}
		&\frac{1}{ T} \sum_{t=0}^{T-1} \mathbb{E}\left[\left\|\mathbf{X}_{\perp}^t\right\|^2\right] \leq \alpha^2C, \text{ where }C:=\frac{12}{1-\widehat\rho^2}\left(\frac{4 nB^2 \widehat{\rho}^2 }{\delta(1-\widehat{\rho}^2)} +\frac{n B^2}{45\delta}\right) + \frac{320  \alpha n B^2}{\delta( 1-\eta^2)^2}.	\label{eq:xprepcom3}
	\end{align}
\end{lemma}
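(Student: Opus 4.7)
My plan is to jointly control the consensus error $\|\mathbf{X}_\perp^t\|^2$ together with the compression tracking quantity $\mathbf{\Omega}^t := \mathbf{X}^{t+\frac{1}{2}} - \underline{\mathbf{X}}^t$, since these two quantities drive each other through the updates \eqref{eq:update-X-half-mat}--\eqref{eq:update-X-mat}. The randomness of $\mathcal{Q}$ enters only through Assumption~\ref{assump:com}, and the otherwise complicated momentum/adaptive mechanism enters only through the deterministic bound $\|\mathbf{Y}^t\|^2 \leq nB^2/\delta$ from Lemma~\ref{lem:boundy}. Thus the argument reduces to a pure decentralized-compression Lyapunov proof once Lemma~\ref{lem:boundy} is in hand.

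First, I derive a recursion for $\|\mathbf{X}_\perp^t\|^2$. Substituting \eqref{eq:reform-X-t+1} into $\mathbf{X}_\perp^{t+1} = \mathbf{X}^{t+1}(\mathbf{I}-\mathbf{J})$ and using $\widehat{\mathbf{W}}(\mathbf{I}-\mathbf{J}) = \widehat{\mathbf{W}}-\mathbf{J}$, $(\mathbf{W}-\mathbf{I})(\mathbf{I}-\mathbf{J}) = \mathbf{W}-\mathbf{I}$, together with $\mathbf{X}^{t+\frac{1}{2}} = \mathbf{X}^t - \alpha \mathbf{Y}^t$, I obtain
\begin{equation*}
\mathbf{X}_\perp^{t+1} = \mathbf{X}_\perp^t(\widehat{\mathbf{W}}-\mathbf{J}) - \alpha \mathbf{Y}^t(\widehat{\mathbf{W}}-\mathbf{J}) - \gamma \big(\mathbf{X}^{t+\frac{1}{2}} - \underline{\mathbf{X}}^{t+1}\big)(\mathbf{W}-\mathbf{I}).
\end{equation*}
A Young's inequality with constants proportional to $1-\widehat{\rho}^2$, followed by $\mathbb{E}_\mathcal{Q}\big[\|\mathbf{X}^{t+\frac{1}{2}}-\underline{\mathbf{X}}^{t+1}\|^2\big] \leq \eta^2\|\mathbf{\Omega}^t\|^2$ and $\|\mathbf{Y}^t\|^2 \leq nB^2/\delta$, yields a recursion of the form
\begin{equation*}
\mathbb{E}\big[\|\mathbf{X}_\perp^{t+1}\|^2\big] \leq \tfrac{1+\widehat{\rho}^2}{2}\,\mathbb{E}\big[\|\mathbf{X}_\perp^t\|^2\big] + \tfrac{c_1\gamma^2\eta^2}{1-\widehat{\rho}^2}\,\mathbb{E}\big[\|\mathbf{\Omega}^t\|^2\big] + \tfrac{c_2\alpha^2 nB^2\widehat{\rho}^2}{\delta(1-\widehat{\rho}^2)}.
\end{equation*}

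Next I derive a companion recursion for $\mathbf{\Omega}^t$. Expanding $\mathbf{\Omega}^{t+1} = \mathbf{X}^{(t+1)+\frac{1}{2}} - \underline{\mathbf{X}}^{t+1}$ via \eqref{eq:update-X-half-mat} and \eqref{eq:update-X-mat} and simplifying, it reduces to
\begin{equation*}
\mathbf{\Omega}^{t+1} = \big(\mathbf{\Omega}^t - \mathcal{Q}[\mathbf{\Omega}^t]\big) + \gamma\,\underline{\mathbf{X}}^{t+1}(\mathbf{W}-\mathbf{I}) - \alpha \mathbf{Y}^{t+1}.
\end{equation*}
Using $\mathbb{E}_\mathcal{Q}\big[\|\mathbf{\Omega}^t - \mathcal{Q}[\mathbf{\Omega}^t]\|^2\big]\leq\eta^2\|\mathbf{\Omega}^t\|^2$, writing $\underline{\mathbf{X}}^{t+1}(\mathbf{W}-\mathbf{I}) = \big[\mathbf{X}^{t+\frac{1}{2}}_\perp - (\mathbf{X}^{t+\frac{1}{2}}-\underline{\mathbf{X}}^{t+1})_\perp\big](\mathbf{W}-\mathbf{I})$, and invoking Young's inequality with parameters proportional to $1-\eta^2$ together with $\|\mathbf{X}^{t+\frac{1}{2}}_\perp\|^2 \leq 2\|\mathbf{X}_\perp^t\|^2 + 2\alpha^2 nB^2/\delta$, I obtain
\begin{equation*}
\mathbb{E}\big[\|\mathbf{\Omega}^{t+1}\|^2\big] \leq \tfrac{1+\eta^2}{2}\,\mathbb{E}\big[\|\mathbf{\Omega}^t\|^2\big] + \tfrac{c_3\gamma^2}{1-\eta^2}\,\mathbb{E}\big[\|\mathbf{X}_\perp^t\|^2\big] + \tfrac{c_4\alpha^2 nB^2}{\delta(1-\eta^2)}.
\end{equation*}

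I then combine the two recursions through a Lyapunov function $V^t := \|\mathbf{X}_\perp^t\|^2 + \lambda\|\mathbf{\Omega}^t\|^2$, with $\lambda$ chosen of order $(1-\widehat{\rho}^2)/(\gamma^2\eta^2)$ so that the cross term in the first recursion is dominated by $\lambda\cdot\frac{1+\eta^2}{2}\|\mathbf{\Omega}^t\|^2$, while the stepsize conditions $\gamma \leq (1-\widehat{\rho}^2)/(60\eta)$, $\gamma \leq (1-\eta^2)/25$, $\gamma \leq \alpha/\eta$, and $\alpha \leq (1-\eta^2)^2/32$ are exactly what permit the reverse absorption. This yields $\mathbb{E}[V^{t+1}] \leq \mu\,\mathbb{E}[V^t] + \alpha^2 D$ for some $\mu$ bounded strictly below $1$ and an explicit constant $D$. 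Since $\mathbf{X}_\perp^0 = 0$ and $\|\mathbf{\Omega}^0\|^2 \leq \alpha^2 nB^2/\delta$, unrolling the geometric recursion gives $\mathbb{E}[V^t] \leq \alpha^2 D/(1-\mu)$ for all $t$; averaging $\mathbb{E}[\|\mathbf{X}_\perp^t\|^2] \leq \mathbb{E}[V^t]$ over $t\in\{0,\dots,T-1\}$ and identifying the resulting constant with the decomposition of $C$ in the statement completes the proof.

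The main obstacle is the calibration in the Lyapunov combination: a single weight $\lambda$ must simultaneously absorb the $\|\mathbf{\Omega}^t\|^2$ cross term appearing in the consensus recursion and dominate the $\|\mathbf{X}_\perp^t\|^2$ cross term appearing in the tracking recursion. This is precisely what forces the three coupled smallness conditions on $\gamma$ in the hypothesis, and why the constraint $\gamma\leq\alpha/\eta$ is responsible for the extra factor of $\alpha$ inside the $\frac{320\,\alpha n B^2}{\delta(1-\eta^2)^2}$ summand of $C$: the tracking-driven noise enters the final bound weighted by $\lambda$, whose size in terms of $\alpha$ and $\gamma$ produces that higher-order dependence.
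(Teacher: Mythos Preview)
Your overall strategy---derive coupled recursions for the consensus error and a compression-tracking quantity, then combine them through a weighted Lyapunov function and telescope---is exactly the paper's approach. The paper tracks $\mathbf{X}^t-\underline{\mathbf{X}}^t$ rather than your $\mathbf{\Omega}^t=\mathbf{X}^{t+\frac{1}{2}}-\underline{\mathbf{X}}^t$, but these differ only by $\alpha\mathbf{Y}^t$ and lead to essentially the same two-dimensional linear recursion $\Omega^{t+1}\le \mathbf{A}\Omega^t+\mathbf{c}$; the paper then picks a weight vector $\mathbf{q}=(q_1,q_2)$ with $q_1=\tfrac{3}{1-\widehat\rho^2}$ and $q_2=\tfrac{16\alpha}{1-\eta^2}$, verifies $\mathbf{q}^\top\mathbf{A}-\mathbf{q}^\top\le(-\tfrac14,0)$, and telescopes.

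The concrete gap is your calibration of $\lambda$. You state $\lambda$ should be of order $(1-\widehat\rho^2)/(\gamma^2\eta^2)$, i.e.\ \emph{large}; with that choice the reverse absorption you invoke fails. Indeed, in your own recursions the coefficient of $\|\mathbf{X}_\perp^t\|^2$ in the Lyapunov combination is $\tfrac{1+\widehat\rho^2}{2}+\lambda\,\tfrac{c_3\gamma^2}{1-\eta^2}$; plugging $\lambda\sim(1-\widehat\rho^2)/(\gamma^2\eta^2)$ gives a contribution $\sim c_3(1-\widehat\rho^2)/\big(\eta^2(1-\eta^2)\big)$, which exceeds the available gap $\tfrac{1-\widehat\rho^2}{2}$ for any absolute constant $c_3$ once $\eta$ is small---so there is no contraction and the unrolling step collapses. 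The correct weight is \emph{small}: the paper takes $\lambda=q_2/q_1=\tfrac{16\alpha(1-\widehat\rho^2)}{3(1-\eta^2)}$, of order $\alpha$ (equivalently $\eta\gamma$, via $\gamma\le\alpha/\eta$). It is precisely this $\alpha$-scaling of the weight that produces the extra factor of $\alpha$ in the $\tfrac{320\alpha nB^2}{\delta(1-\eta^2)^2}$ summand of $C$---which is consistent with your last paragraph, but inconsistent with the large $\lambda$ you propose. With $\lambda$ of order $\alpha$ your plan goes through and matches the paper's proof.
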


\begin{proof}
	By \eqref{eq:reform-X-t+1} and $(\vW-\vI)(\vI-\vJ) = \vW-\vI$, we have $\mathbf{X}^{t+1}_\perp=\mathbf{X}^{t+\frac{1}{2}} \widehat{\mathbf{W}}(\vI-\vJ)+\gamma(\underline{\mathbf{X}}^{t+1}-\mathbf{X}^{t+\frac{1}{2}})(\mathbf{W}-\mathbf{I})$ and thus for any $\eta_1>0$, it holds
	\begin{align}
		\notag
		\left\|\vX_{\perp}^{t+1}\right\|^2\leq & (1+\eta_1)\left\|\vX^{t+\frac{1}{2}}\widehat{\mathbf{W}}(\mathbf{I}-\mathbf{J})\right\|^2 + \left(1+\eta_1^{-1}\right) \left\|\gamma\left(\underline{\mathbf{X}}^{t+1}-\mathbf{X}^{t+\frac{1}{2}}\right)(\mathbf{W}-\mathbf{I})\right\|^2  \\ \leq  & (1+\eta_1)\left\|\vX^{t+\frac{1}{2}}\widehat{\mathbf{W}}(\mathbf{I}-\mathbf{J})\right\|^2 + 4\left(1+\eta_1^{-1}\right)\gamma^2 \left\|\left(\underline{\mathbf{X}}^{t+1}-\mathbf{X}^{t+\frac{1}{2}}\right)\right\|^2,  \label{eq:Xprepcom}
	\end{align}
	where the second inequality follows from $\|\mathbf{W}-\mathbf{I}\|_2 \leq 2$. In addition, by $\widehat{\vW}\vJ = \vJ$, we have 
	\begin{align}
		& \notag \left\|\mathbf{X}^{t+\frac{1}{2}} \widehat{\vW}(\mathbf{I}-\mathbf{J})\right\|^2 = \left\|\mathbf{X}^{t+\frac{1}{2}}(\widehat{\vW}-\mathbf{J})\right\|^2 
		\notag=   \left\|\left(\mathbf{X}^{t}-\alpha \mathbf{Y}^{t}\right)(\widehat{\vW}-\mathbf{J})\right\|^2  \\
		\notag= &  \left\|\left(\left(\mathbf{X}^{t}-\alpha \mathbf{Y}^{t}\right)-\left(\overline{\mathbf{X}}^{t}-\alpha \overline{\mathbf{Y}}^{t}\right) \right)(\widehat{\vW}-\mathbf{J})\right\|^2  \leq 
		\widehat\rho^2  \left\|\mathbf{X}_{\perp}^{t}-\alpha \mathbf{Y}_{\perp}^{t}\right\|^2  \\
		\notag\leq & \left(\widehat\rho^2+\frac{1-\widehat\rho^2}{2}\right)  \left\|\mathbf{X}_{\perp}^{t}\right\|^2 +\left(\widehat\rho^2+\frac{2 \widehat\rho^4}{1-\widehat\rho^2}\right) \alpha^2  \left\|\mathbf{Y}_{\perp}^{t}\right\|^2 \\
		= & \frac{1+\widehat\rho^2}{2} \left\|\mathbf{X}_{\perp}^{t}\right\|^2+\frac{1+\widehat\rho^2}{1-\widehat\rho^2} \widehat\rho^2 \alpha^2  \left\|\mathbf{Y}_{\perp}^{t}\right\|^2 
		\leq  \frac{1+\widehat\rho^2}{2}  \left\|\mathbf{X}_{\perp}^{t}\right\|^2 +\frac{2 \widehat\rho^2 \alpha^2}{1-\widehat\rho^2}  \left\|\mathbf{Y}_{\perp}^{t}\right\|^2. \label{eq:boundxprep}
	\end{align}
	Notice 
	\begin{equation}\label{eq:bd-Y-perp}
		\|\mathbf{Y}_{\perp}^{t}\|^2 = \|\vY^t - \overline\vY^t\|^2 = \|\vY^t\|^2 - \|\overline\vY^t\|^2 \overset{\eqref{eq:bd-M-Y-2norm}}\le \frac{n B^2}{\delta},
	\end{equation} which together with \eqref{eq:boundxprep} gives
	\begin{align}
		\label{eq:Xprep}
		\left\|\vX^{t+\frac{1}{2}}\widehat{\mathbf{W}}(\mathbf{I}-\mathbf{J})\right\|^2 \leq \frac{1+\widehat{\rho}^2}{2} \left\|\mathbf{X}_{\perp}^{t}\right\|^2+\frac{2 nB^2 \widehat{\rho}^2 \alpha^2}{\delta(1-\widehat{\rho}^2)}.
	\end{align}

	Moreover, by \eqref{eq:update-X-under-mat} and Assumption~\ref{assump:com}, it holds 
	\begin{align}
		\notag& \mathbb{E}\left[\left\|\underline{\mathbf{X}}^{t+1}-\mathbf{X}^{t+\frac{1}{2}}\right\|^2\right]=\mathbb{E}\left[\mathbb{E}_Q\left[\left\|Q\left[\mathbf{X}^{t+\frac{1}{2}}-\underline{\mathbf{X}}^t\right]-\left(\mathbf{X}^{t+\frac{1}{2}}-\underline{\mathbf{X}}^{t}\right)\right\|^2\right]\right] \\
		\leq & \eta^2 \mathbb{E}\left[\left\|\mathbf{X}^{t+\frac{1}{2}}-\underline{\mathbf{X}}^t\right\|^2\right] = \eta^2 \mathbb{E}\left[\left\|\mathbf{X}^{t+\frac{1}{2}}-\mathbf{X}^t+\mathbf{X}^t-\underline{\mathbf{X}}^t\right\|^2\right] \notag\\
		\notag  
		\leq & \eta^2\left(1+\eta_2\right) \mathbb{E}\left[\left\|\mathbf{X}^t-\underline{\mathbf{X}}^t\right\|^2\right]+\eta^2\left(1+\eta_2^{-1}\right) \mathbb{E}\left[\left\|\mathbf{X}^{t+\frac{1}{2}}-\mathbf{X}^t\right\|^2\right]\\
		= & \eta^2\left(1+\eta_2\right) \mathbb{E}\left[\left\|\mathbf{X}^t-\underline{\mathbf{X}}^t\right\|^2\right] + \eta^2\left(1+\eta_2^{-1}\right) \alpha^2 \mathbb{E}\left[  \|\vY^t\|^2\right] \label{eq:xtt-2}
		\\ \leq &  \eta^2\left(1+\eta_2\right) \mathbb{E}\left[\left\|\mathbf{X}^t-\underline{\mathbf{X}}^t\right\|^2\right] + \eta^2\left(1+\eta_2^{-1}\right) \alpha^2 n B^2 \delta^{-1},
		\label{eq:xtt-}
	\end{align}
	where $\eta_2>0$ is any positive number. 	
	Substituting~\eqref{eq:Xprep} and~\eqref{eq:xtt-} with $\eta_2=1$ into~\eqref{eq:Xprepcom} after taking full expectation, we obtain 
	\begin{align}
		\notag
		\mathbb{E}\left[\left\|\vX_{\perp}^{t+1}\right\|^2\right]\leq  & (1+\eta_1) \left(\frac{1+\widehat{\rho}^2}{2} \mathbb{E}\left[\left\|\mathbf{X}_{\perp}^t\right\|^2\right]+\frac{2 nB^2 \widehat{\rho}^2 \alpha^2}{\delta(1-\widehat{\rho}^2)}\right) \\ & + 8 \eta^2\left(1+\eta_1^{-1}\right)\gamma^2\left( \mathbb{E}\left[\left\|\mathbf{X}^t-\underline{\mathbf{X}}^t\right\|^2\right] + \alpha^2 n B^2 \delta^{-1}\right).
		\label{eq:xprecom2}
	\end{align}
	Let $\eta_1 = \frac{7\eta \gamma}{1-\widehat{\rho}^2} $ in \eqref{eq:xprecom2}. Then by $\gamma\leq \frac{1-\widehat{\rho}^2}{60\eta}$, we  obtain 
	\begin{align}
		\label{eq:inequ2}
		 1+\eta_1<2,\  (1+\eta_1)\frac{1+\widehat{\rho}^2}{2}\leq \frac{3+\widehat{\rho}^2}{4}, \ \eta^2\left(1+\eta_1^{-1}\right)\gamma^2\leq \frac{\eta \gamma (1-\widehat{\rho}^2)}{6}<\frac{1}{360}.
	\end{align} 
	Thus we have from \eqref{eq:xprecom2} that
	\begin{equation}\label{eq:xprecom3}
		\begin{aligned}
			& \mathbb{E}\left[\left\|\vX_{\perp}^{t+1}\right\|^2\right]
			\leq   \frac{3+\widehat{\rho}^2}{4} \mathbb{E}\left[\left\|\mathbf{X}_{\perp}^t\right\|^2\right]+\frac{4 nB^2 \widehat{\rho}^2 \alpha^2}{\delta(1-\widehat{\rho}^2)} + \frac{4\eta \gamma (1-\widehat{\rho}^2)}{3} \mathbb{E}\left[\left\|\mathbf{X}^t-\underline{\mathbf{X}}^t\right\|^2\right] + \frac{\alpha^2  n B^2}{45\delta}.	
		\end{aligned}    
	\end{equation}

	Now let us consider the compression error of $\vX$, namely, $\mathbb{E}\left[\left\|\vX^t-\underline{\vX}^t\right\|^2\right]$. By \eqref{eq:reform-X-t+1}, we have that for any $\eta_3>0$, 
	\begin{align}
		&\notag \mathbb{E}\left[\left\|\vX^{t+1}-\underline{\vX}^{t+1}\right\|^2\right]=\mathbb{E}\left[\left\|\left(\underline{\vX}^{t+1}-\vX^{t+\frac{1}{2}}\right)\left(\gamma(\mathbf{W}-\mathbf{I})-\mathbf{I}\right)+\gamma \vX^{t+\frac{1}{2}}(\mathbf{I}-\mathbf{J})(\mathbf{W}-\mathbf{I})\right\|^2\right] \\
		\leq & \left(1+\eta_3\right)\left(1+2 \gamma\right)^2 \mathbb{E}\left[\left\|\underline{\vX}^{t+1}-\vX^{t+\frac{1}{2}}\right\|^2\right]+\left(1+\eta_3^{-1}\right) 4 \gamma^2 \mathbb{E}\left[\left\|\vX_{\perp}^{t+\frac{1}{2}}\right\|^2\right], \label{eq:gunderg}
	\end{align}
	where we have used $\mathbf{J W}=\mathbf{J}$ in the equality and $\left\|\gamma(\mathbf{W}-\mathbf{I})-\mathbf{I}\right\|_2 \leq \gamma\|\mathbf{W}-\mathbf{I}\|_2+\|\mathbf{I}\|_2 \leq 1+2 \gamma$ and $\|\mathbf{W}-\mathbf{I}\|_2 \leq 2$ in the inequality. 
	For the second term in the RHS of~\eqref{eq:gunderg}, we have
	\begin{align}
		\mathbb{E}\left[\left\|\mathbf{X}_{\perp}^{t+\frac{1}{2}}\right\|^2\right] 
		\overset{\eqref{eq:update-X-half-mat}} = \mathbb{E}\left[\left\|\mathbf{X}_{\perp}^t-\alpha \mathbf{Y}_{\perp}^t\right\|^2\right] 
		\leq & 2\mathbb{E}\left[\left\|\mathbf{X}_{\perp}^t\right\|^2\right]+2 \alpha^2\mathbb{E}\left[\left\|\mathbf{Y}_{\perp}^t\right\|^2\right] \notag\\ 
		\overset{\eqref{eq:bd-Y-perp}}\leq  & 2\mathbb{E}\left[\left\|\mathbf{X}_{\perp}^t\right\|^2\right] + 2\alpha^2 {nB^2} \delta^{-1}.
		\label{eq:xprep12}
	\end{align}
	Plugging~\eqref{eq:xprep12} and~\eqref{eq:xtt-} with {$\eta_2= \frac{1-\eta^2}{2\eta^2}$} into~\eqref{eq:gunderg}, we have
	\begin{align}\label{eq:xprep13}
		\mathbb{E}\left[\left\|\mathbf{X}^{t+1}-\underline{\mathbf{X}}^{t+1}\right\|^2\right]
		\leq & \left(1+\eta_3^{-1}\right) 4 \gamma^2 \left(2\mathbb{E}\left[\left\|\mathbf{X}_{\perp}^t\right\|^2\right] + 2\alpha^2 {nB^2} \delta^{-1}\right) \\ \notag& +   \left(1+\eta_3\right)\left(1+2 \gamma\right)^2 \bigg(  
		\frac{1+\eta^2}{2} \mathbb{E}\left[\left\|\mathbf{X}^t-\underline{\mathbf{X}}^t\right\|^2\right] + \frac{2\eta^2}{1-\eta^2} \alpha^2 n B^2 \delta^{-1} \bigg).
	\end{align}
	Let $\eta_3= \frac{1-\eta^2}{12}$ in \eqref{eq:xprep13}. Then by $\gamma\leq \frac{1-\eta^2}{25}$, we obtain  
	\begin{align}
		\label{eq:inequ3}
		 \left(1+\eta_3^{-1}\right) 8 \gamma^2\leq \frac{1}{1-\eta^2}, \ \left(1+\eta_3\right)\left(1+2 \gamma\right)^2  \frac{1+\eta^2}{2} \leq \frac{3+\eta^2}{4}, \ \frac{2\left(1+\eta_3\right)\left(1+2 \gamma\right)^2}{1-\eta^2}\leq\frac{4}{1-\eta^2}.
	\end{align} Thus, we have from \eqref{eq:xprep13} that
	\begin{equation}\label{eq:xprepcom2}	
		\mathbb{E}\left[\left\|\mathbf{X}^{t+1}-\underline{\mathbf{X}}^{t+1}\right\|^2\right]
		\leq  \frac{1}{1-\eta^2}\mathbb{E}\left[\left\|\mathbf{X}_{\perp}^t\right\|^2\right] +  \frac{3+\eta^2}{4}\mathbb{E}\left[\left\|\mathbf{X}^t-\underline{\mathbf{X}}^t\right\|^2\right] +\frac{5 \alpha^2 n B^2 }{\delta(1-\eta^2)}.
	\end{equation}

	Denote
	$ \Omega^t=\left(\mathbb{E}\left[\left\|\mathbf{X}_{\perp}^t\right\|^2\right],\mathbb{E}\left[\left\|\vX^t-\underline{\vX}^{t}\right\|^2\right]\right)^{\top} .
	$
	Then~\eqref{eq:xprecom3} and~\eqref{eq:xprepcom2} imply $\Omega^{t+1} \leq \mathbf{A} \Omega^t+\mathbf{c} $, where
	$$
	\mathbf{A}=\left(\begin{array}{ccc}
		\frac{3+\widehat{\rho}^2}{4} & \frac{4\eta \gamma (1-\widehat{\rho}^2)}{3} \\		
		\frac{1}{1-\eta^2}  &   \frac{3+\eta^2}{4} 
	\end{array}\right), \quad \mathbf{c}=\left(\begin{array}{c}
		\frac{4 nB^2 \widehat{\rho}^2 \alpha^2}{\delta(1-\widehat{\rho}^2)} +\frac{\alpha^2  n B^2}{45\delta} \\ 
		\frac{5 \alpha^2 n B^2 }{\delta(1-\eta^2)}
	\end{array}\right).
	$$	
	For any $\mathbf{q}=\left(q_1, q_2\right)^{\top} \geq \mathbf{0}$, we have 
	$$
	\mathbf{q}^{\top} \Omega^{t+1} \leq \mathbf{q}^{\top} \Omega^t+\left(\mathbf{q}^{\top} \mathbf{A}-\mathbf{q}^{\top}\right) \Omega^t+\mathbf{q}^{\top} \mathbf{c}.
	$$	
	Take
	$
	q_1=\frac{3}{1-\widehat\rho^2}$   and $q_2=\frac{16\alpha}{1-\eta^2}.
	$
	We have $\mathbf{q}^{\top} \mathbf{A}-\mathbf{q}^{\top}\leq \left(-\frac{1}{4},0\right)$ by 
	$\gamma\leq \frac{\alpha}{\eta}$ and {$\alpha\leq \frac{(1-\eta^2)^2}{32}$}.
	Thus
	$
	\mathbf{q}^{\top} \Omega^{t+1} \leq \mathbf{q}^{\top} \Omega^t-\frac{1}{4} \mathbb{E}[\|\mathbf{X}_{\perp}^t\|^2] +\mathbf{q}^{\top} \vc.
	$
	Summing up the above inequality  for all $t=0,1, \ldots, T-1$, we obtain
	\begin{align}
		\frac{1}{ T} \sum_{t=0}^{T-1} \mathbb{E}\left[\left\|\mathbf{X}_{\perp}^t\right\|^2\right] \leq \frac{4}{ T}\left(\mathbf{q}^{\top} \Omega^0-\mathbf{q}^{\top} \Omega^T\right)+4\mathbf{q}^{\top} \vc.
	\end{align}	
	From $\mathbf{x}_i^0=\overline\vx^0 = \underline{\vx}^0_i = \vzero, \, \forall\, i \in [n]$, we have
	$
	\|\mathbf{X}_{\perp}^0\|^2=0$  and $\mathbb{E}[\|\vX^0-\underline{\vX}^{0}\|^2]=0.
	$
	Thus by the nonnegativity of $\mathbb{E}[\|\mathbf{X}_{\perp}^t\|^2]$, we have
	$
	\mathbf{q}^{\top} \Omega^0-\mathbf{q}^{\top} \Omega^T \leq 0.
	$
	Hence 
	$$
	\frac{1}{ T} \sum_{t=0}^{T-1} \mathbb{E}\left[\left\|\mathbf{X}_{\perp}^t\right\|^2\right] \leq \frac{12\alpha^2}{1-\widehat\rho^2}\left(\frac{4 nB^2 \widehat{\rho}^2 }{\delta(1-\widehat{\rho}^2)} +\frac{n B^2}{45\delta}\right) + \frac{320 \alpha^3 n B^2}{\delta( 1-\eta^2)^2}.
	$$
	The proof is then completed.
\end{proof}

To prove the convergence of Algorithm~\ref{alg:dad2-2}, we define an auxiliary sequence as follows
\begin{equation}
	\label{eq:Z}
	\vz^t=\overline{\vx}^t+\frac{\beta_1}{1-\beta_1}\left(\overline{\vx}^t-\overline{\vx}^{t-1}\right), \forall\, t\ge 0,
\end{equation}
with $\overline{\vx}^{-1} = \overline{\vx}^0$. The lemma below shows the difference of two consecutive $\vz$-points. 

\begin{lemma}
	\label{lem:diffZ}
	Let $\{\vz^t\}$ be defined in \eqref{eq:Z}.	It holds that for all $t\ge 0$,
	\begin{equation}
		\label{eq:diffz}
		\vz^{t+1}-\vz^{t}= \frac{ \beta_1}{1-\beta_1} \frac{\alpha}{n} \sum_{i=1}^n \vm^{t-1}_{ i} \circ\left(\frac{1}{\sqrt{\vu^{t-1}_{ i}+\delta}}-\frac{1}{\sqrt{\vu^{t}_{ i}+\delta}}\right)- \frac{\alpha}{n} \sum_{i=1}^n \frac{{\vg}^{t}_{ i}}{\sqrt{\vu^{t}_{ i}+\delta}} .
	\end{equation}
\end{lemma}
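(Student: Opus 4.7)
The plan is a direct algebraic manipulation exploiting two facts: (a) the doubly-stochasticity of $\vW$ makes the gossip/compression step mean-preserving on $\overline{\vx}^t$, and (b) the momentum recursion lets us split $\vm_i^t/\sqrt{\vu_i^t+\delta}$ into a term proportional to $\vm_i^{t-1}$ (same index) and a term proportional to $\vg_i^t$.

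First, I would show that averaging commutes with one iteration, namely $\overline{\vx}^{t+1} = \overline{\vx}^{t+\frac12}$. From \eqref{eq:reform-X-t+1} we have $\vX^{t+1} = \vX^{t+\frac12}\widehat{\vW} + \gamma(\underline{\vX}^{t+1}-\vX^{t+\frac12})(\vW-\vI)$; right-multiplying by $\frac{\mathbf 1}{n}$ and using $\widehat{\vW}\mathbf 1 = \mathbf 1$ and $(\vW-\vI)\mathbf 1 = \vzero$ (both from Assumption~\ref{ass:W}(i)), all the gossip and compression contributions vanish in the mean, leaving $\overline{\vx}^{t+1}=\overline{\vx}^{t+\frac12} = \overline{\vx}^t - \frac{\alpha}{n}\sum_{i=1}^n \vm_i^t/\sqrt{\vu_i^t+\delta}$. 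This handles the trickiest conceptual point, which is ensuring the compression errors do not leak into $\overline{\vx}^{t+1}$.

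Next, I would expand the definition \eqref{eq:Z} as
\begin{equation*}
\vz^{t+1}-\vz^{t} = \frac{1}{1-\beta_1}(\overline{\vx}^{t+1}-\overline{\vx}^t) - \frac{\beta_1}{1-\beta_1}(\overline{\vx}^t - \overline{\vx}^{t-1}),
\end{equation*}
which is the standard Nesterov-style rewriting. Substituting the mean-update formula from the previous paragraph for both differences gives
\begin{equation*}
\vz^{t+1}-\vz^t = -\frac{\alpha}{n(1-\beta_1)}\sum_{i=1}^n \frac{\vm_i^t}{\sqrt{\vu_i^t+\delta}} + \frac{\alpha\beta_1}{n(1-\beta_1)}\sum_{i=1}^n \frac{\vm_i^{t-1}}{\sqrt{\vu_i^{t-1}+\delta}}.
\end{equation*}
I would then apply the momentum recursion $\vm_i^t = \beta_1 \vm_i^{t-1} + (1-\beta_1)\vg_i^t$ from \eqref{eq:update-M-mat} inside the first sum, splitting it into a $\beta_1 \vm_i^{t-1}/\sqrt{\vu_i^t+\delta}$ piece and a $(1-\beta_1)\vg_i^t/\sqrt{\vu_i^t+\delta}$ piece. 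The latter immediately yields $-\frac{\alpha}{n}\sum_i \vg_i^t/\sqrt{\vu_i^t+\delta}$, while the former combines with the $\vm_i^{t-1}/\sqrt{\vu_i^{t-1}+\delta}$ sum to produce the desired componentwise difference $\frac{1}{\sqrt{\vu_i^{t-1}+\delta}} - \frac{1}{\sqrt{\vu_i^t+\delta}}$, weighted by $\frac{\alpha\beta_1}{n(1-\beta_1)}\vm_i^{t-1}$.

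For the boundary case $t=0$, the convention $\overline{\vx}^{-1}=\overline{\vx}^0$ together with the initializations $\vm_i^{-1}=\vzero$ and $\vu_i^{-1}=\vzero$ makes both sides of \eqref{eq:diffz} collapse consistently (the first sum on the right vanishes because $\vm_i^{-1}=\vzero$, matching $\vz^1-\vz^0 = \frac{1}{1-\beta_1}(\overline{\vx}^1-\overline{\vx}^0)$ with $\vm_i^0 = (1-\beta_1)\vg_i^0$), so a separate check is trivial. There is no genuine obstacle here: the only place care is needed is verifying that $\overline{\vx}^{t+1}=\overline{\vx}^{t+\frac12}$ despite the compression and gossip steps, and that reduces to the column-sum properties of $\vW-\vI$ and $\widehat{\vW}$.
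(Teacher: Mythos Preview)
Your proposal is correct and follows essentially the same approach as the paper's proof: both establish $\overline{\vx}^{t+1}=\overline{\vx}^{t+\frac12}$ via the column-stochasticity of $\vW$ (the paper phrases it as $(\vW-\vI)\vJ=\vzero$, you as $(\vW-\vI)\mathbf 1=\vzero$), then expand $\vz^{t+1}-\vz^t$ from \eqref{eq:Z}, substitute the averaged update, and split $\vm_i^t$ via the momentum recursion. Your explicit boundary check at $t=0$ is a small addition not spelled out in the paper but is consistent with its conventions.
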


\begin{proof}
	By \eqref{eq:update-X-mat} and $(\vW-\vI)\vJ = \vzero$, we have $\overline{\vx}^{t+1} = \overline{\vx}^{t+\frac{1}{2}}$. Hence, it follows from \eqref{eq:def-Y} and \eqref{eq:update-X-half-mat} that
	\begin{equation}
		\label{eq:Xupdate}
		\overline{\vx}^{t+1} 
		=\overline{\vx}^{t}-\frac{\alpha}{n} \sum_{i=1}^n  \frac{\vm^{t}_{ i}}{\sqrt{\vu^{t}_{ i}+\delta}}.
	\end{equation}
	Thus by \eqref{eq:Z}, we have
	\begin{align*}
		\vz^{t+1}-\vz^{t} & =\overline{\vx}^{t+1}-\overline{\vx}^{t}+\frac{\beta_1}{1-\beta_1}\left(\overline{\vx}^{t+1}-\overline{\vx}^{t}\right)-\frac{\beta_1}{1-\beta_1}\left(\overline{\vx}^{t}-\overline{\vx}^{t-1}\right) \\
		& =\frac{1}{1-\beta_1}\left(\overline{\vx}^{t+1}-\overline{\vx}^{t}\right)-\frac{\beta_1}{1-\beta_1}\left(\overline{\vx}^{t}-\overline{\vx}^{t-1}\right) \\
		& =\frac{1}{1-\beta_1}\left(-\frac{\alpha}{n} \sum_{i=1}^n  \frac{\vm^{t}_{ i}}{\sqrt{\vu^{t}_{ i}+\delta}}\right)-\frac{\beta_1}{1-\beta_1}\left(-\frac{\alpha}{n} \sum_{i=1}^n  \frac{\vm^{t-1}_{ i}}{\sqrt{\vu^{t-1}_{ i}+\delta}}\right) \\
		& =\frac{1}{1-\beta_1}\left(-\frac{\alpha}{n} \sum_{i=1}^n  \frac{\beta_1 \vm^{t-1}_{ i}+\left(1-\beta_1\right) {\vg}^{t}_{ i}}{\sqrt{\vu^{t}_{ i}+\delta}}\right)-\frac{\beta_1}{1-\beta_1}\left(-\frac{\alpha}{n} \sum_{i=1}^n  \frac{\vm^{t-1}_{ i}}{\sqrt{\vu^{t-1}_{ i}+\delta}}\right)\\
		&= \frac{\beta_1}{1-\beta_1} \frac{\alpha}{n} \sum_{i=1}^n \vm^{t-1}_{ i} \circ\left(\frac{1}{\sqrt{\vu^{t-1}_{ i}+\delta}}-\frac{1}{\sqrt{\vu^{t}_{ i}+\delta}}\right)- \frac{\alpha}{n} \sum_{i=1}^n \frac{{\vg}^{t}_{ i}}{\sqrt{\vu^{t}_{ i}+\delta}},
	\end{align*}
	which is the desired result.
\end{proof}

\begin{lemma}\label{lem:bd-diff-grad-fi-z}
	Under Assumptions~\ref{ass:problemsetup} and~\ref{ass:gradient}, it holds
	\begin{align}\label{eq:bd-diff-grad-fi}
		\left\|\frac{1}{n}  \sum_{i=1}^n\left(\nabla f_i\left(\vx_i^t\right)-\nabla f_i\left(\overline{\vx}^t\right)\right)\right\|^2 \le  \frac{L^2}{n} \|\vX_\perp^t\|^2,
	\end{align}
	and
	\begin{align}\label{eq:bd-diff-grad-fz-fx}
		&\|\nabla  f\left(\vz^{t}\right)- \nabla f(\overline{\vx}^t)\|^2 \le  \frac{\alpha^2 L^2\beta_1^2 B^2}{\delta (1-\beta_1)^2}.
	\end{align}
\end{lemma}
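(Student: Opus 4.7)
The lemma has two parts and both are short consequences of $L$-smoothness plus already-established bounds from Lemma~\ref{lem:boundy} and the averaged update~\eqref{eq:Xupdate}. My plan is to handle them in sequence, since the second one re-uses the bound on $\|\vm_i^{t-1}\|$ that the first does not.

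For inequality~\eqref{eq:bd-diff-grad-fi}, I would apply the Cauchy--Schwarz / Jensen bound
\[
\left\|\tfrac{1}{n}\sum_{i=1}^n \bm{a}_i\right\|^2 \le \tfrac{1}{n}\sum_{i=1}^n \|\bm{a}_i\|^2
\]
to $\bm{a}_i = \nabla f_i(\vx_i^t) - \nabla f_i(\overline{\vx}^t)$, then invoke the $L$-smoothness of each $f_i$ from Assumption~\ref{ass:problemsetup} to get $\|\bm{a}_i\|^2 \le L^2 \|\vx_i^t - \overline{\vx}^t\|^2$. Summing over $i$ and using $\sum_{i=1}^n \|\vx_i^t - \overline{\vx}^t\|^2 = \|\vX_\perp^t\|^2$ (by the definition of $\vX_\perp$ in Section~\ref{sec:notations}) gives the claim.

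For inequality~\eqref{eq:bd-diff-grad-fz-fx}, I would first observe that $f=\frac{1}{n}\sum_i f_i$ is itself $L$-smooth, so $\|\nabla f(\vz^t)-\nabla f(\overline{\vx}^t)\|^2 \le L^2\|\vz^t-\overline{\vx}^t\|^2$. Then from the definition~\eqref{eq:Z} of $\vz^t$, I have $\vz^t - \overline{\vx}^t = \frac{\beta_1}{1-\beta_1}(\overline{\vx}^t - \overline{\vx}^{t-1})$, and the averaged update~\eqref{eq:Xupdate} gives
\[
\overline{\vx}^t - \overline{\vx}^{t-1} = -\frac{\alpha}{n}\sum_{i=1}^n \frac{\vm_i^{t-1}}{\sqrt{\vu_i^{t-1}+\delta}}.
\]
Since $\vu_i^{t-1}\ge \vzero$ componentwise, each coordinate of $\sqrt{\vu_i^{t-1}+\delta}$ is at least $\sqrt{\delta}$, so $\left\|\vm_i^{t-1}/\sqrt{\vu_i^{t-1}+\delta}\right\|\le \|\vm_i^{t-1}\|/\sqrt{\delta}\le B/\sqrt{\delta}$ by the momentum bound in Lemma~\ref{lem:boundy}. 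Triangle inequality over the $n$ agents yields $\|\overline{\vx}^t-\overline{\vx}^{t-1}\|\le \alpha B/\sqrt{\delta}$, and squaring after multiplying by $\beta_1/(1-\beta_1)$ gives the stated bound.

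No step looks genuinely difficult: the only mild care is to note that the $1/n$ factor in $\overline{\vx}^t - \overline{\vx}^{t-1}$ cancels when bounding by triangle inequality over $n$ summands, and that the coordinatewise division by $\sqrt{\vu_i^{t-1}+\delta}$ is controlled uniformly by $1/\sqrt{\delta}$ since $\vu_i^{t-1}$ is entrywise nonnegative. Everything else is a direct appeal to previously established bounds.
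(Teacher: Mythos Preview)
Your proposal is correct and follows essentially the same approach as the paper's proof: Jensen's inequality plus $L$-smoothness for \eqref{eq:bd-diff-grad-fi}, and $L$-smoothness of $f$ combined with the definition~\eqref{eq:Z}, the averaged update~\eqref{eq:Xupdate}, and the bound $\|\vm_i^{t-1}\|\le B$ from Lemma~\ref{lem:boundy} for \eqref{eq:bd-diff-grad-fz-fx}. The only cosmetic difference is that the paper applies Jensen directly to the squared norm in the second part while you use the triangle inequality and then square, but both routes give the identical bound.
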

\begin{proof}
	First,	by the $L$-smoothness of $f_i$ for each $i\in [n]$ and Young's inequality, we have
	\begin{align*}
		\left\|\frac{1}{n}  \sum_{i=1}^n\left(\nabla f_i\left(\vx_i^t\right)-\nabla f_i\left(\overline{\vx}^t\right)\right)\right\|^2 \le \frac{L^2}{n}\sum_{i=1}^n \|\vx_i^t - \overline{\vx}^t\|^2,
	\end{align*}
	which indicates \eqref{eq:bd-diff-grad-fi} by the definition of $\vX_\perp^t$. 
	Also, by the $L$-smoothness of $f$, it follows 
	\begin{align*}
		&\|\nabla  f\left(\vz^{t}\right)- \nabla f(\overline{\vx}^t)\|^2 \le L^2 \|\vz^{t} - \overline{\vx}^t\|^2 \overset{\eqref{eq:Z}}= \frac{L^2\beta_1^2}{(1-\beta_1)^2}\|\overline{\vx}^t-\overline{\vx}^{t-1}\|^2 \notag\\
		\overset{\eqref{eq:Xupdate}}= &\frac{L^2\beta_1^2}{(1-\beta_1)^2} \left\| \frac{\alpha}{n} \sum_{i=1}^n  \frac{\vm^{t-1}_{ i}}{\sqrt{\vu^{t-1}_{ i}+\delta}}\right\|^2 \le \frac{L^2\beta_1^2}{(1-\beta_1)^2} \frac{\alpha^2}{n} \sum_{i=1}^n \left\| \frac{\vm^{t-1}_{ i}}{\sqrt{\vu^{t-1}_{ i}+\delta}}\right\|^2 \le \frac{\alpha^2 L^2\beta_1^2 B^2}{\delta (1-\beta_1)^2},
	\end{align*}
	where the last inequality holds by $\|\vm^{t-1}_{ i}\|\le B$ from \eqref{eq:bd-M-u-inf-norm}. This completes the proof.
\end{proof}

\begin{lemma}
	Under Assumptions~\ref{ass:problemsetup} and~\ref{ass:gradient}, it holds that
	\begin{align}\label{eq:bd-z-sum}
		& \sum_{t=0}^{T-1}  \mathbb{E}\left[\left\|\vz^{t+1}-\vz^{t}\right\|^2\right] \le \frac{2 \beta_1^2 \alpha^2 d B_\infty^2}{\delta (1-\beta_1)^2}  \\
		& \quad + 2\alpha^2 \left(\frac{24 }{n\delta} T B^2   +  \frac{6 L^2}{n\delta}  \mathbb{E}\left[\sum_{t=0}^{T-1} \left\| \vX_{\perp}^t\right\|^2 \right] +\frac{6 }{\delta} \mathbb{E}\left[\sum_{t=0}^{T-1} \left\| \nabla f(\overline{\vx}^t ) \right\|^2\right]+   \sum_{t=0}^{T-1} \frac{2d}{\delta^2}\left(1-\beta_2^{t+1}\right) {B_{\infty}^4}\right). \notag
	\end{align}
\end{lemma}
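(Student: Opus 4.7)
The plan is to start from the identity in Lemma~\ref{lem:diffZ}, write $\vz^{t+1}-\vz^t = A_t - B_t$ with
\[
A_t := \frac{\beta_1 \alpha}{(1-\beta_1)n}\sum_{i=1}^n \vm_i^{t-1}\circ\left(\frac{1}{\sqrt{\vu_i^{t-1}+\delta}} - \frac{1}{\sqrt{\vu_i^t+\delta}}\right),\qquad B_t := \frac{\alpha}{n}\sum_{i=1}^n \frac{\vg_i^t}{\sqrt{\vu_i^t+\delta}},
\]
apply $\|a-b\|^2\le 2\|a\|^2+2\|b\|^2$ to separate the two pieces, and bound each after summing in $t$.

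For $\sum_t\|A_t\|^2$, I would work coordinate-by-coordinate: for each $j$, apply Cauchy--Schwarz over $i$, bound $|m_{i,j}^{t-1}|\le B_\infty$ from Lemma~\ref{lem:boundy}, and exploit that $\vu_i^t\ge \vu_i^{t-1}$ entrywise (from the $\max$ update in Alg.~\ref{alg:dad2-2}) so that $r_{i,j}^{t-1}\ge r_{i,j}^t\ge 0$ for $r_{i,j}^s := (u_{i,j}^s+\delta)^{-1/2}$. Then the elementary identity $(a-b)^2\le a^2-b^2$ for $a\ge b\ge 0$ converts $(r_{i,j}^{t-1}-r_{i,j}^t)^2$ into a difference that telescopes in $t$ and is controlled by $1/\delta$. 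Putting the coordinate bounds together yields exactly the target contribution $\frac{2\beta_1^2\alpha^2 dB_\infty^2}{(1-\beta_1)^2\delta}$.

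For $\sum_t\|B_t\|^2$, I would decouple the random gradient from the adaptive weight via the algebraic split
\[
\frac{\vg_i^t}{\sqrt{\vu_i^t+\delta}}=\frac{\vg_i^t}{\sqrt\delta}-\vg_i^t\circ\!\left(\frac{1}{\sqrt\delta}-\frac{1}{\sqrt{\vu_i^t+\delta}}\right),
\]
so $2\|B_t\|^2\le 4\|B_{t,1}\|^2+4\|B_{t,2}\|^2$ for the obvious pieces $B_{t,1},B_{t,2}$. For $B_{t,1}=\frac{\alpha}{n\sqrt\delta}\sum_i\vg_i^t$, I would decompose $\vg_i^t=(\vg_i^t-\nabla f_i(\vx_i^t))+(\nabla f_i(\vx_i^t)-\nabla f_i(\overline\vx^t))+\nabla f_i(\overline\vx^t)$, apply $\|a+b+c\|^2\le 3(\|a\|^2+\|b\|^2+\|c\|^2)$, and in expectation use three ingredients in turn: (i) independence of $\{\xi_i^t\}_i$ from Assumption~\ref{ass:gradient} together with $\|\vg_i^t-\nabla f_i(\vx_i^t)\|\le 2B$, which gives the $1/n$ variance reduction $\mathbb{E}\|\tfrac{1}{n}\sum_i(\vg_i^t-\nabla f_i(\vx_i^t))\|^2\le 4B^2/n$; (ii) inequality \eqref{eq:bd-diff-grad-fi} of Lemma~\ref{lem:bd-diff-grad-fi-z} to control the consensus-gap term by $\frac{L^2}{n}\|\vX_\perp^t\|^2$; and (iii) $\nabla f(\overline\vx^t)=\tfrac{1}{n}\sum_i\nabla f_i(\overline\vx^t)$ for the last piece. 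Summing over $t$ reproduces the three middle contributions in the target bound. For $B_{t,2}$, I would use the elementary bound $\bigl(\tfrac{1}{\sqrt\delta}-\tfrac{1}{\sqrt{u+\delta}}\bigr)^2\le \tfrac{u}{\delta^2}$ (obtained by rationalizing and using $\sqrt{u+\delta}+\sqrt\delta\ge 2\sqrt\delta$), combine with $|g_{i,j}^t|\le B_\infty$ and $\sum_j u_{i,j}^t\le d\|\vu_i^t\|_\infty\le d(1-\beta_2^{t+1})B_\infty^2$ from Lemma~\ref{lem:boundy}, which produces the final term $\sum_t\frac{2d}{\delta^2}(1-\beta_2^{t+1})B_\infty^4$ after multiplying by the factor $4$ from the split.

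The principal obstacle is that $\vu_i^t$ depends on $\vg_i^t$, so $\vg_i^t$ and $(\vu_i^t+\delta)^{-1/2}$ are \emph{not} independent and the unbiasedness $\mathbb{E}_t[\vg_i^t]=\nabla f_i(\vx_i^t)$ cannot be applied directly inside the adaptive sum. The resolution is the decoupling trick above: replace the random weight by the deterministic surrogate $1/\sqrt\delta$ to obtain $B_{t,1}$, which admits a clean variance decomposition, and charge the discrepancy to the deterministic residual $B_{t,2}$, which is tamed by the structural bound $\|\vu_i^t\|_\infty\le (1-\beta_2^{t+1})B_\infty^2$. A parallel telescoping trick, applied at the level of squared reciprocal-square-roots, is what makes the momentum-difference piece $A_t$ summable without introducing an extra $T$-dependence.
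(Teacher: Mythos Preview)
Your proposal is correct and reaches the stated bound with the exact constants. The overall architecture matches the paper's: split $\vz^{t+1}-\vz^t$ via Lemma~\ref{lem:diffZ} into the momentum-difference piece and the adaptive-gradient piece, telescope the first, and decouple the random gradient from its adaptive weight in the second.

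Two technical choices differ from the paper and are worth noting. For the $A_t$ term, the paper bounds $\|r^{t-1}-r^t\|^2\le \|r^{t-1}-r^t\|_1\|r^{t-1}-r^t\|_\infty$ and telescopes the $\ell_1$ norms (see \eqref{eq:sum-diff-1-u}); your use of $(a-b)^2\le a^2-b^2$ for $a\ge b\ge 0$ and telescoping of squares is more direct and achieves the same $dB_\infty^2/\delta$. For the $B_t$ term, the paper decouples against the surrogate $1/\sqrt{\overline{\vu}^{t-1}+\delta}$, which is measurable with respect to the time-$t$ filtration (see \eqref{eq:zterm2}--\eqref{eq:zterm3}); you decouple against the constant $1/\sqrt{\delta}$, which is even simpler and still permits the variance reduction in $B_{t,1}$. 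The paper's choice of $\overline{\vu}^{t-1}$ is reused later in \eqref{eq:keyineq2} and the main descent argument, which is likely why it appears here as well, but for the present lemma your constant surrogate is equally effective and arguably cleaner. The residual bound you use, $(1/\sqrt\delta-1/\sqrt{u+\delta})^2\le u/\delta^2$, follows immediately from $(a-b)^2\le a^2-b^2$ with $a=1/\sqrt\delta$, $b=1/\sqrt{u+\delta}$, and matches the paper's residual estimate \eqref{eq:zterm2-t2-2} in the final constant.
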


\begin{proof}
	By \eqref{eq:diffz} and Young's inequality, we have  
	\begin{align}\label{eq:zterm1}
		\notag
		\sum_{t=0}^{T-1}  \mathbb{E}\left[\left\|\vz^{t+1}-\vz^{t}\right\|^2\right] \le & \mathbb{E}\left[ 2\sum_{t=0}^{T-1}\left\|\frac{ \beta_1}{1-\beta_1} \frac{\alpha}{n} \sum_{i=1}^n \vm^{t-1}_{ i} \circ\left(\frac{1}{\sqrt{\vu^{t-1}_{ i}+\delta}}-\frac{1}{\sqrt{\vu^{t}_{ i}+\delta}}\right) \right\|^2\right] \\
		&  +   \mathbb{E} \left[2\sum_{t=0}^{T-1}\left\|\frac{\alpha}{n} \sum_{i=1}^n \frac{{\vg}^{t}_{ i}}{\sqrt{\vu^{t}_{ i}+\delta}}\right\|^2\right]. 
	\end{align}
	
	To bound the first term in the RHS of \eqref{eq:zterm1}, we notice
	\begin{align}\label{eq:zterm1-t1}
		&\sum_{t=0}^{T-1}\left\| \frac{1}{n} \sum_{i=1}^n \vm^{t-1}_{ i} \circ\left(\frac{1}{\sqrt{\vu^{t-1}_{ i}+\delta}}-\frac{1}{\sqrt{\vu^{t}_{ i}+\delta}}\right) \right\|^2 \cr 
		\le & \sum_{t=0}^{T-1} \frac{1}{n}\sum_{i=1}^n \left\|\vm^{t-1}_{ i} \circ\left(\frac{1}{\sqrt{\vu^{t-1}_{ i}+\delta}}-\frac{1}{\sqrt{\vu^{t}_{ i}+\delta}}\right) \right\|^2 \notag\\
		\le & B_\infty^2 \sum_{t=0}^{T-1} \frac{1}{n}\sum_{i=1}^n \left\|\frac{1}{\sqrt{\vu^{t-1}_{ i}+\delta}}-\frac{1}{\sqrt{\vu^{t}_{ i}+\delta}} \right\|^2 \notag \\
		\le & B_\infty^2 \sum_{t=0}^{T-1} \frac{1}{n}\sum_{i=1}^n \left\|\frac{1}{\sqrt{\vu^{t-1}_{ i}+\delta}}-\frac{1}{\sqrt{\vu^{t}_{ i}+\delta}} \right\|_1 \left\|\frac{1}{\sqrt{\vu^{t-1}_{ i}+\delta}}-\frac{1}{\sqrt{\vu^{t}_{ i}+\delta}} \right\|_\infty.
	\end{align}
	In addition, it holds $\left\|\frac{1}{\sqrt{\vu^{t-1}_{ i}+\delta}}-\frac{1}{\sqrt{\vu^{t}_{ i}+\delta}} \right\|_\infty \le \frac{1}{\sqrt\delta}$, and
	\begin{equation}\label{eq:sum-diff-1-u}
		\begin{aligned}
			&\sum_{t=0}^{T-1} \frac{1}{n}\sum_{i=1}^n \left\|\frac{1}{\sqrt{\vu^{t-1}_{ i}+\delta}}-\frac{1}{\sqrt{\vu^{t}_{ i}+\delta}} \right\|_1 
			=  \sum_{t=0}^{T-1}  \frac{1}{n} \sum_{i=1}^n \left(\left\|\frac{1}{\sqrt{\vu^{t-1}_{i}+\delta}}\right\|_1 - \left\|\frac{1}{\sqrt{\vu^{t}_{i}+\delta}}\right\|_1 \right)
			\le  \frac{d }{\sqrt\delta},
		\end{aligned}
	\end{equation}
	where the equality holds because $\vu_i^t$ is nondecreasing with $t$ for each $i\in [n]$. Therefore, it follows from \eqref{eq:zterm1-t1} that
	\begin{align}\label{eq:zterm1-t1-1}
		\sum_{t=0}^{T-1}\left\| \frac{1}{n} \sum_{i=1}^n \vm^{t-1}_{ i} \circ\left(\frac{1}{\sqrt{\vu^{t-1}_{ i}+\delta}}-\frac{1}{\sqrt{\vu^{t}_{ i}+\delta}}\right) \right\|^2 \le \frac{d B_\infty^2 }{\delta}.
	\end{align}
	
	To bound the second term in RHS of \eqref{eq:zterm1}, we first apply Young's inequality to have	 
	\begin{align}
		\notag
		&\mathbb{E} \left[\sum_{t=0}^{T-1}\left\|\frac{1}{n} \sum_{i=1}^n \frac{{\vg}^{t}_{ i}}{\sqrt{\vu^{t}_{ i}+\delta}}\right\|^2\right]  \\ 
		\leq & 2 \mathbb{E}\left[\sum_{t=0}^{T-1}\left\|\frac{1}{n} \sum_{i=1}^n \frac{{\vg}^{t}_{ i}}{\sqrt{\overline{\vu}^{t-1}+\delta}}\right\|^2\right] + 2 \mathbb{E}\left[\sum_{t=0}^{T-1}\left\|\frac{1}{n} \sum_{i=1}^n \left(\frac{{\vg}^{t}_{ i}}{\sqrt{\overline{\vu}^{t-1}+\delta}}-\frac{{\vg}^{t}_{ i}}{\sqrt{\vu^{t}_{ i}+\delta}} \right)\right\|^2\right]. 
		\label{eq:zterm2}
	\end{align}
	Notice
	\begin{align}\label{eq:zterm2-t2}
		& \sum_{t=0}^{T-1}\left\|\frac{1}{n} \sum_{i=1}^n \left(\frac{{\vg}^{t}_{ i}}{\sqrt{\overline{\vu}^{t-1}+\delta}}-\frac{{\vg}^{t}_{ i}}{\sqrt{\vu^{t}_{ i}+\delta}} \right)\right\|^2 \le \sum_{t=0}^{T-1} \frac{1}{n} \sum_{i=1}^n \left\|\frac{{\vg}^{t}_{ i}}{\sqrt{\overline{\vu}^{t-1}+\delta}}-\frac{{\vg}^{t}_{ i}}{\sqrt{\vu^{t}_{ i}+\delta}}\right\|^2 \notag\\
		\le &   B_\infty^2 \sum_{t=0}^{T-1} \frac{1}{n} \sum_{i=1}^n \left\|\frac{1}{\sqrt{\overline{\vu}^{t-1}+\delta}}-\frac{1}{\sqrt{\vu^{t}_{ i}+\delta}}\right\|_1 \left\|\frac{1}{\sqrt{\overline{\vu}^{t-1}+\delta}}-\frac{1}{\sqrt{\vu^{t}_{ i}+\delta}}\right\|_\infty .
	\end{align}
	In addition, it holds that $\left\|\frac{1}{\sqrt{\overline{\vu}^{t-1}+\delta}}-\frac{1}{\sqrt{\vu^{t}_{ i}+\delta}} \right\|_\infty \le \frac{1}{\sqrt\delta}$, and
	\begin{align}\label{eq:zterm2-t2-2}
		&\left\|\frac{1}{\sqrt{\overline{\vu}^{t-1}+\delta}}-\frac{1}{\sqrt{\vu^{t}_{i}+\delta}}\right\|_1 =  \left\|\frac{\overline{\vu}^{t-1} -{\vu}^{t}_i }{\sqrt{({\vu}^{t}_i+\delta)(\overline{\vu}^{t-1}+\delta)}\left(\sqrt{{\vu}^{t}_i+\delta} +\sqrt{\overline{\vu}^{t-1}+\delta}\right)}\right\|_1 \notag\\
		\le & \frac{1}{2\delta^{\frac{3}{2}}}\|{\vu}^{t}_i-\overline{\vu}^{t-1}\|_1 \le \frac{d}{2\delta^{\frac{3}{2}}}\|{\vu}^{t}_i-\overline{\vu}^{t-1}\|_\infty \overset{\eqref{eq:bd-M-u-inf-norm}}\le \frac{d}{\delta^{\frac{3}{2}}}\left(1-\beta_2^{t+1}\right) {B_{\infty}^2}.
	\end{align}	
	Therefore, we have from \eqref{eq:zterm2} and \eqref{eq:zterm2-t2} that
	\begin{equation}
		\begin{aligned}
			&\mathbb{E} \left[\sum_{t=0}^{T-1}\left\|\frac{1}{n} \sum_{i=1}^n \frac{{\vg}^{t}_{ i}}{\sqrt{\vu^{t}_{ i}+\delta}}\right\|^2\right]  
			\leq  2 \mathbb{E}\left[\sum_{t=0}^{T-1}\left\|\frac{1}{n} \sum_{i=1}^n \frac{{\vg}^{t}_{ i}}{\sqrt{\overline{\vu}^{t-1}+\delta}}\right\|^2\right] +   \sum_{t=0}^{T-1} \frac{2d}{\delta^2}\left(1-\beta_2^{t+1}\right) {B_{\infty}^4}. 
			\label{eq:zterm2-3}
		\end{aligned}    
	\end{equation}
	
	Now we bound the first term in the RHS of \eqref{eq:zterm2-3} as follows 
	\begin{align}
		\notag
		&2 \mathbb{E}\left[\sum_{t=0}^{T-1}\left\|\frac{1}{n} \sum_{i=1}^n \frac{{\vg}^{t}_{ i}}{\sqrt{\overline{\vu}^{t-1}+\delta}}\right\|^2\right]	\\	
		\notag\leq & \frac{2 }{\delta} \mathbb{E}\left[\sum_{t=0}^{T-1}\left\| \frac{1}{n} \sum_{i=1}^n  \left({\vg}^{t}_{ i} -\nabla f_i(\vx_i^t ) + \nabla f_i(\vx_i^t ) -\nabla f(\overline{\vx}^t) + \nabla f(\overline{\vx}^t) \right)\right\|^2\right] \\ 
		\notag\leq &  \frac{6 }{\delta} \mathbb{E}\left[\sum_{t=0}^{T-1}  \left\| \frac{1}{n} \sum_{i=1}^n  \left({\vg}^{t}_{ i} -\nabla f_i(\vx_i^t ) \right)  \right\|^2 + \sum_{t=0}^{T-1} \left\| \frac{1}{n} \sum_{i=1}^n  \left(\nabla f_i(\vx_i^t )  - \nabla f_i(\overline{\vx}^t ) \right)\right\|^2 + \sum_{t=0}^{T-1} \left\| \nabla f(\overline{\vx}^t ) \right\|^2  \right] \\ 
		\notag= &\frac{6 }{n^2\delta}\mathbb{E}\left[\sum_{t=0}^{T-1} \sum_{i=1}^n \left\|     {\vg}^{t}_{ i} -\nabla f_i(\vx_i^t ) \right\|^2\right]  +   \frac{6 }{\delta} \mathbb{E}\left[\sum_{t=0}^{T-1} \left\| \frac{1}{n} \sum_{i=1}^n  \left(\nabla f_i(\vx_i^t )  - \nabla f_i(\overline{\vx}^t ) \right)\right\|^2 \right] 
		\notag  +  \frac{6 }{\delta} \mathbb{E}\left[\sum_{t=0}^{T-1} \left\| \nabla f(\overline{\vx}^t ) \right\|^2\right] \\
		\leq & \frac{6 }{n^2\delta} 4n T B^2   +  \frac{6 L^2}{n\delta}  \mathbb{E}\left[\sum_{t=0}^{T-1} \left\| \vX_{\perp}^t\right\|^2 \right] +\frac{6 }{\delta} \mathbb{E}\left[\sum_{t=0}^{T-1} \left\| \nabla f(\overline{\vx}^t ) \right\|^2\right], \label{eq:zterm3}
	\end{align}
	where  the equality holds because $\vg^t_1, \vg_2^t,\ldots, \vg_n^t$ are conditionally independent of each other, and in the last inequality, we have used \eqref{eq:bd-diff-grad-fi}. Plugging \eqref{eq:zterm3} into \eqref{eq:zterm2-3} gives
	\begin{align}\label{eq:zterm2-4}
		&\mathbb{E} \left[\sum_{t=0}^{T-1}\left\|\frac{1}{n} \sum_{i=1}^n \frac{{\vg}^{t}_{ i}}{\sqrt{\vu^{t}_{ i}+\delta}}\right\|^2\right]   \\
		\leq  & \frac{24 }{n\delta} T B^2   +  \frac{6 L^2}{n\delta}  \mathbb{E}\left[\sum_{t=0}^{T-1} \left\| \vX_{\perp}^t\right\|^2 \right] +\frac{6 }{\delta} \mathbb{E}\left[\sum_{t=0}^{T-1} \left\| \nabla f_i(\overline{\vx}^t ) \right\|^2\right]+   \sum_{t=0}^{T-1} \frac{2d}{\delta^2}\left(1-\beta_2^{t+1}\right) {B_{\infty}^4}. \notag
	\end{align}
	
	The proof is then completed by substituting \eqref{eq:zterm1-t1-1} and \eqref{eq:zterm2-4} into  \eqref{eq:zterm1}.
\end{proof}	

\begin{lemma}
	Under Assumptions~\ref{ass:problemsetup} and~\ref{ass:gradient}, it holds
	\begin{align}\label{eq:keyineq2}		
		&\mathbb{E}_t\left[{\left\langle\nabla  f\left(\vz^{t}\right), \frac{1}{n} \sum_{i=1}^n \frac{{\vg}^{t}_{ i}}{\sqrt{\overline{\vu}^{t-1}+\delta}}\right\rangle}\right]  \\ \geq  &
		\frac{1}{2\sqrt{B_{\infty} ^2+\delta}}  \|\nabla  f\left(\overline{\vx}^{t}\right)\|^2 - 
		\frac{L^2 }{n }\left(\frac{\sqrt{B_{\infty}^2+\delta}}{ \delta} + \frac{1}{2\sqrt{\delta}}\right) \|\vX_{\perp}^t\|^2 - \frac{\alpha^2 \beta_1^2  L^2 B^2}{\delta(1-\beta_1)^2} \left(\frac{ \sqrt{B_{\infty}^2+\delta} }{ \delta } + \frac{  1}{2\sqrt{\delta}} \right).\notag
	\end{align}
\end{lemma}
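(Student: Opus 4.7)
The plan is as follows, in three stages.

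First, I would exploit the conditional expectation. Since $\overline{\vu}^{t-1}$, $\vz^t$, and each $\vx_i^t$ are measurable with respect to the filtration generated by $\{\xi_i^s : s\le t-1\}$, while the randomness in $\vg_i^t$ comes only from $\xi_i^t$, Assumption~\ref{ass:gradient} gives $\mathbb{E}_t[\vg_i^t]=\nabla f_i(\vx_i^t)$ and the LHS collapses to the deterministic quantity $\langle \nabla f(\vz^t),\ \tfrac{1}{n}\sum_i \vD\circ \nabla f_i(\vx_i^t)\rangle$, where $\vD:=1/\sqrt{\overline{\vu}^{t-1}+\delta}$ componentwise. I would then perform the natural decomposition $\nabla f(\vz^t)=\nabla f(\overline{\vx}^t)+\vp$ and $\tfrac{1}{n}\sum_i\nabla f_i(\vx_i^t)=\nabla f(\overline{\vx}^t)+\vq$, with residuals $\vp$ and $\vq$. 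Expanding yields a main term $\sum_k D_k[\nabla f(\overline{\vx}^t)]_k^2$ plus three cross terms $\langle \nabla f(\overline{\vx}^t),\vD\circ\vq\rangle$, $\langle \vp,\vD\circ\nabla f(\overline{\vx}^t)\rangle$, $\langle \vp,\vD\circ\vq\rangle$.

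Second, I would handle the main term via Lemma~\ref{lem:boundy}, which gives $\|\vu_i^{t-1}\|_{\infty}\le B_{\infty}^2$; averaging preserves the bound, so $1/\sqrt{B_{\infty}^2+\delta}\le D_k\le 1/\sqrt{\delta}$ for every $k$, and hence the main term is at least $\tfrac{1}{\sqrt{B_{\infty}^2+\delta}}\|\nabla f(\overline{\vx}^t)\|^2$. For the two mixed cross terms, I would apply coordinate-wise Young's inequality $|uv|\le \tfrac{u^2}{2c}+\tfrac{cv^2}{2}$ with $u=[\nabla f(\overline{\vx}^t)]_k$ and $v=D_k[\vq]_k$ (respectively $D_k p_k$), then sum over $k$ and invoke $\sum_k D_k^2(\cdot)_k^2\le \tfrac{1}{\delta}\|\cdot\|^2$. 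Choosing $c=2\sqrt{B_{\infty}^2+\delta}$ is calibrated so each mixed cross term absorbs exactly $\tfrac{1}{4\sqrt{B_{\infty}^2+\delta}}\|\nabla f(\overline{\vx}^t)\|^2$, producing the factor $\tfrac{\sqrt{B_{\infty}^2+\delta}}{\delta}$ on $\|\vq\|^2$ and $\|\vp\|^2$. The remaining term $\langle \vp,\vD\circ\vq\rangle$ I would bound directly using $\|\vD\|_\infty\le 1/\sqrt{\delta}$ and $2ab\le a^2+b^2$ to get $|\langle \vp,\vD\circ\vq\rangle|\le \tfrac{1}{2\sqrt{\delta}}(\|\vp\|^2+\|\vq\|^2)$.

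Finally, combining these pieces yields the intermediate bound
\[
\mathrm{LHS}\ \ge\ \frac{1}{2\sqrt{B_{\infty}^2+\delta}}\|\nabla f(\overline{\vx}^t)\|^2-\Big(\frac{\sqrt{B_{\infty}^2+\delta}}{\delta}+\frac{1}{2\sqrt{\delta}}\Big)(\|\vp\|^2+\|\vq\|^2),
\]
after which I would invoke Lemma~\ref{lem:bd-diff-grad-fi-z}: \eqref{eq:bd-diff-grad-fi} gives $\|\vq\|^2\le \tfrac{L^2}{n}\|\vX_{\perp}^t\|^2$ and \eqref{eq:bd-diff-grad-fz-fx} gives $\|\vp\|^2\le \tfrac{\alpha^2\beta_1^2 L^2 B^2}{\delta(1-\beta_1)^2}$, yielding the stated inequality exactly. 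The technical crux is the calibration of the Young parameter $c$: taking it too small eats more of the positive main term, while taking it too large inflates the coefficient in front of the error norms; the specific value $c=2\sqrt{B_{\infty}^2+\delta}$ is precisely what makes the two mixed terms cost only half of the main term and simultaneously produce the asymmetric constant $\tfrac{\sqrt{B_{\infty}^2+\delta}}{\delta}+\tfrac{1}{2\sqrt{\delta}}$ that appears in the statement.
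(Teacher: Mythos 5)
Your proposal is correct and follows essentially the same route as the paper's proof: it uses the same conditional-expectation reduction, the same four-term decomposition with $\vp=\nabla f(\vz^t)-\nabla f(\overline{\vx}^t)$ and $\vq=\frac{1}{n}\sum_i(\nabla f_i(\vx_i^t)-\nabla f_i(\overline{\vx}^t))$, the same Young calibration $c=2\sqrt{B_\infty^2+\delta}$ for the two mixed cross terms, the same $\|\vD\|_\infty\le 1/\sqrt{\delta}$ bound on the $\langle\vp,\vD\circ\vq\rangle$ term, and the same final invocation of Lemma~\ref{lem:bd-diff-grad-fi-z} and Lemma~\ref{lem:boundy}.
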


\begin{proof}
	By Assumption~\ref{ass:gradient}, it holds that
	\begin{align}
		\notag
		&\mathbb{E}_t\left[{\left\langle\nabla  f\left(\vz^{t}\right), \frac{1}{n} \sum_{i=1}^n \frac{{\vg}^{t}_{ i}}{\sqrt{\overline{\vu}^{t-1}+\delta}}\right\rangle}\right]  = {\left\langle\nabla  f\left(\vz^{t}\right), \frac{1}{n} \sum_{i=1}^n \frac{\nabla f_i(\vx_i^t)}{\sqrt{\overline{\vu}^{t-1}+\delta}}\right\rangle}\\
		\notag= & {\left\langle\nabla  f\left(\vz^{t}\right)- \nabla f(\overline{\vx}^t), \frac{1}{n\sqrt{\overline{\vu}^{t-1}+\delta}} \sum_{i=1}^n {\left(\nabla f_i(\vx_i^t)-\nabla f(\overline{\vx}^t)\right)}{}\right\rangle} + {\left\langle\nabla  f\left(\vz^{t}\right)- \nabla f(\overline{\vx}^t), \frac{\nabla f(\overline{\vx}^t)}{\sqrt{\overline{\vu}^{t-1}+\delta}} {}{}\right\rangle}  \\
		& + {\left\langle \nabla f(\overline{\vx}^t), \frac{1}{n\sqrt{\overline{\vu}^{t-1}+\delta}} \sum_{i=1}^n {\left(\nabla f_i(\vx_i^t)-\nabla f(\overline{\vx}^t)\right)}{}\right\rangle} + {\left\langle \nabla f(\overline{\vx}^t), \frac{\nabla f(\overline{\vx}^t)}{\sqrt{\overline{\vu}^{t-1}+\delta}} \right\rangle}.
		\label{eq:fourterm} 
	\end{align}
	Next we bound each of the four terms in the RHS of \eqref{eq:fourterm}. For the first term in the RHS of \eqref{eq:fourterm}, we use Young's inequality, \eqref{eq:bd-diff-grad-fi}, and   \eqref{eq:bd-diff-grad-fz-fx} to have
	\begin{align}
		\notag
		&{\left\langle\nabla  f\left(\vz^{t}\right)- \nabla f(\overline{\vx}^t), \frac{1}{n\sqrt{\overline{\vu}^{t-1}+\delta}} \sum_{i=1}^n {\left(\nabla f_i(\vx_i^t)-\nabla f(\overline{\vx}^t)\right)}{}\right\rangle} \\
		\notag		{\geq}& -\frac{1}{2\sqrt{\delta}} \left(\left\|\nabla  f\left(\vz^{t}\right)- \nabla f(\overline{\vx}^t)\right\|^2 + \left\|\frac{1}{n}  \sum_{i=1}^n\left(\nabla f_i\left(\vx_i^t\right)-\nabla f_i\left(\overline{\vx}^t\right)\right)\right\|^2 \right)  \\
		\geq &  -\frac{1}{2\sqrt{\delta}}\left(\frac{\alpha^2\beta_1^2 L^2 B^2}{ \delta (1-\beta_1)^2} +  \frac{L^2 }{n }\|\vX_{\perp}^t\|^2 \right),
		\label{eq:1thterm}
	\end{align}
	where we have used $\overline{\vu}^{t-1}\ge\vzero$ in the first inequality. 	
	For the second term in the RHS of \eqref{eq:fourterm}, we have
	\begin{align} \label{eq:2thterm} 
		&{\left\langle\nabla  f\left(\vz^{t}\right)- \nabla f(\overline{\vx}^t), \frac{\nabla f(\overline{\vx}^t)}{\sqrt{\overline{\vu}^{t-1}+\delta}} {}{}\right\rangle} 
		\\\notag
		\geq &
		-\frac{\left\|\nabla f\left(\overline{\vx}^t\right)\right\|^2}{4 \sqrt{B_{\infty}^2+\delta}} - \frac{\sqrt{B_{\infty}^2+\delta}}{ \delta} { \left\| \nabla  f\left(\vz^{t}\right)- \nabla f(\overline{\vx}^t) \right\|^2} 
		\overset{\eqref{eq:bd-diff-grad-fz-fx}}\geq -\frac{\left\|\nabla f\left(\overline{\vx}^t\right)\right\|^2}{4 \sqrt{B_{\infty}^2+\delta}} - \frac{\alpha^2\beta_1^2 L^2\sqrt{B_{\infty}^2+\delta}B^2}{ \delta^2(1-\beta_1)^2},		
	\end{align}
	where the first inequality follows from Young's inequality and $\overline{\vu}^{t-1}\ge\vzero$. 
	For the third term in the RHS of \eqref{eq:fourterm}, we have from  Young's inequality and \eqref{eq:bd-diff-grad-fi} that
	\begin{align}\label{eq:3thterm}
		\notag & {\left\langle \nabla f(\overline{\vx}^t), \frac{1}{n\sqrt{\overline{\vu}^{t-1}+\delta}} \sum_{i=1}^n {\left(\nabla f_i(\vx_i^t)-\nabla f(\overline{\vx}^t)\right)}{}\right\rangle} 
		\\  \notag\geq & -\frac{\left\|\nabla f\left(\overline{\vx}^t\right)\right\|^2}{4 \sqrt{B_{\infty}^2+\delta}}-\frac{\sqrt{B_{\infty}^2+\delta}}{\delta} {\left\|\frac{1}{n}  \sum_{i=1}^n\left(\nabla f_i\left(\vx_i^t\right)-\nabla f_i\left(\overline{\vx}^t\right)\right)\right\|^2} \\
		\geq & 
		-\frac{\left\|\nabla f\left(\overline{\vx}^t\right)\right\|^2}{4 \sqrt{B_{\infty}^2+\delta}}-\frac{L^2\sqrt{B_{\infty}^2+\delta}}{n \delta}\|\vX_{\perp}^t\|^2. 
	\end{align}
	The last term in the RHS of \eqref{eq:fourterm} can be bounded as 
	\begin{align}
		\label{eq:4thterm}
		\left\langle\nabla  f\left(\overline{\vx}^{t}\right), \frac{1}{n} \sum_{i=1}^n \frac{\nabla  f\left(\overline{\vx}^{t}\right)}{\sqrt{\overline{\vu}^{t-1}+\delta}}\right\rangle\geq \frac{1}{\sqrt{B_{\infty} ^2+\delta}}  \|\nabla  f\left(\overline{\vx}^{t}\right)\|^2. 
	\end{align}
	Substituting~\eqref{eq:1thterm}--\eqref{eq:4thterm} into~\eqref{eq:fourterm} and rearranging terms yields the desired result.
\end{proof}

Now we are ready to show the main convergence result of Algorithm~\ref{alg:dad2-2}.
\begin{theorem}[Complete statement of Theorem~\ref{thm:gradientbound-main}]
	\label{thm:gradientbound}
	Under Assumptions~\ref{ass:problemsetup}--\ref{ass:gradient}, let $\alpha>0$ and $\gamma > 0$ satisfy 
	\begin{equation}\label{eq:cond-alpha-gamma-amsgrad}
		\alpha \le \min\left\{\frac{\delta}{24 L \sqrt{B_\infty^2 + \delta}}, \frac{(1-\eta^2)^2}{32}\right\},\quad \gamma\leq \min\left\{\frac{1-\widehat{\rho}^2}{60\eta},\frac{1-\eta^2}{25},\frac{\alpha}{\eta}\right\}.
	\end{equation}
	Then with $C$ defined in \eqref{eq:xprepcom3}, it holds 
	\begin{equation}\label{eq:gradient-bound}
		\begin{aligned}
			&\frac{\alpha}{4\sqrt{B_{\infty} ^2+\delta}} \sum_{t=0}^{T-1} \mathbb{E}\left[\|\nabla  f\left(\overline{\vx}^{t}\right)\|^2\right] \leq \mathbb{E}\left[f\left(\vz^{0}\right)-f\left(\vz^{T}\right)\right] + \frac{\alpha\beta_1}{1-\beta_1} \frac{d B_\infty^2}{\sqrt\delta} + \frac{L \beta_1^2 \alpha^2 d B_\infty^2}{\delta (1-\beta_1)^2} \\
			& + \frac{24 L \alpha^2 }{n\delta} T B^2 + \left( \frac{6 L^3 \alpha^2}{n\delta} + \frac{L^2 \alpha }{n }\left(\frac{\sqrt{B_{\infty}^2+\delta}}{ \delta} + \frac{1}{2\sqrt{\delta}}\right)\right) \alpha^2 C T 
			 + L\alpha^2 {B_{\infty}^4} \frac{2d}{\delta^2} \sum_{t=0}^{T-1} \left(1-\beta_2^{t+1}\right)  \\
			&
			+ \alpha \sum_{t=0}^{T-1} \Bigg(\frac{ dB^4_{\infty}}{{ }\delta^{\frac{3}{2}}} \left(1-\beta_2^{t+1}\right)   + \frac{\alpha^2 \beta_1^2  L^2 B^2}{\delta(1-\beta_1)^2} \left(\frac{ \sqrt{B_{\infty}^2+\delta} }{ \delta } + \frac{  1}{2\sqrt{\delta}} \right)\Bigg).
		\end{aligned}
	\end{equation}	
\end{theorem}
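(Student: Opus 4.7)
The plan is to apply a standard smoothness-based descent analysis to the auxiliary sequence $\{\vz^t\}$ defined in \eqref{eq:Z}. By the $L$-smoothness of $f$,
$$f(\vz^{t+1}) \leq f(\vz^t) + \langle \nabla f(\vz^t), \vz^{t+1}-\vz^t\rangle + \tfrac{L}{2}\|\vz^{t+1}-\vz^t\|^2.$$
Substituting Lemma~\ref{lem:diffZ} for $\vz^{t+1}-\vz^t$ splits the middle term into two pieces: (i) a \textbf{momentum-rate residual} $\frac{\beta_1\alpha}{(1-\beta_1)n}\langle \nabla f(\vz^t), \sum_i \vm_i^{t-1}\circ(\frac{1}{\sqrt{\vu_i^{t-1}+\delta}} - \frac{1}{\sqrt{\vu_i^t+\delta}})\rangle$, and (ii) the \textbf{main descent term} $-\frac{\alpha}{n}\langle \nabla f(\vz^t), \sum_i \vg_i^t/\sqrt{\vu_i^t+\delta}\rangle$. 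The quadratic term (iii) $\frac{L}{2}\|\vz^{t+1}-\vz^t\|^2$ will ultimately be controlled by \eqref{eq:bd-z-sum}.

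For (ii), since $\vu_i^t$ depends on $\vg_i^t$, the conditional expectation does not factor, so I decompose
$$\frac{\vg_i^t}{\sqrt{\vu_i^t+\delta}} \;=\; \frac{\vg_i^t}{\sqrt{\overline{\vu}^{t-1}+\delta}} \;+\; \vg_i^t\circ\Bigl(\frac{1}{\sqrt{\vu_i^t+\delta}}-\frac{1}{\sqrt{\overline{\vu}^{t-1}+\delta}}\Bigr).$$
The first piece is $\mathcal{F}_t$-measurable in the denominator, so taking $\mathbb{E}_t$ and invoking \eqref{eq:keyineq2} produces the desired $\frac{1}{2\sqrt{B_\infty^2+\delta}}\|\nabla f(\overline\vx^t)\|^2$ term along with controllable error in $\|\vX_\perp^t\|^2$ and a $\frac{\alpha^2\beta_1^2 L^2 B^2}{\delta(1-\beta_1)^2}$-type contribution. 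The bias piece is controlled by Cauchy--Schwarz and Assumption~\ref{ass:gradient}, using the same monotonicity trick as in \eqref{eq:zterm2-t2-2} to get $\|\tfrac{1}{\sqrt{\vu_i^t+\delta}}-\tfrac{1}{\sqrt{\overline\vu^{t-1}+\delta}}\|_1 \le \frac{d}{\delta^{3/2}}(1-\beta_2^{t+1})B_\infty^2$, yielding the $\alpha \sum_t \frac{dB_\infty^4}{\delta^{3/2}}(1-\beta_2^{t+1})$ term in \eqref{eq:gradient-bound}. For (i), Cauchy--Schwarz together with $\|\vm_i^{t-1}\|_\infty \le B_\infty$ and the telescoping estimate \eqref{eq:sum-diff-1-u} (which exploits that $\vu_i^t$ is coordinatewise nondecreasing in $t$) produces the $\frac{\alpha\beta_1}{1-\beta_1}\cdot\frac{dB_\infty^2}{\sqrt\delta}$ boundary term.

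After summing the resulting descent inequality over $t=0,\ldots,T-1$, the LHS telescopes to $\mathbb{E}[f(\vz^0)-f(\vz^T)]$. Applying \eqref{eq:bd-z-sum} to the quadratic term introduces further contributions of $\frac{L\alpha^2}{n\delta}$ times $\sum_t \mathbb{E}\|\vX_\perp^t\|^2$, $\sum_t \mathbb{E}\|\nabla f(\overline\vx^t)\|^2$, and a $\beta_2$-dependent bias; the $\|\vX_\perp^t\|^2$ sums are finally replaced by $\alpha^2 CT$ via Lemma~\ref{lem:xprepmain}, which requires the stepsize/mixing conditions $\alpha \le \frac{(1-\eta^2)^2}{32}$ and the bound on $\gamma$ in \eqref{eq:cond-alpha-gamma-amsgrad}. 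Crucially, $\mathbb{E}\|\nabla f(\overline\vx^t)\|^2$ appears both on the left (with coefficient $\frac{\alpha}{2\sqrt{B_\infty^2+\delta}}$ from \eqref{eq:keyineq2}) and on the right (with coefficient proportional to $\frac{L\alpha^2}{\delta}$ from the quadratic term); the condition $\alpha\le \frac{\delta}{24L\sqrt{B_\infty^2+\delta}}$ is precisely what lets the RHS gradient-norm term be absorbed into half of the LHS coefficient, leaving $\frac{\alpha}{4\sqrt{B_\infty^2+\delta}}\sum_t \mathbb{E}[\|\nabla f(\overline\vx^t)\|^2]$ as claimed.

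The main obstacle is the tight bookkeeping around the adaptive preconditioner: the decomposition must be arranged so every ``replacement'' of $\sqrt{\vu_i^t+\delta}$ by $\sqrt{\overline\vu^{t-1}+\delta}$ produces a residual whose sum over $t$ is controlled by the telescoping monotonicity of $\vu_i^t$, and the smallness of $\alpha$ must simultaneously absorb (a) the quadratic smoothness term, (b) the compression/consensus drift in Lemma~\ref{lem:xprepmain}, and (c) the coupling between momentum and adaptivity. Once these are aligned, the rest is rearrangement and collecting the error terms exactly as they appear in \eqref{eq:gradient-bound}.
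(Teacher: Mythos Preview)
Your proposal is correct and follows essentially the same approach as the paper's proof: the descent analysis on $\{\vz^t\}$ via Lemma~\ref{lem:diffZ}, the decomposition of the main descent term through $\overline{\vu}^{t-1}$ to invoke \eqref{eq:keyineq2}, the H\"older/telescoping bounds \eqref{eq:sum-diff-1-u} and \eqref{eq:zterm2-t2-2} for the adaptive residuals, the application of \eqref{eq:bd-z-sum} and Lemma~\ref{lem:xprepmain}, and the absorption of the quadratic-side gradient term via $\alpha\le \frac{\delta}{24L\sqrt{B_\infty^2+\delta}}$ all match the paper exactly.
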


\begin{proof}
	By the $L$-smoothness of $f$, we have
	$$
	f\left(\vz^{t+1}\right) \leq f\left(\vz^{t}\right)+\left\langle\nabla f\left(\vz^{t}\right), \vz^{t+1}-\vz^{t}\right\rangle+\frac{L}{2}\left\|\vz^{t+1}-\vz^{t}\right\|^2,
	$$
	which together with \eqref{eq:diffz} gives	
	$$
	\begin{aligned}
		\quad f\left(\vz^{t+1}\right)  \leq &f\left(\vz^{t}\right)-\alpha\left\langle\nabla f\left(\vz^{t}\right), \frac{1}{n} \sum_{i=1}^n \frac{{\vg}^{t}_{ i}}{\sqrt{\vu^{t}_i+\delta}}\right\rangle+\frac{L}{2}\left\|\vz^{t+1}-\vz^{t}\right\|^2 \\
		& \quad+ \frac{\beta_1}{1-\beta_1} \left\langle\nabla f\left(\vz^{t}\right), \frac{\alpha}{n} \sum_{i=1}^n \vm^{t-1}_{ i} \circ\left(\frac{1}{\sqrt{\vu^{t-1}_{ i}+\delta}}-\frac{1}{\sqrt{\vu^{t}_i+\delta}}\right)\right\rangle.
	\end{aligned}
	$$
	Take  expectation, sum up over $t$, and rearrange terms of the above inequality. We have
	\begin{equation}
		\label{eq:zzz}
		\begin{aligned}
			&\alpha \sum_{t=0}^{T-1} \mathbb{E}\left[\left\langle\nabla f\left(\vz^{t}\right), \frac{1}{n} \sum_{i=1}^n \frac{{\vg}^{t}_{ i}}{\sqrt{\vu^{t}_i+\delta}}\right\rangle\right] 
			\leq \mathbb{E}\left[f\left(\vz^{0}\right)-f\left(\vz^{T}\right)\right]+\frac{L}{2} \mathbb{E}\sum_{t=0}^{T-1}\left[\left\|\vz^{t+1}-\vz^{t}\right\|^2\right] \\
			&\quad\quad\quad\quad\quad+ \frac{\alpha\beta_1}{1-\beta_1} \sum_{t=0}^{T-1}\mathbb{E}\left[\left\langle\nabla f\left(\vz^{t}\right), \frac{1}{n} \sum_{i=1}^n \vm^{t-1}_{ i} \circ\left(\frac{1}{\sqrt{\vu^{t-1}_{ i}+\delta}}-\frac{1}{\sqrt{\vu^{t}_i+\delta}}\right)\right\rangle\right] .
		\end{aligned}
	\end{equation}
	
	Below we bound the two inner-product terms on the LHS and RHS of \eqref{eq:zzz}. First, 
	\begin{align}
		&\sum_{t=0}^{T-1} \mathbb{E}\left[\left\langle\nabla f\left(\vz^{t}\right), \frac{1}{n} \sum_{i=1}^n \vm^{t-1}_{i} \circ\left(\frac{1}{\sqrt{\vu^{t-1}_{i}+\delta}}-\frac{1}{\sqrt{\vu^{t}_{i}+\delta}}\right)\right\rangle\right] \notag\\
		\le & 	\sum_{t=0}^{T-1} \mathbb{E}\left[ \|\nabla f\left(\vz^{t}\right)\|_\infty \left\|\frac{1}{n} \sum_{i=1}^n \vm^{t-1}_{i} \circ\left(\frac{1}{\sqrt{\vu^{t-1}_{i}+\delta}}-\frac{1}{\sqrt{\vu^{t}_{i}+\delta}}\right)\right\|_1\right]	\notag\\
		\le & 	\sum_{t=0}^{T-1} \mathbb{E}\left[ \|\nabla f\left(\vz^{t}\right)\|_\infty \frac{1}{n} \sum_{i=1}^n \|\vm^{t-1}_{i}\|_\infty \left\|\frac{1}{\sqrt{\vu^{t-1}_{i}+\delta}}-\frac{1}{\sqrt{\vu^{t}_{i}+\delta}}\right\|_1\right]	\notag\\
		\le & 	\sum_{t=0}^{T-1} B_\infty^2\mathbb{E}\left[ \frac{1}{n} \sum_{i=1}^n \left\|\frac{1}{\sqrt{\vu^{t-1}_{i}+\delta}}-\frac{1}{\sqrt{\vu^{t}_{i}+\delta}}\right\|_1\right] 	
		\overset{\eqref{eq:sum-diff-1-u}}\le  \frac{d B_\infty^2}{\sqrt\delta},
		\label{eq:term2bound}
	\end{align}
	where in the third inequality, we have used $\|\nabla f\left(\vz^{t}\right)\|_{\infty}\leq B_{\infty}$ and $\|\vm^{t-1}_{i}\|_{\infty}\leq  B_{\infty}$ by Assumption~\ref{ass:gradient} and Lemma~\ref{lem:boundy}. 
	Second, we write
	\begin{align}
		&\left\langle\nabla f\left(\vz^{t}\right), \frac{1}{n} \sum_{i=1}^n \frac{{\vg}^{t}_{i}}{\sqrt{\vu^{t}_i+\delta}}\right\rangle \label{eq:zt1} \\
		=  & {\left\langle\nabla  f\left({\vz}^{t}\right), \frac{1}{n} \sum_{i=1}^n \left( \frac{{\vg}^{t}_{ i}}{\sqrt{{\vu}^{t}_i+\delta}}- \frac{{\vg}^{t}_{ i}}{\sqrt{\overline{\vu}^{t-1}+\delta}} \right)\right\rangle}+   \left\langle\nabla f\left(\vz^{t}\right) , \frac{1}{n} \sum_{i=1}^n \frac{{\vg}^{t}_{ i}}{\sqrt{\overline{\vu}^{t-1}+\delta}}\right\rangle. \notag		
	\end{align}	
	By $\|f\left({\vz}^{t}\right)\|_\infty\le B_\infty$ and $\|{\vg}^{t}_{ i}\|_\infty\le B_\infty$ from Assumption~\ref{ass:gradient}, we have 
	\begin{align}\label{eq:keyineq1}
		&{\left\langle\nabla  f\left({\vz}^{t}\right), \frac{1}{n} \sum_{i=1}^n \left( \frac{{\vg}^{t}_{ i}}{\sqrt{{\vu}^{t}_i+\delta}}- \frac{{\vg}^{t}_{ i}}{\sqrt{\overline{\vu}^{t-1}+\delta}} \right)\right\rangle}	\\
		\ge & -B_\infty^2  \frac{1}{n} \sum_{i=1}^n \left\|\frac{1}{\sqrt{\overline{\vu}^{t-1}+\delta}}-\frac{1}{\sqrt{\vu^{t}_{i}+\delta}}\right\|_1 \overset{\eqref{eq:zterm2-t2-2}}\ge - \frac{d}{\delta^{\frac{3}{2}}}\left(1-\beta_2^{t+1}\right) {B_{\infty}^4}. \notag
	\end{align}
	%
	Substituting \eqref{eq:keyineq2} and	\eqref{eq:keyineq1} into \eqref{eq:zt1} after taking conditional expectation, we obtain
	\begin{align}\label{eq:zt1-2}
		&\EE_t\left[\left\langle\nabla f\left(\vz^{t}\right), \frac{1}{n} \sum_{i=1}^n \frac{{\vg}^{t}_{i}}{\sqrt{\vu^{t}_i+\delta}}\right\rangle \right]  \ge - \frac{ dB^4_{\infty}}{{ }\delta^{\frac{3}{2}}} \left(1-\beta_2^{t+1}\right) +   \frac{1}{2\sqrt{B_{\infty} ^2+\delta}}  \|\nabla  f\left(\overline{\vx}^{t}\right)\|^2 \\
		& \hspace{2cm} - 
		\frac{L^2 }{n }\left(\frac{\sqrt{B_{\infty}^2+\delta}}{ \delta} + \frac{1}{2\sqrt{\delta}}\right) \|\vX_{\perp}^t\|^2 - \frac{\alpha^2 \beta_1^2  L^2 B^2}{\delta(1-\beta_1)^2} \left(\frac{ \sqrt{B_{\infty}^2+\delta} }{ \delta } + \frac{  1}{2\sqrt{\delta}} \right). \notag		
	\end{align}


	
	Now plugging \eqref{eq:bd-z-sum}, \eqref{eq:term2bound} and 	\eqref{eq:zt1-2} after full expectation into \eqref{eq:zzz} and rearranging terms gives
	\begin{equation*}
		\begin{aligned}
			& \left(\frac{\alpha}{2\sqrt{B_{\infty} ^2+\delta}} - \frac{6L\alpha^2}{\delta}\right) \sum_{t=0}^{T-1}  \EE\big[ \|\nabla  f\left(\overline{\vx}^{t}\right)\|^2 \big] 
			\leq \mathbb{E}\left[f\left(\vz^{0}\right)-f\left(\vz^{T}\right)\right] + \frac{\alpha\beta_1}{1-\beta_1} \frac{d B_\infty^2}{\sqrt\delta} \\
			& \quad +\frac{L}{2} \Bigg(\frac{2 \beta_1^2 \alpha^2 d B_\infty^2}{\delta (1-\beta_1)^2} + 2\alpha^2 \left(\frac{24 }{n\delta} T B^2   +  \frac{6 L^2}{n\delta}  \mathbb{E}\left[\sum_{t=0}^{T-1} \left\| \vX_{\perp}^t\right\|^2 \right] +   \sum_{t=0}^{T-1} \frac{2d}{\delta^2}\left(1-\beta_2^{t+1}\right) {B_{\infty}^4}\right)\Bigg) \\
			& \quad + \alpha \sum_{t=0}^{T-1} \Bigg(\frac{ dB^4_{\infty}}{{ }\delta^{\frac{3}{2}}} \left(1-\beta_2^{t+1}\right)  + 
			\frac{L^2 }{n }\left(\frac{\sqrt{B_{\infty}^2+\delta}}{ \delta} + \frac{1}{2\sqrt{\delta}}\right) \EE\big[ \|\vX_{\perp}^t\|^2 \big]  + \frac{\alpha^2 \beta_1^2  L^2 B^2}{\delta(1-\beta_1)^2} \left(\frac{ \sqrt{B_{\infty}^2+\delta} }{ \delta } + \frac{  1}{2\sqrt{\delta}} \right)\Bigg).
		\end{aligned}
	\end{equation*}
	Plug \eqref{eq:xprepcom3} into the inequality above, notice $\frac{\alpha}{2\sqrt{B_{\infty} ^2+\delta}} - \frac{6L\alpha^2}{\delta} \ge \frac{\alpha}{4\sqrt{B_{\infty} ^2+\delta}}$, and rearrange terms. We obtain the desired result and complete the proof.
\end{proof}

Below we give a few choices of the algorithmic parameters and give the convergence rate of Algorithm~\ref{alg:dad2-2}.
\begin{theorem}[Complete statement of Theorem~\ref{thm:convg-rate-amsgrad-main}]\label{thm:convg-rate-amsgrad}
	Suppose that the conditions assumed in Theorem~\ref{thm:gradientbound} hold with $\alpha = \frac{4\theta\sqrt{n(B_{\infty} ^2+\delta)}}{\sqrt{T}}$ for a sufficiently large $T\ge n$ and some $\theta\in (\frac{n}{L e}, \frac{n}{L})$. 
	Then with $C$ defined in \eqref{eq:xprepcom3}, we have the following results.
	\begin{enumerate}
		\item[(i)] Let $\delta = \omega^2 B_\infty^2 \frac{\sqrt T}{\sqrt n}$ for some universal constant $\omega > 0$. 
		Set $\beta_2\in \left[\frac{T}{T+1}, \big(\frac{\theta L}{n}\big)^{\frac{1}{T}}\right]$. Then
		\begin{align}\label{eq:gradient-bound-cor1}
			&\frac{1}{T} \sum_{t=0}^{T-1} \mathbb{E}\left[\|\nabla  f\left(\overline{\vx}^{t}\right)\|^2 + \frac{1}{n}\|\vX_\perp^t\|^2\right] \notag \\
			= & O\bigg( \frac{1}{\sqrt{n T}}\Big(f\big(\vz^{0}\big)-f^* + LB^2 + dB_\infty^2 (1+L) + C(1+L^2)B_\infty^2\Big) 
			+C(1+L^2)\frac{B_\infty^2}{T}  + \frac{n L^2 B^2}{T} \bigg). 
		\end{align}
		
		\item[(ii)] Let $\delta = O(1)$ be a universal positive constant. Suppose $\frac{T}{T+1} \le \big(\frac{\theta L}{\sqrt{n T}}\big)^{\frac{1}{T}} < 1$. Set $\beta_2\in \left[\frac{T}{T+1}, \big(\frac{\theta L}{\sqrt{n T}}\big)^{\frac{1}{T}}\right]$. Then
		\begin{align}\label{eq:gradient-bound-cor2}
			&\frac{1}{T} \sum_{t=0}^{T-1} \mathbb{E}\left[\|\nabla  f\left(\overline{\vx}^{t}\right)\|^2 + \frac{1}{n}\|\vX_\perp^t\|^2\right]  \\
			= & O\bigg( \frac{1}{\sqrt{n T}}\Big(f\big(\vz^{0}\big)-f^*+ d B_\infty^3 + LB^2 B_\infty^2 + dB_\infty^2 L + dLB_\infty^5\Big)   + \frac{CB_\infty^2}{T}\left(1+L^2 B_\infty^2\right) + \frac{L^2B^2 B_\infty^4}{T} \bigg). \notag
		\end{align}
	\end{enumerate}
	
\end{theorem}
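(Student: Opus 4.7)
The plan is to combine Theorem~\ref{thm:gradientbound} with the consensus bound from Lemma~\ref{lem:xprepmain} and then substitute the prescribed choices of $\alpha$, $\delta$, and $\beta_2$. First I would divide \eqref{eq:gradient-bound} by its leading coefficient $\alpha T/(4\sqrt{B_\infty^2+\delta})$; with $\alpha=4\theta\sqrt{n(B_\infty^2+\delta)}/\sqrt T$ this normalizing factor equals $1/(\theta\sqrt{nT})$, independent of $\delta$. This turns the left-hand side into $\frac{1}{T}\sum_{t=0}^{T-1}\mathbb{E}[\|\nabla f(\overline{\vx}^t)\|^2]$. Separately, Lemma~\ref{lem:xprepmain} gives $\frac{1}{nT}\sum_{t=0}^{T-1}\mathbb{E}[\|\vX_\perp^t\|^2]\le \alpha^2 C/n$. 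Adding these two bounds yields an inequality on the combined quantity that appears in \eqref{eq:gradient-bound-cor1} and \eqref{eq:gradient-bound-cor2}, and the remaining task is to bound each summand on the right.

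For Case (i) with $\delta=\omega^2 B_\infty^2\sqrt{T/n}$, the relation $B_\infty^2+\delta=\Theta(\delta)$ for large $T$ simplifies the key groups: $\alpha$ scales like $B_\infty n^{1/4}/T^{1/4}$, $\alpha^2/\delta$ scales like $n/T$, and $\sqrt{B_\infty^2+\delta}/\delta$ scales like $1/\sqrt\delta$. Substituting, the initial gap $f(\vz^0)-f^*$ contributes $\frac{f(\vz^0)-f^*}{\sqrt{nT}}$; the two $\beta_1$-driven terms produce the $\frac{dB_\infty^2(1+L)}{\sqrt{nT}}$ piece; the $\frac{24L\alpha^2 TB^2}{n\delta}$ term gives the $\frac{LB^2}{\sqrt{nT}}$ piece; the $\alpha^2 CT$ term together with the consensus contribution $\alpha^2 C/n$ produces $\frac{C(1+L^2)B_\infty^2}{\sqrt{nT}}+\frac{C(1+L^2)B_\infty^2}{T}$; and the deterministic trailing piece $\alpha T\cdot\frac{\alpha^2\beta_1^2 L^2B^2}{\delta(1-\beta_1)^2}(\cdot)$ produces $\frac{nL^2B^2}{T}$. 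The hypothesis $T\ge n$ is used to absorb a $n/T$ term into $1/\sqrt{nT}$ wherever the two asymptotics must be compared.

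The delicate step is controlling $S_T:=\sum_{t=0}^{T-1}(1-\beta_2^{t+1})$, which multiplies both $L\alpha^2 B_\infty^4\cdot 2d/\delta^2$ and $\alpha\cdot dB_\infty^4/\delta^{3/2}$. I would use the identity $S_T=T-\beta_2(1-\beta_2^T)/(1-\beta_2)$ together with the prescribed $\beta_2$-range. In Case (i), $\beta_2\le(\theta L/n)^{1/T}$ forces $\beta_2^T\le \theta L/n<1$, and $\beta_2\ge T/(T+1)$ forces $\beta_2/(1-\beta_2)\ge T$; combining these, $S_T$ is pinned tightly enough that, after multiplying by the $\alpha$- and $\delta$-dependent prefactors above, the two tails merge into the $\frac{dB_\infty^2(1+L)}{\sqrt{nT}}$ bucket. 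In Case (ii) with $\delta=O(1)$, I would instead use $\beta_2\ge pT/(pT+1)$ for $p=\sqrt{nT}$ combined with the elementary bound $1-\beta_2^{t+1}\le(t+1)(1-\beta_2)$ to get $S_T\le T(T+1)(1-\beta_2)/2=O(\sqrt{T/n})$, which compensates exactly for the larger $B_\infty^4/\delta^{3/2}=O(B_\infty)$ prefactor and produces the explicit $dB_\infty^3$, $dLB_\infty^5$ factors visible in \eqref{eq:gradient-bound-cor2}.

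The main obstacle is the bookkeeping: each of the seven RHS pieces of \eqref{eq:gradient-bound}, plus the consensus piece $\alpha^2 C/n$, must simultaneously land in one of the three $O$-groups in \eqref{eq:gradient-bound-cor1} or \eqref{eq:gradient-bound-cor2}, with the right powers of $L$, $d$, $B$, and $B_\infty$, and the two $S_T$-terms require balancing the choice of $\beta_2$ against the magnitude of $\delta$. A secondary issue is verifying that the prescribed $\alpha$ satisfies the constraints \eqref{eq:cond-alpha-gamma-amsgrad}; substituting reduces this to $T=\Omega(n^3)$-type conditions in Case (i) and analogous large-$T$ conditions in Case (ii), which is exactly the ``sufficiently large $T\ge n$'' hypothesis of the theorem. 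Once these pieces are collected, the stated bounds follow in both cases.
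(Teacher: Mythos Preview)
Your proposal is correct and follows essentially the same architecture as the paper's proof: divide \eqref{eq:gradient-bound} by $\alpha T/(4\sqrt{B_\infty^2+\delta})=\theta\sqrt{nT}$, add the consensus bound $\alpha^2 C/n$, substitute the specified $\alpha$ and $\delta$, and track each of the resulting pieces into one of the three $O$-buckets. Your handling of Case~(i), including the argument that $\beta_2\ge T/(T+1)$ forces $\beta_2/(T(1-\beta_2))\ge 1$ and hence $S_T/T\le\beta_2^T\le\theta L/n$, is exactly what the paper does in \eqref{eq:bd-exp-term}.

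The one genuine methodological difference is your treatment of $S_T$ in Case~(ii). The paper bounds $h(\beta_2)=S_T/T$ by observing that $h$ is decreasing, evaluating it at $\beta_2=pT/(pT+1)$, and using a second-order Taylor expansion of $e^{-T/(pT+1)}$ to obtain $h(\beta_2)\le \tfrac{1}{pT}+\tfrac{1}{2p}$, which with $p=\sqrt{nT}$ gives $h(\beta_2)\le 1/\sqrt{nT}$. Your route via $1-\beta_2^{t+1}\le(t+1)(1-\beta_2)$ and $S_T\le\tfrac{T(T+1)}{2}(1-\beta_2)\le\tfrac{T+1}{2p}$ is more elementary, avoids the exponential detour entirely, and lands on the same $S_T/T=O(1/\sqrt{nT})$. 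Both arguments require the lower bound $\beta_2\ge pT/(pT+1)$ with $p=\sqrt{nT}$ (which is the condition stated in the main-body Theorem~\ref{thm:convg-rate-amsgrad-main}(ii)); neither uses the upper bound $\beta_2\le(\theta L/\sqrt{nT})^{1/T}$ that appears in the appendix statement, so your remark about compensating ``exactly'' is accurate. Your shortcut buys simplicity; the paper's Taylor argument gives a slightly sharper constant but is otherwise equivalent.
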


\begin{proof}
	From $\alpha \le \frac{\delta}{24 L \sqrt{B_\infty^2 + \delta}}$, it follows that $\frac{L\alpha}{\sqrt{\delta}} \le \frac{\sqrt{\delta}}{24 \sqrt{B_\infty^2 + \delta}} \le \frac{1}{24}$. Hence, 
	\begin{align*}
		& \frac{6 L^3 \alpha^2}{n\delta} \le  \frac{ L^2 \alpha}{4n\sqrt{\delta}},\\ 
		& \frac{\alpha\beta_1}{1-\beta_1} \frac{d B_\infty^2}{\sqrt\delta} + \frac{L \beta_1^2 \alpha^2 d B_\infty^2}{\delta (1-\beta_1)^2}
		\le \frac{\alpha\beta_1}{1-\beta_1} \frac{d B_\infty^2}{\sqrt\delta}\left(1+ \frac{\beta_1}{24(1-\beta_1)}\right) = \frac{\alpha\beta_1(24-23\beta_1)}{24(1-\beta_1)^2} \frac{d B_\infty^2}{\sqrt\delta},
	\end{align*}
	and
	\begin{align*}
		&L\alpha^2 {B_{\infty}^4} \frac{2d}{\delta^2} \sum_{t=0}^{T-1} \left(1-\beta_2^{t+1}\right)  
		+ \alpha \sum_{t=0}^{T-1} \frac{ dB^4_{\infty}}{{ }\delta^{\frac{3}{2}}} \left(1-\beta_2^{t+1}\right) \\
		\le & \frac{13\alpha dB^4_{\infty}}{12\delta^{\frac{3}{2}}} \sum_{t=0}^{T-1} \left(1-\beta_2^{t+1}\right)= \frac{13\alpha dB^4_{\infty}}{12\delta^{\frac{3}{2}}}\left(T-\frac{\beta_2(1-\beta_2^T)}{1-\beta_2}\right).
	\end{align*}
	Now dividing both sides of \eqref{eq:gradient-bound} by $\frac{\alpha T}{4\sqrt{B_{\infty} ^2+\delta}} = \theta\sqrt{n T}$, using the three inequalities above, and noticing $f(\vz^T) \ge f^*$ from Assumption~\ref{ass:problemsetup}, we have	 		
	\begin{equation*}
		\begin{aligned}
			&\frac{1}{T} \sum_{t=0}^{T-1} \mathbb{E}\left[\|\nabla  f\left(\overline{\vx}^{t}\right)\|^2\right] \leq \frac{1}{\theta\sqrt{n T}}\left(f\big(\vz^{0}\big)-f^*+\frac{\alpha\beta_1(24-23\beta_1)}{24(1-\beta_1)^2} \frac{d B_\infty^2}{\sqrt\delta}\right)  + \frac{96 L \alpha \sqrt{B_{\infty} ^2+\delta} }{n\delta}  B^2\\
			& + 4\sqrt{B_{\infty} ^2+\delta}\left( \frac{3 L^2 \alpha}{4n\sqrt{\delta}} + \frac{L^2 \alpha }{n }\frac{\sqrt{B_{\infty}^2+\delta}}{ \delta} \right) \alpha C  \\
			&+ \frac{1}{\theta\sqrt{n T}}\frac{13\alpha dB^4_{\infty}}{12\delta^{\frac{3}{2}}}\left(T-\frac{\beta_2(1-\beta_2^T)}{1-\beta_2}\right)  
			+4\sqrt{B_{\infty} ^2+\delta} \Bigg( \frac{\alpha^2 \beta_1^2  L^2 B^2}{\delta(1-\beta_1)^2} \left(\frac{ \sqrt{B_{\infty}^2+\delta} }{ \delta } + \frac{  1}{2\sqrt{\delta}} \right)\Bigg).
		\end{aligned}
	\end{equation*}	
	Replacing $\alpha$ by  $\frac{4\theta\sqrt{n} \sqrt{B_{\infty} ^2+\delta}}{\sqrt{T}}$ in the above inequality and adding  the resulting inequality to \eqref{eq:xprepcom3}, we obtain  
	\begin{align}\label{eq:gradient-bound-use1}
		&\frac{1}{T} \sum_{t=0}^{T-1} \mathbb{E}\left[\|\nabla  f\left(\overline{\vx}^{t}\right)\|^2 + \frac{1}{n}\|\vX_\perp^t\|^2\right] \leq \frac{1}{\theta\sqrt{n T}}\left(f\big(\vz^{0}\big)-f^*+\frac{\theta\sqrt{n} \sqrt{B_{\infty} ^2+\delta}\beta_1(24-23\beta_1)}{6(1-\beta_1)^2} \frac{d B_\infty^2}{\sqrt{\delta T}}\right)  \notag\\
		& + \frac{384 \theta L  (B_{\infty} ^2+\delta) }{\delta \sqrt{n T}}  B^2 + C\left( \frac{3  }{4\sqrt{\delta}} + \frac{\sqrt{B_{\infty}^2+\delta}}{ \delta}\right)  \frac{64\theta^2 L^2 (B_{\infty} ^2+\delta)^{\frac{3}{2}}}{T} +  \frac{16\theta^2 (B_{\infty} ^2+\delta)}{T} C \notag\\
		&+ \frac{13 dB^4_{\infty}}{3\delta^{\frac{3}{2}}} \frac{ \sqrt{B_{\infty} ^2+\delta}}{T} \left(T-\frac{\beta_2(1-\beta_2^T)}{1-\beta_2}\right) \\
		&
		+ \Bigg( \frac{ \beta_1^2  B^2}{\delta(1-\beta_1)^2} \left(\frac{ \sqrt{B_{\infty}^2+\delta} }{ \delta } + \frac{  1}{2\sqrt{\delta}} \right)\Bigg) \frac{64\theta^2 L^2 n (B_{\infty} ^2+\delta)^{\frac{3}{2}}}{T}. \notag
	\end{align}
	
	Case (i): When $\delta = \omega^2 B_\infty^2 \frac{\sqrt T}{\sqrt n}$, we have $\frac{B_\infty^2+\delta}{\delta} = 1 + \frac{\sqrt n}{\omega^2 \sqrt T} = \Theta(1)$ because $n\le T$. 
	In addition, notice $\frac{\beta_2}{T(1-\beta_2)} \ge 1$ from $\beta_2 \ge \frac{T}{T+1}$ and thus 
	\begin{equation}\label{eq:bd-exp-term}
		\frac{1}{T} \left(T-\frac{\beta_2(1-\beta_2^T)}{1-\beta_2}\right) = 1- \frac{\beta_2(1-\beta_2^T)}{T(1-\beta_2)} \le \beta_2^T \le \frac{\theta L}{n},
	\end{equation}
	where the last inequality follows from $\beta_2 \le \big(\frac{\theta L}{n}\big)^{\frac{1}{T}}$. Therefore, by ignoring universal constants, we have from \eqref{eq:xprepcom3} and \eqref{eq:gradient-bound-use1} that
	\begin{align}\label{eq:gradient-bound-use2}
		&\frac{1}{T} \sum_{t=0}^{T-1} \mathbb{E}\left[\|\nabla  f\left(\overline{\vx}^{t}\right)\|^2 + \frac{1}{n}\|\vX_\perp^t\|^2\right]  \\
		= & O\left( \frac{1}{\sqrt{n T}}\left(f\big(\vz^{0}\big)-f^*+ \frac{d B_\infty^2 \sqrt{n}}{\sqrt{ T}} + LB^2 + dB_\infty^2 L + C(1+L^2)B_\infty^2\right) 
		+C(1+L^2)\frac{B_\infty^2}{T}  + \frac{n L^2 B^2}{T} \right). \notag
	\end{align}
	Because $n\le T$, we have \eqref{eq:gradient-bound-cor1} from \eqref{eq:gradient-bound-use2}.

	Case (ii): {
		Define $h(\beta_2)=\frac{1}{T} \left(T-\frac{\beta_2(1-\beta_2^T)}{1-\beta_2}\right)$. Notice that $h(\beta_2)=\frac{1}{T}\sum_{t=0}^{T-1}(1-\beta_2^{t+1})$ and thus $h$
		is decreasing on $[0,1)$. 
		Hence $h(\beta_2)\leq h(\frac{pT }{pT+1})$ for any $\beta_2\in [\frac{pT }{pT+1}, 1)$. In addition, $(\frac{pT }{pT+1})^T = (1- \frac{1}{pT+1})^{{(pT+1)}\cdot \frac{T}{pT+1}} \leq \frac{1}{e^{\frac{T}{pT+1}}}$ and $\frac{\beta_2}{T(1-\beta_2)}=p$ for $\beta_2=\frac{pT }{pT+1}$. Thus it holds that for any $\beta_2\in [\frac{pT }{pT+1}, 1)$,
		\begin{equation}
			\label{eq:T1}
			h(\beta_2)\leq 
			h\left(\frac{pT }{pT+1}\right) = 1 - p\left(1 - \left(\frac{pT }{pT+1}\right)^T \right) \leq 1 - p\left(1 - e^{-\frac{T}{pT+1}}\right).
	\end{equation}}
	
	{
		By Taylor expansion of $e^{-\frac{T}{pT+1}}$ at the origin point, we have 
		$e^{-\frac{T}{pT+1}} \leq 1- \frac{T}{pT+1} + \frac{T^2}{2(pT+1)^2}$, which together with 
		\eqref{eq:T1} gives 
		\begin{equation}
			\label{eq:T2}
			h(\beta_2) \leq \frac{1}{pT+1} + \frac{pT^2}{2(pT+1)^2} \leq  \frac{1}{pT} + \frac{1}{2p}, \forall\, \beta_2\in \left[\frac{pT }{pT+1}, 1\right).
		\end{equation}
		With the choice of $T\ge n$ and $p = \sqrt{n T}$,
		we have $h(\beta_2) \le \frac{1}{\sqrt{n T}}$.} Hence, when $\delta>0$ is independent of $T$ and $n$, we have from \eqref{eq:gradient-bound-use1} that by ignoring universal constants,
	\begin{align*}
		&\frac{1}{T} \sum_{t=0}^{T-1} \mathbb{E}\left[\|\nabla  f\left(\overline{\vx}^{t}\right)\|^2 + \frac{1}{n}\|\vX_\perp^t\|^2\right]  \\
		= & O\bigg( \frac{1}{\sqrt{n T}}\left(f\big(\vz^{0}\big)-f^*+ \frac{d B_\infty^3 \sqrt{n}}{\sqrt{ T}} + LB^2 B_\infty^2 + dB_\infty^2 L + dB_\infty^5\right) 
		+ \frac{CB_\infty^2}{T}\left(1+L^2 B_\infty^2\right) + \frac{L^2B^2 B_\infty^4}{T} \bigg), \notag
	\end{align*}
	which indicates \eqref{eq:gradient-bound-cor2} by $n\le T$.
\end{proof}

\begin{remark}[Linear speed up and topology independence]
	In Case (i) of Theorem~\ref{thm:convg-rate-amsgrad}, with the chosen $\alpha$ and $\delta$, if 
	\begin{equation}\label{eq:cond1-T-amsgrad}
		T=\Omega\left(\max\Big\{\frac{n^3}{(1-\widehat{\rho}^2)^4},\ \frac{n^{\frac{7}{3}}}{(1-\eta^2)^{\frac{8}{3}}}\Big\}\right), 
	\end{equation}
	we have $C=O(1)$ and $\frac{n}{T} = O(\frac{1}{\sqrt{n T}})$, and thus \eqref{eq:gradient-bound-cor1} implies 
	\begin{equation}\label{eq:rate-simple}
		\frac{1}{T} \sum_{t=0}^{T-1} \mathbb{E}\left[\|\nabla  f\left(\overline{\vx}^{t}\right)\|^2 + \frac{1}{n}\|\vX_\perp^t\|^2\right] = O\left(\frac{1}{\sqrt{n T}}\right).
	\end{equation}
	For Case (ii) of Theorem~\ref{thm:convg-rate-amsgrad}, with the chosen $\alpha$ and $\delta$, if 
	\begin{equation}\label{eq:cond2-T-amsgrad}
		T=\Omega\left(\max\Big\{\frac{n^3}{(1-\widehat{\rho}^2)^4},\ \frac{n^2}{(1-\eta^2)^2}\Big\}\right), 
	\end{equation}
	then $\frac{C}{T} =O(\frac{1}{\sqrt{n T}}) $ and $\frac{1}{T} = O(\frac{1}{\sqrt{n T}})$. Thus \eqref{eq:gradient-bound-cor2} also implies \eqref{eq:rate-simple}. Notice that letting $\tau$ be selected from $\{0, \ldots, T-1\}$ uniformly at random, we have from \eqref{eq:rate-simple} that $\EE\left[\nabla f(\overline{\vx}^\tau) + \frac{1}{n}\|\vX_\perp^\tau\|^2 \right] = O\left(\frac{1}{\sqrt{n T}}\right)$. For a given $\vareps>0$, we need $T=\Theta\left(\frac{1}{n\vareps^4}\right)$ iterations to produce an $\varepsilon$-stationary solution in expectation, i.e., $\EE\left[\nabla f(\overline{\vx}^\tau) + \frac{1}{n}\|\vX_\perp^\tau\|^2 \right]\le \varepsilon^2$. If $\vareps$ is sufficiently small such that $T=\Theta\left(\frac{1}{n\vareps^4}\right)$ satisfies the conditions in \eqref{eq:cond1-T-amsgrad} and \eqref{eq:cond2-T-amsgrad}, we obtain linear speed up with respect to $n$, and the parameters $\alpha$ and $\delta$ are independent of the communication graph. 
\end{remark}

\section{Convergence Rate Results of Algorithm \ref{alg:dad2-4}}
In this section, we analyze the convergence rate of Algorithm \ref{alg:dad2-4}. 

\subsection{Noncompressed case of Algorithm \ref{alg:dad2-4}}
For ease of reading, we first consider the special noncompressed case, 
where $\gamma_x=\gamma_g=1$ and $\cQ= \vI$. In this case, we can write the update in the  more compact matrix form as follows
\begin{align}
	&\mathbf{G}^{t-\frac{1}{2}}=\mathbf{G}^{t-1}+\nabla \mathbf{F}^t-\nabla \mathbf{F}^{t-1}, \label{eq:g-half-update-hb}\\
	& \mathbf{G}^t=\mathbf{G}^{t-\frac{1}{2}} \mathbf{W}, \label{eq:g-update-hb}\\
	& \textbf{M}^t = \beta_1 \vM^{t-1} + (1-\beta_1) \textbf{G}^t, \label{eq:m-update-hb}\\
	& \mathbf{X}^{t+\frac{1}{2}}=\mathbf{X}^{t}-\alpha \mathbf{M}^t,\label{eq:x-half-update-hb}\\
	&\mathbf{X}^{t+1}=\mathbf{X}^{t+\frac{1}{2}}\vW, \label{eq:x-update-hb}
\end{align}
where we use notations in Section \ref{sec:notations} by letting
$$
\mathbf{G}^t=\left[\vg_{1}^t, \vg_{2}^t, \ldots, \vg_n^t  \right],\   \mathbf{G}^{t-\frac{1}{2}}= \left[{\vg}^{t-\frac{1}{2}}_{1},  {\vg}^{t-\frac{1}{2}}_{2},\ldots, {\vg}^{t-\frac{1}{2}}_{n} \right].
$$

Under Assumption~\ref{ass:gradient2}, one can easily have 
\begin{equation}\label{eq:cond-exp-g}
	\mathbb{E}_t\left[\frac{1}{n}\sum_{i=1}^n \vg_i^t \right] = \frac{1}{n}\sum_{i=1}^n  \nabla f_i(\vx_i^t).
\end{equation} 
In addition, it is straightforward to have
\begin{equation}\label{eq:avg-g-avg-F}
	\frac{1}{n}\sum_{i=1}^n \vg_i^t =   \frac{1}{n} \sum_{i=1}^n  {\nabla \vF_{ i}(\vx_i^t, \xi_i^t)}.  
\end{equation}
With these, we first show the following lemma.
\begin{lemma}
	\label{lem:gradient}
	Under Assumptions \ref{ass:problemsetup} and \ref{ass:gradient2},   it holds that  
	\begin{equation}\label{eq:bd-avg-exp-g}
		\mathbb{E} \left[\|\overline\vg^{t} \|^2\right]=	  \mathbb{E}\left[\left\|\frac{1}{n} \sum_{i=1}^n  {{\vg}^{t}_{ i}}{ }\right\|^2\right] \leq \frac{ \sigma^2}{n} + 2 \mathbb{E}\left[\left\| \nabla f(\overline\vx^t)\right\|^2\right] + \frac{ 2L^2}{n} \mathbb{E}\left[\|\vX_{\perp}^t\|^2\right].
	\end{equation}
\end{lemma}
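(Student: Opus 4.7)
The plan is to decompose $\overline{\vg}^t$ into its conditional mean plus a zero-mean noise part, and handle these two pieces separately. By the law of total expectation,
\begin{equation*}
\mathbb{E}\left[\|\overline{\vg}^t\|^2\right] = \mathbb{E}\left[\mathbb{E}_t\|\overline{\vg}^t\|^2\right] = \mathbb{E}\left[\big\|\mathbb{E}_t[\overline{\vg}^t]\big\|^2 + \mathbb{E}_t\big\|\overline{\vg}^t - \mathbb{E}_t[\overline{\vg}^t]\big\|^2\right],
\end{equation*}
so the task reduces to bounding the squared conditional mean and the conditional variance of $\overline{\vg}^t$.

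For the variance term, I will invoke \eqref{eq:avg-g-avg-F} and \eqref{eq:cond-exp-g}, which together give $\overline{\vg}^t - \mathbb{E}_t[\overline{\vg}^t] = \frac{1}{n}\sum_{i=1}^n\big(\nabla F_i(\vx_i^t,\xi_i^t) - \nabla f_i(\vx_i^t)\big)$. Because the samples $\{\xi_i^t\}_{i\in[n]}$ are mutually independent (Assumption~\ref{ass:gradient2}) and each summand is conditionally zero-mean, the cross terms vanish and
\begin{equation*}
\mathbb{E}_t\big\|\overline{\vg}^t - \mathbb{E}_t[\overline{\vg}^t]\big\|^2 = \frac{1}{n^2}\sum_{i=1}^n \mathbb{E}_t\big\|\vg_i^t - \nabla f_i(\vx_i^t)\big\|^2 \le \frac{\sigma^2}{n},
\end{equation*}
using the per-agent variance bound $\sigma^2$. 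This is the source of the $\sigma^2/n$ term and the place where the linear-speedup factor $1/n$ enters.

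For the mean term, I will add and subtract $\nabla f(\overline{\vx}^t) = \frac{1}{n}\sum_{i=1}^n \nabla f_i(\overline{\vx}^t)$ and apply $\|a+b\|^2\le 2\|a\|^2 + 2\|b\|^2$ together with convexity of $\|\cdot\|^2$:
\begin{equation*}
\left\|\frac{1}{n}\sum_{i=1}^n \nabla f_i(\vx_i^t)\right\|^2 \le 2\|\nabla f(\overline{\vx}^t)\|^2 + \frac{2}{n}\sum_{i=1}^n \big\|\nabla f_i(\vx_i^t) - \nabla f_i(\overline{\vx}^t)\big\|^2.
\end{equation*}
The $L$-smoothness of each $f_i$ (Assumption~\ref{ass:problemsetup}) then upper bounds the last sum by $L^2\sum_{i=1}^n\|\vx_i^t - \overline{\vx}^t\|^2 = L^2 \|\vX_\perp^t\|^2$, yielding the $2\|\nabla f(\overline{\vx}^t)\|^2$ and $\frac{2L^2}{n}\|\vX_\perp^t\|^2$ terms. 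Taking full expectation assembles the three pieces into the claimed bound. No real obstacle arises here; the only subtle point is carefully using independence across agents in the variance step to avoid a worse $O(\sigma^2)$ (no $1/n$ speedup) bound, which is exactly what makes the clean $\sigma^2/n$ term possible.
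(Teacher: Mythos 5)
Your proof is correct and mirrors the paper's argument almost exactly: your bias–variance decomposition via conditional expectation is the same step the paper performs by adding and subtracting $\frac{1}{n}\sum_i \nabla f_i(\vx_i^t)$ and invoking \eqref{eq:cond-exp-g} to kill the cross term, and the subsequent bounds on the variance (independence across agents via \eqref{eq:avg-g-avg-F}) and the mean (Young's inequality plus the $L$-smoothness bound \eqref{eq:bd-diff-grad-fi}) match the paper's term for term. No gaps.
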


\begin{proof}
	We have 
	\begin{align*}
		&  \mathbb{E}\left[\left\|\frac{1}{n} \sum_{i=1}^n  {{\vg}^{t}_{ i}}{ }\right\|^2\right] =   \mathbb{E}\left[\left\|\frac{1}{n} \sum_{i=1}^n  {{\vg}^{t}_{ i}}{ } - \frac{1}{n} \sum_{i=1}^n \nabla f_i(\vx_i^t)+\frac{1}{n} \sum_{i=1}^n \nabla f_i(\vx_i^t)\right\|^2\right]  \\ 
		\notag  \overset{\eqref{eq:cond-exp-g}}{=}&  \mathbb{E}\left[\left\|\frac{1}{n} \sum_{i=1}^n  {{\vg}^{t}_{ i}}{ } - \frac{1}{n} \sum_{i=1}^n \nabla f_i(\vx_i^t)\right\|^2\right] + \mathbb{E}\left[\left\|\frac{1}{n} \sum_{i=1}^n \nabla f_i(\vx_i^t)\right\|^2\right]  \\ 
		\overset{\eqref{eq:avg-g-avg-F}}{=}&  \mathbb{E}\left[\left\|\frac{1}{n} \sum_{i=1}^n  {\nabla \vF_{ i}(\vx_i^t, \xi_i^t)} - \frac{1}{n} \sum_{i=1}^n \nabla f_i(\vx_i^t)\right\|^2\right] + \mathbb{E}\left[\left\|\frac{1}{n} \sum_{i=1}^n \nabla f_i(\vx_i^t)\right\|^2\right]\\
		\notag
		\leq &  \mathbb{E}\left[\left\|\frac{1}{n} \sum_{i=1}^n  {\nabla \vF_{ i}(\vx_i^t, \xi_i^t)}{ } - \frac{1}{n} \sum_{i=1}^n \nabla f_i(\vx_i^t)\right\|^2\right] + 2 \mathbb{E}\left[\left\| \nabla f(\overline\vx^t)\right\|^2\right]  + 2\mathbb{E}\left[\left\|\frac{1}{n} \sum_{i=1}^n \nabla f_i(\vx_i^t) -\nabla f(\overline\vx^t)\right\|^2\right] \\ 
		\leq & \frac{ \sigma^2}{n} + 2 \mathbb{E}\left[\left\| \nabla f(\overline\vx^t)\right\|^2\right] + \frac{ 2L^2}{n} \mathbb{E}\left[\|\vX_{\perp}^t\|^2\right],
	\end{align*}
	where 
	the first inequality follows from Young's inequality, and the second one holds 
	from the independence of $\{\xi_i^t\}$, Assumption~\ref{ass:gradient2}, and \eqref{eq:bd-diff-grad-fi}.
	The proof is then completed.
\end{proof}

\begin{lemma}
	\label{lem:zzzhb3}
	Under Assumptions \ref{ass:problemsetup} and \ref{ass:gradient2}, let $\alpha \le \frac{1}{4L}$. Then 
	\begin{equation}
		\label{eq:zzzhb3}
		\begin{aligned}
			\mathbb{E} \left[\left\|\nabla f\left(\overline{\vx}^t\right)\right\|^2\right]\leq  & \frac{4}{\alpha }\left(\mathbb{E}\left[f\left(\vz^{t}\right)\right]-\mathbb{E}\left[f\left(\vz^{t+1}\right)\right]\right)+\frac{2\alpha L \sigma^2}{ n}  \\
			&  + \left(\frac{4\alpha L^3}{n} + \frac{6 L^2}{ n}\right)  \mathbb{E}\left[\|\vX_{\perp}^t\|^2\right] + \frac{6L^2\alpha^2 \beta_1^2}{(1-\beta_1)^2} \mathbb{E} \left[\left\|\overline{\vm}^{t-1}\right\|^2\right].
		\end{aligned}
	\end{equation}
\end{lemma}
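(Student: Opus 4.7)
The plan is to apply the $L$-smooth descent inequality to the auxiliary sequence $\{\vz^t\}$, which for this noncompressed variant collapses to the clean recursion $\vz^{t+1} = \vz^t - \alpha\overline{\vg}^t$. The derivation of this recursion parallels Lemma~\ref{lem:diffZ}: averaging \eqref{eq:x-update-hb} via $\mathbf{1}^\top\mathbf{W}=\mathbf{1}^\top$ yields $\overline{\vx}^{t+1} = \overline{\vx}^t - \alpha\overline{\mathbf{m}}^t$, while averaging the tracker updates \eqref{eq:g-half-update-hb}--\eqref{eq:g-update-hb} telescopes to $\overline{\vg}^t = \tfrac{1}{n}\sum_i\nabla F_i(\vx_i^t,\xi_i^t)$. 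Substituting these, together with the averaged momentum identity $\overline{\mathbf{m}}^t = \beta_1\overline{\mathbf{m}}^{t-1} + (1-\beta_1)\overline{\vg}^t$, into the definition of $\vz^t$ makes the $\overline{\mathbf{m}}^{t-1}$ contributions cancel and leaves $-\alpha\overline{\vg}^t$.

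Given this recursion, $L$-smoothness of $f$ gives
\[
f(\vz^{t+1}) \le f(\vz^t) - \alpha\langle\nabla f(\vz^t),\overline{\vg}^t\rangle + \tfrac{L\alpha^2}{2}\|\overline{\vg}^t\|^2.
\]
I would take $\mathbb{E}_t[\cdot]$, use $\mathbb{E}_t[\overline{\vg}^t]=\tfrac{1}{n}\sum_i\nabla f_i(\vx_i^t)$, and apply Lemma~\ref{lem:gradient} to the last term. The inner product is then split as
\[
-\alpha\langle\nabla f(\vz^t),\mathbb{E}_t[\overline{\vg}^t]\rangle = -\alpha\langle\nabla f(\overline{\vx}^t),\mathbb{E}_t[\overline{\vg}^t]\rangle - \alpha\langle\nabla f(\vz^t)-\nabla f(\overline{\vx}^t),\mathbb{E}_t[\overline{\vg}^t]\rangle,
\]
and each piece handled by Young's inequality with the two crucial mismatch bounds
\[
\|\nabla f(\vz^t)-\nabla f(\overline{\vx}^t)\|^2 \le L^2\|\vz^t-\overline{\vx}^t\|^2 = \tfrac{L^2\alpha^2\beta_1^2}{(1-\beta_1)^2}\|\overline{\mathbf{m}}^{t-1}\|^2
\]
(from the explicit form of $\vz^t-\overline{\vx}^t$) and $\|\nabla f(\overline{\vx}^t)-\mathbb{E}_t[\overline{\vg}^t]\|^2 \le \tfrac{L^2}{n}\|\vX_\perp^t\|^2$, which is exactly \eqref{eq:bd-diff-grad-fi}.

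After collecting, the coefficient of $\|\nabla f(\overline{\vx}^t)\|^2$ will take the form $-\tfrac{\alpha}{2}+c\,L\alpha^2$ for a small constant $c$, which the assumption $\alpha\le\tfrac{1}{4L}$ renders at most $-\tfrac{\alpha}{4}$. Multiplying the resulting inequality by $4/\alpha$ and taking full expectation will deliver \eqref{eq:zzzhb3}, with the $\sigma^2/n$ variance term supplied by Lemma~\ref{lem:gradient}, the $\|\vX_\perp^t\|^2$ coefficient absorbing both the Young's mismatch contribution ($\tfrac{6L^2}{n}$-type) and the $\tfrac{L^3\alpha^2}{n}\|\vX_\perp^t\|^2$ from the Lemma~\ref{lem:gradient} expansion ($\tfrac{4\alpha L^3}{n}$-type), and the $\|\overline{\mathbf{m}}^{t-1}\|^2$ coefficient coming from the $\vz^t$-to-$\overline{\vx}^t$ displacement.

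The main obstacle is purely bookkeeping: choosing Young's weights so that (i) the extraneous $\|\mathbb{E}_t[\overline{\vg}^t]\|^2$ cross-terms introduced by the splitting are absorbed by the $-\tfrac{\alpha}{2}\|\mathbb{E}_t[\overline{\vg}^t]\|^2$ produced by the identity $-\langle a,b\rangle = \tfrac{1}{2}(\|a-b\|^2-\|a\|^2-\|b\|^2)$, and (ii) the final coefficients of $\|\vX_\perp^t\|^2$ and $\|\overline{\mathbf{m}}^{t-1}\|^2$ match the stated $\left(\tfrac{4\alpha L^3}{n}+\tfrac{6L^2}{n}\right)$ and $\tfrac{6L^2\alpha^2\beta_1^2}{(1-\beta_1)^2}$. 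No deeper inequality is needed, but the splitting must be executed consistently so the descent coefficient on the target quantity survives as exactly $\alpha/4$.
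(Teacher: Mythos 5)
Your proposal is correct and follows the same high-level route as the paper: apply the $L$-smooth descent inequality at $\vz^t$ with the recursion $\vz^{t+1}-\vz^t = -\alpha\overline{\vg}^t$, bound $\mathbb{E}_t[\|\overline{\vg}^t\|^2]$ via Lemma~\ref{lem:gradient}, handle the inner product by splitting against $\nabla f(\overline{\vx}^t)$, and use the two mismatch bounds \eqref{eq:bd-diff-grad-fi} and $\|\vz^t-\overline{\vx}^t\|^2 = \tfrac{\alpha^2\beta_1^2}{(1-\beta_1)^2}\|\overline{\vm}^{t-1}\|^2$, closing with $\alpha \le \tfrac{1}{4L}$ to keep a $\tfrac{\alpha}{4}$ descent coefficient before multiplying by $4/\alpha$.

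The one place you genuinely diverge from the paper is the inner-product decomposition. The paper expands $\langle\nabla f(\vz^t),\tfrac{1}{n}\sum_i\nabla f_i(\vx_i^t)\rangle$ into four cross terms $\langle a-b,c-b\rangle + \langle a-b,b\rangle + \langle b,c-b\rangle + \|b\|^2$ (with $a=\nabla f(\vz^t)$, $b=\nabla f(\overline{\vx}^t)$, $c=\tfrac{1}{n}\sum\nabla f_i(\vx_i^t)$) and applies asymmetric Young's weights on each, yielding the $\tfrac{3L^2}{2}$ and $\tfrac{3L^2}{2n}$ prefactors that, after the $4/\alpha$ scaling, become the stated $\tfrac{6L^2\alpha^2\beta_1^2}{(1-\beta_1)^2}$ and $\tfrac{6L^2}{n}$. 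Your two-term split with the polarization identity $-\langle b,c\rangle = \tfrac{1}{2}(\|b-c\|^2 - \|b\|^2 - \|c\|^2)$, with the $\pm\tfrac{\alpha}{2}\|\mathbb{E}_t[\overline{\vg}^t]\|^2$ contributions cancelling, actually produces the tighter prefactors $\tfrac{L^2}{2}$ and $\tfrac{L^2}{2n}$, i.e. a stronger inequality with $2L^2$ in place of $6L^2$. That still implies the stated lemma (smaller upper bound), but be aware your route does not reproduce the paper's constants exactly; if you want the displayed bound verbatim you should use the paper's four-term asymmetric splitting. Either way the argument is sound, and your worry about cross-terms is unfounded: the $\|\mathbb{E}_t[\overline{\vg}^t]\|^2$ cancellation is exact in the polarization route.
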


\begin{proof}
	By \eqref{eq:zzz} with $\vu_i^t=\vzero$ and $\delta=1$, we have
	\begin{equation}
		\label{eq:zzzhb}
		\begin{aligned}
			& \mathbb{E}\left[\left\langle\nabla f\left(\vz^{t}\right), \frac{1}{n} \sum_{i=1}^n {{\vg}^{t}_{ i}}{}\right\rangle\right] 
			\leq \frac{1}{\alpha}\mathbb{E}\left[f\left(\vz^{t}\right)\right]-\mathbb{E}\left[f\left(\vz^{t+1}\right)\right]+\frac{L}{2\alpha} \mathbb{E}\left[\left\|\vz^{t+1}-\vz^{t}\right\|^2\right].
		\end{aligned}
	\end{equation}
	
	In addition, it holds 
	\begin{align}
		\notag
		& \mathbb{E}_t\left[\left\langle\nabla f\left(\vz^{t}\right), \frac{1}{n} \sum_{i=1}^n {{\vg}^{t}_{ i}}{}\right\rangle\right] \overset{\eqref{eq:cond-exp-g}}=\left\langle\nabla f\left(\vz^t\right), \frac{1}{n} \sum_{i=1}^n \nabla f_i\left(\vx_i^t\right)\right\rangle \\
		\notag=& \left\langle\nabla f\left(\vz^t\right)-\nabla f\left(\overline{\vx}^t\right), \frac{1}{n} \sum_{i=1}^n \nabla f_i\left(\vx_i^t\right)-\nabla f\left(\overline\vx^t\right)\right\rangle + \left\langle\nabla f\left(\vz^t\right)-\nabla f\left(\overline{\vx}^t\right), \nabla f\left(\overline{\vx}^t\right)\right\rangle \\
		\notag& + \left\langle\nabla f\left(\overline\vx^t\right), \frac{1}{n} \sum_{i=1}^n \nabla f_i\left(\vx_i^t\right)-\nabla f\left(\overline\vx^t\right)\right\rangle+\left\|\nabla f\left(\overline\vx^t\right)\right\|^2 \\
		\notag \stackrel{\eqref{eq:bd-diff-grad-fi}}{\geq}& -\frac{L^2}{2}\left\|\vz^t-\overline{\vx}^t\right\|^2-\frac{L^2}{2 n}\left\|\vX_{\perp}^t\right\|^2-L^2\left\|\vz^t-\overline\vx^t\right\|^2-\frac{1}{4}\left\|\nabla f\left(\overline{\vx}^{t}\right)\right\|^2 \\
		&\quad
		- \frac{1}{4}\left\|\nabla f\left(\overline\vx^t\right)\right\|^2-\frac{L^2}{n}\left\|\vX_{\perp}^t\right\|^2+\left\|\nabla f\left(\overline\vx^t\right)\right\|^2 \notag \\
		\notag=& -\frac{3L^2}{2}\left\|\vz^t-\overline{\vx}^t\right\|^2-\frac{3 L^2}{2 n}\left\|\vX_{\perp}^t\right\|^2+\frac{1}{2}\left\|\nabla f\left(\overline{\vx}^t\right)\right\|^2  \\ 
		= & -\frac{3L^2\alpha^2 \beta_1^2}{2(1-\beta_1)^2}\left\|\overline{\vm}^{t-1}\right\|^2-\frac{3 L^2}{2 n}\left\|\vX_{\perp}^t\right\|^2+\frac{1}{2}\left\|\nabla f\left(\overline{\vx}^t\right)\right\|^2,
		\label{eq:zterm1hb}
	\end{align}
	where we have used 
	\eqref{eq:Z} and \eqref{eq:Xupdate} with $\vu_i^t=\vzero$ and $\delta=1$ in the last equality.
	
	Moreover, by  \eqref{eq:diffz} with $\vu_i^t=\vzero$ and $\delta=1$, we have 
	$
	\vz^{t+1}-\vz^{t}= - \frac{\alpha}{n} \sum_{i=1}^n  {{\vg}^{t}_{ i}}.
	$
	Thus  
	\begin{align}
		\frac{L}{2\alpha} \mathbb{E}\left[\left\|\vz^{t+1}-\vz^{t}\right\|^2\right] = & \frac{\alpha L}{2} \mathbb{E}\left[\left\|\frac{1}{n} \sum_{i=1}^n  {{\vg}^{t}_{ i}}{ }\right\|^2\right]  
		\overset{\eqref{eq:bd-avg-exp-g}}\leq   \frac{\alpha L \sigma^2}{2n} + {\alpha L}{}\mathbb{E}\left[\left\| \nabla f(\overline\vx^t)\right\|^2\right] + \frac{\alpha L^3}{n} \mathbb{E}\left[\|\vX_{\perp}^t\|^2\right].
		\label{eq:zterm2hb}
	\end{align}
	
	Substituting \eqref{eq:zterm1hb} and \eqref{eq:zterm2hb} into \eqref{eq:zzzhb} yields 
	\begin{equation}
		\label{eq:zzzhb2}
		\begin{aligned}
			\left(\frac{1}{2}- {\alpha L}{}\right) \mathbb{E}\left[ \left\|\nabla f\left(\overline{\vx}^t\right)\right\|^2\right]\leq  & \frac{1}{\alpha}\left(\mathbb{E}\left[f\left(\vz^{t}\right)\right]-\mathbb{E}\left[f\left(\vz^{t+1}\right)\right]\right)+\frac{\alpha L \sigma^2}{2n}  \\
			&  \hspace{-3cm}+ \left(\frac{\alpha L^3}{n} + \frac{3 L^2}{2 n}\right) \mathbb{E}\left[\|\vX_{\perp}^t\|^2\right] + \frac{3L^2\alpha^2 \beta_1^2}{2(1-\beta_1)^2} \mathbb{E}\left[\left\|\overline{\vm}^{t-1}\right\|^2\right].
		\end{aligned}
	\end{equation}
	Now notice $\frac{1}{2}- {\alpha L}{} \geq \frac{1}{4}$ from the condition on $\alpha$, and we complete the proof by   \eqref{eq:zzzhb2}.  
\end{proof}

\begin{lemma}
	\label{lem:mbound}
	Under Assumptions \ref{ass:problemsetup}, \ref{ass:W}, \ref{assump:com}, and \ref{ass:gradient2}, let $\alpha \le \frac{1}{4L}$. Then for all 
	{$t\ge 0$},
	\begin{align}\label{eq:mbound-hb}	\mathbb{E}\left[\left\|\overline\vm^{t}\right\|^2\right]  \leq & \beta_1 \mathbb{E}\left[\left\|\overline{\vm}^{t-1}\right\|^2\right]+\left(1-\beta_1\right) \left(\frac{ \sigma^2}{n} + 2\mathbb{E}\left[\left\| \nabla f(\overline\vx^t)\right\|^2\right] + \frac{2 L^2}{n} \mathbb{E}\left[\|\vX_{\perp}^t\|^2\right]\right).
	\end{align}
\end{lemma}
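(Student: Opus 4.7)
The proof should be a short, direct application of Jensen's inequality combined with the previously established Lemma~\ref{lem:gradient}. The plan is as follows.

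First, I would take equation~\eqref{eq:m-update-hb}, post-multiply by $\frac{1}{n}\mathbf{1}$, and obtain the update for the averaged first-momentum vector:
\begin{equation*}
\overline{\vm}^{t} = \beta_1 \overline{\vm}^{t-1} + (1-\beta_1)\,\overline{\vg}^{t}.
\end{equation*}
Since $\beta_1 \in (0,1)$, by convexity of $\|\cdot\|^2$ (Jensen's inequality applied to the two-point distribution with weights $\beta_1, 1-\beta_1$), I get
\begin{equation*}
\|\overline{\vm}^{t}\|^2 \le \beta_1\,\|\overline{\vm}^{t-1}\|^2 + (1-\beta_1)\,\|\overline{\vg}^{t}\|^2.
\end{equation*}

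Next, I would take full expectation on both sides and invoke Lemma~\ref{lem:gradient}, which directly gives
\begin{equation*}
\mathbb{E}\left[\|\overline{\vg}^{t}\|^2\right] \le \frac{\sigma^2}{n} + 2\,\mathbb{E}\!\left[\|\nabla f(\overline{\vx}^t)\|^2\right] + \frac{2L^2}{n}\,\mathbb{E}\!\left[\|\vX_\perp^t\|^2\right].
\end{equation*}
Combining these two displays yields exactly the claimed bound.

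There is essentially no obstacle: the hypothesis $\alpha \le \frac{1}{4L}$ is not actually needed for this particular lemma (it is inherited from earlier statements in the same block and used elsewhere). The only subtlety worth mentioning is that Lemma~\ref{lem:gradient} implicitly relies on the identity $\overline{\vg}^{t} = \frac{1}{n}\sum_{i=1}^n \nabla \vF_i(\vx_i^t,\xi_i^t)$, which in turn follows by telescoping the gradient tracking recursion~\eqref{eq:g-half-update-hb} after averaging and using $\vW\mathbf{1} = \mathbf{1}$ together with the zero initialization of $\vg_i^{-1}$ and $\widetilde{\vg}_i^{-1}$; but this has already been exploited in the proof of Lemma~\ref{lem:gradient}, so no additional argument is required here.
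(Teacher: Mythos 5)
Your proof is correct and matches the paper's argument exactly: average the momentum recursion \eqref{eq:m-update-hb}, apply convexity of $\|\cdot\|^2$ to the two-point mixture with weights $\beta_1, 1-\beta_1$, and then plug in the bound \eqref{eq:bd-avg-exp-g} from Lemma~\ref{lem:gradient}. Your side remark that the hypothesis $\alpha \le \frac{1}{4L}$ is unused in this particular lemma is also accurate (the paper includes it only because the lemma sits in a block where that constraint is in force for neighboring results).
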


\begin{proof}
	By \eqref{eq:m-update-hb}, it holds
	\begin{equation}
		\label{eq:mt}
		\begin{aligned} 
			&\mathbb{E}  \left[\left\|\overline\vm^{t}\right\|^2\right] =\mathbb{E}\left[\left\|\beta_1 \overline{\vm}^{t-1}+\left(1-\beta_1\right)  \overline\vg^{t}\right\|^2\right]
			\\   \leq & \beta_1\mathbb{E}\left[\left\|\overline{\vm}^{t-1}\right\|^2\right]+\left(1-\beta_1\right)
			\mathbb{E} \left[\|\overline\vg^{t} \|^2\right] \\
			\overset{\eqref{eq:bd-avg-exp-g}}\leq &
			\beta_1 \mathbb{E} \left[\left\|\overline{\vm}^{t-1}\right\|^2\right]+\left(1-\beta_1\right) \left(\frac{ \sigma^2}{n} + 2\mathbb{E}\left[\left\| \nabla f(\overline\vx^t)\right\|^2\right] + \frac{ 2 L^2}{n} \mathbb{E}\left[\|\vX_{\perp}^t\|^2\right]\right),
		\end{aligned}
	\end{equation}
	which completes the proof.
\end{proof}

\begin{lemma}
	\label{lem:gbound}
	Under Assumptions \ref{ass:problemsetup}, \ref{ass:W}, \ref{assump:com}, and \ref{ass:gradient2}, it holds that for all $t\ge0$,
	\begin{align} 
		&\mathbb{E} \left[\left\|\vG_{\perp}^t\right\|^2\right] \leq \frac{1+\rho^2}{2} \mathbb{E} \left[\left\|\vG_{\perp}^{t-1}\right\|^2\right]+\frac{\rho^2 (1+\rho^2)}{1-\rho^2}\bigg( 2 n \sigma^2 \label{eq:gperp-hb}\\
		&\hspace{2cm}+L^2 \left(8\mathbb{E} \left[\left\|\vX_{\perp}^{t-1}\right\|^2\right]+4 \alpha^2 \rho^2\mathbb{E}\left[\left\|\vM_{\perp}^{t-1}\right\|^2\right]+4 \alpha^2 n \mathbb{E}\left[\left\|\overline\vm^{t-1}\right\|^2\right]\right)\bigg), \notag \\
		&\mathbb{E} \left[\left\|\vM_{\perp}^{t+1}\right\|^2\right]  \leq \beta_1   	\mathbb{E}\left[\left\|\vM_{\perp}^{t}\right\|^2\right]+\left(1-\beta_1\right)  	\mathbb{E}\left[\left\|\vG_{\perp}^{t+1}\right\|^2\right]. \label{eq:mperp}
	\end{align}
\end{lemma}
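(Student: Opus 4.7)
The second inequality \eqref{eq:mperp} is the easier of the two. Starting from the momentum update \eqref{eq:m-update-hb} and right-multiplying by $(\vI-\vJ)$, I get the recursion $\vM_\perp^{t+1} = \beta_1 \vM_\perp^t + (1-\beta_1)\vG_\perp^{t+1}$. Since $\beta_1$ and $1-\beta_1$ are nonnegative and sum to one, convexity of $\|\cdot\|^2$ (Jensen's inequality) yields $\|\vM_\perp^{t+1}\|^2 \le \beta_1\|\vM_\perp^t\|^2 + (1-\beta_1)\|\vG_\perp^{t+1}\|^2$, and taking full expectations gives \eqref{eq:mperp}.

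For \eqref{eq:gperp-hb}, I would first unroll the gradient-tracking update. Combining \eqref{eq:g-half-update-hb} and \eqref{eq:g-update-hb}, right-multiplying by $(\vI-\vJ)$, and using $\vW(\vI-\vJ) = \vW-\vJ$ together with the identity $\vG^{t-1}(\vW-\vJ) = \vG_\perp^{t-1}(\vW-\vJ)$ (because $\vJ(\vW-\vJ)=\vzero$), I obtain
\begin{equation*}
\vG_\perp^t = \vG_\perp^{t-1}(\vW-\vJ) + (\nabla\vF^t - \nabla\vF^{t-1})(\vW-\vJ).
\end{equation*}
Young's inequality with parameter $\eta=\frac{1-\rho^2}{2\rho^2}$ gives prefactors $1+\eta=\frac{1+\rho^2}{2\rho^2}$ and $1+\eta^{-1}=\frac{1+\rho^2}{1-\rho^2}$ on the two squared norms. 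Using $\|\vC(\vW-\vJ)\|^2 \le \rho^2\|\vC\|^2$ from Assumption~\ref{ass:W} cleans the first term to $\frac{1+\rho^2}{2}\|\vG_\perp^{t-1}\|^2$, while the second becomes $\frac{\rho^2(1+\rho^2)}{1-\rho^2}\|\nabla\vF^t - \nabla\vF^{t-1}\|^2$.

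The bulk of the remaining work is bounding $\mathbb{E}\|\nabla\vF^t - \nabla\vF^{t-1}\|^2$. For each agent, I would decompose
\begin{equation*}
\nabla F_i(\vx_i^t,\xi_i^t) - \nabla F_i(\vx_i^{t-1},\xi_i^{t-1}) = \bigl[\nabla F_i(\vx_i^t,\xi_i^t) - \nabla f_i(\vx_i^t)\bigr] + \bigl[\nabla f_i(\vx_i^t) - \nabla F_i(\vx_i^{t-1},\xi_i^{t-1})\bigr].
\end{equation*}
Conditional on the history through time $t$, the first bracket has mean zero (Assumption~\ref{ass:gradient2}) and is independent of the second, so the cross term vanishes in expectation. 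The squared norm of the first bracket is then at most $\sigma^2$, and the second bracket is controlled by a small Young's inequality that isolates an $L$-smoothness piece $\|\nabla f_i(\vx_i^t) - \nabla f_i(\vx_i^{t-1})\|^2 \le L^2\|\vx_i^t-\vx_i^{t-1}\|^2$ from a second variance piece, each again bounded by Assumption~\ref{ass:gradient2}. Summing over $i\in[n]$ produces a bound of the form $2n\sigma^2 + L^2\mathbb{E}\|\vX^t-\vX^{t-1}\|^2$, matching the structure inside the parentheses in \eqref{eq:gperp-hb}.

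The final step is to express $\mathbb{E}\|\vX^t-\vX^{t-1}\|^2$ in the three quantities appearing in the lemma. From \eqref{eq:x-half-update-hb}--\eqref{eq:x-update-hb}, $\vX^t-\vX^{t-1} = \vX^{t-1}(\vW-\vI) - \alpha\vM^{t-1}\vW$. Because $\overline{\vX}^{t-1}\vW=\overline{\vX}^{t-1}$, one has $\vX^{t-1}(\vW-\vI) = \vX_\perp^{t-1}(\vW-\vI)$, which has Frobenius norm at most $2\|\vX_\perp^{t-1}\|$. Splitting $\vM^{t-1}\vW = \vM_\perp^{t-1}(\vW-\vJ) + \overline{\vM}^{t-1}$, combined with $\|\vM_\perp^{t-1}(\vW-\vJ)\|^2 \le \rho^2\|\vM_\perp^{t-1}\|^2$ and $\|\overline{\vM}^{t-1}\|^2 = n\|\overline{\vm}^{t-1}\|^2$, a final Young's inequality yields exactly $8\|\vX_\perp^{t-1}\|^2 + 4\alpha^2\rho^2\|\vM_\perp^{t-1}\|^2 + 4\alpha^2 n\|\overline{\vm}^{t-1}\|^2$. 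Plugging everything back produces \eqref{eq:gperp-hb}. The main obstacle is the martingale decomposition for the variance term: getting the constant $2n\sigma^2$ rather than a larger one requires carefully exploiting the independence of $\{\xi_i^t\}$ across $t$ to kill one of the cross terms, and choosing the Young's-split of the bias piece so that a single $\sigma^2$ appears per agent rather than two.
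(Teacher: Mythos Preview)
Your proposal is correct and follows the paper's argument almost step for step: the Young split on $\vG_\perp^t$ with parameter $\tfrac{1-\rho^2}{2\rho^2}$ is equivalent to the paper's expand--and--bound--the--cross--term computation, the treatment of $\|\vX^t-\vX^{t-1}\|^2$ is identical, and \eqref{eq:mperp} is handled the same way via convexity.

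The only substantive difference is the variance decomposition you flag at the end. Your asymmetric split---peel off $\nabla F_i(\vx_i^t,\xi_i^t)-\nabla f_i(\vx_i^t)$ first, then apply Young's to the remaining bracket---would as written produce $3n\sigma^2+2L^2\mathbb{E}\|\vX^t-\vX^{t-1}\|^2$, not the target constants. The paper instead groups symmetrically,
\[
\nabla\vF^t-\nabla\vF^{t-1}=\big[(\nabla\vF^t-\nabla\vf^t)-(\nabla\vF^{t-1}-\nabla\vf^{t-1})\big]+\big[\nabla\vf^t-\nabla\vf^{t-1}\big],
\]
and asserts the cross-expectation vanishes; the noise block then contributes $\mathbb{E}\|\cdot\|^2\le 2n\sigma^2$ (using independence of $\xi^t$ and $\xi^{t-1}$ to kill $\mathbb{E}\langle\epsilon^t,\epsilon^{t-1}\rangle$) and the signal block contributes $\le L^2\mathbb{E}\|\vX^t-\vX^{t-1}\|^2$ by smoothness, yielding the stated constants without any additional Young's inflation. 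This symmetric noise/signal split is exactly the ``kill one of the cross terms'' refinement you anticipate, and it replaces the Young step rather than augmenting it.
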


\begin{proof}
	By \eqref{eq:g-half-update-hb} and \eqref{eq:g-update-hb}, we have 
	\begin{align}
		& \left\|\vG_{\perp}^t\right\|^2=\left\|\vG^{t-\frac{1}{2}}(\mathbf{W}-\mathbf{J})\right\|^2=\left\|\vG^{t-1}(\mathbf{W}-\mathbf{J})+\left(\nabla \mathbf{F}^t-\nabla \mathbf{F}^{t-1}\right)(\mathbf{W}-\mathbf{J})\right\|^2 \notag\\
		= & \left\|\vG^{t-1}(\mathbf{I}-\mathbf{J})(\mathbf{W}-\mathbf{J})\right\|^2+\left\|\left(\nabla \mathbf{F}^t-\nabla \mathbf{F}^{t-1}\right)(\mathbf{W}-\mathbf{J})\right\|^2 \notag \\
		& \quad +2 \left\langle\vG^{t-1}(\mathbf{W}-\mathbf{J}),\left(\nabla \mathbf{F}^t-\nabla \mathbf{F}^{t-1}\right)(\mathbf{W}-\mathbf{J})\right\rangle \notag\\
		\leq & \rho^2 \left\|\vG_{\perp}^{t-1}\right\|^2+\rho^2 \left\|\nabla \mathbf{F}^t-\nabla \mathbf{F}^{t-1}\right\|^2+2 \left\langle\vG^{t-1}(\mathbf{W}-\mathbf{J}),\left(\nabla \mathbf{F}^t-\nabla \mathbf{F}^{t-1}\right)(\mathbf{W}-\mathbf{J})\right\rangle, \label{eq:Gprep}
	\end{align}
	where we have used $\mathbf{J W}=\mathbf{J} \mathbf{J}=\mathbf{J}$ and $\|\mathbf{W}-\mathbf{J}\|_2 \leq \rho$.
	For the third term on the RHS of \eqref{eq:Gprep}, we have
	\begin{align}
		& 2 \left\langle\vG^{t-1}(\mathbf{W}-\mathbf{J}),\left(\nabla \mathbf{F}^t-\nabla \mathbf{F}^{t-1}\right)(\mathbf{W}-\mathbf{J})\right\rangle \notag \\
		\leq & 2 \left\|\vG^{t-1}(\mathbf{W}-\mathbf{J})\right\| \cdot\left\|\left(\nabla \mathbf{F}^t-\nabla \mathbf{F}^{t-1}\right)(\mathbf{W}-\mathbf{J})\right\|  \notag\\
		\leq & 2 \rho^2 \left\|\vG_{\perp}^{t-1}\right\| \cdot\left\|\nabla \mathbf{F}^t-\nabla \mathbf{F}^{t-1}\right\| 
		\leq  \frac{1-\rho^2}{2} \left\|\vG_{\perp}^{t-1}\right\|^2+\frac{2 \rho^4}{1-\rho^2} \left\|\nabla \mathbf{F}^t-\nabla \mathbf{F}^{t-1}\right\|^2, \label{eq:Gprep2}
	\end{align}
	where the second inequality holds because $\|\mathbf{W}-\mathbf{J}\|_2 \leq \rho$ and $\vW-\vJ = (\vI-\vJ)(\vW-\vJ)$, and the third one follows from Young's inequality.
	Plugging \eqref{eq:Gprep2}  into \eqref{eq:Gprep} gives
	\begin{align}
		\label{eq:vgbound}
		&\left\|\vG_{\perp}^t\right\|^2 \leq \frac{1+\rho^2}{2} \left\|\vG_{\perp}^{t-1}\right\|^2+\frac{\rho^2 (1+\rho^2)}{1-\rho^2} \left\|\nabla \mathbf{F}^t-\nabla \mathbf{F}^{t-1}\right\|^2.
	\end{align} 
	
	In addition, we have
	\begin{align}
		\notag
		& \mathbb{E}\left\|\nabla \vF^t-\nabla \vF^{t-1}\right\|^2 \\
		\notag = & \mathbb{E}\left\|\nabla  \vF^t-\nabla  \vF^{t-1}-\nabla  \vf^t+\nabla  \vf^{t-1}+\nabla  \vf^t-\nabla  \vf^{t-1}\right\|^2 \\
		= & \mathbb{E}\left\|\nabla  \vF^t-\nabla  \vF^{t-1}-\nabla  \vf^t+\nabla  \vf^{t-1}\right\|^2+\mathbb{E}\left\|\nabla  \vf^t-\nabla  \vf^{t-1}\right\|^2 \notag \\
		\leq &  2 n \sigma^2+L^2\mathbb{E}\left[ \|\vX^{t}-\vX^{t-1}\|^2\right]. \label{eq:vfdiff}
	\end{align}
	Moreover, by \eqref{eq:x-half-update-hb} and \eqref{eq:x-update-hb}, it holds 
	\begin{align}
		\notag
		\|\vX^{t}-\vX^{t-1}\|^2
		& =\left\|\left(\vX^{t-1}-\alpha \vM^{t-1}\right) \vW-\vX^{t-1}\right\|^2 \\
		\notag	& \leq  2\left\|\vX^{t-1}(\vW-\vI)\right\|^2+2 \alpha^2\left\|\vM^{t-1} \vW\right\|^2 \\
		\notag	& =2\left\|\vX^{t-1}(\vW-\vI)-\overline{\vX}^{t-1}(\vW-\vI)\right\|^2+2 \alpha^2\left\|\vM^{t-1} \vW-\overline{\vM}^{t-1}+\overline{\vM}^{t-1}\right\|^2 \\
		\notag	& \leq  8\left\|\vX_{\perp}^{t-1}\right\|^2+4 \alpha^2\left\|\vM^{t-1}(\vW-\vJ)\right\|^2+4 \alpha^2\left\|\overline{\vM}^{t-1}\right\|^2 \\
		\notag	& =8\left\|\vX_{\perp}^{t-1}\right\|^2+4 \alpha^2\left\|\vM^{t-1}(\vI-\vJ)(\vW-\vJ)\right\|^2+4 \alpha^2n \left\|\overline\vm^{t-1}\right\|^2 \\
		& \leq  8\left\|\vX_{\perp}^{t-1}\right\|^2+4 \alpha^2 \rho^2\left\|\vM_{\perp}^{t-1}\right\|^2+4 \alpha^2 n \left\|\overline\vm^{t-1}\right\|^2.
		\label{eq:vfdiff2}
	\end{align}
	
	Now substituting \eqref{eq:vfdiff} and \eqref{eq:vfdiff2} into \eqref{eq:vgbound} and taking full expectation yields \eqref{eq:gperp-hb}.
	
	Finally by \eqref{eq:m-update-hb} and the convexity of $\|\cdot\|^2$, we obtain \eqref{eq:mperp} and complete the proof.
\end{proof}

We are now ready to show the convergence rate of the noncompressed case of Algorithm \ref{alg:dad2-4}, by combining Lemmas~\ref{lem:zzzhb3}--\ref{lem:gbound} and the following inequality that is obtained from \eqref{eq:boundxprep} with $\vY^t=\vM^t$  
\begin{align}
	&  \left\|\mathbf{X}_{\perp}^{t+1}\right\|^2 
	\leq  \frac{1+\rho^2}{2}  \left\|\mathbf{X}_{\perp}^{t}\right\|^2 +\frac{2 \rho^2 \alpha^2}{1-\rho^2}  \left\|\mathbf{M}_{\perp}^{t}\right\|^2. \label{eq:boundxprephb}
\end{align}

\begin{theorem}\label{thm:hb-conver-noncom}
	Under Assumptions \ref{ass:problemsetup}, \ref{ass:W}, \ref{assump:com}, and \ref{ass:gradient2}, let $\{\vX^t\}$ be generated from Algorithm~\ref{alg:dad2-4} with $\gamma_x=\gamma_g=1$, $\cQ= \vI$, and $0<\alpha \le \frac{1}{4L}$ satisfying 
	\begin{align}\label{eq:cond1-alpha-hb}
		&\frac{\rho^2 (1+\rho^2)}{1-\rho^2} \left(\frac{32 L^2 \rho^2 \alpha^2}{(1-\rho^2)^2} + 4 \alpha^2 \rho^2 L^2 + 8 \alpha^2  L^4 \frac{4 \rho^2 \alpha^2}{(1-\rho^2)^2}\right) \le \frac{1+\rho^2}{4}, \\  
		&\frac{12L^2\alpha^2 \beta_1^2}{(1-\beta_1)^2} + 8 \alpha^2 n L^2 C_g \le \frac{1}{2}, \label{eq:cond2-alpha-hb}
	\end{align}
	where $C_g$ is defined as
	\begin{equation}\label{eq:def-Cg}
		C_g = \frac{4\rho^2 (1+\rho^2)}{(1-\rho^2)^2} \frac{4 \rho^2 \alpha^2}{(1-\rho^2)^2}\left(\frac{4\alpha L^3}{n} + \frac{6 L^2}{ n}+\frac{12L^4\alpha^2 \beta_1^2}{n(1-\beta_1)^2}\right).   
	\end{equation}
	Then it holds  
	\begin{align}\label{eq:rate-grad-hb-nc}
		&\frac{1}{T}\sum_{t=0}^{T-1} \mathbb{E} \left[\left\|\nabla f\left(\overline{\vx}^t\right)\right\|^2\right]\leq  \frac{8}{\alpha T}\left(f\left(\vz^{0}\right)- f^*\right) + \frac{4\alpha L \sigma^2 }{ n} + 2C_g\big(2 n \sigma^2  + 4 \alpha^2 L^2 \sigma^2\big), \\
		&\frac{1}{T} \sum_{t=0}^{T-1}\left\|\mathbf{X}_{\perp}^{t}\right\|^2 \le \frac{4 \rho^2 \alpha^2}{(1-\rho^2)^2} \frac{4\rho^2 (1+\rho^2)}{(1-\rho^2)^2}\bigg(2 n \sigma^2  + 4 \alpha^2 L^2 \sigma^2 \label{eq:rate-X-hb-nc} 	 + 8 \alpha^2 n L^2 \frac{1}{T}\sum_{t=0}^{T-1} \mathbb{E}\left[\left\| \nabla f(\overline\vx^t)\right\|^2\right] \bigg). 
	\end{align}
\end{theorem}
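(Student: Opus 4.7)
The plan is to combine the four recursive estimates from Lemmas~\ref{lem:zzzhb3}--\ref{lem:gbound} together with the consensus recursion~\eqref{eq:boundxprephb} by summing each over $t=0,\ldots,T-1$, telescoping the ``stable'' terms, and invoking the step-size conditions~\eqref{eq:cond1-alpha-hb}--\eqref{eq:cond2-alpha-hb} to absorb the residual couplings. Because $\vX_\perp^0=\vM^{-1}=\vG^{-1}=\vzero$, each recursion will collapse cleanly after summation.

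First I would process the consensus chain. Summing \eqref{eq:boundxprephb} from $t=0$ to $T-1$ and using $\vX_\perp^0=\vzero$ yields
\begin{equation*}
\tfrac{1-\rho^2}{2}\sum_{t=0}^{T-1}\mathbb{E}[\|\vX_\perp^t\|^2]\leq \tfrac{2\rho^2\alpha^2}{1-\rho^2}\sum_{t=0}^{T-1}\mathbb{E}[\|\vM_\perp^t\|^2],
\end{equation*}
so that $\sum_t\mathbb{E}[\|\vX_\perp^t\|^2]\leq \frac{4\rho^2\alpha^2}{(1-\rho^2)^2}\sum_t\mathbb{E}[\|\vM_\perp^t\|^2]$. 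Next, summing \eqref{eq:mperp} gives $\sum_t\mathbb{E}[\|\vM_\perp^t\|^2]\leq \sum_t\mathbb{E}[\|\vG_\perp^t\|^2]$. Then summing \eqref{eq:gperp-hb} and telescoping the $\frac{1+\rho^2}{2}$-contraction produces
\begin{equation*}
\tfrac{1-\rho^2}{2}\sum_{t}\mathbb{E}[\|\vG_\perp^t\|^2]\leq \tfrac{\rho^2(1+\rho^2)}{1-\rho^2}\Big(2nT\sigma^2+L^2\sum_t\big(8\mathbb{E}[\|\vX_\perp^{t-1}\|^2]+4\alpha^2\rho^2\mathbb{E}[\|\vM_\perp^{t-1}\|^2]+4\alpha^2 n\,\mathbb{E}[\|\overline{\vm}^{t-1}\|^2]\big)\Big).
\end{equation*}
Chaining these three bounds and substituting the earlier estimate for $\sum_t\mathbb{E}[\|\vX_\perp^t\|^2]$ and $\sum_t\mathbb{E}[\|\vM_\perp^t\|^2]$ into the RHS, the step-size condition~\eqref{eq:cond1-alpha-hb} is precisely what makes the self-referential $\|\vX_\perp\|^2$, $\|\vM_\perp\|^2$ coefficients on the right absorbable by $\frac{1+\rho^2}{4}$ on the left. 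This yields a clean bound
\begin{equation*}
\tfrac{1}{T}\sum_{t=0}^{T-1}\mathbb{E}[\|\vX_\perp^t\|^2]\leq \tfrac{4\rho^2\alpha^2}{(1-\rho^2)^2}\tfrac{4\rho^2(1+\rho^2)}{(1-\rho^2)^2}\Big(2n\sigma^2+4\alpha^2L^2\sigma^2+8\alpha^2 nL^2\cdot\tfrac{1}{T}\sum_{t}\mathbb{E}[\|\overline{\vm}^{t-1}\|^2]\Big),
\end{equation*}
which gives the second conclusion~\eqref{eq:rate-X-hb-nc} once $\mathbb{E}[\|\overline{\vm}^{t-1}\|^2]$ is further bounded via $\|\nabla f(\overline{\vx}^t)\|^2$ below.

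Second, I would handle the momentum average. Telescoping Lemma~\ref{lem:mbound} with $\overline{\vm}^{-1}=\vzero$ gives
\begin{equation*}
\sum_{t=0}^{T-1}\mathbb{E}[\|\overline{\vm}^t\|^2]\leq \tfrac{\sigma^2 T}{n}+2\sum_t\mathbb{E}[\|\nabla f(\overline{\vx}^t)\|^2]+\tfrac{2L^2}{n}\sum_t\mathbb{E}[\|\vX_\perp^t\|^2],
\end{equation*}
and shifting the index by one introduces only harmless boundary terms that vanish at $t=0$. Plugging this into the displayed $\|\vX_\perp\|^2$ bound shows that $\frac{1}{T}\sum_t\mathbb{E}[\|\vX_\perp^t\|^2]$ is controlled by $\frac{1}{T}\sum_t\mathbb{E}[\|\nabla f(\overline{\vx}^t)\|^2]$ plus $\sigma^2$ noise terms.

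Finally, I would sum Lemma~\ref{lem:zzzhb3} from $t=0$ to $T-1$, which telescopes $f(\vz^t)-f(\vz^{t+1})$ into $f(\vz^0)-f(\vz^T)\leq f(\vz^0)-f^*$ (using $\vz^0=\overline{\vx}^0$ and Assumption~\ref{ass:problemsetup}). Substituting the chained bounds for $\sum_t\mathbb{E}[\|\vX_\perp^t\|^2]$ and $\sum_t\mathbb{E}[\|\overline{\vm}^{t-1}\|^2]$ into the RHS, the step-size condition~\eqref{eq:cond2-alpha-hb} is precisely what ensures the resulting $\frac{1}{T}\sum_t\mathbb{E}[\|\nabla f(\overline{\vx}^t)\|^2]$ term that has leaked onto the right-hand side (through $C_g$ and the momentum coefficient) stays below $\frac{1}{2}$ of the LHS, so it can be absorbed. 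After rearrangement and multiplication by $8/\alpha$, the first conclusion~\eqref{eq:rate-grad-hb-nc} follows, and back-substitution produces~\eqref{eq:rate-X-hb-nc}.

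The main obstacle will be the bookkeeping in the chaining step: the quantities $\|\vX_\perp^t\|^2$, $\|\vM_\perp^t\|^2$, $\|\vG_\perp^t\|^2$, and $\|\overline{\vm}^t\|^2$ are coupled both forward and backward in $t$, so I must shift indices carefully (e.g., the $t{-}1$ appearing inside \eqref{eq:gperp-hb}) and verify that the residual boundary contributions are nonpositive or vanish under the zero initialization. Once this is done, the algebraic identification of the right-hand side with the stated bound, and in particular with the consolidated constant $C_g$ defined in~\eqref{eq:def-Cg}, is a routine consequence of \eqref{eq:cond1-alpha-hb}--\eqref{eq:cond2-alpha-hb}.
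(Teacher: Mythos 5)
Your proposal is correct and follows essentially the same route as the paper's proof: sum the per-step recursions from Lemmas~\ref{lem:zzzhb3}--\ref{lem:gbound} and \eqref{eq:boundxprephb}, telescope using the zero initialization of $\vX_\perp$, $\vM_\perp$, $\vG_\perp$, $\overline{\vm}$, chain the resulting inequalities so that everything is eventually controlled by $\sum_t\mathbb{E}[\|\vG_\perp^t\|^2]$ plus $\sigma^2$- and $\|\nabla f\|^2$-terms, absorb the self-referential coefficient on $\sum_t\mathbb{E}[\|\vG_\perp^t\|^2]$ using \eqref{eq:cond1-alpha-hb}, and finally absorb the leaked $\sum_t\mathbb{E}[\|\nabla f(\overline{\vx}^t)\|^2]$ term in the descent inequality using \eqref{eq:cond2-alpha-hb}. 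The only minor imprecision is that the absorption under \eqref{eq:cond1-alpha-hb} is best described as happening at the level of the $\vG_\perp$ coefficient (after converting the $\|\vX_\perp\|^2$, $\|\vM_\perp\|^2$, and $\|\overline{\vm}\|^2$ sums into $\vG_\perp$ sums), rather than directly at the $\|\vX_\perp\|^2$, $\|\vM_\perp\|^2$ level, but this is a presentation detail rather than a mathematical gap.
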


\begin{proof}
	Sum up \eqref{eq:boundxprephb} over $t=0$ To $T-2$. We have
	\begin{align*}
		&  \sum_{t=1}^{T-1}\left\|\mathbf{X}_{\perp}^{t}\right\|^2 
		\leq  \frac{1+\rho^2}{2} \sum_{t=0}^{T-2} \left\|\mathbf{X}_{\perp}^{t}\right\|^2 +\frac{2 \rho^2 \alpha^2}{1-\rho^2} \sum_{t=0}^{T-2}  \left\|\mathbf{M}_{\perp}^{t}\right\|^2,
	\end{align*}
	which together with $\vX_\perp^0 = \vzero$ gives
	\begin{align}
		&  \sum_{t=0}^{T-1}\left\|\mathbf{X}_{\perp}^{t}\right\|^2 
		\leq  \frac{4 \rho^2 \alpha^2}{(1-\rho^2)^2} \sum_{t=0}^{T-2}  \left\|\mathbf{M}_{\perp}^{t}\right\|^2. \label{eq:boundxprephb-1}
	\end{align}
	Similarly, summing up \eqref{eq:mperp} over $t=-1$ to $T'$, noticing $\vM_\perp^{-1}=\vzero$, and rearranging terms, we have
	\begin{align}
		\sum_{t=0}^{T'-1} \mathbb{E} \left[\left\|\vM_{\perp}^{t}\right\|^2\right]  \leq   	\sum_{t=0}^{T'-1} \mathbb{E}\left[\left\|\vG_{\perp}^{t}\right\|^2\right], \forall\, T'\ge 1. \label{eq:mperp-use1}
	\end{align}
	Plugging \eqref{eq:mperp-use1} with $T'=T-1$ into \eqref{eq:boundxprephb-1} yields
	\begin{align}
		&  \sum_{t=0}^{T-1} \mathbb{E}\left[\left\|\mathbf{X}_{\perp}^{t}\right\|^2 \right]
		\leq  \frac{4 \rho^2 \alpha^2}{(1-\rho^2)^2} \sum_{t=0}^{T-2}  \mathbb{E}\left[\left\|\mathbf{G}_{\perp}^{t}\right\|^2\right]. \label{eq:boundxprephb-2}
	\end{align}
	In addition, summing up \eqref{eq:mbound-hb} over $t=0$ to $T-1$ gives
	\begin{align}\label{eq:mbound-hb-1}	\sum_{t=0}^{T-1}\mathbb{E}\left[\left\|\overline\vm^{t}\right\|^2\right]  \leq & \beta_1 \sum_{t=0}^{T-1}\mathbb{E}\left[\left\|\overline{\vm}^{t-1}\right\|^2\right]  +\left(1-\beta_1\right) \sum_{t=0}^{T-1}\left(\frac{ \sigma^2}{n} + 2\mathbb{E}\left[\left\| \nabla f(\overline\vx^t)\right\|^2\right] + \frac{2 L^2}{n} \mathbb{E}\left[\|\vX_{\perp}^t\|^2\right]\right) \notag \\
		\overset{\eqref{eq:boundxprephb-2}}{\le} & \beta_1 \sum_{t=0}^{T-1}\mathbb{E}\left[\left\|\overline{\vm}^{t-1}\right\|^2\right] + \frac{(1-\beta_1) T \sigma^2}{n} +2\left(1-\beta_1\right) \sum_{t=0}^{T-1} \mathbb{E}\left[\left\| \nabla f(\overline\vx^t)\right\|^2\right] \\
		& + \frac{2(1-\beta_1)L^2}{n}\frac{4 \rho^2 \alpha^2}{(1-\rho^2)^2} \sum_{t=0}^{T-2}  \EE\left[\left\|\mathbf{G}_{\perp}^{t}\right\|^2\right]. \notag
	\end{align}
	Since $\overline{\vm}^{-1}=\vzero$, we have from \eqref{eq:mbound-hb-1} that
	\begin{align}\label{eq:mbound-hb-2}	\sum_{t=0}^{T-1}\mathbb{E}\left[\left\|\overline\vm^{t}\right\|^2\right]  \leq & \frac{ T \sigma^2}{n}+ 2 \sum_{t=0}^{T-1} \mathbb{E}\left[\left\| \nabla f(\overline\vx^t)\right\|^2\right]  + \frac{2L^2}{n}\frac{4 \rho^2 \alpha^2}{(1-\rho^2)^2} \sum_{t=0}^{T-2}  \EE\left[\left\|\mathbf{G}_{\perp}^{t}\right\|^2\right]. 
	\end{align}
	
	Now sum up \eqref{eq:gperp-hb} over $t=0$ to $T-1$ and plug \eqref{eq:mperp-use1} with $T'=T-1$, \eqref{eq:boundxprephb-2} and \eqref{eq:mbound-hb-2} into the resulting inequality. We have
	\begin{align} \label{eq:gperp-hb-use1}
		&\sum_{t=0}^{T-1} \mathbb{E} \left[\left\|\vG_{\perp}^t\right\|^2\right] \leq \frac{1+\rho^2}{2} \mathbb{E} \sum_{t=0}^{T-1} \left[\left\|\vG_{\perp}^{t-1}\right\|^2\right]+\frac{\rho^2 (1+\rho^2)}{1-\rho^2}\sum_{t=0}^{T-1}\bigg( 2 n \sigma^2 \notag
		\\
		&\hspace{2cm}+L^2 \left(8\mathbb{E} \left[\left\|\vX_{\perp}^{t-1}\right\|^2\right]+4 \alpha^2 \rho^2\mathbb{E}\left[\left\|\vM_{\perp}^{t-1}\right\|^2\right]+4 \alpha^2 n \mathbb{E}\left[\left\|\overline\vm^{t-1}\right\|^2\right]\right)\bigg) \notag \\
		\le 
		& \frac{1+\rho^2}{2} \mathbb{E} \sum_{t=0}^{T-1} \left[\left\|\vG_{\perp}^{t-1}\right\|^2\right]  + \frac{\rho^2 (1+\rho^2)}{1-\rho^2} \left( 2 n \sigma^2 T  + \left(\frac{32 L^2 \rho^2 \alpha^2}{(1-\rho^2)^2} + 4 \alpha^2 \rho^2 L^2\right) \sum_{t=0}^{T-2} \EE\left[ \left\|\mathbf{G}_{\perp}^{t}\right\|^2\right] \right) \notag \\
		&+ 4 \alpha^2 n L^2\frac{\rho^2 (1+\rho^2)}{1-\rho^2}\left(\frac{ T \sigma^2}{n}+ 2 \sum_{t=0}^{T-1} \mathbb{E}\left[\left\| \nabla f(\overline\vx^t)\right\|^2\right]  + \frac{2L^2}{n}\frac{4 \rho^2 \alpha^2}{(1-\rho^2)^2} \sum_{t=0}^{T-2}  \EE\left[\left\|\mathbf{G}_{\perp}^{t}\right\|^2\right]\right).
	\end{align}  
	Define 
	\begin{equation*}
		\lambda =  \frac{1+\rho^2}{2}  +  \frac{\rho^2 (1+\rho^2)}{1-\rho^2} \left(\frac{32 L^2 \rho^2 \alpha^2}{(1-\rho^2)^2} + 4 \alpha^2 \rho^2 L^2 + 8 \alpha^2  L^4 \frac{4 \rho^2 \alpha^2}{(1-\rho^2)^2}\right). 
	\end{equation*}
	By the condition in \eqref{eq:cond1-alpha-hb}, it holds $\lambda \le \frac{3(1+\rho^2)}{4}$. Then $1-\lambda \ge \frac{1-\rho^2}{4}$ and \eqref{eq:gperp-hb-use1} indicates
	\begin{equation} \label{eq:gperp-hb-use2}
		\begin{aligned}
			&\sum_{t=0}^{T-1} \mathbb{E} \left[\left\|\vG_{\perp}^t\right\|^2\right] 
			\leq  \frac{4\rho^2 (1+\rho^2)}{(1-\rho^2)^2}\left(2 n \sigma^2 T + 4 \alpha^2 L^2 T\sigma^2 + 8 \alpha^2 n L^2 \sum_{t=0}^{T-1} \mathbb{E}\left[\left\| \nabla f(\overline\vx^t)\right\|^2\right] \right).
		\end{aligned}  
	\end{equation}
	
	Finally, sum up \eqref{eq:zzzhb3} over $t=0$ to $T-1$ and recall $f(\vz) \ge f^*,\forall\, \vz$ to have
	\begin{align}\label{eq:zzzhb3-use1}
		& \sum_{t=0}^{T-1} \mathbb{E} \left[\left\|\nabla f\left(\overline{\vx}^t\right)\right\|^2\right]\leq   \frac{4}{\alpha }\left(f\left(\vz^{0}\right)- f^*\right)+\frac{2\alpha L \sigma^2 T}{ n} \notag \\
		&\quad  + \left(\frac{4\alpha L^3}{n} + \frac{6 L^2}{ n}\right)  \sum_{t=0}^{T-1}\mathbb{E}\left[\|\vX_{\perp}^t\|^2\right] + \frac{6L^2\alpha^2 \beta_1^2}{(1-\beta_1)^2} \sum_{t=0}^{T-1} \mathbb{E} \left[\left\|\overline{\vm}^{t-1}\right\|^2\right] \notag \\
		\le & \frac{4}{\alpha }\left(f\left(\vz^{0}\right)- f^*\right)+\frac{2\alpha L \sigma^2 T}{ n} + \left(\frac{4\alpha L^3}{n} + \frac{6 L^2}{ n}\right)  \frac{4 \rho^2 \alpha^2}{(1-\rho^2)^2} \sum_{t=0}^{T-2}  \EE\left[\left\|\mathbf{G}_{\perp}^{t}\right\|^2\right] \notag \\
		&~   + \frac{6L^2\alpha^2 \beta_1^2}{(1-\beta_1)^2} \left(\frac{ T \sigma^2}{n}+ 2 \sum_{t=0}^{T-1} \mathbb{E}\left[\left\| \nabla f(\overline\vx^t)\right\|^2\right]  + \frac{2L^2}{n}\frac{4 \rho^2 \alpha^2}{(1-\rho^2)^2} \sum_{t=0}^{T-2}  \EE\left[\left\|\mathbf{G}_{\perp}^{t}\right\|^2\right]\right) \notag \\
		\le &  \frac{4}{\alpha }\left(f\left(\vz^{0}\right)- f^*\right)+\frac{2\alpha L \sigma^2 T}{ n} + \frac{12L^2\alpha^2 \beta_1^2}{(1-\beta_1)^2} \sum_{t=0}^{T-1} \mathbb{E}\left[\left\| \nabla f(\overline\vx^t)\right\|^2\right] \\
		& + C_g \left(2 n \sigma^2 T + 4 \alpha^2 L^2 T\sigma^2 + 8 \alpha^2 n L^2 \sum_{t=0}^{T-1} \mathbb{E}\left[\left\| \nabla f(\overline\vx^t)\right\|^2\right] \right),
	\end{align}
	where the second inequality follows from \eqref{eq:boundxprephb-2} and \eqref{eq:mbound-hb-2}, the third one is obtained by plugging \eqref{eq:gperp-hb-use2}, and $C_g$ is defined in \eqref{eq:def-Cg}.
	By the condition in \eqref{eq:cond2-alpha-hb}, we obtain \eqref{eq:rate-grad-hb-nc} from \eqref{eq:zzzhb3-use1}. Plugging \eqref{eq:gperp-hb-use2} into \eqref{eq:boundxprephb-2} gives \eqref{eq:rate-X-hb-nc} and completes the proof.
\end{proof}

Below we specify the choice of $\alpha$ such that the conditions in \eqref{eq:cond1-alpha-hb} and \eqref{eq:cond2-alpha-hb} hold and simplify the convergence rate result in Theorem~\ref{thm:hb-conver-noncom} for Algorithm~\ref{alg:dad2-4} with $\gamma_x=\gamma_g=1$, and $\cQ= \vI$.
\begin{theorem}\label{thm:convg-rate-hb}
	Suppose that the conditions assumed in Theorem~\ref{thm:hb-conver-noncom} hold. Let $T$ be large enough such that ${\alpha} = \frac{\theta\sqrt{n}}{\sigma\sqrt{T}}$ for some universal constant $\theta\in (0,1)$ satisfies
	\begin{equation}\label{eq:cond-alpha-all-hb}
		\alpha \le \min\left\{\frac{1-\rho^2}{8 L}, \ \frac{(1-\rho^2)^{\frac{3}{2}}}{\rho L\sqrt{160}},\ \frac{1-\beta_1}{\beta_1 L\sqrt{48}},\ \frac{(1-\rho^2)^2}{16 L \sigma^{\frac{1}{2}}(nT)^{\frac{1}{4}}}\right\}.   
	\end{equation} 
	Then it holds
	\begin{align}\label{eq:gradient-bound-cor1-hb}
		&\frac{1}{T} \sum_{t=0}^{T-1} \mathbb{E}\left[\|\nabla  f\left(\overline{\vx}^{t}\right)\|^2 + \frac{1}{n}\|\vX_\perp^t\|^2\right] = O\left( \frac{\sigma}{\sqrt{n T}} \right). 
	\end{align}
\end{theorem}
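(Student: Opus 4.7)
The plan is to derive Theorem~\ref{thm:convg-rate-hb} as a direct specialization of Theorem~\ref{thm:hb-conver-noncom} under the stepsize choice $\alpha = \theta\sqrt{n}/(\sigma\sqrt{T})$. First I would verify that this $\alpha$ satisfies every hypothesis of Theorem~\ref{thm:hb-conver-noncom}, namely $\alpha \le 1/(4L)$ together with the spectral-type condition \eqref{eq:cond1-alpha-hb} and the momentum/gradient-tracking condition \eqref{eq:cond2-alpha-hb}. The four bounds in \eqref{eq:cond-alpha-all-hb} are tailored exactly for this: $\alpha \le (1-\rho^2)/(8L)$ trivially implies $\alpha \le 1/(4L)$; $\alpha \le (1-\rho^2)^{3/2}/(\rho L\sqrt{160})$ is sized so that the dominant term $\tfrac{32L^2\rho^2\alpha^2}{(1-\rho^2)^2}$ in \eqref{eq:cond1-alpha-hb} (whose other two summands are strictly lower order in $\alpha$) yields the required bound; $\alpha \le (1-\beta_1)/(\beta_1 L\sqrt{48})$ gives $\tfrac{12L^2\alpha^2\beta_1^2}{(1-\beta_1)^2} \le \tfrac14$; and the last bound $\alpha \le (1-\rho^2)^2/(16L\sigma^{1/2}(nT)^{1/4})$ forces $8\alpha^2 n L^2 C_g \le \tfrac14$ once one observes from \eqref{eq:def-Cg} that $C_g = O(\alpha^2/n)$.

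With the hypotheses of Theorem~\ref{thm:hb-conver-noncom} in force, I substitute $\alpha = \theta\sqrt{n}/(\sigma\sqrt{T})$ into \eqref{eq:rate-grad-hb-nc}. The initialization term $8(f(\vz^0)-f^*)/(\alpha T)$ becomes $\tfrac{8\sigma(f(\vz^0)-f^*)}{\theta\sqrt{nT}}$, and the noise term $4\alpha L\sigma^2/n$ becomes $\tfrac{4\theta L\sigma}{\sqrt{nT}}$, both $O(\sigma/\sqrt{nT})$. The error term $2C_g(2n\sigma^2 + 4\alpha^2 L^2\sigma^2)$ reduces, via $C_g = O(\alpha^2/n)$, to $O(\alpha^2\sigma^2) + O(\alpha^4 L^2\sigma^2) = O(n/T) + \text{lower order}$. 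Squaring the fourth bound in \eqref{eq:cond-alpha-all-hb} and rearranging is equivalent to $T = \Omega(n^3/\sigma^2)$, which implies $n/T = O(\sigma/\sqrt{nT})$. Hence $\tfrac{1}{T}\sum_{t=0}^{T-1}\EE[\|\nabla f(\overline{\vx}^t)\|^2] = O(\sigma/\sqrt{nT})$.

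For the consensus contribution to \eqref{eq:gradient-bound-cor1-hb}, I would plug the just-derived gradient bound back into \eqref{eq:rate-X-hb-nc} and divide by $n$. The resulting expression collapses to $O(\alpha^2\sigma^2) + O(\alpha^4 L^2\sigma^2)$ plus a contribution of the form $O(\alpha^4 L^2)$ times the averaged gradient norm; each piece is at most $O(\sigma/\sqrt{nT})$ under the same condition on $T$. Adding the two bounds produces the combined rate \eqref{eq:gradient-bound-cor1-hb}.

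The main obstacle is the algebraic bookkeeping around $C_g$: because $C_g$ itself depends on $\alpha$, justifying $C_g = O(\alpha^2/n)$ requires uniformly bounding the parenthesized factor in \eqref{eq:def-Cg}, which uses $\alpha \le 1/L$ (implicit in the first item of \eqref{eq:cond-alpha-all-hb}) to keep $\tfrac{4\alpha L^3}{n}$ and $\tfrac{12L^4\alpha^2\beta_1^2}{n(1-\beta_1)^2}$ from dominating $\tfrac{6L^2}{n}$. A secondary subtlety is that all four bounds in \eqref{eq:cond-alpha-all-hb} must be simultaneously active, so one must track the induced constant in each $O(\cdot)$ through each of the four constraints and confirm that the minimum of the four is indeed compatible with $\alpha = \theta\sqrt{n}/(\sigma\sqrt{T})$ for the required range of $T$, which is exactly what the phrase ``$T$ large enough'' in the statement encodes.
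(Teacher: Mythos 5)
Your proposal is correct and follows essentially the same route as the paper: verify the three hypotheses of Theorem~\ref{thm:hb-conver-noncom} (using the first three items of \eqref{eq:cond-alpha-all-hb} plus the cascading bound $C_g = O(\alpha^2 L^2 / n)$ from \eqref{eq:def-Cg}), then substitute the stepsize into \eqref{eq:rate-grad-hb-nc}--\eqref{eq:rate-X-hb-nc} and absorb the $C_g$-proportional residual term using the fourth constraint. The only cosmetic difference is that the paper plugs the fourth constraint $\alpha \le (1-\rho^2)^2 / (16 L \sigma^{1/2} (nT)^{1/4})$ directly into the bound on $C_g$ to get $2C_g(2n\sigma^2 + 4\alpha^2 L^2\sigma^2) = O(\sigma/\sqrt{nT})$ in one stroke, whereas you first set $\alpha = \theta\sqrt n/(\sigma\sqrt T)$ to find the term is $O(n/T)$ and then reinterpret the fourth constraint as $T = \Omega(n^3/\sigma^2)$ to conclude $n/T = O(\sigma/\sqrt{nT})$; both calculations are equivalent.
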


\begin{proof}
	By $\alpha \le \frac{1-\rho^2}{\sqrt{8}L}$, it holds $8 \alpha^2  L^4 \frac{4 \rho^2 \alpha^2}{(1-\rho^2)^2}   \le 4 \alpha^2 \rho^2 L^2$. Hence,
	$$\frac{32 L^2 \rho^2 \alpha^2}{(1-\rho^2)^2} + 4 \alpha^2 \rho^2 L^2 + 8 \alpha^2  L^4 \frac{4 \rho^2 \alpha^2}{(1-\rho^2)^2} \le \frac{32 L^2 \rho^2 \alpha^2}{(1-\rho^2)^2} + 8 \alpha^2 \rho^2 L^2 \le \frac{40 L^2 \rho^2 \alpha^2}{(1-\rho^2)^2} \le \frac{1-\rho^2}{4},$$
	where the last inequality follows from $\alpha \le \frac{(1-\rho^2)^{\frac{3}{2}}}{\rho L\sqrt{160}}$. Thus the condition in \eqref{eq:cond1-alpha-hb} holds by $\rho \le 1$.
	
	In addition, it follows from $\alpha \le \frac{1-\beta_1}{\beta_1 L\sqrt{48}}$ that $\frac{12L^2\alpha^2 \beta_1^2}{(1-\beta_1)^2} \le \frac{1}{4}$. Moreover, by $\alpha\le \frac{1}{4L}$ and $\alpha \le \frac{1-\beta_1}{\beta_1 L\sqrt{48}}$, it holds $\frac{4\alpha L^3}{n} + \frac{6 L^2}{ n}+\frac{12L^4\alpha^2 \beta_1^2}{n(1-\beta_1)^2} \le \frac{8 L^2}{ n}$. Hence,
	\begin{equation}\label{eq:bd-Cg}
		C_g \le  \frac{4\rho^2 (1+\rho^2)}{(1-\rho^2)^2} \frac{4 \rho^2 \alpha^2}{(1-\rho^2)^2} \frac{8 L^2}{ n}   
	\end{equation}
	and
	$$8 \alpha^2 n L^2 C_g \le 8 \alpha^2 n L^2\frac{4\rho^2 (1+\rho^2)}{(1-\rho^2)^2} \frac{4 \rho^2 \alpha^2}{(1-\rho^2)^2} \frac{8 L^2}{ n} \le \frac{1}{4},$$
	where the last inequality follows from $\alpha \le \frac{1-\rho^2}{8 L}$. This verifies the condition in \eqref{eq:cond2-alpha-hb}. Therefore, both \eqref{eq:rate-grad-hb-nc} and \eqref{eq:rate-X-hb-nc} hold.
	
	Finally, notice that by \eqref{eq:bd-Cg} and $\alpha \le \frac{(1-\rho^2)^2}{16 L \sigma^{\frac{1}{2}}(nT)^{\frac{1}{4}}}$, we have
	\begin{align}\label{eq:bd-Cg-const-term}
		& 2C_g\big(2 n \sigma^2  + 4 \alpha^2 L^2 \sigma^2\big)  \le  \frac{4\rho^2 (1+\rho^2)}{(1-\rho^2)^2} \frac{4 \rho^2 \alpha^2}{(1-\rho^2)^2} \frac{8 L^2}{ n} 2 (2 n \sigma^2  + 4 \alpha^2 L^2 \sigma^2\big) \notag \\
		\le & \frac{\rho^4(1+\rho^2)}{n \sqrt{nT} \sigma}(2 n \sigma^2  + 4 \alpha^2 L^2 \sigma^2\big) \le \frac{2}{n \sqrt{nT} \sigma}(2 n \sigma^2  +  \sigma^2\big) = O\left(\frac{\sigma}{\sqrt{nT}}\right).
	\end{align}
	Hence, \eqref{eq:rate-grad-hb-nc} implies $\frac{1}{T}\sum_{t=0}^{T-1} \mathbb{E} \left[\left\|\nabla f\left(\overline{\vx}^t\right)\right\|^2\right] = O\left(\frac{\sigma}{\sqrt{nT}}\right)$. Also, by the same arguments in \eqref{eq:bd-Cg-const-term} and \eqref{eq:rate-X-hb-nc}, we have $\frac{1}{T} \sum_{t=0}^{T-1}\left\|\mathbf{X}_{\perp}^{t}\right\|^2=O\left(\frac{\sigma}{\sqrt{nT}}\right)$. Thus \eqref{eq:gradient-bound-cor1-hb} follows. This completes the proof.
\end{proof}


\subsection{General case of Algorithm \ref{alg:dad2-4}}

In this subsection, we analyze the convergence rate of Algorithm~\ref{alg:dad2-4} in the general case. Again, we write the updates in the more compact matrix form:
$$
\begin{aligned}
	&\mathbf{G}^{t-\frac{1}{2}}=\mathbf{G}^{t-1}+\nabla \mathbf{F}^t-\nabla \mathbf{F}^{t-1}, \\
	&\underline{\mathbf{G}}^t=\underline{\mathbf{G}}^{t-1}+\cQ\left[\mathbf{G}^{t-\frac{1}{2}}-\underline{\mathbf{G}}^{t-1}\right], \\
	&\mathbf{G}^t=\mathbf{G}^{t-\frac{1}{2}}+\gamma_g \underline{\mathbf{G}}^t(\mathbf{W}-\mathbf{I}), \\
	& \textbf{M}^t = \beta_1 \vM^{t-1} + (1-\beta_1) \textbf{G}^t, \\
	& \mathbf{X}^{t+\frac{1}{2}}=\mathbf{X}^{t}-\alpha \mathbf{M}^t,\\	&\underline{\mathbf{X}}^{t+1}=\underline{\mathbf{X}}^{t}+\cQ\left[\mathbf{X}^{t+\frac{1}{2}}-\underline{\mathbf{X}}^{t}\right], \\
	&\mathbf{X}^{t+1}=\mathbf{X}^{t+\frac{1}{2}}+\gamma_x \underline{\mathbf{X}}^{t+1}(\mathbf{W}-\mathbf{I}) .
\end{aligned}
$$
Let $ \widehat{\mathbf{W}}_x=\gamma_x \mathbf{W}+\left(1-\gamma_x\right) \mathbf{I}$ and $ \widehat{\mathbf{W}}_g=\gamma_g \mathbf{W}+\left(1-\gamma_g\right) \mathbf{I}$. Then we can write the $\vX$ and $\vG$ updates to
\begin{align}\label{eq:reform-X-t+2}
	& \mathbf{X}^{t+1}=\mathbf{X}^{t+\frac{1}{2}} \widehat{\mathbf{W}}_x+\gamma_x\left(\underline{\mathbf{X}}^{t+1}-\mathbf{X}^{t+\frac{1}{2}}\right)(\mathbf{W}-\mathbf{I}), \\ 
	& \mathbf{G}^{t+1}=\mathbf{G}^{t+\frac{1}{2}} \widehat{\mathbf{W}}_g+\gamma_g\left(\underline{\mathbf{G}}^{t+1}-\mathbf{G}^{t+\frac{1}{2}}\right)(\mathbf{W}-\mathbf{I}).
\end{align}
Again when $\vW$ satisfies the conditions in Assumption~\ref{ass:W}, $\widehat{\mathbf{W}}_x$ and $\widehat{\mathbf{W}}_g$ also satisfy all three conditions. Indeed, we have
$$
\widehat{\rho}_x := \left\|\widehat{\mathbf{W}}_x-\mathbf{J}\right\|_2<1, \,\,
\widehat{\rho}_g := \left\|\widehat{\mathbf{W}}_g-\mathbf{J}\right\|_2<1.
$$

The next lemma directly follows from \citet[Lemma 15]{yan2023compressed}. 
\begin{lemma}\label{lem:hb-comp-G}
	Under Assumptions \ref{ass:problemsetup}, \ref{ass:W}, \ref{assump:com}, and \ref{ass:gradient2}, it holds that 
	\begin{align*}
		& \mathbb{E}\left[\left\|\underline{\vG}^{t+1}-\vG^{t+\frac{1}{2}}\right\|^2\right] \leq 2 \eta^2 \mathbb{E}\left[\left\|\vG^t-\underline{\vG}^t\right\|^2\right]+6 \eta^2 n \sigma^2+4 \eta^2 L^2 \mathbb{E}\left[\left\|\mathbf{X}^{t+1}-\mathbf{X}^t\right\|^2\right], \\
		& \mathbb{E}\left[\left\|\underline{\vG}^{t+1}-\vG^{t+\frac{1}{2}}\right\|^2\right] \leq \frac{1+\eta^2}{2} \mathbb{E}\left[\left\|\vG^t-\underline{\vG}^t\right\|^2\right]+\frac{6 n \sigma^2}{1-\eta^2}+\frac{4 L^2}{1-\eta^2} \mathbb{E}\left[\left\|\mathbf{X}^{t+1}-\mathbf{X}^t\right\|^2\right].
	\end{align*}
\end{lemma}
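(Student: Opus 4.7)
The plan is to exploit the defining relation $\underline{\vG}^{t+1}=\underline{\vG}^t+\cQ[\vG^{t+\frac{1}{2}}-\underline{\vG}^t]$ together with the compression quality bound in Assumption~\ref{assump:com}, and then decompose the resulting inner norm into a ``compression-drift'' term and a ``new stochastic increment'' term via Young's inequality. The two inequalities in the lemma differ only in the choice of the Young parameter.

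First, I would set $\vZ:=\vG^{t+\frac{1}{2}}-\underline{\vG}^t$ and observe that
\begin{equation*}
\underline{\vG}^{t+1}-\vG^{t+\frac{1}{2}} \;=\; \cQ[\vZ]-\vZ,
\end{equation*}
so Assumption~\ref{assump:com} immediately gives $\EE_\cQ[\|\underline{\vG}^{t+1}-\vG^{t+\frac{1}{2}}\|^2]\le \eta^2\|\vZ\|^2$. Next I would write $\vZ=(\vG^{t+\frac{1}{2}}-\vG^t)+(\vG^t-\underline{\vG}^t)$ and apply Young's inequality with parameter $\eta_2>0$:
\begin{equation*}
\|\vZ\|^2 \le (1+\eta_2)\|\vG^t-\underline{\vG}^t\|^2 + (1+\eta_2^{-1})\|\vG^{t+\frac{1}{2}}-\vG^t\|^2.
\end{equation*}
Choosing $\eta_2=1$ will produce the coefficient $2\eta^2$ needed for the first inequality, while choosing $\eta_2=(1-\eta^2)/(2\eta^2)$ will produce $\eta^2(1+\eta_2)=(1+\eta^2)/2$ for the second inequality, with the residual factor $\eta^2(1+\eta_2^{-1})=\eta^2(1+\eta^2)/(1-\eta^2)\le 2/(1-\eta^2)$ appearing in front of the other terms.

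Third, I would invoke the tracking update $\vG^{t+\frac{1}{2}}-\vG^t=\nabla\vF^{t+1}-\nabla\vF^t$ (just the index shift $t\to t+1$ in \eqref{eq:g-half-update-hb}) and bound the expected squared norm by adding and subtracting $\nabla\vf^{t+1}-\nabla\vf^t$. Writing $A=\nabla\vF^{t+1}-\nabla\vf^{t+1}$, $B=\nabla\vf^{t+1}-\nabla\vf^t$, $C=\nabla\vf^t-\nabla\vF^t$ and expanding $\|A+B+C\|^2$, the cross terms $\langle A,B\rangle$ and $\langle A,C\rangle$ vanish in expectation because $\xi_i^{t+1}$ is independent of $\cF_{t+1}$ and $\EE_{t+1}[A]=0$, while $\langle B,C\rangle$ is absorbed by $\tfrac12(\|B\|^2+\|C\|^2)$. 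Combined with the per-agent variance bound $\EE\|A\|^2,\EE\|C\|^2\le n\sigma^2$ and the $L$-smoothness bound $\|B\|^2\le L^2\|\vX^{t+1}-\vX^t\|^2$, this gives the sharp estimate
\begin{equation*}
\EE\bigl[\|\vG^{t+\frac{1}{2}}-\vG^t\|^2\bigr]\le 3n\sigma^2+2L^2\,\EE\bigl[\|\vX^{t+1}-\vX^t\|^2\bigr],
\end{equation*}
and plugging this back gives the two stated inequalities.

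The main obstacle is obtaining the precise constants $6n\sigma^2$ and $4L^2$ (rather than the slack $8n\sigma^2$ and $4L^2$ produced by a naive two-step Young). The improvement hinges on using the conditional-mean-zero property of $\nabla\vF^{t+1}-\nabla\vf^{t+1}$ to eliminate two cross terms exactly, and only applying Young's inequality to the cross term between the two $\cF_{t+1}$-measurable differences $B$ and $C$. Everything else is bookkeeping, and because the argument is the same as in \citet[Lemma~15]{yan2023compressed}, the proof can legitimately be stated as ``directly follows from'' that reference.
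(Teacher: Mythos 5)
Your derivation is correct and supplies exactly the detail that the paper delegates to a citation of Lemma~15 in \citet{yan2023compressed} (the paper gives no independent proof of this lemma). The key ingredients you use --- writing $\underline{\vG}^{t+1}-\vG^{t+\frac{1}{2}}=\cQ[\vZ]-\vZ$ with $\vZ=\vG^{t+\frac{1}{2}}-\underline{\vG}^t$ so that Assumption~\ref{assump:com} applies, the two Young parameter choices $\eta_2=1$ and $\eta_2=(1-\eta^2)/(2\eta^2)$, and the conditional-mean-zero cancellation that yields $\EE\|\nabla\vF^{t+1}-\nabla\vF^t\|^2\le 3n\sigma^2+2L^2\,\EE\|\vX^{t+1}-\vX^t\|^2$ --- match that reference and reproduce the stated constants.
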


\begin{lemma}
	Under Assumptions \ref{ass:problemsetup}, \ref{ass:W}, \ref{assump:com}, and \ref{ass:gradient2}, let  $\alpha$ and $\gamma_x$ satisfy 
	$$\alpha\leq \frac{(1-\eta^2)^2}{32},\ \gamma_x\leq \min\left\{\frac{1-\widehat{\rho}_x^2}{60\eta},\frac{1-\eta^2}{25},\frac{\alpha}{\eta},\frac{\sqrt{2}-1}{2\eta}\right\}.$$ Then 
	\begin{align*}
		& \mathbb{E}\left[\left\|\vX_{\perp}^{t+1}\right\|^2\right] \leq  \frac{3+\widehat{\rho}_x^2}{4} \mathbb{E}\left[\left\|\mathbf{X}_{\perp}^t\right\|^2\right]+\alpha^2 \frac{4   \widehat{\rho}_x^2 }{ 1-\widehat{\rho}_x^2} \mathbb{E} \left[\|\vM_{\perp}^t\|^2\right] \\  & \quad\quad\quad\quad\quad\quad+ \frac{4\eta \gamma_x (1-\widehat{\rho}_x^2)}{3} \mathbb{E}\left[\left\|\mathbf{X}^t-\underline{\mathbf{X}}^t\right\|^2\right] + \frac{\alpha^2}{45}\mathbb{E} \left[\|\vM ^t\|^2\right],\\
		& \mathbb{E}\left[\left\|\mathbf{X}^{t+1}-\underline{\mathbf{X}}^{t+1}\right\|^2\right]
		\leq  \frac{1}{1-\eta^2}\mathbb{E}\left[\left\|\mathbf{X}_{\perp}^t\right\|^2\right] +  \frac{3+\eta^2}{4}\mathbb{E}\left[\left\|\mathbf{X}^t-\underline{\mathbf{X}}^t\right\|^2\right] \\  & \quad\quad\quad\quad\quad\quad+\frac{\alpha^2  }{1-\eta^2} \mathbb{E} \left[\|\vM ^t\|^2\right] +\frac{4\alpha^2  }{1-\eta^2}\mathbb{E} \left[\|\vM_{\perp}^t\|^2\right], \\
		& \mathbb{E}\left[\left\|\mathbf{X}^{t+1}-\mathbf{X}^t\right\|^2\right] 
		\leq 4 \alpha^2 \mathbb{E}\left[\left\|\vM^t\right\|^2\right]   + 12  \mathbb{E}\left[\left\|\mathbf{X}^t_{\perp}\right\|^2\right]  + 4 \sqrt{2} \eta \gamma_x \mathbb{E}\left[\left\|\underline{\mathbf{X}}^{t}-\mathbf{X}^{t}\right\|^2\right], \\ 
		& \mathbb{E} \left[\left\|\vM^{t}\right\|^2\right]  = \mathbb{E} \left[\left\|\vM_{\perp}^{t}\right\|^2\right] + n\mathbb{E} \left[\left\|\overline\vm^{t}\right\|^2\right],
	\end{align*}
\end{lemma}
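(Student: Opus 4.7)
These four inequalities play the same structural role for Algorithm~\ref{alg:dad2-4} that Lemma~\ref{lem:xprepmain} played for Algorithm~\ref{alg:dad2-2}: they control, in one step, the consensus error $\|\vX_\perp^{t+1}\|^2$, the compression residual $\|\vX^{t+1}-\underline\vX^{t+1}\|^2$, the per-step displacement $\|\vX^{t+1}-\vX^t\|^2$, and the Pythagorean split $\|\vM^t\|^2 = \|\vM_\perp^t\|^2 + n\|\overline\vm^t\|^2$. The key structural difference from Lemma~\ref{lem:xprepmain} is that Assumption~\ref{ass:gradient2} does not give a uniform bound on the momenta, so the substitution $\|\vY_\perp^t\|^2 \le nB^2/\delta$ used in the AMSGrad analysis is no longer available; my plan is therefore to replay the arguments of \eqref{eq:Xprepcom}--\eqref{eq:xprepcom2} but to retain $\|\vM^t\|^2$ and $\|\vM_\perp^t\|^2$ as explicit right-hand-side variables. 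The fourth identity is immediate: $\vM^t\vJ = \overline\vM^t$ and $\vM^t(\vI-\vJ) = \vM_\perp^t$ are Frobenius-orthogonal because $\vJ(\vI-\vJ)=\vzero$, and $\|\overline\vM^t\|^2 = n\|\overline\vm^t\|^2$ since every column of $\overline\vM^t$ equals $\overline\vm^t$.

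For the first bound I will apply Young's inequality to $\vX_\perp^{t+1} = \vX^{t+\frac{1}{2}}\widehat\vW_x(\vI-\vJ) + \gamma_x(\underline\vX^{t+1}-\vX^{t+\frac{1}{2}})(\vW-\vI)$ with parameter $\eta_1 = 7\eta\gamma_x/(1-\widehat\rho_x^2)$: the $\widehat\vW_x$-contraction argument of \eqref{eq:boundxprep} then produces $\frac{1+\widehat\rho_x^2}{2}\|\vX_\perp^t\|^2 + \frac{2\widehat\rho_x^2\alpha^2}{1-\widehat\rho_x^2}\|\vM_\perp^t\|^2$, while expanding $\vX^{t+\frac{1}{2}}-\underline\vX^t = (\vX^{t+\frac{1}{2}}-\vX^t)+(\vX^t-\underline\vX^t)$ as in \eqref{eq:xtt-2} with $\eta_2=1$ supplies $2\eta^2\|\vX^t-\underline\vX^t\|^2 + 2\eta^2\alpha^2\|\vM^t\|^2$ on the compression side. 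Using the hypothesized upper bounds on $\alpha$ and $\gamma_x$, the bookkeeping of \eqref{eq:inequ2} collapses the resulting coefficients to the stated $\frac{3+\widehat\rho_x^2}{4}$, $\frac{4\widehat\rho_x^2\alpha^2}{1-\widehat\rho_x^2}$, and $\frac{4\eta\gamma_x(1-\widehat\rho_x^2)}{3}$, while the factor $\frac{1}{45}$ on $\alpha^2\|\vM^t\|^2$ follows from $8\eta^2(1+\eta_1^{-1})\gamma_x^2\le 1/45$ under $\eta\gamma_x \le (1-\widehat\rho_x^2)/60$. The second bound is an entirely analogous reworking of \eqref{eq:gunderg}--\eqref{eq:xprepcom2} with Young's parameters $\eta_3=(1-\eta^2)/12$ and $\eta_2=(1-\eta^2)/(2\eta^2)$; the only substantive change is that $\|\vX_\perp^{t+\frac{1}{2}}\|^2 \le 2\|\vX_\perp^t\|^2 + 2\alpha^2\|\vM_\perp^t\|^2$ generates the explicit $\frac{4\alpha^2}{1-\eta^2}\|\vM_\perp^t\|^2$ term and the compression expansion supplies $\frac{\alpha^2}{1-\eta^2}\|\vM^t\|^2$.

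The genuinely new step is the third inequality, and the nonstandard factor $4\sqrt{2}\eta\gamma_x$ signals where the extra hypothesis $\gamma_x \le (\sqrt{2}-1)/(2\eta)$ will enter. My plan is to start from $\vX^{t+1}-\vX^t = -\alpha\vM^t + \gamma_x\underline\vX^{t+1}(\vW-\vI)$ and apply Young's with parameter $\tau=3$, which isolates the clean $4\alpha^2\|\vM^t\|^2$ coefficient. I will then rewrite $\underline\vX^{t+1}(\vW-\vI) = \vX_\perp^{t+\frac{1}{2}}(\vW-\vI) - \vE^{t+1}(\vW-\vI)$, where $\vE^{t+1}:=(\vX^{t+\frac{1}{2}}-\underline\vX^t)-\cQ[\vX^{t+\frac{1}{2}}-\underline\vX^t]$ is the compression residual with $\mathbb{E}_\cQ\|\vE^{t+1}\|^2 \le \eta^2\|\vX^{t+\frac{1}{2}}-\underline\vX^t\|^2$, and bound each piece using $\|\vW-\vI\|_2\le 2$ together with one more Young's step to split $\vX^{t+\frac{1}{2}}-\underline\vX^t$. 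The main obstacle will be converting the quadratic factor $\eta^2\gamma_x^2\|\vX^t-\underline\vX^t\|^2$ produced this way into the linear $4\sqrt{2}\eta\gamma_x\|\vX^t-\underline\vX^t\|^2$ claimed in the lemma; this is exactly what the hypothesis $\eta\gamma_x \le (\sqrt{2}-1)/2$ enables, by the linearization $\eta^2\gamma_x^2 = \eta\gamma_x \cdot \eta\gamma_x \le \tfrac{\sqrt{2}-1}{2}\,\eta\gamma_x$. Residual $\alpha^2\gamma_x^2\|\vM^t\|^2$ and $\alpha^2\gamma_x^2\|\vM_\perp^t\|^2$ contributions are of lower order in $\gamma_x$ and will be absorbed into the $4\alpha^2\|\vM^t\|^2$ slot via $\|\vM_\perp^t\|^2\le \|\vM^t\|^2$, and a final application of $\|\vX_\perp^{t+\frac{1}{2}}\|^2 \le 2\|\vX_\perp^t\|^2 + 2\alpha^2\|\vM_\perp^t\|^2$ is what ultimately produces the coefficient $12$ on $\|\vX_\perp^t\|^2$.
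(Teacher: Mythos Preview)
Your plan for the first two inequalities and the fourth identity is precisely the paper's approach: replay \eqref{eq:Xprepcom}--\eqref{eq:inequ2} and \eqref{eq:gunderg}--\eqref{eq:inequ3} with $\vY^t$ replaced by $\vM^t$ and $\gamma$ by $\gamma_x$, retaining $\|\vM^t\|^2$ and $\|\vM_\perp^t\|^2$ on the right instead of collapsing them via the AMSGrad bound \eqref{eq:bd-Y-perp}. No issues there.

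The third inequality is where you diverge from the paper, and your route has a small but genuine gap. The paper does \emph{not} start from $\vX^{t+1}-\vX^t = -\alpha\vM^t + \gamma_x\underline\vX^{t+1}(\vW-\vI)$; instead it uses the $\widehat\vW_x$-reformulation \eqref{eq:reform-X-t+2} to write
\[
\vX^{t+1}-\vX^t \;=\; \big[(\vX^{t+\frac12}-\vX^t)\widehat\vW_x + \vX^t(\widehat\vW_x-\vI)\big] \;+\; \gamma_x(\underline\vX^{t+1}-\vX^{t+\frac12})(\vW-\vI),
\]
applies Young with parameter $\eta_4=2\gamma_x\eta$, bounds the bracket by $2\|\vX^{t+\frac12}-\vX^t\|^2+8\|\vX_\perp^t\|^2$ (using $\|\widehat\vW_x\|_2\le 1$, $\|\widehat\vW_x-\vI\|_2\le 2$ and $\vone^\top(\widehat\vW_x-\vI)=\vzero$), and bounds the second piece by $8(1+\eta_4^{-1})\gamma_x^2\eta^2\big(\|\vX^{t+\frac12}-\vX^t\|^2+\|\vX^t-\underline\vX^t\|^2\big)$ via Assumption~\ref{assump:com}. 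The choice $\eta_4=2\gamma_x\eta$ makes the two $\|\vX^{t+\frac12}-\vX^t\|^2$ coefficients combine to $2(1+2\gamma_x\eta)^2\le 4$, while $8(1+\eta_4)\le 8\sqrt 2<12$ gives the $\|\vX_\perp^t\|^2$ coefficient and $8(1+\eta_4^{-1})\gamma_x^2\eta^2=4\gamma_x\eta+8\gamma_x^2\eta^2\le 4\sqrt2\,\gamma_x\eta$ gives the compression coefficient --- all three constants fall out exactly, and this is where $\gamma_x\le(\sqrt2-1)/(2\eta)$ is used.

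In your decomposition, Young with $\tau=3$ already saturates the budget $4\alpha^2$ on $\|\vM^t\|^2$, but your subsequent expansion of $\underline\vX^{t+1}(\vW-\vI)$ inevitably regenerates $\|\vX^{t+\frac12}-\vX^t\|^2=\alpha^2\|\vM^t\|^2$ contributions (via both $\vX_\perp^{t+\frac12}$ and the compression residual $\vE^{t+1}$), so the ``absorption'' you describe would push the $\|\vM^t\|^2$ coefficient strictly above $4\alpha^2$. You would have to take $\tau<3$ to leave slack, or --- cleaner --- switch to the paper's decomposition. Note also that in your route the $\|\vX_\perp^t\|^2$ coefficient comes out as $O(\gamma_x^2)\ll 12$, not $12$; the stated $12$ originates from the paper's $\vX^t(\widehat\vW_x-\vI)$ term, not from $\|\vX_\perp^{t+\frac12}\|^2$ as you suggest.
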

\begin{proof}
	By \eqref{eq:Xprepcom} with $\widehat{\mathbf{W}}$ replaced by $\widehat{\mathbf{W}}_x$, we have that for any $\eta_1>0$,
	\begin{align}
		\left\|\vX_{\perp}^{t+1}\right\|^2  \leq   (1+\eta_1)\left\|\vX^{t+\frac{1}{2}}\widehat{\mathbf{W}}_x(\mathbf{I}-\mathbf{J})\right\|^2 + 4\left(1+\eta_1^{-1}\right)\gamma_x^2 \left\|\left(\underline{\mathbf{X}}^{t+1}-\mathbf{X}^{t+\frac{1}{2}}\right)\right\|^2.  \label{eq:Xprepcom-hb}
	\end{align}
	In addition, 	
	by \eqref{eq:xtt-2} with $\vY$ replaced by $\vM$, we have  that for any $\eta_2>0$,
	\begin{align} \mathbb{E}\left[\left\|\underline{\mathbf{X}}^{t+1}-\mathbf{X}^{t+\frac{1}{2}}\right\|^2\right]\leq  \eta^2\left(1+\eta_2\right) \mathbb{E}\left[\left\|\mathbf{X}^t-\underline{\mathbf{X}}^t\right\|^2\right] + \eta^2\left(1+\eta_2^{-1}\right) \alpha^2 \mathbb{E}\left[  \|\vM^t\|^2\right]  
		\label{eq:xtt-hb1}
	\end{align}
	Moreover, we obtain from \eqref{eq:boundxprep} with $\mathbf{Y}_{\perp}^{t}$ replaced by $\mathbf{M}_{\perp}^{t}$ and $\widehat{\vW}$ replaced by $\widehat{\vW}_x$ that
	\begin{align}
		&  \left\|\mathbf{X}^{t+\frac{1}{2}} \widehat{\vW}_x(\mathbf{I}-\mathbf{J})\right\|^2 
		\leq  \frac{1+\widehat\rho_x^2}{2}  \left\|\mathbf{X}_{\perp}^{t}\right\|^2 +\frac{2 \widehat\rho_x^2 \alpha^2}{1-\widehat\rho_x^2}  \left\|\mathbf{M}_{\perp}^{t}\right\|^2. \label{eq:boundxprep-hb}
	\end{align}
	Substituting \eqref{eq:boundxprep-hb} and \eqref{eq:xtt-hb1} with $\eta_2=1$ into \eqref{eq:Xprepcom-hb} with $\eta_1 = \frac{7\eta \gamma_x}{1-\widehat{\rho}^2} $, we obtain the first desired result by using \eqref{eq:inequ2}.
	
	By \eqref{eq:gunderg}, we obtain that for any $\eta_3>0$,
	\begin{equation}\label{eq:gunderg-hb}
		\begin{aligned}
			& \mathbb{E}\left[\left\|\vX^{t+1}-\underline{\vX}^{t+1}\right\|^2\right] 
			\leq  \left(1+\eta_3\right)\left(1+2 \gamma_x\right)^2 \mathbb{E}\left[\left\|\underline{\vX}^{t+1}-\vX^{t+\frac{1}{2}}\right\|^2\right]+\left(1+\eta_3^{-1}\right) 4 \gamma_x^2 \mathbb{E}\left[\left\|\vX_{\perp}^{t+\frac{1}{2}}\right\|^2\right]. 
		\end{aligned}
	\end{equation}
	Also, it follows from \eqref{eq:xprep12} with $\vY$ replaced by $\vM$ that 
	\begin{align}
		&\mathbb{E}\left[\left\|\mathbf{X}_{\perp}^{t+\frac{1}{2}}\right\|^2\right] 
		\leq  2\mathbb{E}\left[\left\|\mathbf{X}_{\perp}^t\right\|^2\right]+2 \alpha^2\mathbb{E}\left[\left\|\mathbf{M}_{\perp}^t\right\|^2\right]. 
		\label{eq:xprep12-hb}
	\end{align}
	Substituting \eqref{eq:xprep12-hb} and \eqref{eq:xtt-hb1} with $\eta_2= \frac{1-\eta^2}{2\eta^2}$ into \eqref{eq:gunderg-hb} with $\eta_3= \frac{1-\eta^2}{12}$, we obtain the second desired result by using \eqref{eq:inequ3}.

	To show the third desired inequality, we notice
	\begin{align}
		\notag
		&\mathbb{E}\left[\left\|\mathbf{X}^{t+1}-\mathbf{X}^t\right\|^2\right]=\mathbb{E}\left[\left\|\mathbf{X}^{t+\frac{1}{2}} \widehat{\mathbf{W}}_x-\mathbf{X}^t+\gamma_x\left(\underline{\mathbf{X}}^{t+1}-\mathbf{X}^{t+\frac{1}{2}}\right)(\mathbf{W}-\mathbf{I})\right\|^2\right] \\
		\leq & \left(1+\eta_4\right) \mathbb{E}\left[\left\|\mathbf{X}^{t+\frac{1}{2}} \widehat{\mathbf{W}}_x-\mathbf{X}^t\right\|^2\right]+\left(1+\eta_4^{-1}\right) \mathbb{E}\left[\left\|\gamma_x\left(\underline{\mathbf{X}}^{t+1}-\mathbf{X}^{t+\frac{1}{2}}\right)(\mathbf{W}-\mathbf{I})\right\|^2\right] \notag\\ \notag
		\leq & \left(1+\eta_4\right) \mathbb{E}\left[\left\|(\mathbf{X}^{t+\frac{1}{2}} - \vX^t) \widehat{\mathbf{W}}_x + \mathbf{X}^t(\widehat{\mathbf{W}}_x - \vI)\right\|^2\right]+4\left(1+\eta_4^{-1}\right)\gamma_x^2 \mathbb{E}\left[\left\|\underline{\mathbf{X}}^{t+1}-\mathbf{X}^{t+\frac{1}{2}}\right\|^2\right] \\
		\leq & \left(1+\eta_4\right) \bigg( 2 \mathbb{E}\left[\left\|\mathbf{X}^{t+\frac{1}{2}} - \vX^t\right\|^2\right] + 8  \mathbb{E}\left[\left\|\mathbf{X}^t_{\perp}\right\|^2\right] \bigg) \\ &\quad\quad\quad\quad\quad\quad+ 8\left(1+\eta_4^{-1}\right)\gamma_x^2 \eta^2 \left(\mathbb{E}\left[\left\|{\mathbf{X}}^{t+\frac{1}{2}}-\mathbf{X}^{t}\right\|^2\right]  + \mathbb{E}\left[\left\|\underline{\mathbf{X}}^{t}-\mathbf{X}^{t}\right\|^2\right]
		\right) \notag\\
		\leq & 4 \mathbb{E}\left[\left\|{\mathbf{X}}^{t+\frac{1}{2}}-\mathbf{X}^{t}\right\|^2\right]   + 12  \mathbb{E}\left[\left\|\mathbf{X}^t_{\perp}\right\|^2\right]  + 4 \sqrt{2} \eta \gamma_x \mathbb{E}\left[\left\|\underline{\mathbf{X}}^{t}-\mathbf{X}^{t}\right\|^2\right] \notag\\
		= &  4 \alpha^2 \mathbb{E}\left[\left\|\vM^t\right\|^2\right]   + 12  \mathbb{E}\left[\left\|\mathbf{X}^t_{\perp}\right\|^2\right]  + 4 \sqrt{2} \eta \gamma_x \mathbb{E}\left[\left\|\underline{\mathbf{X}}^{t}-\mathbf{X}^{t}\right\|^2\right], 
		\label{eq:vfdiff2-hb}
	\end{align}
	where $\eta_4$ is any positive scalar, the second inequality holds by $\|\mathbf{W}-\mathbf{I}\|_2 \leq 2$, the third one follows from \eqref{eq:xtt-2}, and in the fourth inequality, we take $\eta_4 = 2\gamma_x \eta$ and have from $\gamma_x\leq \frac{\sqrt{2}-1}{2\eta}$ that 
	$$2(1+\eta_4) + 8\left(1+\eta_4^{-1}\right)\gamma_x^2 \eta^2 = 2(2\gamma_x \eta +1)^2\leq 4,\  
	8(1+\eta_4)\leq 12,\ 8\left(1+\eta_4^{-1}\right)\gamma_x^2 \eta^2 \leq 4 \sqrt{2} \eta \gamma_x.$$
	This completes the proof of the third desired inequality. The fourth desired equation follows straightforwardly from the fact $\langle \vM_\perp, \overline{\vM}\rangle = 0$.
\end{proof}

The following lemma bounds the consensus error and compression error of $\vG$.
\begin{lemma}
	\label{lem:gbound-hb}
	Under Assumptions \ref{ass:problemsetup}, \ref{ass:W}, \ref{assump:com}, and \ref{ass:gradient2}, 
	let  $\gamma_g \leq \min \left\{\frac{\sqrt{1-\widehat{\rho}_g^2}}{12 \eta}, \frac{1-\eta^2}{25}\right\}$.
	Then 
	\begin{align*} &\mathbb{E}\left[\left\|\vG_{\perp}^{t+1}\right\|^2\right]  \leq \frac{2+\widehat{\rho}_g^2}{3} \mathbb{E}\left[\left\|\vG_{\perp}^t\right\|^2\right]+\frac{48 \eta^2 \gamma_g^2}{1-\widehat{\rho}_g^2} \mathbb{E}\left[\left\|\vG^t-\underline{\vG}^t\right\|^2\right]+11 n \sigma^2+\frac{5 L^2}{1-\widehat{\rho}_g^2} \mathbb{E}\left[\left\|\mathbf{X}^{t+1}-\mathbf{X}^t\right\|^2\right],\\
		&\mathbb{E}\left[\left\|\vG^{t+1}-\underline{\vG}^{t+1}\right\|^2\right] \leq  \frac{3+\eta^2}{4} \mathbb{E}\left[\left\|\vG^t-\underline{\vG}^t\right\|^2\right]+\frac{104\gamma_g^2}{1-\eta^2} \mathbb{E}\left[\left\|\vG^t_{\perp}\right\|^2\right] 
		+\frac{6L^2}{1-\eta^2} \mathbb{E}\left[\left\|\mathbf{X}^{t+1}-\mathbf{X}^t\right\|^2\right] 
		+\frac{9n\sigma^2}{1-\eta^2}.
	\end{align*}
\end{lemma}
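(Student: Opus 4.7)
Both recursions follow the pattern used in the proof of Lemma~\ref{lem:xprepmain}, with two differences: the role of $\vX$ is now played by $\vG$, and the pre-mixing iterate $\vG^{t+\frac{1}{2}}=\vG^t+(\nabla\vF^{t+1}-\nabla\vF^t)$ is obtained by a stochastic increment rather than a deterministic descent step. Accordingly, wherever the $\vX$-analysis uses $\alpha\vY^t$ I instead use $\nabla\vF^{t+1}-\nabla\vF^t$ and control it in expectation via \eqref{eq:vfdiff}; wherever it uses the raw contractivity of $\cQ$ I invoke Lemma~\ref{lem:hb-comp-G}.

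\textbf{First recursion.}
Since $\widehat{\vW}_g\vJ=\vJ$, the $\vG$-analogue of \eqref{eq:reform-X-t+2} yields $\vG^{t+1}_\perp=\vG^{t+\frac{1}{2}}(\widehat{\vW}_g-\vJ)+\gamma_g(\underline{\vG}^{t+1}-\vG^{t+\frac{1}{2}})(\vW-\vI)$. I apply Young's inequality with an outer parameter $\eta_1>0$ (and $\|\vW-\vI\|_2\le 2$) to separate a mixing-contraction term from a compression-residual term. For the former, inserting $\vG^{t+\frac{1}{2}}=\vG^t+(\nabla\vF^{t+1}-\nabla\vF^t)$ and applying an inner Young step with parameter $\eta_5=(1-\widehat{\rho}_g^2)/(2\widehat{\rho}_g^2)$---exactly as in the derivation of \eqref{eq:boundxprep}---gives
\begin{equation*}
\|\vG^{t+\frac{1}{2}}(\widehat{\vW}_g-\vJ)\|^2\le\tfrac{1+\widehat{\rho}_g^2}{2}\|\vG^t_\perp\|^2+\tfrac{2\widehat{\rho}_g^2}{1-\widehat{\rho}_g^2}\|\nabla\vF^{t+1}-\nabla\vF^t\|^2,
\end{equation*}
after which I pass to full expectation and bound the stochastic increment by \eqref{eq:vfdiff}. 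For the compression-residual term I apply the first inequality of Lemma~\ref{lem:hb-comp-G}, which simultaneously contributes to $\|\vG^t-\underline{\vG}^t\|^2$, $n\sigma^2$, and $\|\vX^{t+1}-\vX^t\|^2$. I then tune $\eta_1$ so that $(1+\eta_1)\frac{1+\widehat{\rho}_g^2}{2}\le\frac{2+\widehat{\rho}_g^2}{3}$ (mirroring \eqref{eq:inequ2}) and use the hypothesis $\gamma_g\le\sqrt{1-\widehat{\rho}_g^2}/(12\eta)$ to absorb the resulting $(1+\eta_1^{-1})\gamma_g^2\eta^2$ factor; collecting constants produces the $\frac{48\eta^2\gamma_g^2}{1-\widehat{\rho}_g^2}$, $11n\sigma^2$, and $\frac{5L^2}{1-\widehat{\rho}_g^2}$ coefficients in the claim.

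\textbf{Second recursion.}
The algebra leading to \eqref{eq:gunderg}, applied to $\vG$, gives $\vG^{t+1}-\underline{\vG}^{t+1}=(\underline{\vG}^{t+1}-\vG^{t+\frac{1}{2}})(\gamma_g(\vW-\vI)-\vI)+\gamma_g\vG^{t+\frac{1}{2}}(\vI-\vJ)(\vW-\vI)$. A Young's step with parameter $\eta_3>0$, together with $\|\gamma_g(\vW-\vI)-\vI\|_2\le 1+2\gamma_g$ and $\|\vW-\vI\|_2\le 2$, yields
\begin{equation*}
\|\vG^{t+1}-\underline{\vG}^{t+1}\|^2\le(1+\eta_3)(1+2\gamma_g)^2\|\underline{\vG}^{t+1}-\vG^{t+\frac{1}{2}}\|^2+4(1+\eta_3^{-1})\gamma_g^2\|\vG^{t+\frac{1}{2}}_\perp\|^2.
\end{equation*}
I bound $\|\vG^{t+\frac{1}{2}}_\perp\|^2\le 2\|\vG^t_\perp\|^2+2\|\nabla\vF^{t+1}-\nabla\vF^t\|^2$ by a two-term Young inequality and apply \eqref{eq:vfdiff} after expectation, and I bound $\mathbb{E}\|\underline{\vG}^{t+1}-\vG^{t+\frac{1}{2}}\|^2$ by the second inequality of Lemma~\ref{lem:hb-comp-G}. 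Setting $\eta_3=(1-\eta^2)/12$ and using $\gamma_g\le(1-\eta^2)/25$ reproduces the three bounds in \eqref{eq:inequ3}---in particular the contraction $(1+\eta_3)(1+2\gamma_g)^2\cdot\frac{1+\eta^2}{2}\le\frac{3+\eta^2}{4}$ on $\|\vG^t-\underline{\vG}^t\|^2$---and direct bookkeeping of the remaining constants yields the $\frac{104\gamma_g^2}{1-\eta^2}$, $\frac{6L^2}{1-\eta^2}$, and $\frac{9n\sigma^2}{1-\eta^2}$ coefficients.

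\textbf{Main obstacle.}
The most delicate step is the first recursion: the inner Young step unavoidably produces a factor $\frac{2\widehat{\rho}_g^2}{1-\widehat{\rho}_g^2}$ in front of $\|\nabla\vF^{t+1}-\nabla\vF^t\|^2$, a stochastic quantity whose expectation carries the noise variance $2n\sigma^2$. Producing universal (as opposed to topology-dependent) constants in the $n\sigma^2$ and $\|\vX^{t+1}-\vX^t\|^2$ terms of the claim therefore requires an exact calibration of the outer Young parameter $\eta_1$ against the smallness of $\gamma_g^2\eta^2$ enforced by $\gamma_g\le\sqrt{1-\widehat{\rho}_g^2}/(12\eta)$. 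This is the key new difficulty relative to Lemma~\ref{lem:xprepmain}, where the analogous increment $-\alpha\vY^t$ is deterministic and uniformly bounded via \eqref{eq:bd-M-Y-2norm}, and can be absorbed by purely algebraic Lyapunov combinations without any cancellation between $\gamma_g$ and $1-\widehat{\rho}_g^2$.
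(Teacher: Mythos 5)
The paper's proof of this lemma is not a derivation at all---it cites Lemma 18 of \citet{yan2023compressed} and substitutes $\vG$ for their $\vY$---so your self-contained derivation modeled on Lemma~\ref{lem:xprepmain} and the noncompressed Lemma~\ref{lem:gbound} is a genuinely different route. Your second recursion is sound: with $\eta_3=(1-\eta^2)/12$, $\gamma_g\le(1-\eta^2)/25$, the second inequality of Lemma~\ref{lem:hb-comp-G}, and $\mathbb{E}\|\vG^{t+\frac12}_\perp\|^2\le 2\mathbb{E}\|\vG^t_\perp\|^2+2\mathbb{E}\|\nabla\vF^{t+1}-\nabla\vF^t\|^2$ bounded via \eqref{eq:vfdiff}, all the stated coefficients come out.

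The first recursion, however, does not close as described, and your ``main obstacle'' paragraph names the right term but misdiagnoses the fix. With the outer Young parameter $\eta_1$ and the inner one $\eta_5=(1-\widehat{\rho}_g^2)/(2\widehat{\rho}_g^2)$, the mixing branch contributes
\begin{equation*}
(1+\eta_1)\,\frac{\widehat{\rho}_g^2(1+\widehat{\rho}_g^2)}{1-\widehat{\rho}_g^2}\,\mathbb{E}\big[\|\nabla\vF^{t+1}-\nabla\vF^t\|^2\big]\;\le\;(1+\eta_1)\,\frac{2\widehat{\rho}_g^2}{1-\widehat{\rho}_g^2}\big(2n\sigma^2+L^2\mathbb{E}\|\vX^{t+1}-\vX^t\|^2\big).
\end{equation*}
To force $(1+\eta_1)\tfrac{1+\widehat{\rho}_g^2}{2}\le\tfrac{2+\widehat{\rho}_g^2}{3}$ you must take $\eta_1=O(1-\widehat{\rho}_g^2)$, hence $(1+\eta_1)=O(1)$ and the coefficient of $n\sigma^2$ from this branch is $\Theta\big(\widehat{\rho}_g^2/(1-\widehat{\rho}_g^2)\big)$, which blows up as $\widehat{\rho}_g\to 1$ (equivalently as $\gamma_g\to 0$, since $1-\widehat{\rho}_g^2=\Theta(\gamma_g)$). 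The hypothesis $\gamma_g\le\sqrt{1-\widehat{\rho}_g^2}/(12\eta)$ tames only the \emph{other} branch---the compression residual $4(1+\eta_1^{-1})\gamma_g^2\cdot 6\eta^2 n\sigma^2$ from the first inequality of Lemma~\ref{lem:hb-comp-G}, which carries the explicit factor $\gamma_g^2\eta^2\le(1-\widehat{\rho}_g^2)/144$---and it does nothing to the mixing branch, which has no such factor. So the ``calibration'' you invoke cannot convert $\Theta\big(n\sigma^2/(1-\widehat{\rho}_g^2)\big)$ into the claimed topology-independent $11n\sigma^2$. This is consistent with the noncompressed Lemma~\ref{lem:gbound}, where the analogous noise coefficient is the non-universal $\tfrac{\rho^2(1+\rho^2)}{1-\rho^2}$; obtaining the universal constant requires a qualitatively different treatment of the variance term than the two-stage Young argument you outline, and that missing idea is the actual gap.
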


\begin{proof}
	By the second inequality after (69) in \citet[Lemma 18]{yan2023compressed} (with $\vY$ in \citet{yan2023compressed} replaced by our notation $\vG$), we have the first desired inequality. The second desired inequality follows from
	the inductions after (71) in \citet[Lemma 18]{yan2023compressed} (with $\vY$ in \citet{yan2023compressed} replaced by our notation $\vG$). 
\end{proof}

Notice that Lemmas \ref{lem:zzzhb3}, \ref{lem:mbound}, and \eqref{eq:mperp} still hold for the compressed case. With these, we are ready to show the convergence rate of Algorithm~\ref{alg:dad2-4} in the general case.

Denote
\begin{align*}
	\Omega^t=\bigg(&   \frac{4}{\alpha } (\mathbb{E}[f(\vz^{t+1})] - f^*)
	, \mathbb{E}\left[\|\vX_{\perp}^{t+1}\|^2\right] , 
	\mathbb{E}\left[\left\|\mathbf{X}^{t+1}-\underline{\mathbf{X}}^{t+1}\right\|^2\right] , 
	\mathbb{E}\left[\left\|\overline{\vm}^{t}\right\|^2\right], \\& \quad \quad \quad
	\mathbb{E}\left[\left\|\mathbf{M}_{\perp}^{t+1}\right\|^2\right],
	\mathbb{E}\left[\left\|\vG_{\perp}^{t+1}\right\|^2\right],  \mathbb{E}\left[\left\|\vG^{t+1}-\underline{\vG}^{t+1}\right\|^2\right]
	\bigg)^{\top} .
\end{align*}
Then Lemmas \ref{lem:zzzhb3}, \ref{lem:mbound} and \eqref{eq:mperp}, together with Lemmas~\ref{lem:hb-comp-G}--\ref{lem:gbound-hb}, imply 
\begin{align}
	\label{eq:omegatineq}
	\Omega^{t} \leq \mathbf{A} \Omega^{t-1} + \overline{\mathbf{A}} \Omega^{t} + \vb^t +\mathbf{c},
\end{align}
where
$$
\mathbf{A}=\left(\begin{array}{ccccccc}
	1 & \frac{4\alpha L^3}{n} + \frac{6 L^2}{ n}  & 0  & \frac{6L^2\alpha^2 \beta_1^2}{(1-\beta_1)^2} & 0 & 0 & 0
	\\		
	0  &  \frac{3+\widehat{\rho}_x^2}{4}  &  \frac{4\eta \gamma_x (1-\widehat{\rho}_x^2)}{3} & 0& \alpha^2 \left(\frac{4   \widehat{\rho}_x^2 }{ 1-\widehat{\rho}_x^2}+\frac{1}{45} \right) & 0 & 0
	\\
	0 & \frac{1}{1-\eta^2} &   \frac{3+\eta^2}{4} & 0 & \frac{5\alpha^2  }{1-\eta^2} & 0& 0
	\\
	0   &  (1-\beta_1)\frac{2L^2}{n}  & 0 &  \beta_1  & 0 & 0 & 0
	\\
	0 & 0& 0 & 0 & \beta_1 &  0 &  0
	\\
	0 & \frac{60 L^2}{1-\widehat{\rho}_g^2}& \frac{20 \sqrt{2} \eta \gamma_x L^2}{1-\widehat{\rho}_g^2} & 0 & \frac{20\alpha^2 L^2}{1-\widehat{\rho}_g^2}&  \frac{2+\widehat{\rho}_g^2}{3} & \frac{48 \eta^2 \gamma_g^2}{1-\widehat{\rho}_g^2}
	\\
	0 & \frac{72 L^2}{1-\eta^2}& \frac{24 \sqrt{2} \eta \gamma_x L^2}{1-\eta^2} & 0 & \frac{24\alpha^2 L^2}{1-\eta^2} &  \frac{104\gamma_g^2}{1-\eta^2} &  \frac{3+\eta^2}{4}
\end{array}\right),
$$ 
$$ \overline{\mathbf{A}} = \left(\begin{array}{ccccccc}
	0 & 0 & 0 & 0 &0  & 0 &0 
	\\		
	0  &  0 & 0 &\frac{\alpha^2 n}{45}& 0  & 0 &0 
	\\
	0 & 0& 0 & \frac{\alpha^2 n }{1-\eta^2} & 0 & 0 &0 
	\\
	0    & 0 & 0&  0  & 0  & 0 &0 
	\\
	0 & 0& 0 & 0 & 0&  1-\beta_1   &0 
	\\
	0  &  0&0  & \frac{20\alpha^2 n L^2}{1-\widehat{\rho}_g^2}& 0  & 0 &0 
	\\
	0  &  0&0  & \frac{24\alpha^2 n L^2}{1-\eta^2}& 0  & 0 &0 
\end{array}\right), \quad
\mathbf{c}=\left(\begin{array}{c}
	\frac{2\alpha L \sigma^2}{n}  
	\\
	0
	\\
	0
	\\
	\left(1-\beta_1\right)  \frac{ \sigma^2}{n}  
	\\
	0\\
	11 n \sigma^2 \\
	\frac{9n\sigma^2}{1-\eta^2}
\end{array}\right),
$$	
$$
\mathbf{b}^t=\left(\begin{array}{c}
	-\mathbb{E}\left[\left\|\nabla f\left(\overline{\vx}^{t}\right)\right\|^2\right]
	\\
	0
	\\
	0
	\\
	2\left(1-\beta_1\right)  \mathbb{E}\left[\left\|\nabla f\left(\overline{\vx}^{t}\right)\right\|^2\right]
	\\
	0\\
	0 \\
	0
\end{array}\right), \text{ for all }t\ge 0.
$$

\begin{theorem}[Complete statement of Theorem~\ref{thm:hb-conver-com-main}]
	\label{thm:hb-conver-com}
	Suppose Assumptions \ref{ass:problemsetup}, \ref{ass:W}, \ref{assump:com}, and \ref{ass:gradient2} hold. Let 
	$\gamma_x, \gamma_g$ and $\alpha$ satisfy 
	$$\gamma_x\leq \min\left\{\frac{1-\widehat{\rho}_x^2}{60\eta},\frac{1-\eta^2}{25},\frac{\alpha}{\eta},\frac{\sqrt{2}-1}{2\eta}\right\},\ \gamma_g \leq \min \left\{ \frac{1-\widehat{\rho}_g^2}{25\eta},\frac{1-\widehat{\rho}_g^2}{25L}, \frac{1-\eta^2}{25}, \frac{1-\eta^2}{25L}\right\},$$ 
	\begin{equation}
		\label{eq:alphaupdate}
		\begin{aligned}
			\alpha\leq \min& \left\{ \frac{1}{16b}, \ 
			\frac{ \gamma_g (1-\eta^2)}{32 }, 
			\frac{\gamma_g (1-\eta^2)}{12 L \sqrt{n}}, \ 
			\frac{\gamma_g(1-\widehat{\rho}_x^2)}{12 \sqrt{b}},\ \frac{\gamma_g(1-\widehat{\rho}_g^2)}{12  {L} \sqrt{n}}, \right. \\ & 
			\quad \quad\left.\sqrt{\frac{(1-\beta_1)\gamma_g^2}{2 L \left(\frac{6 L \beta_1^2}{(1-\beta_1)^2} + \frac{ b}{45(1-\widehat{\rho}_x^2)} + \frac{24L^2+1}{(1-\eta^2)^2} + \frac{20 L^2}{(1-\widehat{\rho}_g^2)^2} 
					\right)}}\right\}	
		\end{aligned}
	\end{equation}
	with $b=4\left(9L + 1+\frac{72L^2+1}{(1-\eta^2)^2} + \frac{60L^2}{(1-\widehat{\rho}_g^2)^2} \right)$.
	Then it holds that
	\begin{align*}
		& \frac{1}{2 L T} \sum_{t=0}^{T-1} \mathbb{E}\left[\left\|\nabla f\left(\overline{\vx}^{t}\right)\right\|^2\right] + 	\frac{1}{nT}\sum_{t=0}^{T-1}\mathbb{E}\left[\|\vX_{\perp}^{t}\|^2\right] \\ 
		\leq &\frac{4}{\alpha LT} (f(\overline{\vx}^{0})- f^*) + \frac{2\alpha  \sigma^2 }{n} +	  \frac{ \gamma_g^2   \sigma^2 (1-\beta_1)}{  2 L n }  
		+
		\frac{11 \gamma_g^2 \sigma^2}{ 1-\widehat{\rho}_g^2}+
		\frac{9 \gamma_g^2 \sigma^2}{ (1-\eta^2)^2}.
	\end{align*}
\end{theorem}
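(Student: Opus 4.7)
The plan is to exploit the vector recursion
$$\Omega^t \le \mathbf{A}\Omega^{t-1} + \overline{\mathbf{A}}\Omega^t + \mathbf{b}^t + \mathbf{c}$$
displayed immediately before the theorem, which aggregates the estimates from Lemmas~\ref{lem:zzzhb3}, \ref{lem:mbound}, \ref{lem:hb-comp-G}, and \ref{lem:gbound-hb} together with the earlier recursions for $\|\vX_\perp\|^2$, $\|\vX-\underline{\vX}\|^2$, and $\|\vM_\perp\|^2$, plus the decomposition $\|\vM^t\|^2 = \|\vM_\perp^t\|^2 + n\|\overline{\vm}^t\|^2$. Left-multiplying by a nonnegative row vector $\mathbf{q} = (q_1, \ldots, q_7)^\top$ yields the scalar inequality
$$\mathbf{q}^\top \Omega^t \le \mathbf{q}^\top \Omega^{t-1} + \big(\mathbf{q}^\top(\mathbf{A}+\overline{\mathbf{A}}) - \mathbf{q}^\top\big)\Omega^{t-1} + \mathbf{q}^\top \overline{\mathbf{A}}(\Omega^t - \Omega^{t-1}) + \mathbf{q}^\top \mathbf{b}^t + \mathbf{q}^\top \mathbf{c}.$$
The central task is to choose $\mathbf{q}$ so that every component of $\mathbf{q}^\top(\mathbf{A}+\overline{\mathbf{A}}) - \mathbf{q}^\top$ is nonpositive, with the entry corresponding to $\|\vX_\perp\|^2$ at most $-L/n$ (producing the stated consensus error term after dividing by $L$), and so that the coefficient $-q_1 + 2(1-\beta_1)q_4$ of $\|\nabla f\|^2$ in $\mathbf{q}^\top \mathbf{b}^t$ is at most $-\tfrac{1}{2}$ (so that $\tfrac{1}{2L}\|\nabla f\|^2$ appears on the final LHS).

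I will fix the components of $\mathbf{q}$ in a triangular order. First set $q_1 = 1$, so that the first coordinate of $\mathbf{q}^\top \Omega^t$ telescopes the objective gap $\tfrac{4}{\alpha}(f(\vz^{t+1}) - f^*)$. Next choose $q_7, q_6$ of order $\gamma_g^2 L/(n(1-\eta^2))$ and $\gamma_g^2 L/(n(1-\widehat{\rho}_g^2))$ respectively, exploiting the diagonal slack $\tfrac{1-\eta^2}{4}$ and $\tfrac{1-\widehat{\rho}_g^2}{3}$ of rows 7 and 6. Then pick $q_5$ of order $\alpha^2 L^2$ times the previous choices to absorb $\|\vM_\perp\|^2$, followed by $q_4 = \Theta(\gamma_g^2)$, simultaneously small enough to satisfy the $\|\nabla f\|^2$ condition and large enough that $q_4 c_4$ generates the $\tfrac{\gamma_g^2 \sigma^2(1-\beta_1)}{2Ln}$ noise term; then $q_3 = \Theta(\alpha/(1-\eta^2))$ for the model-compression row; and finally $q_2 = \Theta(L/(n(1-\widehat{\rho}_x^2)))$ for the consensus row, in direct analogy with the two-dimensional Lyapunov argument of Lemma~\ref{lem:xprepmain}. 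The chain of inequalities in \eqref{eq:alphaupdate} will arise as exactly the quantitative conditions on $\alpha, \gamma_x, \gamma_g$ needed to make each of the six nonpositivity requirements hold.

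Summing from $t=0$ to $T-1$ telescopes $\mathbf{q}^\top \Omega^t$, and invoking $\Omega^t \ge 0$ together with the initialization ($\vX_\perp^0 = \vX^0 - \underline{\vX}^0 = \vM^{-1} = \vG^{-1} = \underline{\vG}^{-1} = \vzero$, so all coordinates of $\Omega^{-1}$ vanish except the first, equal to $\tfrac{4}{\alpha}(f(\overline{\vx}^0) - f^*)$) gives
$$\frac{1}{T}\sum_{t=0}^{T-1}\left(\frac{L}{2}\mathbb{E}\|\nabla f(\overline{\vx}^t)\|^2 + \frac{L}{n}\mathbb{E}\|\vX_\perp^t\|^2\right) \le \frac{4}{\alpha T}(f(\overline{\vx}^0) - f^*) + \mathbf{q}^\top \mathbf{c}.$$
Direct evaluation of $\mathbf{q}^\top \mathbf{c}$ with the chosen orders reproduces $\tfrac{2\alpha L \sigma^2}{n} + \tfrac{\gamma_g^2 \sigma^2(1-\beta_1)}{2n} + \tfrac{11 \gamma_g^2 L \sigma^2}{1-\widehat{\rho}_g^2} + \tfrac{9 \gamma_g^2 L \sigma^2}{(1-\eta^2)^2}$, and dividing by $L$ delivers the bound.

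The hardest step will be the combinatorial balancing of the seven $q_i$ so that all six feedback entries of $\mathbf{q}^\top(\mathbf{A}+\overline{\mathbf{A}}) - \mathbf{q}^\top$ (other than the neutral row~1) are simultaneously nonpositive without inflating $\mathbf{q}^\top \mathbf{c}$. In particular, the $\|\overline{\vm}^t\|^2$-column of $\overline{\mathbf{A}}$ is nonzero in rows $2, 3, 6, 7$ with entries scaling like $\alpha^2 n$ divided by $1-\widehat{\rho}_g^2$ or $1-\eta^2$, and all of these feed back into the $q_4$-weighted row~4 whose diagonal slack is only $(1-\beta_1)$. Forcing simultaneous nonpositivity here is precisely what yields the square-root constraint $\alpha \le \sqrt{(1-\beta_1)\gamma_g^2/[2L(\cdots)]}$ in \eqref{eq:alphaupdate}; verifying that this bound suffices to close the $7$-dimensional Lyapunov cascade is the delicate bookkeeping at the heart of the proof.
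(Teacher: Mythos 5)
Your proposal follows essentially the same architecture as the paper's proof: left-multiply the seven-dimensional recursion $\Omega^t \le \mathbf{A}\Omega^{t-1} + \overline{\mathbf{A}}\Omega^t + \mathbf{b}^t + \mathbf{c}$ by a nonnegative vector $\mathbf{q}$, force every entry of $\mathbf{q}^\top(\mathbf{A}+\overline{\mathbf{A}}) - \mathbf{q}^\top$ to be nonpositive with the $\|\vX_\perp\|^2$-slot bounded away from zero, telescope $(\mathbf{q}^\top-\mathbf{q}^\top\overline{\mathbf{A}})\Omega^t$, and read the stated noise terms off $\mathbf{q}^\top\mathbf{c}$. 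The paper sets $q_1 = n/L$ and divides by $nT$; your convention $q_1 = 1$ followed by a final division by $L$ is equivalent, and your $q_6 \sim \gamma_g^2 L/(n(1-\widehat\rho_g^2))$, $q_7 \sim \gamma_g^2 L/(n(1-\eta^2))$ match the paper's normalized choices.

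Two of your proposed orders, however, would prevent the recursion from closing. Normalized to $q_1 = 1$, the paper's working values are $q_2 \sim \frac{Lb}{n(1-\widehat\rho_x^2)}$ and $q_3 \sim \frac{L}{n(1-\eta^2)}$, with $b = 4\bigl(9L + 1 + \tfrac{72L^2+1}{(1-\eta^2)^2} + \tfrac{60L^2}{(1-\widehat\rho_g^2)^2}\bigr)$. Your $q_2 = \Theta(L/(n(1-\widehat\rho_x^2)))$ omits the factor $b$: the diagonal slack in $\lambda_2$ is $-\tfrac{1-\widehat\rho_x^2}{4}q_2$, and it must absorb the feedback $\bigl(\tfrac{4\alpha L^3}{n} + \tfrac{6L^2}{n}\bigr)q_1 \sim \tfrac{L^2}{n}$ coming from row~1. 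Without $b$ (which is $\gtrsim L$) the slack is only $\sim L/(4n)$, so $\lambda_2 > 0$ whenever $L$ is not small. Your $q_3 = \Theta(\alpha/(1-\eta^2))$ has the wrong dependence entirely: the paper's $q_3 \sim L/(n(1-\eta^2))$ is independent of $\alpha$. With your choice the diagonal slack $-\tfrac{1-\eta^2}{4}q_3$ in $\lambda_3$ is only $-\alpha/4$, while the feedback $\tfrac{4\eta\gamma_x(1-\widehat\rho_x^2)}{3}q_2$ from row~2, bounded via $\eta\gamma_x \le \alpha$ and the corrected $q_2$, is of order $\alpha Lb/n$; absorbing it would need $Lb/n = O(1)$, a restriction on $n$ that is not among the hypotheses. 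Once $q_2$ and $q_3$ are set to the paper's orders (and $q_4, q_5$ adjusted correspondingly), the remaining verification of $\lambda_4,\ldots,\lambda_7$ proceeds as you sketch, and the constraints in \eqref{eq:alphaupdate} emerge exactly as the quantitative conditions making those entries nonpositive.
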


\begin{proof}
	For any $\mathbf{q}=\left(q_1, q_2, q_3, q_4,q_5,q_6,q_7\right)^{\top} \geq \mathbf{0}$, multiplying $\vq^\top$ to both sides of \eqref{eq:omegatineq} gives 
	\begin{align}
		\notag
		(\mathbf{q}^{\top} - \mathbf{q}^{\top}\overline{\mathbf{A}}) \Omega^{t} \leq &  \mathbf{q}^{\top}\vA \Omega^{t-1} + \mathbf{q}^{\top}  \vb^t  + \mathbf{q}^{\top} \mathbf{c} \\ 
		=  & 
		(\mathbf{q}^{\top} - \mathbf{q}^{\top}\overline{\mathbf{A}})  \Omega^{t-1}+(\mathbf{q}^{\top} \vA +  \mathbf{q}^{\top}\overline{\mathbf{A}}-\mathbf{q}\zz)  \Omega^{t-1} + \mathbf{q}^{\top}  \vb^t + \mathbf{q}^{\top} \mathbf{c}.
		\label{eq:lya1-com}
	\end{align}	
	
	Let  
	\begin{align*}    
		&q_1=\frac{n}{L},\ q_2= \frac{b}{1-\widehat{\rho}_x^2},\ q_3 = \frac{1}{1-\eta^2}, \\
		&q_4 = \frac{n \alpha^2}{1-\beta_1}\left(\frac{6 L \beta_1^2}{(1-\beta_1)^2} + \frac{ b}{45(1-\widehat{\rho}_x^2)} + \frac{24L^2+1}{(1-\eta^2)^2} + \frac{20L^2}{(1-\widehat{\rho}_g^2)^2} 
		\right),\\
		&q_5=\frac{\gamma_g^2}{6(1-\beta_1)},\ q_6 =  \frac{\gamma_g^2}{1-\widehat{\rho}_g^2},\ q_7 = \frac{\gamma_g^2}{1-\eta^2}.
	\end{align*}
	With the choice of $\alpha$, we claim that  
	\begin{equation}
		\label{eq:lysim1-com}
		\vlam := \mathbf{q}^{\top} \vA +  \mathbf{q}^{\top}\overline{\mathbf{A}}-\mathbf{q}\zz\leq \left( 
		0, -1 , 0 ,0, 0 , 0 ,0 \right).
	\end{equation}
	First notice 
	$$q_4 \leq \frac{n\gamma_g^2}{2L} \leq \frac{n}{8L},\ \gamma_g \leq \frac{n}{2L},$$ and it is straightforward to have $\lambda_1=0$.
	Second,
	\begin{align*}
		\lambda_2= & 
		4 \left(\frac{\alpha L^3}{n} + \frac{3 L^2}{2 n}\right) q_1 + \frac{3+\widehat{\rho}_x^2}{4} q_2 + \frac{1}{1-\eta^2}  q_3 +(1-\beta_1)\frac{2L^2}{n}  q_4 + \frac{60 L^2}{1-\widehat{\rho}_g^2} q_6 + \frac{72 L^2}{1-\eta^2} q_7 - q_2  \\ = & 4 \left({\alpha L^2}{} + \frac{3 L}{2}\right)  - \frac{b}{4} + \frac{1}{(1-\eta^2)^2} + (1-\beta_1)\frac{2L^2}{n}  q_4 + \frac{60 \gamma_g^2 L^2}{(1-\widehat{\rho}_g^2)^2}  + \frac{72 \gamma_g^2  L^2}{(1-\eta^2)^2} \\ 
		\le & 8L  - \frac{b}{4} + \frac{1}{(1-\eta^2)^2} + (1-\beta_1)L + \frac{60 \gamma_g^2 L^2}{(1-\widehat{\rho}_g^2)^2}  + \frac{72 \gamma_g^2  L^2}{(1-\eta^2)^2}  \leq -1,
	\end{align*}
	where the first inequality follows from $\alpha \le \frac{1}{2L}$, and $q_4 \leq \frac{n}{2L}$, and the second inequality holds by $\gamma_g\leq 1
	$ and the definition of $b$. Third, 
	\begin{align*}
		\lambda_3= & \frac{4\eta \gamma_x (1-\widehat{\rho}_x^2)}{3}  q_2 + \frac{3+\eta^2}{4} q_3 + \frac{20 \sqrt{2} \eta \gamma_x L^2}{1-\widehat{\rho}_g^2} q_6 + \frac{24 \sqrt{2} \eta \gamma_x L^2}{1-\eta^2} q_7 - q_3 \\= &  \frac{4\eta \gamma_x b}{3 } - \frac{1}{4} + \frac{20 \sqrt{2} \eta \gamma_g^2 \gamma_x L^2}{(1-\widehat{\rho}_g^2)^2}  + \frac{24 \sqrt{2} \eta \gamma_g^2  \gamma_x L^2}{(1-\eta^2)^2}   \leq  \frac{4\alpha b}{3 } -\frac{1}{4} + \frac{1}{12}+ \frac{1}{12} \leq 0,
	\end{align*}
	by the choice of $\gamma_g$, $\gamma_x$, and $\alpha\leq \frac{1}{16b}$. 
	Fourth, 
	\begin{align*}
		\lambda_4= & \frac{6L^2\alpha^2 \beta_1^2}{(1-\beta_1)^2}q_1 + \beta_1 q_4 - q_4 + \frac{\alpha^2 n}{45} q_2 + \frac{\alpha^2 n}{1- \eta^2} q_3 + \frac{20\alpha^2 n L^2}{1-\widehat{\rho}_g^2} q_6 + \frac{24\alpha^2 n L^2}{1-\eta^2}q_7
		\\
		= & \frac{6L n\alpha^2 \beta_1^2}{(1-\beta_1)^2} -(1-\beta_1) q_4 + \frac{\alpha^2 n}{45} \frac{b}{1-\widehat{\rho}_x^2} +  \frac{\alpha^2 n}{(1- \eta^2)^2} + \frac{20\alpha^2\gamma_g ^2 n L^2}{(1-\widehat{\rho}_g^2)^2} +  \frac{24\alpha^2 \gamma_g ^2 n L^2}{(1-\eta^2)^2}
		\leq 0,
	\end{align*}
	by $\alpha, \gamma_g\leq 1$ and the choice of $q_4$. 
	Fifth, 
	\begin{align*}
		\lambda_ 5=  & \alpha^2 \left(\frac{4   \widehat{\rho}_x^2 }{ 1-\widehat{\rho}_x^2}+\frac{1}{45} \right) q_2 + \frac{5\alpha^2  }{1-\eta^2}  q_3 +  \beta_1q_5 + \frac{20\alpha^2 L^2}{1-\widehat{\rho}_g^2} q_6 + \frac{24\alpha^2 L^2}{1-\eta^2} q_7 - q_5  \\ =&  \alpha^2 \left(\frac{4   \widehat{\rho}_x^2 }{ 1-\widehat{\rho}_x^2}+\frac{1}{45} \right) \frac{b}{1-\widehat{\rho}_x^2} + \frac{5\alpha^2  }{(1-\eta^2)^2} -(1-\beta_1) q_5 + \frac{20\alpha^2 \gamma_g^2 L^2}{(1-\widehat{\rho}_g^2)^2} +  \frac{24\alpha^2 \gamma_g^2 L^2}{(1-\eta^2)^2}< 0,
	\end{align*}
	by  $\alpha\leq \min\left\{ \frac{ \gamma_g (1-\eta^2)}{32 }, 
	\frac{\gamma_g (1-\eta^2)}{12 L \sqrt{n}}, \ 
	\frac{\gamma_g(1-\widehat{\rho}_x^2)}{12 \sqrt{b}},\ \frac{\gamma_g(1-\widehat{\rho}_g^2)}{12  {L} \sqrt{n}}\right\}$
	and $q_5=\frac{\gamma_g^2}{6(1-\beta_1)}$. Sixth,
	$$\lambda_ 6= (1-\beta_1)q_5 + \frac{2+\widehat{\rho}_g^2}{3} q_6 + \frac{104\gamma_g^2}{1-\eta^2}  q_7 -q_6 = \frac{\gamma_g^2}{6}  - \frac{\gamma_g^2}{3} + \frac{104\gamma_g^4}{(1-\eta^2)^2}\leq 0$$ by  $\gamma_g\leq \frac{1-\eta^2}{25}$.  Seventh, $$\lambda_ 7=\frac{48 \eta^2 \gamma_g^2}{1-\widehat{\rho}_g^2} q_6 + \frac{3+\eta^2}{4} q_7 -q_7  = \frac{48 \eta^2 \gamma_g^4}{(1-\widehat{\rho}_g^2)^2} - \frac{\gamma_g^2}{4}\leq 0$$ by $\gamma_g\leq \frac{1-\widehat{\rho}_g^2}{25 \eta}$.
	Thus, \eqref{eq:lysim1-com} is obtained.
	From \eqref{eq:lya1-com} and \eqref{eq:lysim1-com}, it then holds
	$$
	(0, 1 , 0 ,0, 0, 0 , 0) \Omega^{t-1}\leq (\mathbf{q}^{\top} - \mathbf{q}^{\top}\overline{\mathbf{A}})  \Omega^{t-1}- (\mathbf{q}^{\top} - \mathbf{q}^{\top}\overline{\mathbf{A}}) \Omega^{t}+\mathbf{q}^{\top}  \vb^t + \mathbf{q}^{\top} \mathbf{c},
	$$
	which implies
	\begin{align}
		& \mathbb{E}\left[\|\vX_{\perp}^{t}\|^2\right]  \leq 	  (\mathbf{q}^{\top} - \mathbf{q}^{\top}\overline{\mathbf{A}})  \Omega^{t-1}- (\mathbf{q}^{\top} - \mathbf{q}^{\top}\overline{\mathbf{A}}) \Omega^{t}+  \mathbf{q}^{\top} \vb^t + \mathbf{q}^{\top} \mathbf{c}.\label{eq:nabla2forx-com}
	\end{align}
	Summing up  \eqref{eq:nabla2forx-com} over $t=0, \ldots, T-1$ and then dividing by $nT$ gives 
	\begin{align}
		\label{eq:zqmain}
		\frac{1}{nT}\sum_{t=0}^{T-1}\mathbb{E}\left[\|\vX_{\perp}^{t}\|^2\right] \leq \frac{1}{nT}(\mathbf{q}^{\top} - \mathbf{q}^{\top}\overline{\mathbf{A}})  \left(\Omega^{-1} -  \Omega^{T-1} \right) + \frac{1}{nT}\sum_{t=0}^{T-1}\mathbf{q}^{\top} \vb^t  +\frac{1}{n} \mathbf{q}^{\top} \mathbf{c}.
	\end{align}

	Notice $\mathbf{q}^{\top} - \mathbf{q}^{\top}\overline{\mathbf{A}} = \vq^\top \vA - \vlam \ge\vzero$ and $\Omega^{T-1} \ge \vzero$.  Hence $(\mathbf{q}^{\top} - \mathbf{q}^{\top}\overline{\mathbf{A}} )\Omega^{T-1}\ge \vzero$, and thus 
	\begin{align}
		\notag
		&\frac{1}{nT}(\mathbf{q}^{\top} - \mathbf{q}^{\top}\overline{\mathbf{A}})  \left(\Omega^{-1} -  \Omega^{T-1} \right) \leq \frac{1}{nT}(\mathbf{q}^{\top} - \mathbf{q}^{\top}\overline{\mathbf{A}}) \Omega^{-1}\\= & \frac{4}{\alpha nT} q_1 (f(\vz^{0})-f^*)  = \frac{4}{\alpha LT} (f(\vz^{0})-f^*).\label{eq:zq1}
	\end{align}
	In addition, we know from $q_4\leq \frac{n\gamma_g^2}{2L}$ that
	\begin{align}
		\label{eq:zq2}
		\frac{1}{n} \mathbf{q}^{\top} \mathbf{c} \leq \frac{2\alpha  \sigma^2 }{n} +	  \frac{ \gamma_g^2   \sigma^2 (1-\beta_1)}{ 2L n }  
		+
		\frac{11 \gamma_g^2 \sigma^2}{ 1-\widehat{\rho}_g^2}+
		\frac{9 \gamma_g^2 \sigma^2}{ (1-\eta^2)^2}.
	\end{align}
	Moreover,  it holds
	\begin{align}
		&\frac{1}{nT}\sum_{t=0}^{T-1}\mathbf{q}^{\top} \vb^t  \notag=  \frac{1}{nT}\sum_{t=0}^{T-1}\left(- \frac{n}{L}\mathbb{E}\left[\left\|\nabla f\left(\overline{\vx}^{t}\right)\right\|^2\right] + 2q_4(1-\beta_1) \mathbb{E}\left[\left\|\nabla f\left(\overline{\vx}^{t}\right)\right\|^2\right] \right)  \\
		\notag	\leq & \frac{1}{nT}\sum_{t=0}^{T-1}\left(- \frac{n}{L}\mathbb{E}\left[\left\|\nabla f\left(\overline{\vx}^{t}\right)\right\|^2\right] + \frac{n(1-\beta_1)}{4L} \mathbb{E}\left[\left\|\nabla f\left(\overline{\vx}^{t-1}\right)\right\|^2\right] \right)  \\
		= &    - \frac{1}{ 2 L T} \sum_{t=0}^{T-1} \mathbb{E}\left[\left\|\nabla f\left(\overline{\vx}^{t}\right)\right\|^2\right],
		\label{eq:zq3}
	\end{align}
	where the first inequality follows from $q_4\le \frac{n}{8L}$.
	
	Substituting \eqref{eq:zq1}--\eqref{eq:zq3} into \eqref{eq:zqmain} yields
	\begin{align*}
		& \frac{1}{ 2 L T} \sum_{t=0}^{T-1} \mathbb{E}\left[\left\|\nabla f\left(\overline{\vx}^{t}\right)\right\|^2\right] + 	\frac{1}{nT}\sum_{t=0}^{T-1}\mathbb{E}\left[\|\vX_{\perp}^{t}\|^2\right] \\ 
		\leq &\frac{4}{\alpha LT} (f(\vz^{0})-f^*) + \frac{2\alpha  \sigma^2 }{n} +	  \frac{ \gamma_g^2   \sigma^2 (1-\beta_1)}{  2 L n }  
		+
		\frac{11 \gamma_g^2 \sigma^2}{ 1-\widehat{\rho}_g^2}+
		\frac{9 \gamma_g^2 \sigma^2}{ (1-\eta^2)^2}.
	\end{align*}
	The proof is then completed by noticing $\vz^0 = \overline{\vx}^0$.
\end{proof}

The theorem below directly follows from Theorem~\ref{thm:hb-conver-com} by plugging the specified algorithmic parameters and ignoring certain constants that are independent of $n$ and $T$. 

\begin{theorem}\label{thm:convg-rate-hb-com}
	Suppose that the conditions assumed in Theorem~\ref{thm:hb-conver-com} hold and ${\alpha} = \frac{\theta_1\sqrt{n}}{\sigma\sqrt{T}}$,  $\gamma_g = \frac{\theta_2\sqrt{n}}{\sigma\sqrt{T}}$ and  $\gamma_x = \frac{\theta_3\sqrt{n}}{\sigma\sqrt{T}}$ for some $\theta_1,\theta_2, \theta_3\in (0,1)$ independent of $T$ and $n$. Also, suppose $n \le T$. Then
	\begin{align}\label{eq:gradient-bound-cor1-hb-com}
		&\frac{1}{T} \sum_{t=0}^{T-1} \mathbb{E}\left[\|\nabla  f\left(\overline{\vx}^{t}\right)\|^2 + \frac{1}{n}\|\vX_\perp^t\|^2\right]  
		=  O\left( \frac{\sigma}{\sqrt{n T}}  
		+ \frac{n}{T} \left( \frac{1}{ 1-\widehat{\rho}_g^2}+
		\frac{1}{(1-\eta^2)^2} \right) \right).
	\end{align}
\end{theorem}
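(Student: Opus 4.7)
The plan is to treat Theorem~\ref{thm:convg-rate-hb-com} as a direct corollary of Theorem~\ref{thm:hb-conver-com}: substitute the specified choices $\alpha=\theta_1\sqrt{n}/(\sigma\sqrt{T})$, $\gamma_g=\theta_2\sqrt{n}/(\sigma\sqrt{T})$, $\gamma_x=\theta_3\sqrt{n}/(\sigma\sqrt{T})$ into the master inequality, verify admissibility against \eqref{eq:alphaupdate} for $T$ sufficiently large, and then simplify each term on the RHS.

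First I would verify that the stated choices satisfy the constraints assumed in Theorem~\ref{thm:hb-conver-com}. Since all three stepsizes are of order $\sqrt{n/T}/\sigma$ and decay as $T$ grows, each of the single-sided bounds in \eqref{eq:alphaupdate}, of the form $\alpha\lesssim\gamma_g(1-\widehat\rho_\star^2)/(L\sqrt{n})$ or $\alpha\lesssim\gamma_g(1-\eta^2)$, eventually holds. This is exactly the role of the $T$-thresholds listed in the remark following Theorem~\ref{thm:hb-conver-com-main}. The binding constraint is the square-root bound involving $b=\Theta\!\bigl(1/(1-\eta^2)^2+1/(1-\widehat\rho_g^2)^2\bigr)$; once $T$ beats the corresponding threshold, every other inequality in \eqref{eq:alphaupdate} is slacker and automatically satisfied.

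Next, I would substitute into Theorem~\ref{thm:hb-conver-com}'s bound and read off each term. With $\gamma_g^2\sigma^2=\theta_2^2 n/T$:
\begin{itemize}
\item $\frac{4}{\alpha LT}(f(\overline{\vx}^0)-f^*)=\frac{4\sigma}{\theta_1 L\sqrt{nT}}(f(\overline{\vx}^0)-f^*)=O(\sigma/\sqrt{nT})$,
\item $\frac{2\alpha\sigma^2}{n}=\frac{2\theta_1\sigma}{\sqrt{nT}}=O(\sigma/\sqrt{nT})$,
\item $\frac{\gamma_g^2\sigma^2(1-\beta_1)}{2Ln}=O(1/T)$, which is absorbed into $O\!\bigl(n/(T(1-\widehat\rho_g^2))\bigr)$ since $n\ge 1$,
\item $\frac{11\gamma_g^2\sigma^2}{1-\widehat\rho_g^2}=O\!\bigl(n/(T(1-\widehat\rho_g^2))\bigr)$,
\item $\frac{9\gamma_g^2\sigma^2}{(1-\eta^2)^2}=O\!\bigl(n/(T(1-\eta^2)^2)\bigr)$.
\end{itemize}
Adding gives the RHS of \eqref{eq:gradient-bound-cor1-hb-com}.

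Finally, to pass from the $1/(2L)$-weighted combination on the LHS of Theorem~\ref{thm:hb-conver-com} to the unweighted combination $\|\nabla f(\overline\vx^t)\|^2+\frac{1}{n}\|\vX_\perp^t\|^2$ in the claim, observe that $L$ is a constant independent of $n,T$, so the LHS is bounded below by $\min\{1/(2L),1\}\cdot\frac{1}{T}\sum_{t=0}^{T-1}\mathbb{E}\bigl[\|\nabla f(\overline\vx^t)\|^2+\frac{1}{n}\|\vX_\perp^t\|^2\bigr]$, and the $L$-dependent factor gets absorbed into the big-$O$. This yields the stated bound.

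There is no serious obstacle here; the theorem is purely a corollary whose content is parameter tuning plus bookkeeping. The only place that requires care is checking that the chosen $\alpha,\gamma_g,\gamma_x$ simultaneously meet every clause of \eqref{eq:alphaupdate}, but this is exactly what the hypothesis ``$T$ is large enough'' and the companion remark encode, so I would simply cite those thresholds and proceed with the substitution.
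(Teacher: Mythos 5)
Your proof is correct and matches the paper's approach exactly: the paper likewise treats Theorem~\ref{thm:convg-rate-hb-com} as a direct corollary of Theorem~\ref{thm:hb-conver-com}, obtained by substituting $\alpha,\gamma_g,\gamma_x$ and absorbing $n$- and $T$-independent constants (including the $L$ on the LHS) into the big-$O$. Your term-by-term bookkeeping ($\gamma_g^2\sigma^2=\theta_2^2 n/T$, the $1/T$ term dominated by $n/(T(1-\widehat\rho_g^2))$ since $n\ge 1$, etc.) is precisely the elided calculation.
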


\begin{remark}[Linear speed up and topology independence]
	From Theorem~\ref{thm:convg-rate-hb-com}, 
	we see that if  
	\begin{equation}
		T=\Omega\left(\max\left\{\frac{n }{\sigma^2(1-{\widehat{\rho}_g}^2)^6}, \frac{n }{\sigma^2(1-{\widehat{\rho}_x}^2)^6},  \frac{n }{\sigma^2(1-{\eta}^2)^6}, \frac{n^2 }{\sigma^2(1-{\widehat{\rho}_g}^2)^4}, \frac{n^2 }{\sigma^2(1-{\widehat{\rho}_x}^2)^4}, \frac{n^3 }{\sigma^2(1-{\widehat{\rho}_g}^2)^2}, \frac{n^3 }{\sigma^2(1-{\eta}^2)^4} \right\}\right), 
		\label{eq:cond1-T-amsgrad--com}
	\end{equation}
	then ${\alpha} = O(\frac{\sqrt{n}}{\sigma\sqrt{T}})$,  $\gamma_g = O(\frac{\sqrt{n}}{\sigma\sqrt{T}})$,  $\gamma_x = O(\frac{\sqrt{n}}{\sigma\sqrt{T}})$, $\frac{n}{T} \left(\frac{1}{1-\widehat{\rho}_g^2}+
	\frac{1}{(1-\eta^2)^2} \right)= O(\frac{\sigma}{\sqrt{n T}})$, and the RHS of \eqref{eq:gradient-bound-cor1-hb-com} becomes $O(\frac{\sigma}{\sqrt{n T}})$. Thus we have 
	\begin{equation}\label{eq:rate-simple--com-main}
		\frac{1}{T} \sum_{t=0}^{T-1} \mathbb{E}\left[\|\nabla  f\left(\overline{\vx}^{t}\right)\|^2 + \frac{1}{n}\|\vX_\perp^t\|^2\right] = O\left(\frac{\sigma}{\sqrt{n T}}\right).
	\end{equation}
	For a given $\varepsilon > 0$, let $T=\Theta\left(\frac{\sigma^2}{n\vareps^4}\right)$ and $\tau$ be selected from $\{0,1,\ldots, T-1\}$ uniformly at random. Then $\vX^\tau$ is an $\varepsilon$-stationary solution in expectation, i.e., $\mathbb{E}\left[\|\nabla  f\left(\overline{\vx}^{\tau}\right)\|^2 + \frac{1}{n}\|\vX_\perp^\tau\|^2\right] \le \varepsilon^2$.	When $\vareps$ is sufficiently small such that $T=\Theta\left(\frac{\sigma^2}{n\vareps^4}\right)$ satisfies the conditions in \eqref{eq:cond1-T-amsgrad--com}, we obtain linear speed up, and the algorithmic parameters are independent of the communication graph. 
\end{remark}

\section{Examples of Compression Operators that satisfy Assumption~\ref{assump:com}} 

In this section, we provide a few concrete examples of compression operators that satisfy the condition in Assumption~\ref{assump:com}. More examples that satisfy Assumption~\ref{assump:com} can be found in
\citet{chen2022efficient,koloskova2019decentralized2}.

\begin{example}
	QSGD \citep{alistarh2017qsgd} 
	compresses $\mathbf{x} \in \mathbb{R}^d$ by $Q_{s g d}(\mathbf{x})=$ $\frac{\operatorname{sign}(\mathbf{x})\|\mathbf{x}\|}{s}\left\lfloor s \frac{|\mathbf{x}|}{\|\mathbf{x}\|}+\xi\right\rfloor$ where $\xi$ is uniformly distributed on $[0,1]^d, s$ is a parameter about compression level. Then $\cQ(\mathbf{x}):=\frac{1}{\tau} Q_{\textrm{ssgd}}(\mathbf{x})$ with $\tau=\left(1+\min \left\{d / s^2, \sqrt{d} / s\right\}\right)$ satisfies Assumption~\ref{assump:com} with $\eta=1-\frac{1}{\tau}$.
\end{example}

\begin{example}
	$Q_{\textrm{sparse}}(\mathbf{x})$ \citep{stich2018sparsified} randomly selects $k$ out of $d$ coordinates from $\vx$, or the $k $ coordinates with the largest  values in magnitude from $\vx$. Then $Q_{\textrm{sparse}}(\mathbf{x})$ satisfies Assumption~\ref{assump:com} with $\eta=\frac{k}{d}$.
\end{example}

\begin{example}
	$Q_{\textrm{gossip}}(\mathbf{x})$ sets $Q_{\textrm{gossip}}(\mathbf{x})= \vx$ with probability $p\in[0,1]$ and $Q_{\textrm{gossip}}(\mathbf{x})= 0$ with probability $1-p$. Then $Q_{\textrm{gossip}}(\mathbf{x})$ satisfies Assumption~\ref{assump:com} with $\eta=p$.
\end{example}

\section{More Detailed Comparisons}
\label{appen:table}
In this section, we provide more detailed comparisons with related work.
\subsection{Term-by-term Comparison with Prior Work in Table \ref{tab:1}}
\label{appen:term-by-term}

To complement Table~\ref{tab:1}, we provide a detailed term-by-term comparison of theoretical convergence results between our proposed algorithms (DAMSCo and DaSHCo) and related methods listed in Table~\ref{tab:1}.

\textbf{CMP, AG, MMT}:
DADAM\citep{nazari2022dadam}, DAGM\citep{chen2023convergence}, and our proposed DAMSCo incorporate both adaptive gradient updates and momentum acceleration. However, unlike DAMSCo, 
DADAM and DAGM do not utilize compressed communication. 
Choco-SGD\citep{koloskova2019decentralized2}, BEER\citep{zhao2022beer}, and CDProxSGT~\citep{yan2023compressed} perform compressed communication but do not apply adaptive gradient updates or momentum acceleration. 
The compression effects in these methods are represented 
through a simple reduction factor of
$(1-\eta)$ per communication round, while this is absent from 
DADAM and DAGM.

\textbf{DH}:
DADAM, DAGM, Choco-SGD, and our first method DAMSCo require the bounded gradient assumption. This 
limits their theoretical guarantees in scenarios with DH.  In contrast, our second method DaSHCo, and CDProxSGT and BEER do not need this assumption, thus being able to handle DH. 

\textbf{CMR}: In terms of communication rounds, the number required per iteration varies depending on the use of gradient tracking. 
BEER, CDProxSGT, DAGM, and our proposed DaSHCo perform gradient tracking and transmit both model parameters and local gradients 
at each iteration, leading to two communication rounds. In contrast, 
Choco-SGD and our proposed DAMSCo, without gradient tracking, require only one communication round per iteration.

\textbf{LS, TI, convergence rate}:
Here, LS is characterized by a clear $\frac{1}{n}$ dependency in the convergence rate, showing that the convergence rate improves linearly with the number of agents ($n$);
TI refers to 
the capability of choosing learning rate independent of 
the topology parameter $\rho$. 
We do not compare to the convergence rate of Choco-SGD as it 
assume strong convexity, 
which is not directly comparable with our considered nonconvex case. 
When $T$ is sufficiently large, all methods except Choco-SGD in Table~\ref{tab:1} have the convergence rate of $O\left(\frac{1}{\sqrt{n T}}\right)$ and thus achieve LS and TI. 

\subsection{Comparisons to more related work}
{ 
In this section, we 
compare 
our proposed algorithms (DAMSCo and DaSHCo) to several more existing decentralized stochastic optimization methods.

Our first method DAMSCo is designed with adaptive gradient updates, i.e., Adam-type updates. Its convergence rate is $O(1 / \sqrt{T})$ and aligns with the rates established for existing (nondistributed or distributed) stochastic nonconvex optimization algorithms that use Adam-type updates.

For momentum-based algorithms, a few recent works enhance theoretical convergence rates by incorporating variance reduction (VR) techniques. However, we observe that, in practice, VR-based methods can exhibit unstable performance and 
poor generalization, particularly in training large-scale deep neural networks (DNNs). Specifically, through additional experiments comparing our algorithms DAMSCo and DaSHCo against the VR-based method DoCoM~\citep{yau2022docom}, we found that our methods significantly outperform DoCoM; see Figures~\ref{fig:index_comp} and \ref{fig:label_comp} in Appendix \ref{appen:E}. 

Below we 
compare the convergence rate of our second method DaSHCo with four recent notable decentralized algorithms, along with the difference of their assumptions.
\begin{itemize}
    \item SQuARM-SGD~\citep{singh2021squarm} achieves the same convergence rate as DaSHCo. But unlike DaSHCo, SQuARM-SGD does not use the gradient tracking technique. It requires either a bounded gradient dissimilarity assumption or a bounded gradient assumption, which is stronger than our assumption for analyzing DaSHCo.
    \item DoCoM~\citep{yau2022docom} achieves better sample complexity than ours. It relies on the VR technique and requires the so-called mean-squared smoothness assumption, which is stronger than our assumption on smoothness of the population function. Moreover, VR techniques can yield poor generalization performance, 
    as we observed from Figures~\ref{fig:index_comp} and \ref{fig:label_comp} 
    in Appendix \ref{appen:E}.
    \item Cedas~\citep{huang2024cedas} achieves the same convergence rate as our method DaSHCo in a nonconvex setting. But it makes stronger assumptions on the mixing matrix $W$, 
    requiring $W$ to be positive semidefinite and symmetric, while we do not need these restrictive conditions.
    \item Islamov et al.~\citep{islamov2024towards} give two methods: MoTEF and MoTEF-VR. MoTEF has the same convergence rate as our method DaSHCo but needs stronger assumption than ours in $W$. More precisely, it additionally needs $W$ to be symmetric. Similar to DoCoM, MoTEF-VR achieves the optimal convergence rate, by utilizing the VR technique and assuming the stronger mean-squared smoothness assumption. 
\end{itemize}
   }

\section{Additional Numerical Plots}\label{appen:E}

In this section, we first include our experimental results plotted with epochs as the x-axis instead of communication rounds. Figure~\ref{fig:index_comp_epoch} gives the results for homogeneous data and Figure~\ref{fig:label_comp_epoch} 
for heterogeneous data. We also include QSGD quantization \citep{alistarh2017qsgd} of compression instead of Top-$k$ in Figure~\ref{fig:fminst_comp_qsgd}. Here, we train the model on FashionMNIST with both heterogeneous and homogeneous data. We note that our proposed algorithms perform quite similarly to what we observed for Top-$k$ compression for the homogeneous case, while Distributed AdaGrad failed to generalize. For the heterogeneous data, we note that DaSHCo can successfully train the model and achieve the highest test accuracy, though convergence is slower. We suspect that 4-bit quantization is more aggressive than Top-$k$($0.3$) and thus leads to less competitive results.

\begin{figure*}
	\centering
	\includegraphics[trim=0 0 0 0cm,clip,width=0.32\textwidth]{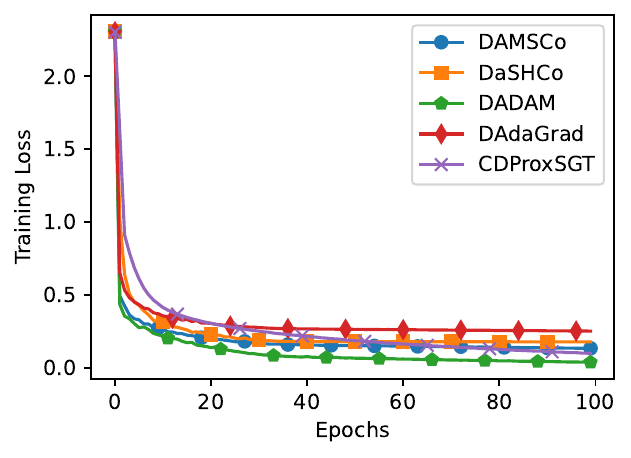}
	\includegraphics[trim=0 0 0 0cm,clip,width=0.32\textwidth]{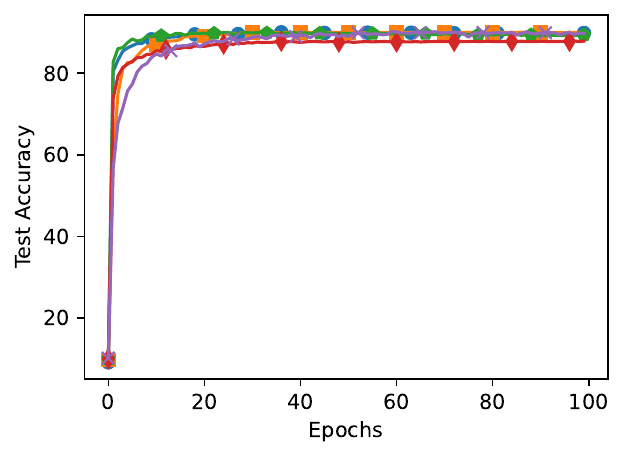}
	\includegraphics[trim=0 0 0 0cm,clip,width=0.33\textwidth]{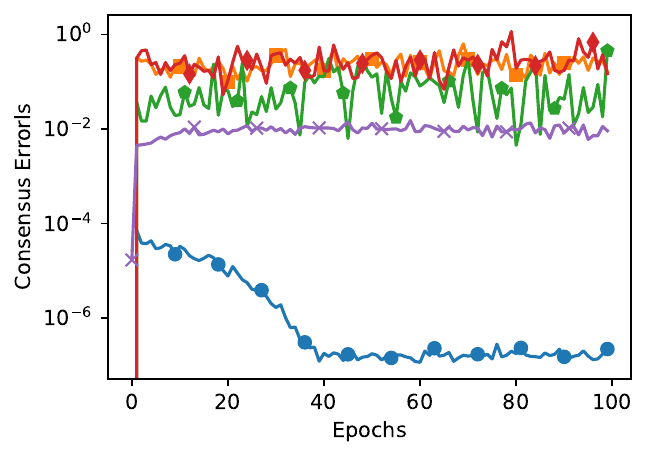}
	\\
	\includegraphics[trim=0 0 0 0cm,clip,width=0.32\textwidth]{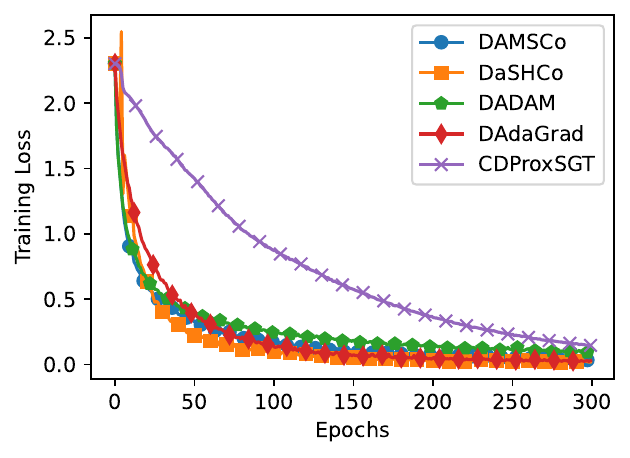}
	\includegraphics[trim=0 0 0 0cm,clip,width=0.32\textwidth]{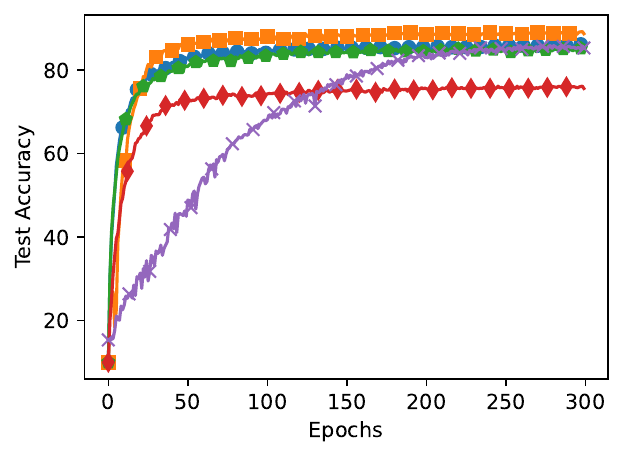}
	\includegraphics[trim=0 0 0 0cm,clip,width=0.33\textwidth]{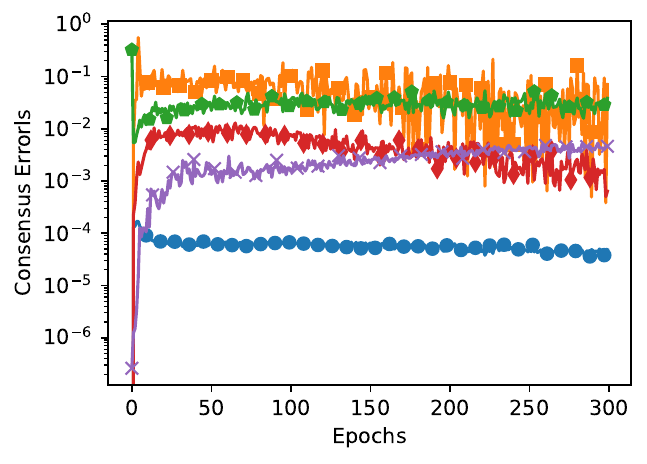}
	\caption{\textbf{Results with homogeneous data:} Plotted above are (from left to right) the training loss, test accuracy, and consensus error with respect to epoch for the FashionMNIST (top) and CIFAR-10 (bottom) datasets, comparing DAMSCo and DaSHCo with CDProxSGT \citep{yan2023compressed}, Distributed Adam \citep{nazari2022dadam, chen2023convergence}, and Distributed AdaGrad \citep{duchi2011adaptive} with Top-$k$ compression.}
	\label{fig:index_comp_epoch}
\end{figure*}

\begin{figure*}
	\centering
	\includegraphics[trim=0 0 0 0cm,clip,width=0.32\textwidth]{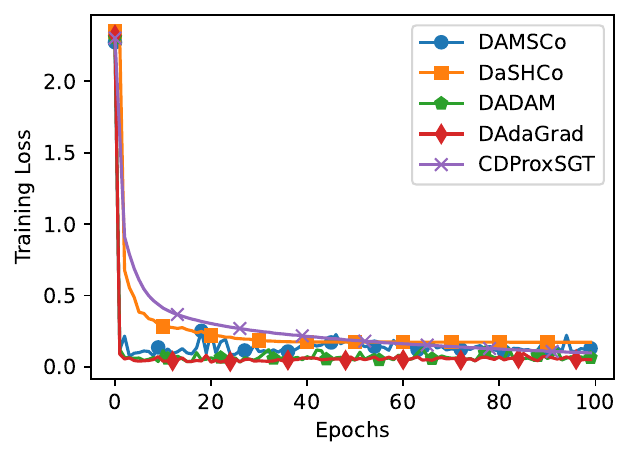}
	\includegraphics[trim=0 0 0 0cm,clip,width=0.32\textwidth]{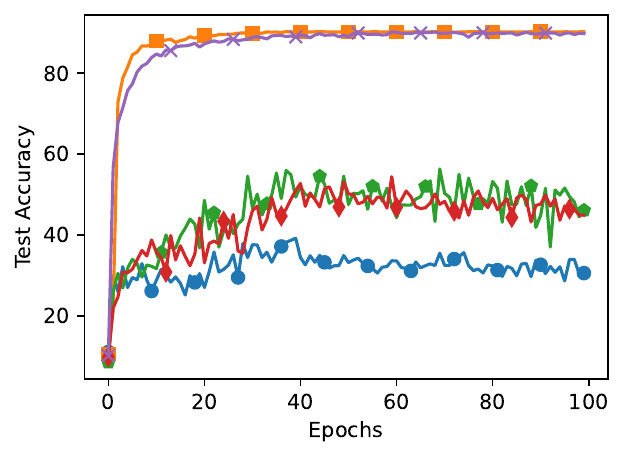}
	\includegraphics[trim=0 0 0 0cm,clip,width=0.33\textwidth]{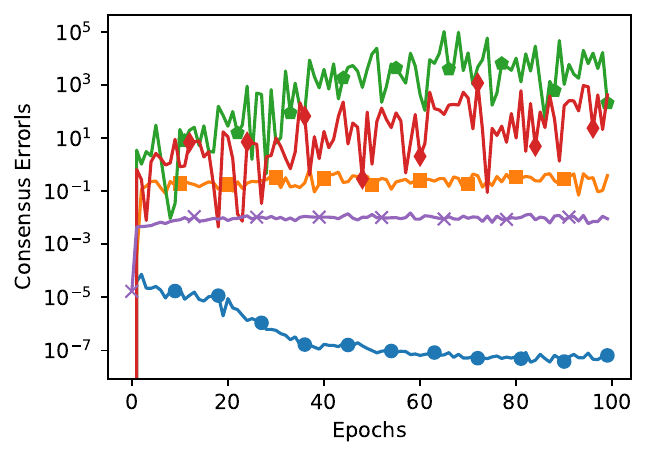}
	\\
	\includegraphics[trim=0 0 0 0cm,clip,width=0.32\textwidth]{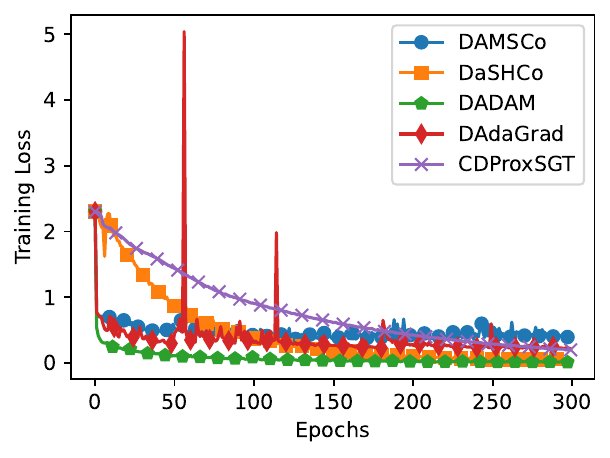}
	\includegraphics[trim=0 0 0 0cm,clip,width=0.32\textwidth]{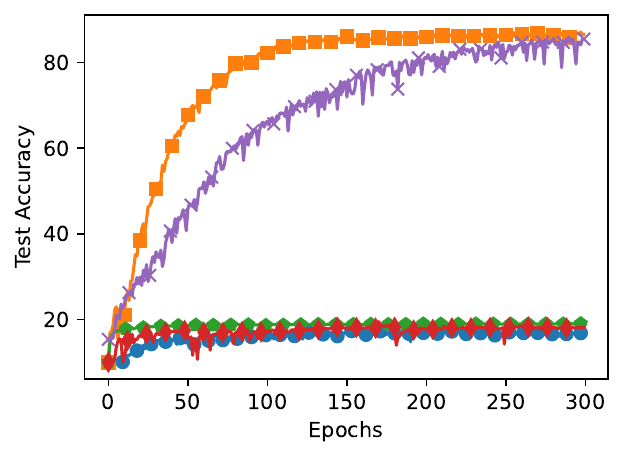}
	\includegraphics[trim=0 0 0 0cm,clip,width=0.33\textwidth]{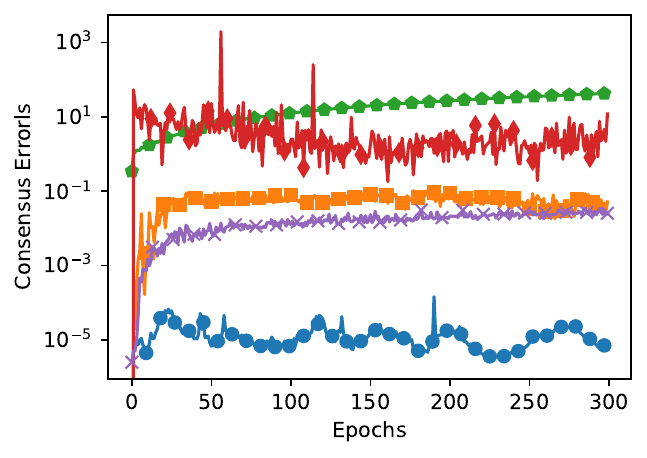}
	\caption{\textbf{Results with heterogeneous data:} Plotted above are (from left to right) the training loss, test accuracy, and consensus error with respect to epoch for the FashionMNIST (top) and CIFAR-10 (bottom) datasets, comparing DAMSCo and DaSHCo with CDProxSGT, Distributed Adam, and Distributed AdaGrad with Top-$k$ compression.}
	\label{fig:label_comp_epoch}
\end{figure*}

Further, we include a comparison between grid and ring topologies for the communication network. Our primary experiments utilized a ring topology, and these additional experiments demonstrate the ability of our methods to generalize to additional network topologies. For the experiments given in Figure~\ref{fig:fminst_comp_index_grid} and Figure~\ref{fig:fmnist_comp_label_grid}, we run with 9 MPI ranks in a ring or a $3\times3$ grid on the FashionMNIST data. We run with homogeneous data in Figure~\ref{fig:fminst_comp_index_grid} and heterogeneous data in Figure~\ref{fig:fmnist_comp_label_grid}. For the heterogeneous case, we only perform training and testing on 9 label classes, with each MPI rank training on a single class. We note that the difference in results between the differing topologies is minimal for DAMSCo and DaSHCo, though there is an apparent difference in results with the higher number of MPI ranks. We further note that we observed instability with DADAM training with heterogeneous data and our test hyperparameters. Future work will focus on hyperparameter tuning for larger MPI ranks and a more in depth study of the effects of MPI rank counts, communication topology, and their interplay with the performance of the proposed algorithms.

\begin{figure*}
	\centering
	\includegraphics[trim=0 0 0 0cm,clip,width=0.32\textwidth]{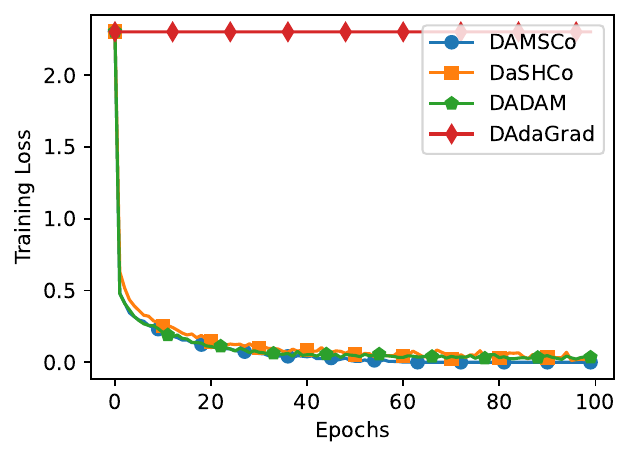}
	\includegraphics[trim=0 0 0 0cm,clip,width=0.32\textwidth]{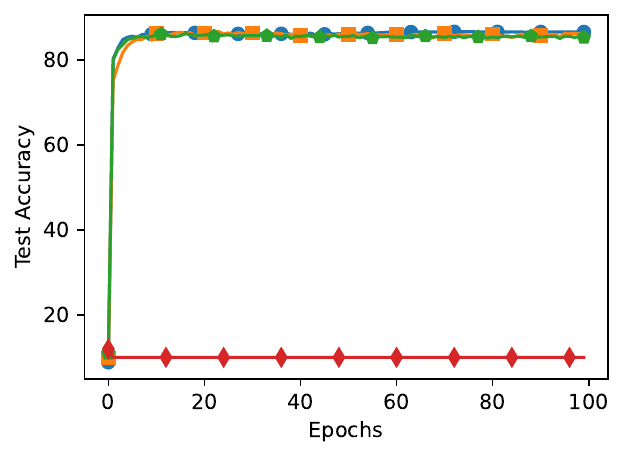}
	\includegraphics[trim=0 0 0 0cm,clip,width=0.33\textwidth]{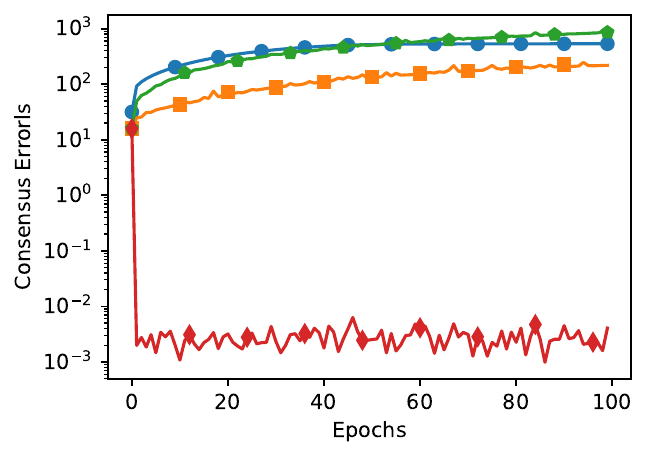}
	\\
	\includegraphics[trim=0 0 0 0cm,clip,width=0.32\textwidth]{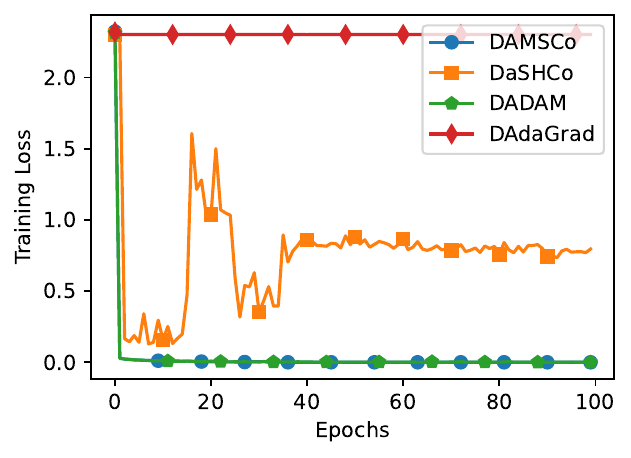}
	\includegraphics[trim=0 0 0 0cm,clip,width=0.32\textwidth]{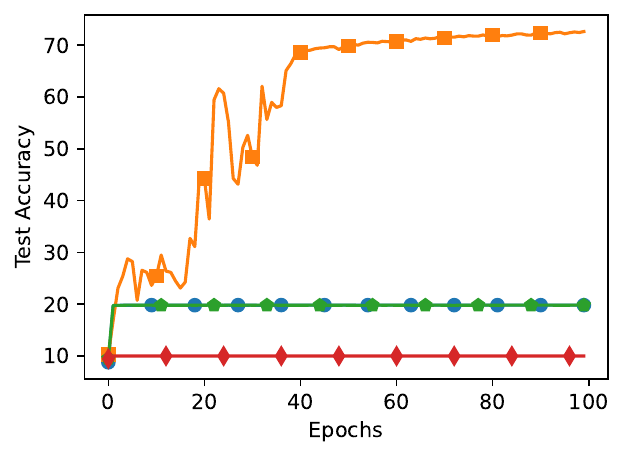}
	\includegraphics[trim=0 0 0 0cm,clip,width=0.33\textwidth]{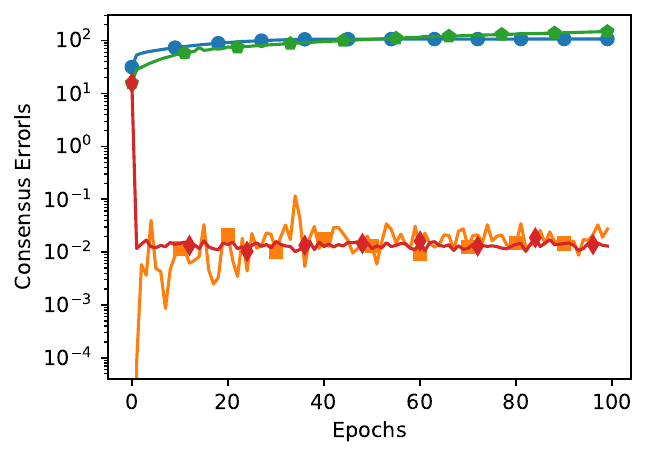}
	\caption{\textbf{Results with QSGD:} Plotted above are (from left to right) the training loss, test accuracy, and consensus error with respect to epoch for the FashionMNIST datasets using homogenous data (top) and heterogenous data (bottom), comparing DAMSCo and DaSHCo with Distributed Adam and Distributed AdaGrad  with 4-bit QSGD compression.}
	\label{fig:fminst_comp_qsgd}
\end{figure*}

\begin{figure*}
	\centering
	\includegraphics[trim=0 0 0 0cm,clip,width=0.32\textwidth]{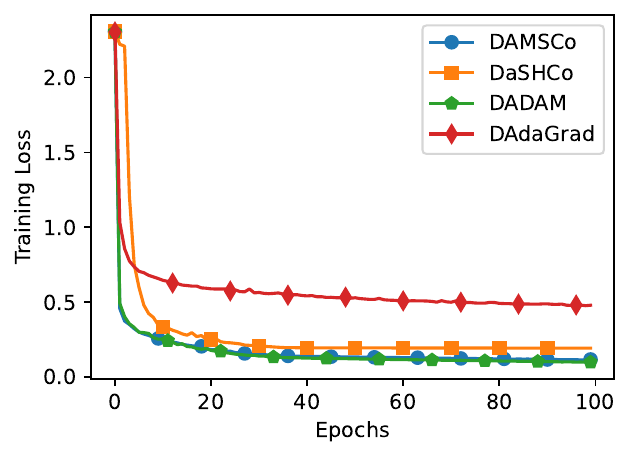}
	\includegraphics[trim=0 0 0 0cm,clip,width=0.32\textwidth]{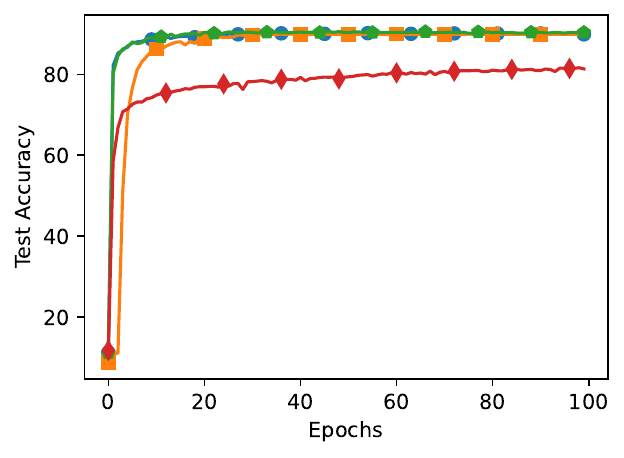}
	\includegraphics[trim=0 0 0 0cm,clip,width=0.33\textwidth]{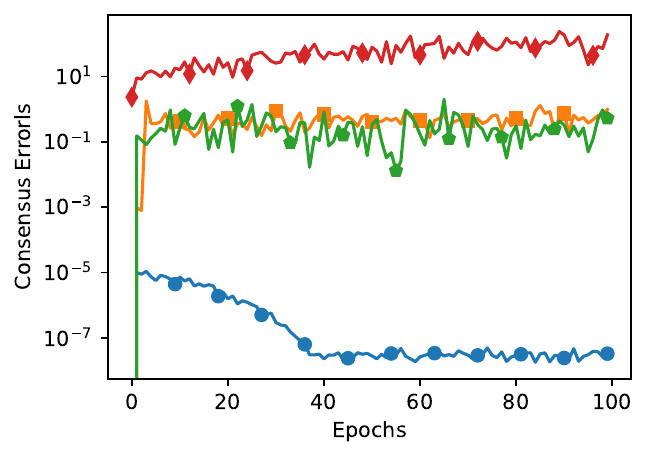}
	\\
	\includegraphics[trim=0 0 0 0cm,clip,width=0.32\textwidth]{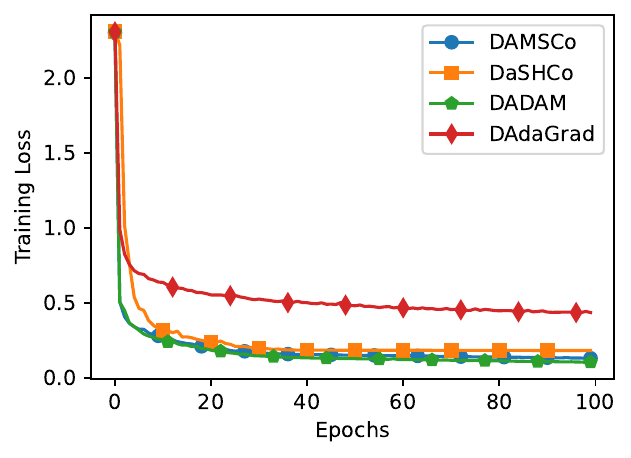}
	\includegraphics[trim=0 0 0 0cm,clip,width=0.32\textwidth]{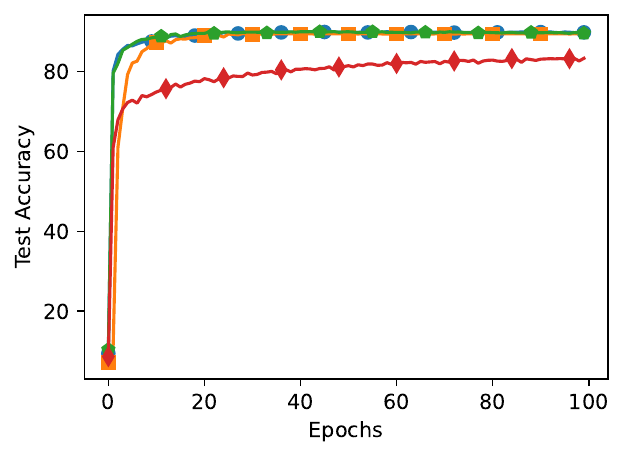}
	\includegraphics[trim=0 0 0 0cm,clip,width=0.33\textwidth]{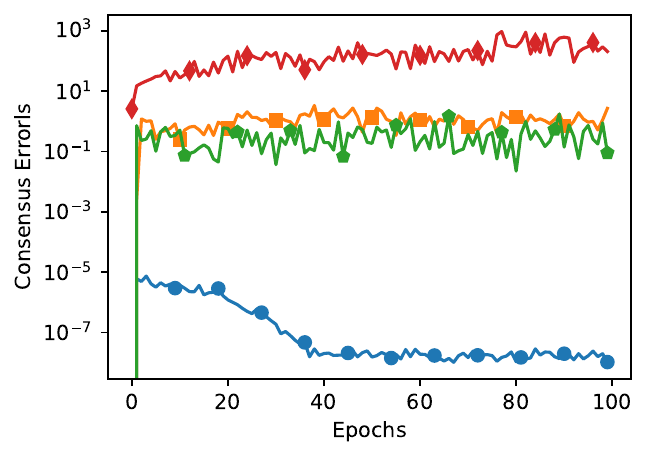}
	\caption{\textbf{Grid versus ring communication networks comparison with homogeneous data:} Plotted above are (from left to right) the training loss, test accuracy, and consensus error with respect to epoch for the FashionMNIST datasets running on Ring (top) and Grid (bottom) topologies for the communication network, comparing DAMSCo and DaSHCo with Distributed Adam and Distributed AdaGrad  with Top-$k$ compression.}
	\label{fig:fminst_comp_index_grid}
\end{figure*}

\begin{figure*}
	\centering
	\includegraphics[trim=0 0 0 0cm,clip,width=0.32\textwidth]{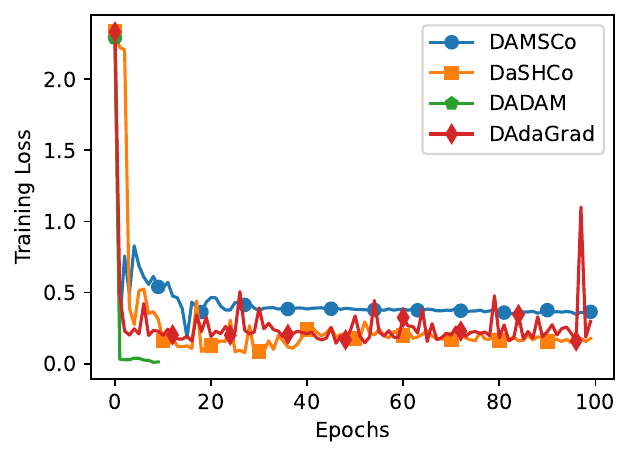}
	\includegraphics[trim=0 0 0 0cm,clip,width=0.32\textwidth]{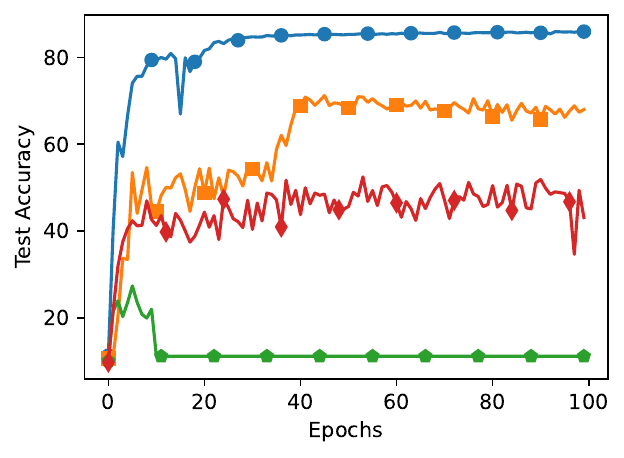}
	\includegraphics[trim=0 0 0 0cm,clip,width=0.33\textwidth]{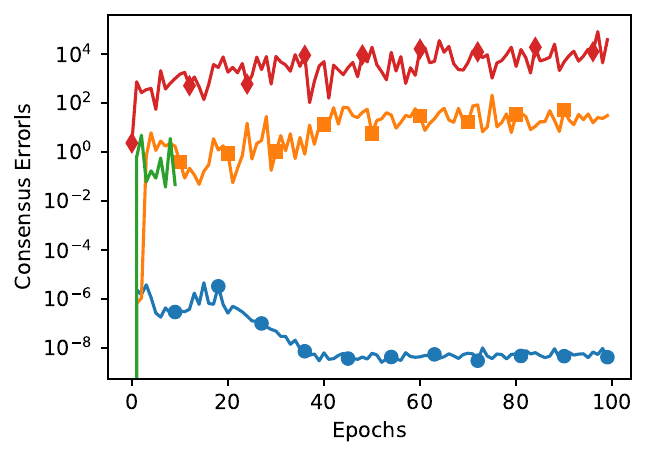}
	\\
	\includegraphics[trim=0 0 0 0cm,clip,width=0.32\textwidth]{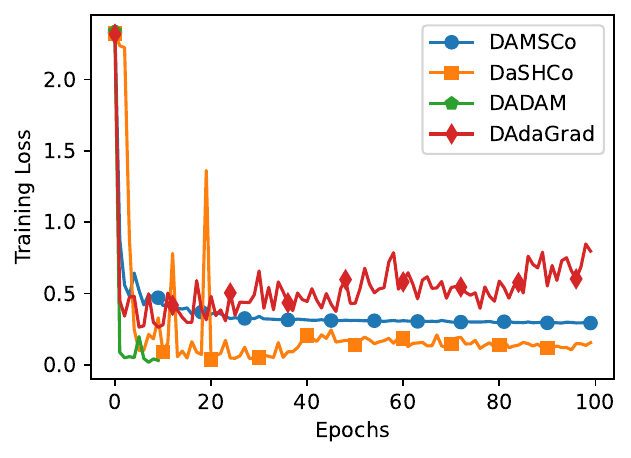}
	\includegraphics[trim=0 0 0 0cm,clip,width=0.32\textwidth]{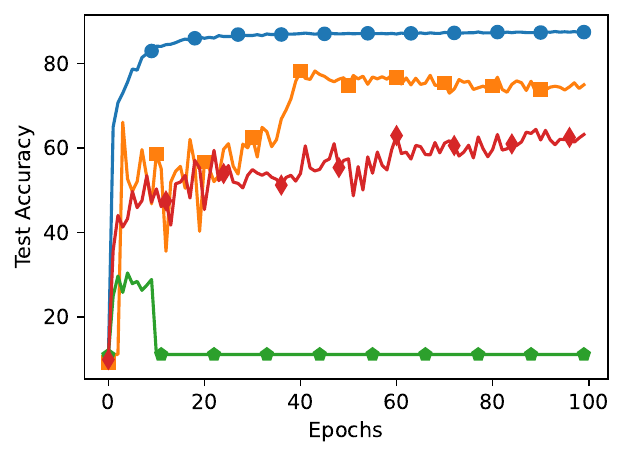}
	\includegraphics[trim=0 0 0 0cm,clip,width=0.33\textwidth]{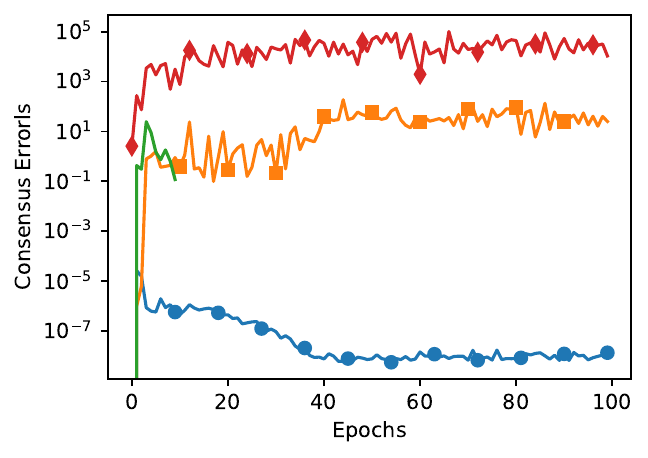}
	\caption{\textbf{Grid versus ring communication networks comparison with heterogeneous data:} Plotted above are (from left to right) the training loss, test accuracy, and consensus error with respect to epoch for the FashionMNIST datasets running on Ring (top) and Grid (bottom) topologies for the communication network, comparing DAMSCo and DaSHCo with Distributed Adam and Distributed AdaGrad  with Top-$k$ compression.}
	\label{fig:fmnist_comp_label_grid}
\end{figure*}

Additionally, we include a comparison to the state-of-the-art DoCoM optimizer \citep{yau2023docom}. We select DoCoM for comparison, as its theoretical convergence rate is tighter than ours and the other suggested SOTA methods, and the code and experimental data used by the authors are readily available. We directly use the hyperparameters the authors tuned for the LeNet5 network on FashionMNIST dataset, and replicate the experiments from Figures 1, 2, and 3. We give these updated figures below in Figures \ref{fig:index_comp}, \ref{fig:label_comp}, and \ref{fig:llm_comp}. A preliminary study suggests that hyper-parameter tuning with DoCoM by itself is unlikely to significantly improve upon the results presented here.

\begin{figure}[!htb!]
\begin{center}
    \includegraphics[trim=0 0 0 0cm,clip,width=0.32\textwidth]{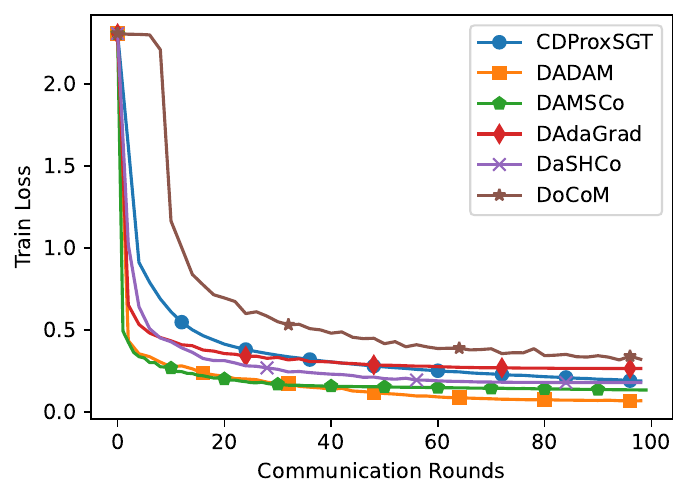}
    \includegraphics[trim=0 0 0 0cm,clip,width=0.32\textwidth]{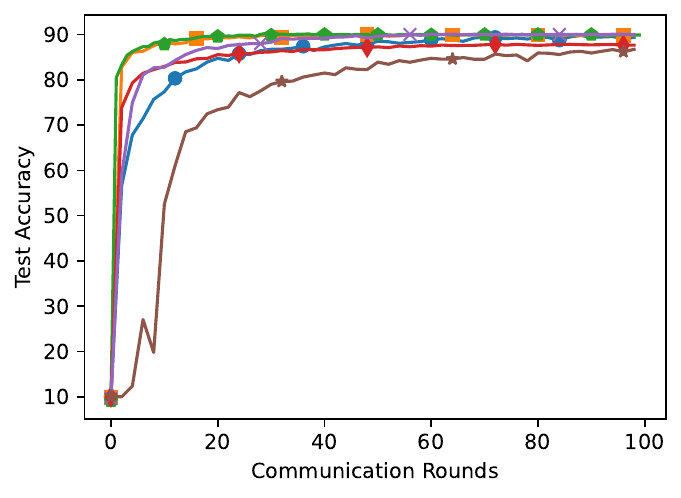}
    \includegraphics[trim=0 0 0 0cm,clip,width=0.32\textwidth]{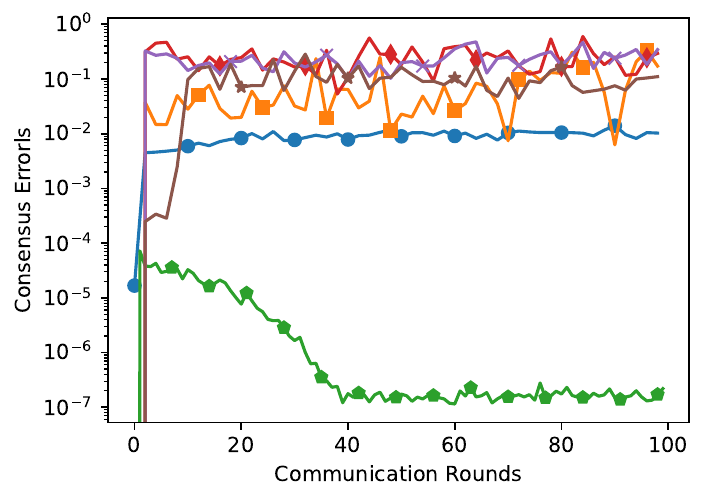}
    \\
    \includegraphics[trim=0 0 0 0cm,clip,width=0.32\textwidth]{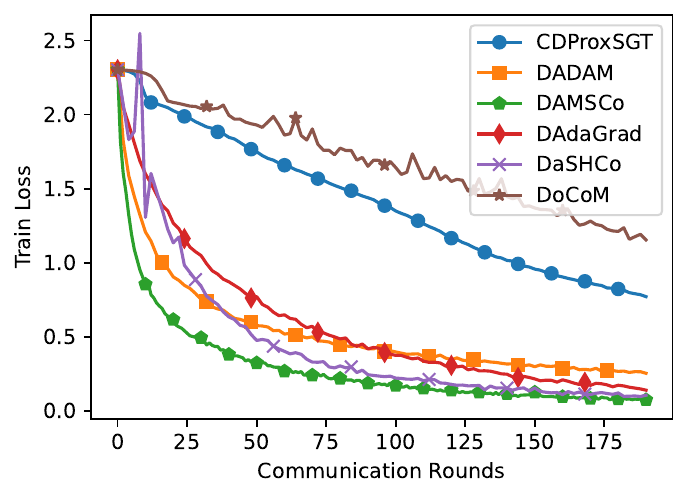}
    \includegraphics[trim=0 0 0 0cm,clip,width=0.32\textwidth]{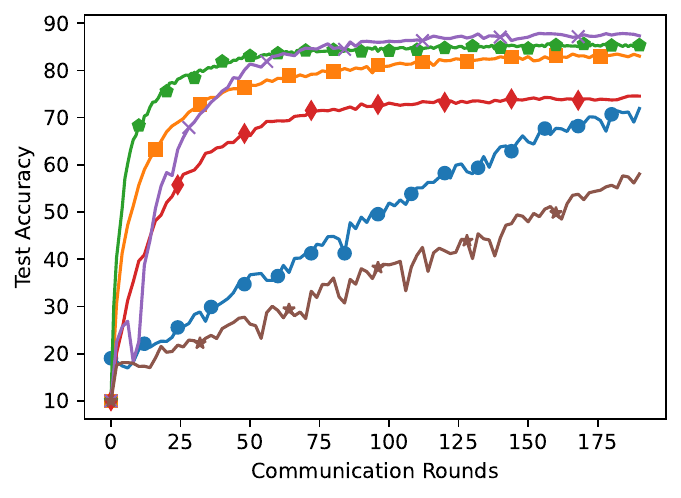}
    \includegraphics[trim=0 0 0 0cm,clip,width=0.32\textwidth]{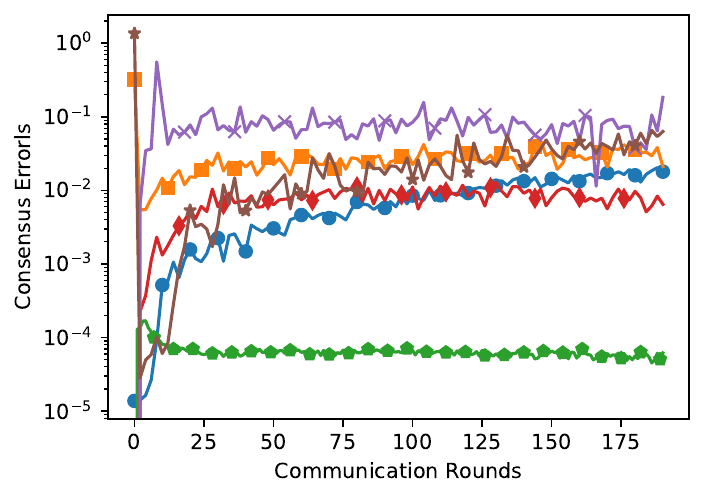}
\end{center} 
  \caption{\textbf{Results with homogeneous data:} Plotted above are (from left to right) the training loss, test accuracy, and consensus error per communication round for the FashionMNIST on LeNet5 (top) and CIFAR-10 on Fixup-ResNet-20 (bottom) benchmarks, comparing DAMSCo and DaSHCo with DoCoM, CDProxSGT, Distributed AdaGrad, and Distributed Adam with Top-$k$ compression.}
  \label{fig:index_comp}
\end{figure}


\begin{figure}[!htb!]
\begin{center}
    \centering
    \includegraphics[trim=0 0 0 0cm,clip,width=0.32\textwidth]{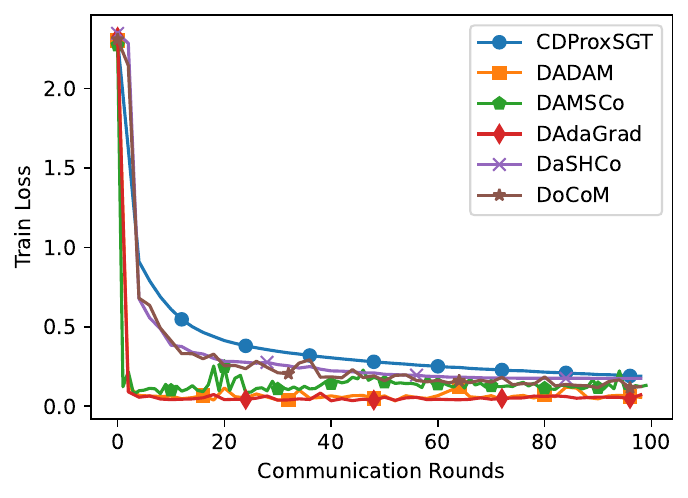}
    \includegraphics[trim=0 0 0 0cm,clip,width=0.32\textwidth]{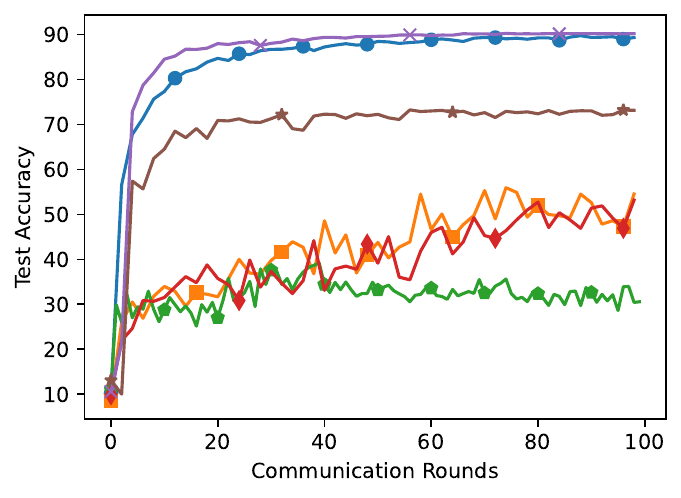}
    \includegraphics[trim=0 0 0 0cm,clip,width=0.32\textwidth]{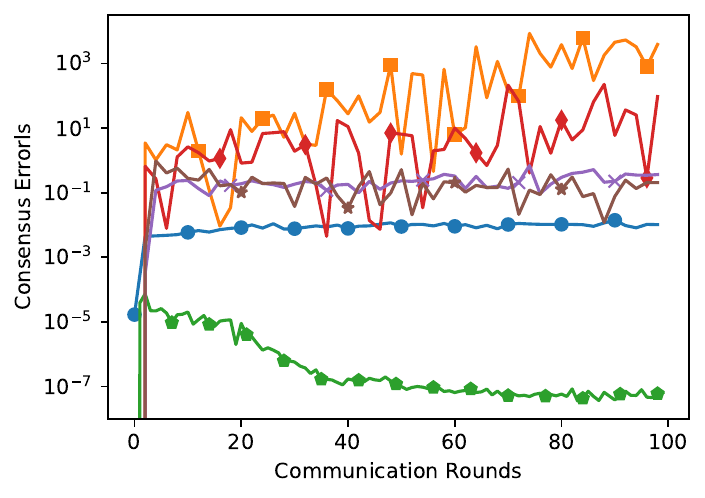}
    \\
    \includegraphics[trim=0 0 0 0cm,clip,width=0.32\textwidth]{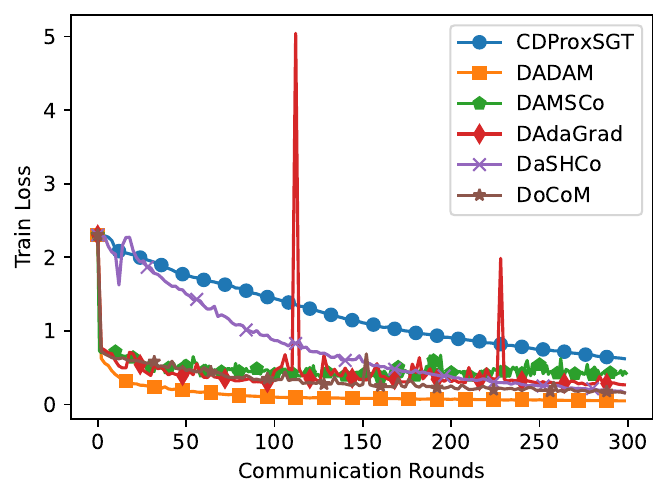}
    \includegraphics[trim=0 0 0 0cm,clip,width=0.32\textwidth]{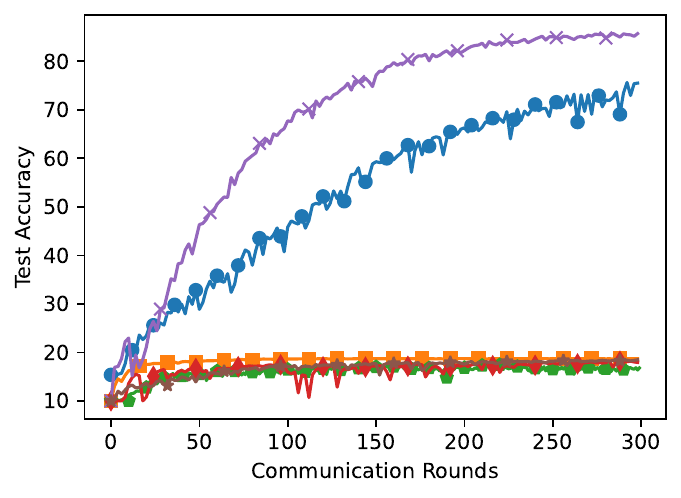}
    \includegraphics[trim=0 0 0 0cm,clip,width=0.32\textwidth]{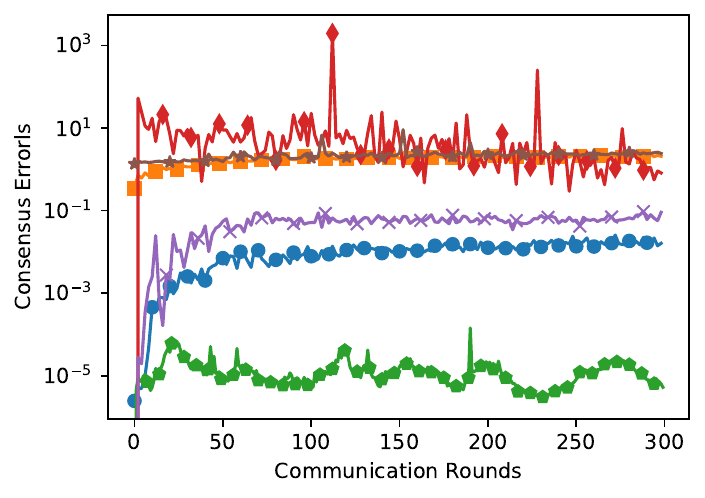}
\end{center}
  \caption{\textbf{Results with heterogeneous data:} Plotted above are (from left to right) the training loss, test accuracy, and consensus error per communication round for the FashionMNIST on LeNet5 (top) and CIFAR-10 on Fixup-ResNet-20 (bottom) benchmarks, comparing DAMSCo and DaSHCo with DoCoM, CDProxSGT, Distributed AdaGrad, and Distributed Adam with Top-$k$ compression.}
  \label{fig:label_comp}
\end{figure}


\begin{figure}[!htb!]
\begin{center}
    \includegraphics[trim=0 0 0 0cm,clip,width=0.28\textwidth]{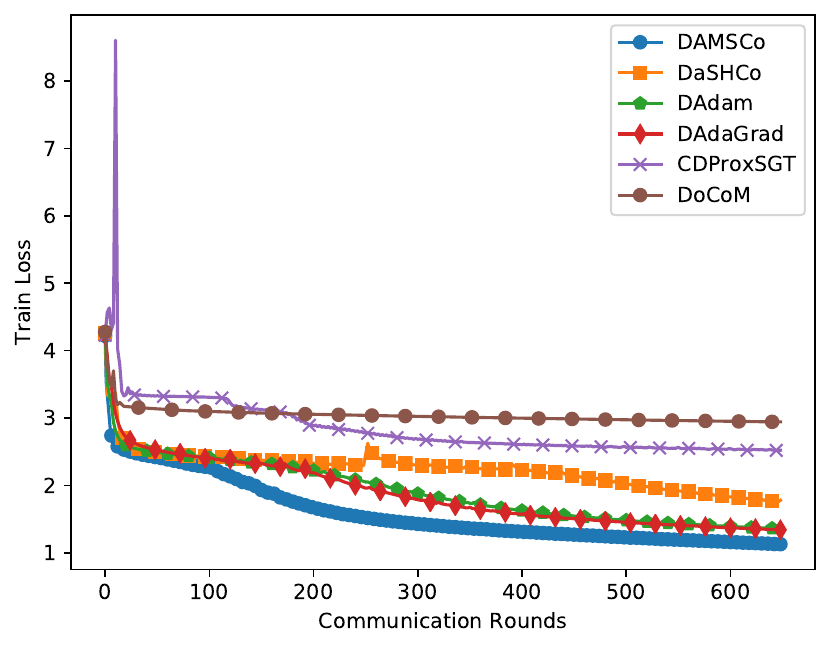}
    \includegraphics[trim=0 0 0 0cm,clip,width=0.3\textwidth]{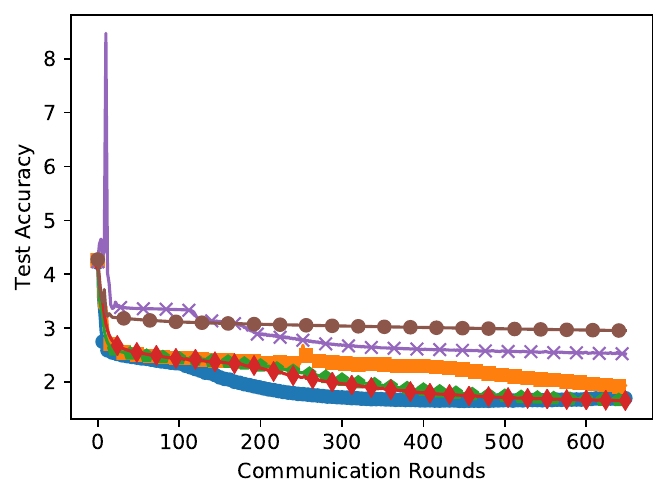}
    \includegraphics[trim=0 0 0 0cm,clip,width=0.31\textwidth]{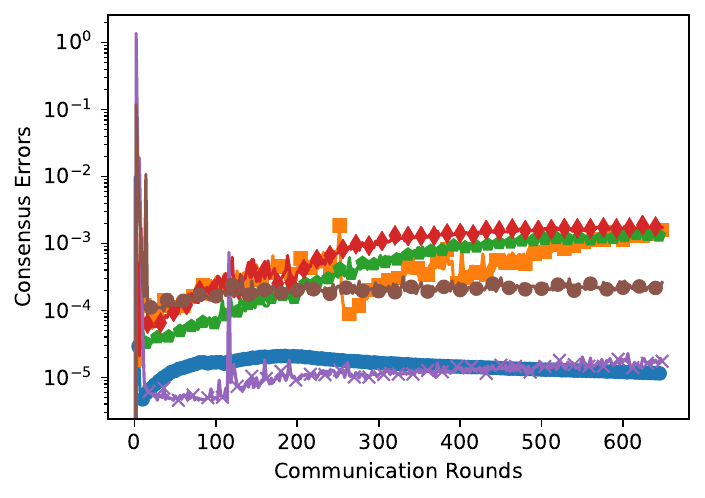}
\end{center}
  \caption{\textbf{GPT Results with homogeneous data:} Plotted above are (from left to right) the training loss, validation loss, and consensus error per communication round for the Shakespeare dataset, comparing DAMSCo and DaSHCo with DoCoM, CDProxSGT, Distributed AdaGrad, and Distributed Adam with Top-$k$ compression.}
  \label{fig:llm_comp}
\end{figure}

Lastly, we demonstrate a linear speedup for DAMSCo and DaSHCo, resulting from varying the number of agents to 5, 9, and 16 in a ring toplogy. We utilize the 5 and 9 agent results on FashionMNIST with homogeneous data, as given in Figures \ref{fig:index_comp_epoch} and \ref{fig:fminst_comp_index_grid} 
and we include an additional experiment with 16 agents. These are plotted in Figure~\ref{fig:rank_comp}. We note that the close overlap of curves for loss and accuracy provides experimental validation of our theoretical result of linear speedup.

\begin{figure}[!htb!]
\begin{center}
    \centering
    \includegraphics[trim=0 0 0 0cm,clip,width=0.32\textwidth]{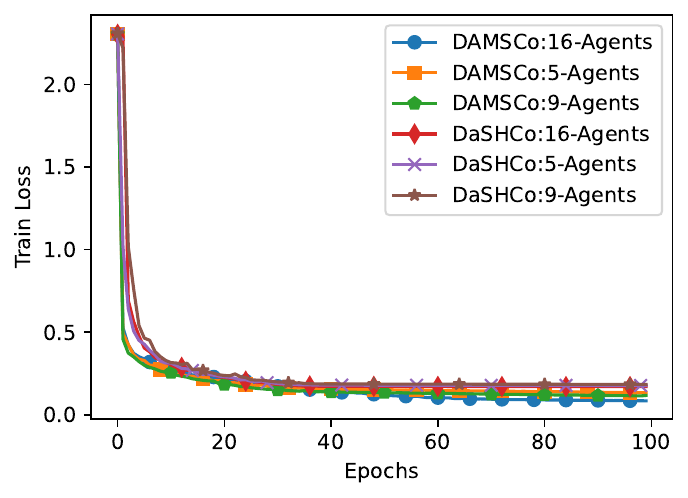}
    \includegraphics[trim=0 0 0 0cm,clip,width=0.32\textwidth]{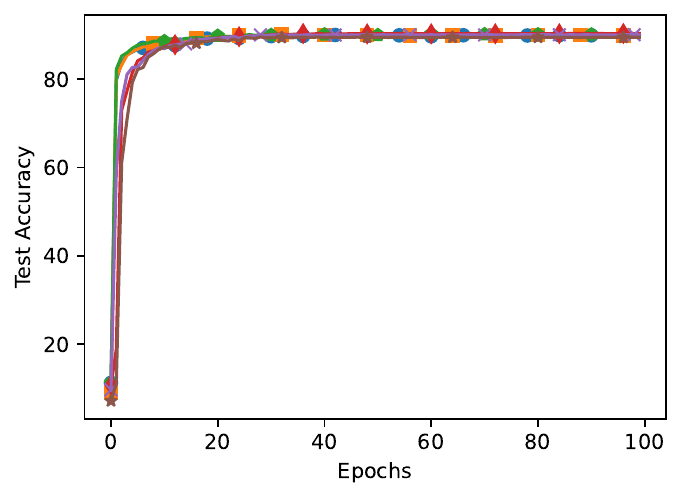}
    \includegraphics[trim=0 0 0 0cm,clip,width=0.32\textwidth]{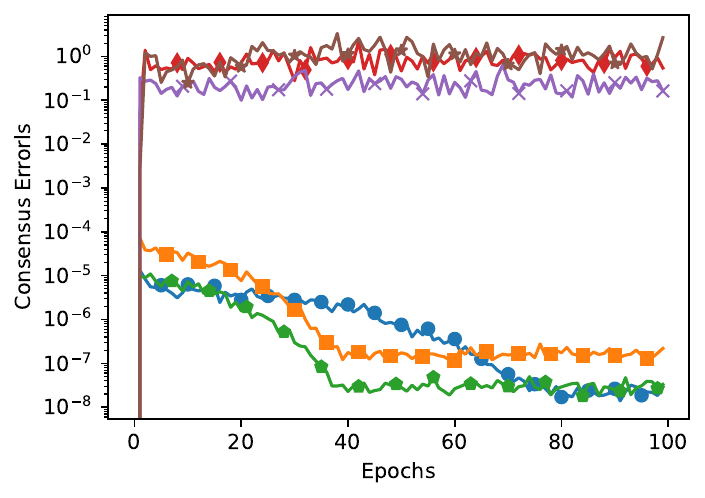}
\end{center}
\vspace{-0.3cm}  
  \caption{\textbf{Results demonstrating linear speedup:} Plotted above are (from left to right) the training loss, test accuracy, and consensus error per communication round for the FashionMNIST datasets on LeNet5, comparing DAMSCo and DaSHCo with 5, 9 and 16 agents in a ring topology.}
  \label{fig:rank_comp}\vspace{-0.5pc}
\end{figure}

\section{NanoGPT Qualitative Sample Results} 

In this section, we show the inference outputs of our NanoGPT LLMs trained on the tiny-shakespeare dataset. We provide the prompt ``To be, or not to be?'' for each model trained by each optimizer. The outputs are generated via the sampling procedure implemented in the NanoGPT code base~\citep{karpathy2022}. For each optimizer, we generate 3 samples per prompt. We do no curation and give the first three outputs produced. These outputs can be seen in Figure ~\ref{fig:llm_output1}, Figure~\ref{fig:llm_output2}, and Figure~\ref{fig:llm_output3}. We note that there is a similar performance between DaSHCo, DAMSCo, DAdaGrad, and DAdam, as the minimum loss achieved is very similar. Thus, the outputs produce relatively comprehensive and coherent english sentences. However, we do notice a difference in CDProxSGT, where the outputs produced are significantly less coherent. This falls in line with poorer generalization in the validation loss as seen in Figure~\ref{fig:llm_comp}.




\begin{figure*}[htb]
\centering
\begin{subfigure}[t]{.5\linewidth}
    \centering
    \begin{tabular}{c}   
\fontsize{7}{7}\selectfont      
\begin{lstlisting}
To be, or not to be?

CAPULET:
Not with the constary: so much be stander-day
the change of his foul and to be determine,
And then I for a will our absence of your brother;
And that you were he will intend over-once
Than I will bring told under all the mind:
And what have they yet lived thy foot this good:
I will make me only son brother of the morning,
Then, in the soul's blood
That hands upon the rest. 
Why, therefore I am forget.

CLIFFORD:
Here comes, you say and learn against you this?

First Lord:
What must I c
---------------
To be, or not to be?

QUEEN MARGARET:
You cannot learn her; I'll meet my heart.

JULIET:
I cannot speak to thee.

PAULINA:
Yes, my lord;
And though I live it should be desire moved:
And therefore I will be so;
And not he be seen to say, 'tis my suit:
Here is your foes, for a gentleman.

PAULINA:
And was so! why long will your flatter.

POLIXENES:
I shall be so:
My lord; I'll not be resolved of thee.

LEONTES:
O my gracious lady what thou canst be see
The babuse that I may for my will,---

PAULINA:
O much of the chi
---------------
To be, or not to be?
Therefore, we saw the field in my sweet scape.

LADY CAPULET:
Else your honourable seen as their sink
To whose finding the trumpets between child,
Were as bring tI have a child been as ince,
And I will blame a cursed for in my wife's my
hand; I think, I will thou forget me for my life.
Have I come to some fair of his crown,
And that am I be sit in my dressing you;
And then I will cruel upon my part, but I will.

LEONTES:
I never weep:
And I will not bee you to light:
They are but a lady, and ev
---------------
\end{lstlisting}
\end{tabular}
\caption{DAMSCo}
\end{subfigure}%
\begin{subfigure}[t]{.5\linewidth}
    \centering
    \begin{tabular}{c}
    \fontsize{7}{7}\selectfont    
\begin{lstlisting}
To be, or not to be?

CORIOLANUS:
O, behold!

CORIOLANUS:
More is taken them with hands: but he put of his back:
I'll be away, and hand, with to heavy some fortune.

All:
This is even well, I heard it the overture.

First Gentleman:
Well, no man it is my children and all the tyranny.

HERMIONE:
Alas, whom I sworn to kill a pass which is as word,
And in The shame-stone of the rest: and noble the son;
To have for thy fled, that fight doth of them;
Let's too discaladine and Edward's in child;
And your husband's for hi
---------------
To be, or not to be?

QUEEN ELIZABETH:
Here chance me! I have seen an many word.
Are you would I have no successed in the friends.

DUKE OF YORK:
My lord?

KING EDWARD IV:
Now would indeedst to the grace?

HENRY BOLINGBROKE:
What all you to loss as this good deserve,
To make heavy bear that vice and way so live.
That I may so welcome to speak:
We I say, well, and play'd in the eagly
Of this offices at ornignor of the king;
And wilt it good, both speedly like to the hand:
If being a chooice wounder for Gloucester's 
---------------
To be, or not to be?
The conscience. This is the common with the view,
Than till of anon oath of his womb:
For we mine is for worthy in her borne to Warwick.
The scale brave Edward and haste and in close.

LADY ANNE:
The mind men, therefore his conscient of his gentleman
and deserve your Henry's mother; and therefore thee
day as the victory unto the life absence it is state,
And tradenure the noble hath been seems pillain,
Report to the bable honourable to the tongue.

KING RICHARD II:
Uncle, I am come on: make wel
---------------

\end{lstlisting}
    \end{tabular}
    \caption{DaSHCo}
\end{subfigure}
\caption{\textbf{LLM Output:} Plotted above are the outputs of NanoGPT from the prompt ``To be, or not to be?'' comparing DAMSCo and DaSHCo trained on the tiny-shakespeare dataset with Top-$k$ compression.}
\label{fig:llm_output1}
\end{figure*}

\begin{figure*}[htb]
\centering
\begin{subfigure}[t]{.5\linewidth}
    \centering
    \begin{tabular}{c}   
\fontsize{7}{7}\selectfont      
\begin{lstlisting}
To be, or not to be?

CLAUDIO:
Not with that we service.

LUCIO:
I am great thee, have been thy purpose and as die.

ISABELLA:
O heaven, I beseech you, faith, and be thee not
To be right to hang for Hereford to thee,
And give him by self: now, and let me no more.

Clown:
My word will, sir, I pray you, sir.

LUCIO:
My lord, I do but Clifford, and so bid him
As if well as we for us.

Clown:
What comes it now, from this Montague?

Provost:
Here is gone!

ANGELO:
What a man are for him? Knockes and you?

MISTRESS OVERD
---------------
To be, or not to be?

PETRUCHIO:
Why, son I was behold in an house, nor by me
new a man that such men and us bear.

Third Servant:
Faith, let me not.

TRANIO:
So see I do before thee, diest before thee?

First Gentleman:
Ay, my lord.

Second Gentleman:
You shall not, sir; and let him know your virtues
a known of that honour so like a crown, 
and all suitors.

LUCIO:

ISABELLA:
Come, come, come, come; though it constant 
appointo you;
of a comfort, sir, like a brow storm more of cols.

Clown:
Would you give me well.


---------------
To be, or not to be?
Therefore, be soften and wonder--what news
To have prevail'd your choacter should see,
To see the galland.

Shepherd:
For this is it so?

ISABELLA:
I do for wise I have a child, and as indeed
Men known our foul most report in the news.

DUKE VINCENTIO:
But you had all, and have releason made me livery
of me; for thence must I have but but a with him; 
you would yet, though I will not be redeem and not 
so before much about me: he must be will not call 
you go with him: instantly.

BRUTUS:
What wer
---------------
\end{lstlisting}
\end{tabular}
\caption{DADAM}
\end{subfigure}%
\begin{subfigure}[t]{.5\linewidth}
    \centering
    \begin{tabular}{c}
    \fontsize{7}{7}\selectfont    
\begin{lstlisting}
To be, or not to be?

CORIOLANUS:
O, affection!

SICINIUS:
By Angelo?

ORCORIOLANUS:
My lord, I put him to be determined away.

ROMEO:
In am I must off the first contract;
And but I in entreation of Hermione to thee
Will be revenged well in a war.

VOLUMNIA:
I would banished, yet let me know thee gods
And the self-place of the golden is all,
And both be consul, as if he were return'd
In breathe so; some the fire-run leong,
That Prince I am trade, being by you:
Saddle the Earl of Wiltshire are no vex'd
The noble Nep
---------------
To be, or not to be?

QUEEN ELIZABETH:
He is that hath had ever his noble two liver
and that he is so noble counted of courtesy.

SICINIUS:
I would be thus worst: he did with all the maid
Made most grave their whole doubly company,
To the golden change their duke of it.

VIRGILIA:
Heaven as heaven, what said it is;
When he should be strived me so I say;
Not in him that I have seen your time
To time to our fellow upon my mother. 
Thou art not born.

BUCKINGHAM:
No, my good lord.

BRUTUS:
If I patch you, God will in e
---------------
To be, or not to be?
Therefore, we may shall have kill'd me up
And here prove take all the war. Fie, we must be:
Think you means his body to your words; and yet,
And bear me false a wIn report in his land,
And yet me keep you me at my breather
And witness what you will, if that still fear,
Disprame us the spring thoughts, the base of York
Makes here a man age of it is conduct of thee;
And the noble hath been so much precious to the
lavely express made him to a tongue's fore you.

SICINIUS:
What is the mannerled wel
---------------
\end{lstlisting}
    \end{tabular}
    \caption{DAdaGrad}
\end{subfigure}
\caption{\textbf{LLM Output:} Plotted aboveare the outputs of NanoGPT from the prompt ``To be, or not to be?'' comparing DADAM and DAdaGrad trained on the tiny-shakespeare dataset with Top-$k$ compression.}
\label{fig:llm_output2}
\end{figure*}

\begin{figure*}[htb]
\centering
\begin{subfigure}[t]{.75\linewidth}
    \centering
    \begin{tabular}{c}   
\fontsize{7}{7}\selectfont      
\begin{lstlisting}
To be, or not to be?

THUSINRDY:
Aure aissthew y hellllll n amopetelives;
Pothy wou m thake o Windo wher eringh t ath dour wish eshire s poongower ore
Thak d nderurt f sor; irind m:

Than inle onthe se Prerd I Som.

HENRY ERD E:
Po I:
Shosal this ghest hoin ccur ayo teyo ryous chan t ce wi
---------------
To be, or not to be?

QUETAM:
And shirdse anot whe m sono anghy tou nours.

MARINA:
Thy shat su in soth llethend, wild
ch fre my shinth s l.



ERMENIO:
My wousthor edourd helllvet:
A d anot to oshink hed m tish:
Shed he be fe flator:
Whet Clo ghasundist, is duche n,
To me thall heatake bed hich und wan s s with sel ngond the weld nonch id,
This h isurd we.




MMEOLIXnGIUS:
I thity se thistou tised the n sen e sutan wiplyth ou whand nghitt chus.

ESBRTES:
Hall t thyou t prit.

BELORUS:
Myowout ir f an;
Tonknd inot
---------------
To be, or not to be?
Ther, now, we heald, I he shik; ghead n the is,
I spr thisand allon bay ho su andesen,
Thes thinds se te wofoingin ind tof ther We a wowhid chin blare aned hyou aInd
my a theint d polof the an benketind l menat m inor wesing brwimngise.





MELINIO:
But d dotho y willl wher mperel tshou!
A'NGTolllct ourjur'st o t wonck u thinour hes m sbe l itond chinous,
Prtr Whound the y shand fsurs:
Hend o ndone y nthe sthis t faban s hor mqu thit.



HEG mETESCK:
Thy, d, clige fincore yothes arr whollds e

---------------
\end{lstlisting}
\end{tabular}
\caption{CDProxSGT}
\end{subfigure}%
\caption{\textbf{LLM Output:} Plotted above is the output of NanoGPT from the prompt ``To be, or not to be?'' using CDProxSGT trained on the tiny-shakespeare dataset with Top-$k$ compression.}
\label{fig:llm_output3}
\end{figure*}

\end{document}